\newtheorem{axiom}{Axiom}[section]
\newcommand{\vect}[1]{\bm{#1}}
\newcommand{\mat}[1]{\bm{#1}}
\newcommand{\set}[1]{\mathcal{#1}}
\def\eps{{\epsilon}}
\def\veczero{\vect{0}}
\def\vecone{\vect{1}}
\def\vp{\vect{p}}
\def\vx{\vect{x}}
\def\vy{\vect{y}}
\def\vz{\vect{z}}
\def\valpha{\vect{\alpha}}
\def\vbeta{\vect{\beta}}
\def\vphi{\vect{\phi}}
\def\mA{\mat{A}}
\def\mX{\mat{X}}
\def\sG{\set{G}}
\def\sN{\set{N}}
\def\sU{\set{U}}
\def\sV{\set{V}}
\def\sL{\set{L}}
\def\sT{\set{T}}
\DeclareMathOperator{\DBer}{Ber}
\DeclareMathOperator{\DCat}{Cat}
\DeclareMathOperator{\DDir}{Dir}
\DeclareMathOperator{\DNormal}{\mathcal{N}}
\DeclareMathOperator{\real}{\mathbb{R}}
\DeclareMathOperator{\prob}{\mathbb{P}}
\DeclareMathOperator{\prior}{\mathbb{Q}}
\DeclareMathOperator{\entropy}{\mathbb{H}}
\DeclareMathOperator{\expectation}{\mathbb{E}}
\DeclareMathOperator{\variance}{Var}
\DeclareMathOperator{\loss}{\mathcal{L}}
\newcommand{\condition}{\ensuremath{\,|\,}}
\newcommand{\ExpecationArgs}[2]{\expectation_{#1}\left[#2\right]}
\newcommand{\Entropy}[1]{\entropy\left[#1\right]}
\newcommand\ours{Graph Posterior Network}
\newcommand\oursacro{GPN}
\newcommand\x{\vx}
\newcommand\z{\vz}
\newcommand\y{y}
\newcommand\p{\vp}
\newcommand{\inputdim}{D}
\newcommand\nnodes{N}
\newcommand\nodeu{u}
\newcommand\nodev{v}
\newcommand\nodew{w}
\newcommand\nodeidxv{^{(v)}}
\newcommand\namednodeidxv[1]{^{#1, (v)}}
\newcommand\iclass{c}
\newcommand\nclass{C}
\newcommand\graph{\sG}
\newcommand\vertices{\sV}
\newcommand\neighbors{\sN}
\newcommand\nodeslabeled{\sT}
\newcommand\nodesunlabeled{\sU}
\newcommand\adj{\mA}
\newcommand\features{\mX}
\newcommand\pprelem{\Pi^{ppr}}
\newtheorem{theorem}{Theorem}
\newtheorem{lemma}{Lemma}
\title{\ours{}: Bayesian Predictive Uncertainty for Node Classification}
\author{
  Maximilian Stadler\thanks{equal contribution}, Bertrand Charpentier\footnotemark[1], Simon Geisler, Daniel Zügner,\\\textbf{Stephan Günnemann}\\
  Department of Informatics\\
  Technical University of Munich, Germany\\
  \texttt{\{stadlmax, charpent, geisler, zuegnerd, guennemann\}@in.tum.de}\\
}
\begin{document}

\maketitle

\begin{acronym}
    \acro{PostNet}{Posterior Network}
    \acro{NatPN}{Natural Posterior Network}
    \acro{BGCN}{Bayesian Graph Convolutional Network}
    \acro{RGCN}{Robust Graph Convolutional Network}
    \acro{LP}{Label Propagation}
    \acro{GNN}{Graph Neural Network}
\end{acronym}

\begin{abstract}
The interdependence between nodes in graphs is key to improve class predictions on nodes and utilized in approaches like \ac{LP} or in \acp{GNN}. Nonetheless, uncertainty estimation for non-independent node-level predictions is under-explored. In this work, we explore uncertainty quantification for node classification in three ways: \textbf{(1)} We derive three axioms explicitly characterizing the expected predictive uncertainty behavior in homophilic attributed graphs. \textbf{(2)} We propose a new model \ours{} (\oursacro{}) which explicitly performs Bayesian posterior updates for predictions on \emph{interdependent} nodes. \oursacro{} provably obeys the proposed axioms. \textbf{(3)} We extensively evaluate \oursacro{} and a strong set of baselines on semi-supervised node classification including detection of anomalous features, and detection of left-out classes. \oursacro{} outperforms existing approaches for uncertainty estimation in the experiments.
\end{abstract}

\frenchspacing

\section{Introduction}
\label{sec:introduction}

Accurate and rigorous uncertainty estimation is key for reliable machine learning models in safety-critical domains \cite{interpretable-ml}. It quantifies the confidence of machine learning models, thus allowing them to validate knowledgeable predictions or flag predictions on unknown input domains. Uncertainty is commonly divided in \emph{aleatoric} and \emph{epistemic} uncertainty \cite{Gal2016a}. The aleatoric uncertainty accounts for irreducible uncertainty (e.g., due to inherent sensor noise). The \emph{epistemic} uncertainty accounts for a lack of information for accurate prediction (e.g., test data significantly different from training data).

Traditionally, machine learning models assume i.i.d.\ inputs, thus performing predictions based on input features only. For uncertainty estimation on i.i.d.\ inputs, a large class of definitions, models and evaluation methods have been introduced \citep{Gal2016a, Malinin2017, Abdar2020, Ovadia2019, robustness-uncertainty-dirichlet}. Further, uncertainty estimation has been successfully applied to different tasks e.g. out-of-distribution (OOD) or shift detection \citep{Ovadia2019}, active learning \cite{uncertainty-meta-learning, bayesian-meta-learning}, continual learning \citep{uncertainty-continual-learning} or reinforcement learning \citep{uncertainty-rl}. 

In contrast, uncertainty estimation on interdependent nodes is more complex than on i.i.d.\ inputs and under-explored \citep{Abdar2020}. A node in an attributed graph is characterized by two types of information: its features and its neighborhood. While the feature information indicates the node position in the feature space -- similarly to i.i.d. inputs --, the neighborhood information indicates the additional node position in the network space. To leverage the neighborhood information, recent graph neural networks (GNNs) successfully proposed to enrich and correct the possibly noisy information of the features of a single node by aggregating them with the features of its neighborhood \cite{Kipf2016, Velickovic2017, Klicpera2018}. It naturally leads to the distinction between predictions \emph{without network effects} based exclusively on their own node feature representation, and predictions \emph{with network effects} based on neighborhood aggregation. The aggregation step commonly assumes \emph{network homophily} which states that nodes with similar properties tend to connect to each other more densely, thus violating the i.i.d. assumption between node features given their neighborhood. 

\looseness=-1
The core motivation of our work is to transfer some of the existing uncertainty estimation definitions, models and evaluations from i.i.d. inputs to interdependent node inputs by leveraging both the feature and the neighborhood information. In particular, we aim at an accurate quantification of the aleatoric and epistemic uncertainty without and with network effect under network homophily (see Fig.~\ref{fig:uncertainty_types_small}).

\looseness=-1
\textbf{Our contribution.} In this work, we consider uncertainty estimation on semi-supervised node classification. First, we derive three axioms which materialize reasonable uncertainty for non-independent inputs. These axioms cover the traditional notions of aleatoric and epistemic uncertainty and distinguish between the uncertainty with and without network effects. Second, we propose \ours{} (\oursacro{})\footnote{Project page including code at \url{https://www.daml.in.tum.de/graph-postnet}} for uncertainty estimation for node classification and prove formally that it follows the axiom requirements contrary to popular GNNs. Third, we build an extensive evaluation setup for uncertainty estimation which relies on the assessment of uncertainty estimation quality of OOD detection and robustness against shifts of the attributed graph properties. Both OOD data and attributed graph shifts distinguish between attribute and structure anomalies. The theoretical properties of \oursacro{} manifest in these experiments where it outperforms all other baselines on uncertainty evaluation.

\begin{figure}
\centering
	\begin{subfigure}[t]{0.245\textwidth}
	    \centering
		\includegraphics[width=\textwidth]{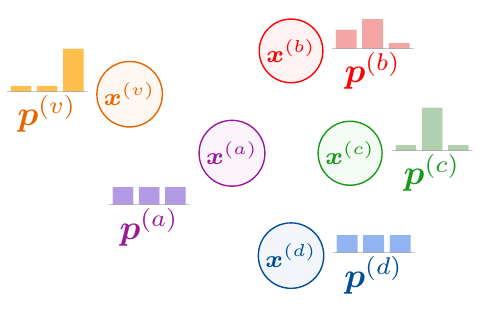}
		\caption{AU w/o network}
		\label{subfig:au_without_network}
	\end{subfigure}
	\begin{subfigure}[t]{0.245\textwidth}
	    \centering
		\includegraphics[width=\textwidth]{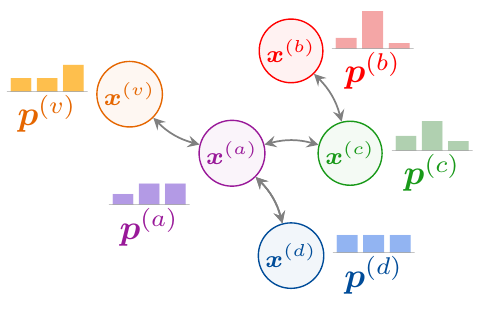}
		\caption{AU w/ network}
		\label{subfig:au_with_network}
	\end{subfigure}
	\begin{subfigure}[t]{0.245\textwidth}
	    \centering
		\includegraphics[width=\textwidth]{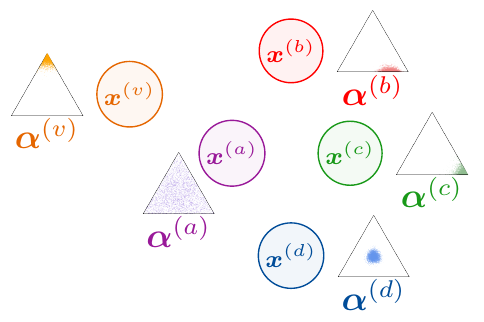}
		\caption{EU w/o network}
		\label{subfig:eu_without_network}
	\end{subfigure}
	\begin{subfigure}[t]{0.245\textwidth}
	    \centering
		\includegraphics[width=\textwidth]{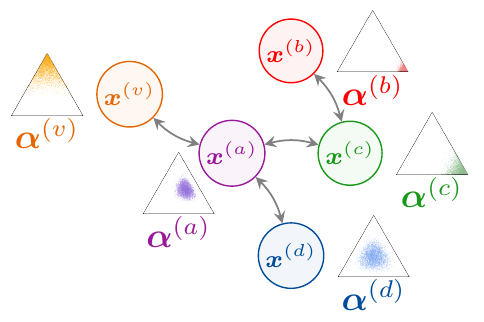}
		\caption{EU w/ network}
		\label{subfig:eu_with_network}
	\end{subfigure}
	\caption{Illustration of aleatoric uncertainty (AU) and epistemic uncertainty (EU) without and with network effects (i.e. i.i.d.\ inputs vs interdependent inputs). Nodes have the same features in all cases. Network effects are visualized through edges between nodes which change the predicted distributions. The aleatoric uncertainty is high if the categorical distribution $\hat{\y} \nodeidxv \sim \DCat(\p\nodeidxv)$ is flat. The epistemic uncertainty is high if the Dirichlet distribution $\p\nodeidxv \sim \DDir(\valpha\nodeidxv)$ is spread out. We refer the reader to Section~\ref{subsec:ours} for formal definitions of those distributions.}
    \label{fig:uncertainty_types_small}
\end{figure}

\section{Related Work} \label{sec:related_work}

In this section, we cover the related work for predictive uncertainty estimation for i.i.d. inputs and for graphs. To this end, we review the commonly accepted \emph{axioms} defining the desired uncertainty estimation under different circumstances, the \emph{methods} capable of consistent uncertainty quantification and the \emph{evaluation} validating the quality of the uncertainty estimates in practice.

\textbf{Uncertainty for i.i.d.\ inputs --} The related work for uncertainty quantification on i.i.d. inputs is rich as for example shown in a recent survey \citep{Abdar2020}. \emph{\underline{Axioms:}} Far from ID data, the predicted uncertainty is expected to be high \citep{provable-uncertainty, NatPN2021, bayesian-a-bit, sufficient-conditions-no-adversarial}. Close to ID data, the desired uncertainty is more complicated. Indeed, while some works expected models to be robust to small dataset shifts \citep{Ovadia2019, confidence-calibrated-adversarial}, other works expected to detect near OOD classes based on uncertainty \citep{contrastive-ood, robustness-uncertainty-dirichlet, attack-detection}. \emph{\underline{Methods:}} Many methods already exist for uncertainty quantification for i.i.d. inputs like images or tabular data. A first family of models quantifies uncertainty by aggregating statistics (e.g. mean, variance or entropy) from sub-networks with different weights. Important examples are ensemble \citep{Lakshminarayanan2017, batch-ensembles, hyper-ensembles, mimo-independent-subnetworks}, dropout \citep{Srivastava2014} or Bayesian Neural Networks (BNN) \citep{Blundell2015, Depeweg2018, simple-baseline-uncertainty, liberty-depth-bnn, rank-1-bnn}. Most of these approaches require multiple forward-passes for uncertainty quantification. Further, dropout and BNN may have other pitfalls regarding their limited applicability to more complex tasks \citep{Osband2016, Hron2018, Graves2011, Foong2019}. A second family quantifies uncertainty by using the logit information. Important examples are temperature scaling which rescale the logits after training \citep{Guo2017, Liang2017} and energy-based models which interpret the logits as energy scores \citep{Liu2020a, Grathwohl2019}. A third family of model quantifies uncertainty based on deep Gaussian Processes (GP). Important examples use GP at activation-level \cite{gp-uncertainty-activation} or at (last) layer-level \citep{uncertainty-distance-awareness, bayesian-a-bit, duq, Bilovs2019}. Finally, a last family of models quantifies uncertainty by directly parameterizing a conjugate prior distribution over the target variable. Important examples explicitly parameterize prior distributions \citep{Sensoy2018, Malinin2019a, Malinin2018, Malinin2019b, evidential-regression} or posterior distributions \citep{Charpentier2020, NatPN2021}. Methods based on GP and conjugate prior usually have the advantage of deterministic and fast inference. \emph{\underline{Evaluation:}} Previous works have already proposed empirical evaluation of uncertainty estimation by looking at accuracy, calibration or OOD detection metrics under dataset shifts or adversarial perturbations for i.i.d.\ inputs \citep{Ovadia2019, robustness-uncertainty-dirichlet}. In contrast with all these approaches, this work studies uncertainty quantification for classification of \emph{interdependent nodes}.

\textbf{Uncertainty for graphs -- } Notably, the recent survey \citep{Abdar2020} points out that there is only a limited number of studies on uncertainty quantification on GNN and semi-supervised learning. Moreover, they recommend proposing new methods. \emph{\underline{Axioms:}} To the best of our knowledge, only \citep{Eswaran2017} proposed explicit axioms for node classification for non-attributed graphs. They expect disconnected nodes to recover prior predictions and nodes with higher beliefs to be more convincing. In this work, we clarify the desired uncertainty estimation for node classification on attributed graphs based on \emph{motivated and explicit axioms}. \emph{\underline{Methods:}} The largest family of models for uncertainty for graphs are dropout- or Bayesian-based methods. Important examples propose to drop or assign probabilities to edges \citep{Rong2019, Chen2018, Hasanzadeh2020, Dallachiesa2014, Hu2017}. Further works proposed to combine the uncertainty on the graph structure with uncertainty on the transformation weights similarly to BNN \citep{Elinas2019, Zhang2019b, Pal2019a, Pal2019b}. Importantly, these models do not directly quantify uncertainty on the prediction. Similarly to the i.i.d.\ case, a second family of models focuses on deterministic uncertainty quantification. Important examples mostly use Graph Gaussian Processes, which do not easily scale to large graphs \citep{Ng2018, Zhi2020, Liu2020c, Borovitskiy2020}. Only \citep{Zhao2020} explicitly parameterized a Dirichlet conjugate prior. They combined it with multiple components (Graph-Based Kernel, dropout, Teacher Network, loss regularizations) which cannot easily distinguish between uncertainty without and with network effects. In contrast, \oursacro{} is a simple approach based on conjugate prior parametrization and disentangles uncertainty with and without network effects. \emph{\underline{Evaluation:}} The evaluation of most of those methods was not focused on the quality of the uncertainty estimates but on the target task metrics (e.g. accuracy for classification, distance to ground truth for regression). Other methods only relied on uncertainty quantification to build more robust models \citep{Zhu2019, Feng2020}. For node classification, only few works evaluated uncertainty by using Left-Out classes or detection of missclassified samples \citep{Zhao2020}, active learning \cite{Ng2018} or visualization \citep{Borovitskiy2020}. Note that proposed uncertainty evaluations on molecules at graph level \citep{Zhang2019, Ryu2019, Akita2018, uncertainty-nn-molecules, uncertainty-material-prediction} is an orthogonal problem. In this work, we propose a \emph{sound and extensive evaluation} for uncertainty in node classification. It distinguishes between OOD nodes w.r.t.\ features and structure, and graph dataset shifts w.r.t.\ the percentage of perturbed node features and the percentage of perturbed edges.

\section{Uncertainty Quantification for Node Classification}

\looseness=-1
We consider the task of (semi-supervised) node classification on an attributed graph \smash{$\graph = \left(\adj, \features\right)$}  with adjacency matrix \smash{$\adj \in \left\{0, 1\right\}^{\nnodes \times \nnodes}$} and node attribute matrix $\features \in \real^{\nnodes \times \inputdim}$. We aim at inferring the labels \smash{$\y\nodeidxv \in \{1, ..., \nclass\}$} plus the the aleatoric uncertainty \smash{$u_\text{alea}\nodeidxv$} and the epistemic uncertainty \smash{$u_\text{epist}\nodeidxv$} of unlabeled nodes \smash{$\nodev \in \nodeslabeled$} given a set of labelled nodes \smash{$\nodeu \in \nodesunlabeled$} in the graph where \smash{$\vertices  = \nodeslabeled \cup \nodesunlabeled$} denotes the set of vertices.

\subsection{Axioms}\label{sec:axioms}

Uncertainty estimation in the setting of interdependent inputs is not well-studied. It often leaves the expected behavior and interpretations for uncertainty estimation unclear. Thus, we need well-grounded axioms to derive meaningful models. In this section, we aim at specifying the desired uncertainty predictions under various circumstances in homophilic attributed graphs. To this end, we propose three axioms which are based on the two following distinctions. The first distinction differentiates between aleatoric and epistemic uncertainty which are commonly used concepts under the i.i.d. assumptions \cite{Gal2016a, Malinin2017}. The second distinction differentiates between uncertainty without and with network effects which are motivated by the concepts of attribute and structure anomalies used in the attributed graph setting \cite{Bojchevski2018a}. These new axioms cover all possible combinations encountered by these distinctions and extend the axioms proposed by \citep{Eswaran2017} for non-attributed graphs. We designed the axioms to be informal and generic so that they are application independent, model-agnostic and do not require complex mathematical notations similarly to \citep{Eswaran2017, graph-transduction-confidence}. In practice, formal definitions need to instantiate general concepts like aleatoric/epistemic uncertainty and with/without network effects noting that some definitions might be more convenient depending on the task. 
The first axiom deals with (epistemic and aleatoric) uncertainty estimation without network effects (see Fig.~\ref{subfig:au_without_network}, \ref{subfig:eu_without_network}). :
\begin{axiom}
    \label{ax:certainty_features}
    \looseness=-1
    A node's prediction in the absence of network effects should only depend on its own features. A node with features more different from training features should be assigned higher uncertainty.
\end{axiom}
Axiom \ref{ax:certainty_features} states that if a node \smash{$\nodev$} has no neighbors, then the final prediction \smash{$\p\nodeidxv$} should only depend on its own node features \smash{$\x\nodeidxv$}. Further, for anomalous features the model should fall back to safe prior predictions, indicating high aleatoric and epistemic uncertainty. This aligns with \cite{Eswaran2017} which expects to recover prior predictions for non-attributed nodes without network effect, and \cite{provable-uncertainty, NatPN2021} which expect to recover prior predictions far from training data for i.i.d.\ inputs. The second axiom deals with epistemic uncertainty estimation with network effects (see Fig.~\ref{subfig:eu_without_network}, \ref{subfig:eu_with_network}):
\begin{axiom}
    \label{ax:certainty_network_epistemic}
    \looseness=-1
    All else being equal, if a node's prediction in the absence of network effects is more epistemically certain, then its neighbors' predictions in the presence of network effects should become more epistemically certain.
\end{axiom}
\looseness=-1
Axiom \ref{ax:certainty_network_epistemic} states that a node \smash{$\nodev$} with confident feature predictions \smash{$\x\nodeidxv$} is expected to be convincing and make its neighbors \smash{$\nodeu \in \neighbors(\nodev)$} more confident. Conversely, a node with anomalous features is expected to make its neighborhood less confident. This axiom materializes the network homophily assumption at the epistemic level i.e. connected nodes have similar epistemic uncertainty estimates. For non-attributed graphs, \cite{Eswaran2017} similarly expects a more confident node to have more influence on a direct neighbor. The third axiom deals with aleatoric uncertainty estimation with network effects (see Fig.~\ref{subfig:au_without_network}, \ref{subfig:au_with_network}):
\begin{axiom}
    \label{ax:certainty_network_aleatoric}
    All else being equal, a node's prediction in the presence of network effects should have higher aleatoric uncertainty if its neighbors' predictions in the absence of network effects have high aleatoric uncertainty. Further, a node prediction in the presence network effects should have higher aleatoric uncertainty if its neighbors' predictions in the absence network effects are more conflicting.
\end{axiom}
Axiom \ref{ax:certainty_network_aleatoric} states that no clear classification decision should be made for a node $\nodev$ if no clear classification decisions can be made for its neighbors. Further, the classification decision becomes less certain if a neighbor has a conflicting classification decision. Note that this axiom is more subtle than the direct application of network homophily at the aleatoric level. Indeed a node can have a high aleatoric uncertainty contrary to its neighbors which predict different classes with low aleatoric uncertainty. This aligns with the intuition that conflicting information from the neighborhood provides an irreducible uncertainty to the considered node.
\subsection{\ours{}} \label{subsec:ours}

The Bayesian update rule is a key component of \oursacro{} to model uncertainty on the predicted categorical distribution. For a single categorical distribution \smash{$\y \sim \DCat(\p)$}, the \emph{standard} Bayesian update is straightforward. A natural choice for a prior distribution over the parameters $\p$ is its conjugate prior i.e. the Dirichlet distribution \smash{$\prob(\p) = \DDir(\valpha^\text{prior})$} with \smash{$\alpha^\text{prior}_\iclass \in \real_{+}^{\nclass}$}. Given the observations \smash{$\y^{(1)}, ..., \y^{(N)}$}, the Bayesian update then consists in applying the Bayes' theorem 
\begin{align}
    \prob\left( \p \condition \{ \y
^{(j)}\}_{j=1}^N \right) \propto \prob\left(\{ \y^{(j)}  \}_{j=1}^N \condition \p \right) \times \prob(\p)
\end{align} producing the posterior distribution \smash{$\prob(\p \condition \{ y^{(j)} \}_{j=1}^N) = \DDir(\valpha^\text{post})$} where $\valpha^\text{post} = \valpha^\text{prior} + \vbeta$ are the parameters of the posterior and \smash{$\beta_\iclass = \sum_j = \mathbbm{1}_{\y^{(j)} = c}$} are the class counts. This framework naturally disentangles the aleatoric and epistemic uncertainty by defining the Dirichlet mean \smash{$\bar{\vp} = \frac{\valpha}{\alpha_0}$} and the total evidence count \smash{$\alpha_0 = \sum_\iclass \alpha_c$}. Indeed, the aleatoric uncertainty is commonly measured by the entropy of the categorical distribution i.e. \smash{$u_\text{alea} = \Entropy{\DCat(\bar{\p})}$} \cite{Malinin2017, Charpentier2020, NatPN2021} and the epistemic uncertainty can be measured by the total evidence count $\alpha_0$ of observations i.e. \smash{ $u_\text{epist} = - \alpha_0$} \cite{Charpentier2020, NatPN2021}. Alternatively, the epistemic uncertainty can also be measured with the Dirichlet differential entropy \cite{Malinin2017}. Note that the reparameterization using \smash{$\bar{\p}$} and \smash{$\alpha_0$} can apply to any class counts including the prior counts \smash{$\valpha^\text{prior}$}, the class counts \smash{$\vbeta$} and the posterior counts \smash{$\valpha^{\text{post}}$}.

\begin{figure}
    \centering
	\includegraphics[width=.75\textwidth]{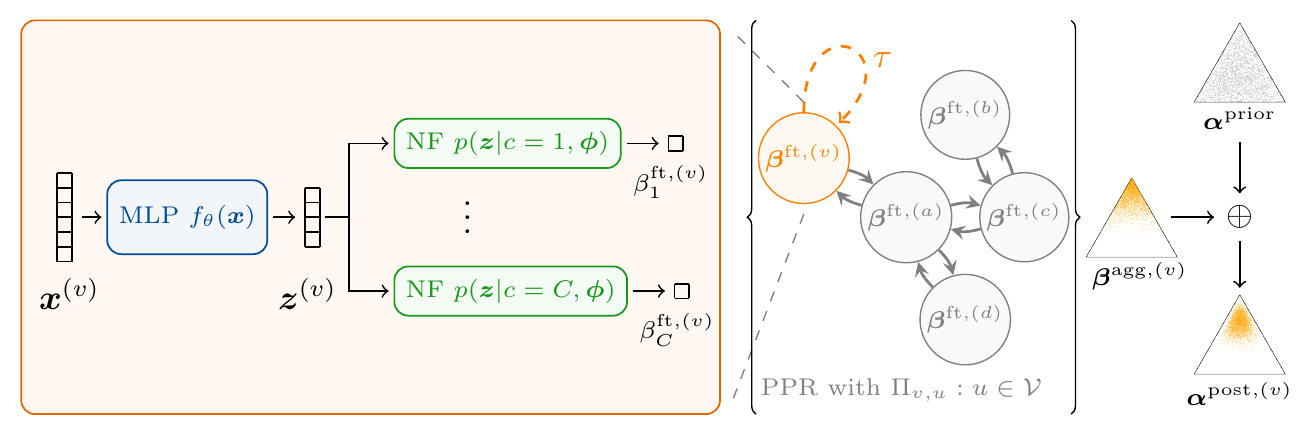}
	\caption{Overview of \ours{}: (1) node-level pseudo-counts computed by the feature encoder in the orange box, (2) PPR-based message passing visualized between the curly braces, and (3) input-dependent Bayesian update illustrated with the Dirichlet triangles on the right.}
    \label{fig:model_vis}
    \vspace{-3mm}
\end{figure}

\looseness=-1
For classification, the predicted categorical distribution \smash{$\hat{\y} \nodeidxv \sim \DCat(\p\nodeidxv)$} additionally depends on the specific input $\nodev$. Hence, the \emph{input-dependent} Bayesian rule \citep{Charpentier2020, NatPN2021} extends the Bayesian treatment of a single categorical distribution to classification by predicting an individual posterior update for any possible input. Specifically, it first introduces a fixed Dirichlet prior over the categorical distribution \smash{$\p\nodeidxv \sim \DDir(\valpha^\text{prior})$} where \smash{$\valpha^\text{prior} \in \real_+^\nclass$} is usually set to $1$, and second predicts the input-dependent update \smash{$\vbeta\nodeidxv$} which forms the posterior distribution \smash{$\p\nodeidxv \sim \DDir(\valpha^{\text{post}, (\nodev)})$} where the posterior parameters are equal to
\begin{align}\label{eq:input-posterior-update}
    \valpha^{\text{post}, (\nodev)} = \valpha^\text{prior} + \vbeta\nodeidxv.
\end{align}
The variable $\vbeta\nodeidxv$ can be interpreted as learned class pseudo-counts and its parametrization is crucial. For i.i.d. inputs, PostNet \citep{Charpentier2020} models the pseudo-counts $\vbeta\nodeidxv$ in two main steps. \textbf{(1)} it maps the inputs features $\x\nodeidxv$ onto a low-dimensional latent vector \smash{$\z\nodeidxv= f_\theta(\x\nodeidxv) \in \real^H$}. \textbf{(2)}, it fits one conditional probability density \smash{$\prob(\z\nodeidxv|\iclass; \vphi)$} per class on this latent space with normalizing flows. The final pseudo count for class $c$ is set proportional to its respective conditional density i.e. \smash{$\beta_\iclass\nodeidxv = N \prob(\z\nodeidxv|\iclass; \vphi) \prob(\iclass)$} where $N$ is a total certainty budget and $\prob(\iclass)= \frac{1}{\nclass}$ for balanced classes. Note that this implies \smash{$\alpha_0\nodeidxv = N \prob(\z\nodeidxv|\vphi)$}. This architecture has the advantage of decreasing the evidence outside the known distribution when increasing the evidence inside the known distribution, thus leading to consistent uncertainty estimation far from training data.

\textbf{Bayesian Update for Interdependent Inputs.} We propose a simple yet efficient modification for parameterizing $\beta_\iclass\nodeidxv$ to extend the input-dependent Bayesian update for interdependent attributed nodes. The core idea is to first predict the feature class pseudo-counts $\vbeta^{\text{ft}, (\nodev)}$ based on independent node features only, and then diffuse them to form the aggregated class pseudo-counts $\vbeta^{\text{agg}, (\nodev)}$ based on neighborhood features. Hence, the feature class pseudo-counts $\vbeta^{\text{ft}, (\nodev)}$ intuitively act as uncertainty estimates without network effects while the aggregated class pseudo-counts $\vbeta^{\text{agg}, (\nodev)}$ intuitively act as uncertainty estimates with network effects. 

\looseness=-1
To this end, \oursacro{} performs three main steps (see Fig.~\ref{fig:model_vis}). \textbf{(1)} A (feature) encoder maps the features of $\nodev$ onto a low-dimensional latent representation $\z$ i.e. \smash{$\z^{(\nodev)} = f_\theta(\x\nodeidxv) \in \real^H$}. In practice, we use a simple MLP encoder in our experiments similarly to APPNP \citep{Klicpera2018}. \textbf{(2)} One conditional probability density per class \smash{$\prob(\z^{(\nodev)} \condition \iclass; \vphi)$} is used to compute \smash{$\beta_\iclass^{\text{ft}, (\nodev)}$} i.e \smash{$\beta_\iclass^{\text{ft}, (\nodev)} \propto \prob(\z^{(\nodev)} \condition \iclass; \vphi)$}. Note that the the total feature evidence \smash{$\alpha_0^{\text{ft}, (\nodev)}= \sum_\iclass \beta_\iclass^{\text{ft}, (\nodev)}$} and the parameter \smash{$\bar{\vp}^{\text{ft}, (\nodev)} = \nicefrac{\vbeta^{\text{ft}, (\nodev)}}{\alpha_0^{\text{ft}, (\nodev)}}$} are only based on node features and can be seen as epistemic and aleatoric uncertainty measures \emph{without network effects}. In practice, we used radial normalizing flows for density estimation similarly to \citep{Charpentier2020} and scaled the certainty $N$ budget w.r.t. the latent dimension $H$ similarly to \citep{NatPN2021}. \textbf{(3)} A Personalized Page Rank (PPR) message passing scheme is used to diffuse the feature class pseudo-counts \smash{$\beta_\iclass^{\text{ft}, (\nodev)}$} and form the aggregated class pseudo-counts \smash{$\beta_\iclass^{\text{agg}, (\nodev)}$} i.e.
\begin{equation}\label{eq:agg-evidence}
    \beta_\iclass^{\text{agg}, (\nodev)} = \sum_{\nodeu \in \vertices} \pprelem_{v, u} \beta_\iclass^{\text{ft}, (\nodeu)}
\end{equation}
\looseness=-1
where $\pprelem_{v, u}$ are the dense PPR scores implicitly reflecting the importance of node $\nodeu$ on $\nodev$. We approximate the dense PPR scores using power iteration similarly to \citep{Klicpera2018}. The aggregated pseudo-count \smash{$\beta_\iclass^{\text{agg}, (\nodev)}$} is then used in the input-dependent Bayesian update (see Eq.~\ref{eq:input-posterior-update}). Remark that the scores \smash{$\pprelem_{v, u}$} define a valid conditional distribution over all nodes associated to the PPR random walk (i.e. \smash{$\sum_\nodeu \pprelem_{v, u} = 1$}). It can be viewed as a soft neighborhood for $\nodev$ accounting for all neighborhood hops through infinitely many message passing steps \citep{Klicpera2018}. Hence, on one hand, the PPR scores define a probability distribution over nodes using the node edges only. On the other hand, the quantity \smash{$\prob(\z^{(\nodeu)} \condition \iclass; \vphi)$} defines a probability distribution over nodes using the node features only. Therefore, we can equivalently rewrite this step using probabilistic notations \smash{$\prob(\nodev \condition \nodeu) = \pprelem_{v, u}$} and \smash{$\prob(\nodeu \condition \iclass) = \prob(\z^{(\nodeu)} \condition \iclass; \vphi)$}:
\begin{equation}\label{eq:agg-evidence-density}
    \beta_\iclass^{\text{agg}, (\nodev)} \propto \bar{\prob}(\nodev \condition \iclass) = \sum_{\nodeu \in \vertices} \prob(\nodev \condition \nodeu) \prob(\nodeu \condition \iclass)
\end{equation}
\looseness=-1
Interestingly, the quantity $\bar{\prob}(\nodev \condition \iclass)$ defines a valid distribution which normalizes over all node features and accounts for the soft neighborhood (i.e. \smash{$\int...\int\bar{\prob}(\nodev \condition \iclass) d\z^{(u_1)}...d\z^{(u_{|\vertices|})} = 1$}). Hence, the message passing step is a simple but efficient method to transform the feature distributions of a single node into a joint distributions over the soft neighborhood features. Finally, the evidence \smash{$\alpha_0^{\text{agg}, (\nodev)} = \sum_\iclass \beta_\iclass^{\text{agg}, (\nodev)}$} and the parameter \smash{$\vp^{\text{agg}, (\nodev)} = \nicefrac{\vbeta^{\text{agg}, (\nodev)}}{\alpha_0^{\text{agg}, (\nodev)}}$} are based on neighborhood features and can be seen as epistemic and aleatoric uncertainty measures \emph{with network effects}. Remark that, the sequential processing of the features (i.e. steps (1)+(2)) and network information (i.e. step (3)) in \oursacro{} is a key element to differentiate between the uncertainty without and with network effects and is a building block to provably obey the axioms.

\oursacro{} extends both APPNP \cite{Klicpera2018} and PostNet \cite{Charpentier2020} approaches. The key difference to APPNP is the density estimation modeling the epistemic uncertainty (i.e. steps (1)+(2)) and the input-dependent Bayesian update allowing to recover the prior prediction (i.e. Eq.~\ref{eq:input-posterior-update}). The key difference to PostNet is the PPR diffusion which accounts for dependence between nodes (step (3)).

\textbf{Optimization.} We follow \cite{Charpentier2020} and train \oursacro{} by minimizing the following Bayesian loss with two terms i.e.:
\begin{equation}
    \loss\nodeidxv = -\ExpecationArgs{\p\nodeidxv\sim \prior\namednodeidxv{post}}{\log \prob(\y\nodeidxv \condition \p\nodeidxv)} - \lambda \Entropy{\prior\namednodeidxv{post}}
\end{equation}
where $\lambda$ is a regularization factor. It can be computed quickly in closed-form and provides theoretical guarantees for optimal solutions \cite{Charpentier2020}. All parameters of \oursacro{} are trained jointly. Similarly to \cite{NatPN2021}, we also observed that "warm-up" training for the normalizing flows is helpful. 
\subsection{Uncertainty Estimation Guarantees} \label{sec:guarantees}

In this section, we provide theoretical guarantees showing that \oursacro{} fulfills the three axioms under mild assumptions given the specific definitions of concepts of aleatoric/epistemic uncertainty and with/without network effects presented in Sec.~\ref{subsec:ours}. Throughout this section, we consider a \oursacro{} model parameterized with a (feature) encoder $f_{\phi}$ with piecewise ReLU activations, a PPR diffusion, and a density estimator \smash{$\prob(\z^{\text{ft}, (\nodev)} \condition \omega)$} with bounded derivatives. We present detailed proofs in appendix. 

The first theorem shows that \oursacro{} follows Ax.~\ref{ax:certainty_features} and guarantees that \oursacro{} achieves reasonable uncertainty estimation on extreme node features without network effects:
\begin{theorem}
\label{thm:axiom-feature}
Lets consider a \oursacro{} model. Let \smash{$f_{\phi}(\x\nodeidxv)= V^{(l)}\x\nodeidxv + a^{(l)}$} be the piecewise affine representation of the ReLU network \smash{$f_{\phi}$} on the finite number of affine regions \smash{$Q^{(l)}$} \cite{understanding-nn-relu}. Suppose that \smash{$V^{(l)}$} have independent rows, then for any node $\nodev$ and almost any \smash{$\x\nodeidxv$} we have \smash{$\prob(f_{\phi}(\delta \cdot \x\nodeidxv) \condition \iclass; \vphi) \underset{\delta \rightarrow \infty}{\rightarrow} 0$}. Without network effects, it implies that \smash{$\beta_\iclass^{\text{ft}, (\nodev)} = \beta_\iclass^{\text{agg}, (\nodev)} \underset{\delta \rightarrow \infty}{\rightarrow} 0$}.
\end{theorem}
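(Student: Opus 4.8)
The plan is to use the piecewise-affine structure of the ReLU encoder $f_{\phi}$ to reduce the scaling limit $\delta\to\infty$ to the decay of the per-class flow density $\z\mapsto\prob(\z\condition\iclass;\vphi)$ far from the origin of the latent space $\real^H$, and then to combine integrability of a probability density with the bounded-derivative hypothesis to conclude that this density vanishes at infinity; the statement for $\beta_\iclass^{\text{agg}, (\nodev)}$ follows by unwinding the PPR diffusion for an isolated node.

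First I would fix a node $\nodev$ and an arbitrary non-zero feature vector $\x\nodeidxv$. Since $f_{\phi}$ is piecewise affine on a finite collection of polyhedral regions $\{Q^{(l)}\}$ \cite{understanding-nn-relu} covering $\real^\inputdim$, convexity of each region makes $\{\delta>0:\delta\x\nodeidxv\in Q^{(l)}\}$ an interval for every $l$; finitely many such intervals cover $(0,\infty)$, so one of them is unbounded above, i.e. there is a region $Q^{(l)}$ with $\delta\x\nodeidxv\in Q^{(l)}$ for all large $\delta$, and on it $f_{\phi}$ restricts to $y\mapsto V^{(l)}y+a^{(l)}$, giving $f_{\phi}(\delta\x\nodeidxv)=\delta V^{(l)}\x\nodeidxv+a^{(l)}$. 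This part holds for every $\x\nodeidxv\neq 0$; the ``almost any'' qualifier only enters next.

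Second, because each $V^{(l)}$ has independent rows, $\ker V^{(l)}$ is a proper linear subspace of $\real^\inputdim$, so $\bigcup_l \ker V^{(l)}$ is Lebesgue-null. For $\x\nodeidxv$ outside this null set we get $V^{(l)}\x\nodeidxv\neq 0$, hence $\lVert f_{\phi}(\delta\x\nodeidxv)\rVert\ge\delta\lVert V^{(l)}\x\nodeidxv\rVert-\lVert a^{(l)}\rVert\to\infty$. It then suffices to show that $\prob(\z\condition\iclass;\vphi)\to 0$ as $\lVert\z\rVert\to\infty$, and this is where the bounded-derivative assumption is used: a non-negative function on $\real^H$ that is integrable (a conditional density integrates to one) and globally Lipschitz, hence uniformly continuous, must tend to $0$ at infinity. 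Indeed, otherwise one picks $\eps>0$ and $\z_n$ with $\lVert\z_n\rVert\to\infty$ and $\prob(\z_n\condition\iclass;\vphi)\ge\eps$; Lipschitzness yields a fixed radius $r>0$ with $\prob(\cdot\condition\iclass;\vphi)\ge\eps/2$ on every ball $B(\z_n,r)$, and along a subsequence of pairwise disjoint balls the integral is at least $\sum_n (\eps/2)\,\mathrm{vol}(B(0,r))=\infty$, a contradiction. Applying this to the escaping sequence $\z=f_{\phi}(\delta\x\nodeidxv)$ gives $\prob(f_{\phi}(\delta\x\nodeidxv)\condition\iclass;\vphi)\to 0$.

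Finally I would transfer this to the pseudo-counts. Since $\beta_\iclass^{\text{ft}, (\nodev)}\propto\prob(\z^{(\nodev)}\condition\iclass;\vphi)$ with a proportionality constant $N\,\prob(\iclass)$ independent of $\delta$, we obtain $\beta_\iclass^{\text{ft}, (\nodev)}\to 0$. For a node without network effects, i.e. an isolated node $\nodev$, the personalized PageRank walk restarts to $\nodev$ with probability one, so $\pprelem_{v,u}=\mathbbm{1}_{u=v}$ and Eq.~\ref{eq:agg-evidence} collapses to $\beta_\iclass^{\text{agg}, (\nodev)}=\beta_\iclass^{\text{ft}, (\nodev)}$; both therefore vanish as $\delta\to\infty$, which is exactly Ax.~\ref{ax:certainty_features}. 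I expect the main obstacle to be the first stage, namely making fully rigorous that a generic ray is eventually trapped in one unbounded affine region and identifying precisely which directions must be discarded so that the ``almost any $\x\nodeidxv$'' claim is justified; the density-vanishing argument and the PPR bookkeeping are comparatively routine.
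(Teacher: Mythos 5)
Your proof is correct and takes essentially the same route as the paper: the paper's own proof simply invokes the restated Lemma~\ref{lem:relu-encoder-density} (from the cited NatPN/ReLU-overconfidence line of work), which asserts exactly that $\prob(f_{\phi}(\delta \cdot \x\nodeidxv) \condition \iclass; \vphi) \to 0$ under the independent-rows and bounded-derivative hypotheses, and then observes that PPR diffusion leaves the pseudo-counts unchanged in the absence of edges. You additionally supply a self-contained and sound proof of that lemma's content (the ray being eventually trapped in a single unbounded affine region, the null set being the union of the kernels of the $V^{(l)}$, and the integrable-plus-Lipschitz-implies-vanishing-at-infinity argument), which the paper leaves to the citation.
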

The proof relies on two main points: the equivalence of the \oursacro{} and PostNet architectures without network effects, and the uncertainty guarantees of PostNet far from training data similarly to \cite{NatPN2021}. It intuitively states that, without network effects, \oursacro{} predict small evidence (i.e. \smash{$\vbeta^{\text{agg}, (\nodev)} \approx \bm{0}$}) far from training features (i.e. \smash{$||\delta \cdot \x\nodeidxv|| \rightarrow \infty$}) and thus recover the prior prediction (i.e. \smash{$\valpha^{\text{post}, (\nodev)} \approx \valpha^\text{prior}$}). 
Note that contrary to \oursacro{}, methods which do not account for node features (e.g. Label Propagation) or methods which only use ReLU activations \cite{overconfident-relu} cannot validate Ax.~\ref{ax:certainty_features}. Further, methods which perform aggregation steps in early layers (e.g. GCN \citep{Kipf2016}) do not separate the processing of the feature and network information making unclear if they fulfill the Ax.~\ref{ax:certainty_features} requirements. 

The second theorem shows that \oursacro{} follows Ax.~\ref{ax:certainty_network_epistemic} and guarantees that a node $\nodev$ becomes more epistemically certain if its neighbors are more epistemically certain:
\begin{theorem}
\label{thm:axiom-network-epistemic}
Lets consider a \oursacro{} model. Then, given a node $\nodev$, the aggregated feature evidence \smash{$\alpha_0^{\text{agg}, (\nodev)}$} is increasing if the feature evidence \smash{$\alpha_0^{\text{ft}, (\nodeu)}$} of one of its neighbors \smash{$\nodeu \in \neighbors(\nodev)$} is increasing.
\end{theorem}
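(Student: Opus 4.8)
The plan is to exploit the linearity of the PPR diffusion in Eq.~\eqref{eq:agg-evidence}. Summing that identity over all classes $\iclass$ and exchanging the two finite sums gives
\begin{equation*}
    \alpha_0^{\text{agg}, (\nodev)} = \sum_\iclass \beta_\iclass^{\text{agg}, (\nodev)} = \sum_{\nodeu \in \vertices} \pprelem_{v, u} \sum_\iclass \beta_\iclass^{\text{ft}, (\nodeu)} = \sum_{\nodeu \in \vertices} \pprelem_{v, u}\, \alpha_0^{\text{ft}, (\nodeu)} ,
\end{equation*}
so the aggregated feature evidence at $\nodev$ is a conic combination of the per-node feature evidences with coefficients $\pprelem_{v, u} \ge 0$ that depend only on the adjacency $\adj$, not on any $\alpha_0^{\text{ft}, (\nodeu)}$. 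Hence, holding all other quantities fixed, $\partial \alpha_0^{\text{agg}, (\nodev)} / \partial \alpha_0^{\text{ft}, (\nodeu)} = \pprelem_{v, u} \ge 0$, which already yields monotonicity. Note that because the statement only concerns the aggregate $\alpha_0$, I do not need to track how the individual $\beta_\iclass^{\text{ft}, (\nodeu)}$ are redistributed as $\alpha_0^{\text{ft}, (\nodeu)}$ grows — the decomposition above collapses that dependence.

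To sharpen ``non-decreasing'' into ``increasing'' for an actual neighbor $\nodeu \in \neighbors(\nodev)$, I would show $\pprelem_{v, u} > 0$. Writing the (row-stochastic) PPR matrix in closed form, $\ppr = \gamma\big(\mI - (1-\gamma)\mathbf{T}\big)^{-1}$ with teleport probability $\gamma \in (0,1)$ and random-walk transition matrix $\mathbf{T} = \mathbf{D}^{-1}\adj$ (or its self-loop-augmented variant, as used by APPNP), the Neumann expansion $\ppr = \gamma \sum_{k \ge 0} (1-\gamma)^k \mathbf{T}^k$ converges and is entrywise non-negative term by term; the $k = 1$ term contributes $\gamma(1-\gamma)\,[\mathbf{T}]_{v,u} = \gamma(1-\gamma)/\deg(\nodev) > 0$ whenever $(\nodev,\nodeu)$ is an edge. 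Therefore $\pprelem_{v, u} > 0$, the partial derivative above is strictly positive, and $\alpha_0^{\text{agg}, (\nodev)}$ is strictly increasing in $\alpha_0^{\text{ft}, (\nodeu)}$. The same computation in fact gives $\pprelem_{v, u} > 0$ for every $\nodeu$ reachable from $\nodev$, recovering the more general ``soft neighborhood'' statement of Sec.~\ref{subsec:ours}.

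I do not expect a genuine obstacle; the only point needing care is fixing the PPR normalization convention (symmetric vs.\ random-walk, with or without self-loops, exact inverse vs.\ truncated power iteration), since the ``valid conditional distribution'' remark in Sec.~\ref{subsec:ours} points to the random-walk form. Under any of these choices the relevant matrix entries remain non-negative and the direct-neighbor entry remains strictly positive, so the argument and conclusion are unchanged; I would simply state the convention once at the start of the proof and then run the two displays above.
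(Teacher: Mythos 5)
Your proof is correct and follows essentially the same route as the paper's: sum Eq.~\eqref{eq:agg-evidence} over classes and exchange the two finite sums to obtain $\alpha_0^{\text{agg},(\nodev)} = \sum_{\nodeu}\pprelem_{v,u}\,\alpha_0^{\text{ft},(\nodeu)}$, from which monotonicity is immediate. Your additional Neumann-series argument that $\pprelem_{v,u}>0$ for a direct neighbor is a welcome piece of rigor that the paper's proof leaves implicit when it asserts strict increase.
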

The proof directly relies on Eq.~\ref{eq:agg-evidence}. Intuitively, this theorem states that the epistemic uncertainty \smash{$u_\text{epist}\nodeidxv = -\alpha_0^\text{agg, (\nodev)}$} of a node $\nodev$ with network effects decreases if the epistemic uncertainty of the neighboring nodes without network effects decreases. Note that contrary to \oursacro{}, methods which do not model the epistemic uncertainty explicitly (e.g. GCN \cite{Kipf2016}, GAT \citep{Velickovic2017} or APPNP \citep{Klicpera2018}) are not guaranteed to fulfil Ax.~\ref{ax:certainty_network_epistemic}. 

The third theorem shows that \oursacro{} follows Ax.~\ref{ax:certainty_network_aleatoric}. It guarantees that a node $\nodev$ becomes more aleatorically uncertain if its neighbors are more aleatorically uncertain, or if a neighbor prediction disagrees more with the current node prediction:
\begin{theorem}
\label{thm:axiom-network-aleatoric}
Lets consider a \oursacro{} model. Lets denote \smash{$\bar{\p}^\text{agg, (\nodev)} = \nicefrac{\vbeta^{\text{agg}, (\nodev)}}{\alpha_0^{\text{agg}, (\nodev)}}$} the diffused categorical prediction for node $\nodev$ where \smash{$\iclass^*$} is its winning class. Further, lets denote \smash{$\bar{\p}^\text{ft, (\nodeu)} = \nicefrac{\vbeta^{\text{ft}, (\nodev)}}{\alpha_0^{\text{ft}, (\nodev)}}$} the non-diffused categorical prediction for a node $\nodeu \in \vertices$. First, there exists normalized weights \smash{$\Pi_{v, u}^{'}$} such that \smash{$\sum_{\nodeu \in \vertices} \Pi_{v, u}^{'} \Entropy{\DCat(\bar{\p}^\text{ft, (\nodeu)})} \leq \Entropy{\DCat(\bar{\p}^\text{agg, (\nodev)})}$}. Second, if for any node \smash{$\nodeu \in \vertices$} the probability of $\bar{\p}_{\iclass^*}^\text{ft, (\nodeu)}$ decreases, then \smash{$\Entropy{\DCat(\bar{\p}^\text{agg, (\nodev)})}$} increases.
\end{theorem}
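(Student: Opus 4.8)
The plan is to exploit the fact that the diffused prediction $\bar{\p}^{\text{agg}, (\nodev)}$ is a \emph{convex combination} of the non-diffused predictions $\bar{\p}^{\text{ft}, (\nodeu)}$, and then to use concavity of the Shannon entropy for the first claim and a directional-derivative computation for the second. First I would sum Eq.~\ref{eq:agg-evidence} over the classes $\iclass$ to obtain $\alpha_0^{\text{agg}, (\nodev)} = \sum_{\nodeu \in \vertices} \pprelem_{v,u}\,\alpha_0^{\text{ft}, (\nodeu)}$, and then divide Eq.~\ref{eq:agg-evidence} by $\alpha_0^{\text{agg}, (\nodev)}$ to get
\[
\bar{\p}_\iclass^{\text{agg}, (\nodev)} = \sum_{\nodeu \in \vertices} \Pi'_{v,u}\,\bar{\p}_\iclass^{\text{ft}, (\nodeu)}, \qquad \Pi'_{v,u} := \frac{\pprelem_{v,u}\,\alpha_0^{\text{ft}, (\nodeu)}}{\alpha_0^{\text{agg}, (\nodev)}}.
\]
These weights are non-negative ($\pprelem_{v,u} \ge 0$ and $\alpha_0^{\text{ft}, (\nodeu)} \ge 0$) and sum to one by the identity for $\alpha_0^{\text{agg}, (\nodev)}$ just derived, so $\bar{\p}^{\text{agg}, (\nodev)}$ is a genuine convex combination of the $\bar{\p}^{\text{ft}, (\nodeu)}$. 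The first claim is then immediate from Jensen's inequality applied to the concave map $\p \mapsto \Entropy{\DCat(\p)}$ on the simplex: $\Entropy{\DCat(\bar{\p}^{\text{agg}, (\nodev)})} = \Entropy{\DCat(\sum_\nodeu \Pi'_{v,u}\bar{\p}^{\text{ft}, (\nodeu)})} \ge \sum_\nodeu \Pi'_{v,u}\,\Entropy{\DCat(\bar{\p}^{\text{ft}, (\nodeu)})}$.

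For the second claim I would hold the normalizing-flow magnitudes $\alpha_0^{\text{ft}, (\nodeu)}$ fixed, so that the mixing weights $\Pi'_{v,u}$ do not move, and model "the probability $\bar{\p}_{\iclass^*}^{\text{ft}, (\nodeu)}$ decreases" as a perturbation $\Delta$ of node $\nodeu$'s categorical prediction with $\Delta_{\iclass^*} < 0$, $\Delta_c \ge 0$ for $c \ne \iclass^*$, and $\sum_c \Delta_c = 0$ (the "conflicting neighbour" case of Ax.~\ref{ax:certainty_network_aleatoric} being exactly the sub-case where the freed mass concentrates on a class favoured by $\nodeu$). Pushing this through the convex combination, the aggregate prediction moves by $\Pi'_{v,u}\Delta$, and I would study $g(t) = \Entropy{\DCat(\bar{\p}^{\text{agg}, (\nodev)} + t\,\Pi'_{v,u}\Delta)}$ for $t \in [0,1]$. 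Since $\nabla \Entropy{\DCat(\p)} \cdot w = -\sum_c (\log p_c + 1)w_c$ and $\sum_c \Delta_c = 0$, the constant term drops and one gets $g'(t) = \Pi'_{v,u}\sum_{c \ne \iclass^*} \Delta_c\big(\log \bar{\p}_{\iclass^*}^{\text{agg}}(t) - \log \bar{\p}_c^{\text{agg}}(t)\big)$. Because $\iclass^*$ is the winning class of the aggregate, $\bar{\p}_{\iclass^*}^{\text{agg}}(t) > \bar{\p}_c^{\text{agg}}(t)$ for all $c \ne \iclass^*$, so every summand is non-negative and at least one is strictly positive, whence $g'(t) > 0$ and $g(1) > g(0)$, i.e.\ the aggregated entropy strictly increases.

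The algebra of the first part and the differentiation in the second are routine; the point requiring care in the write-up is the precise meaning of "$\bar{\p}_{\iclass^*}^{\text{ft}, (\nodeu)}$ decreases". I would stress two things: (i) one must fix $\alpha_0^{\text{ft}, (\nodeu)}$ so the weights $\Pi'_{v,u}$ stay put — otherwise changing a single coordinate also reweights the mixture and the conclusion can fail; and (ii) monotonicity of entropy under shrinking one coordinate is false in general, so the argument genuinely uses that $\iclass^*$ remains the largest coordinate of $\bar{\p}^{\text{agg}, (\nodev)}$ along the whole path. I would record this dominance as the mild assumption under which the "all else being equal" claim of Ax.~\ref{ax:certainty_network_aleatoric} holds (it holds automatically for infinitesimal decreases), mirroring the informal phrasing of the axiom. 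This is the main obstacle; everything else follows the convex-combination identity above.
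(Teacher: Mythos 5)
Your proposal is correct and follows essentially the same route as the paper's proof: the identical convex-combination identity $\bar{\p}^{\text{agg},(\nodev)}=\sum_{\nodeu}\Pi'_{v,u}\bar{\p}^{\text{ft},(\nodeu)}$ with concavity of the entropy for the first claim, and a derivative-along-the-perturbation argument relying on $\iclass^*$ remaining the winning class for the second. Your write-up is in fact slightly more careful than the paper's (explicitly fixing $\alpha_0^{\text{ft},(\nodeu)}$ so the mixture weights do not move, and allowing the freed mass to spread over several classes, which the paper only remarks on at the end), but the substance is the same.
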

The proof of the first part of the theorem is based on the entropy convexity. Intuitively, it states that the aleatoric uncertainty \smash{$u_\text{alea}\nodeidxv = \Entropy{\DCat(\bar{\p}^\text{agg, (\nodev)})}$} of a node $\nodev$ with network effects is lower bounded by a weighted average of the aleatoric uncertainty without network effects of its soft neighborhood. The second part of the theorem intuitively states that if the prediction of a neighboring node $\nodeu$ without neighbor effects disagrees more with the current class prediction $\iclass^*$ of the node $\nodev$, then the aleatoric uncertainty \smash{$u_\text{alea}\nodeidxv = \Entropy{\DCat(\bar{\p}^\text{agg, (\nodev)})}$} with network effects becomes higher. Note that contrary to \oursacro{}, methods which do not use edges (e.g. PostNet \cite{Charpentier2020}) cannot validate Ax.~\ref{ax:certainty_network_aleatoric} and Ax.~\ref{ax:certainty_network_epistemic}.
\subsection{Limitations \& Impact} \label{sec:limitations}

\textbf{OOD data close to ID data.} While \oursacro{} is guaranteed to provide consistent uncertainty estimates for nodes with extreme OOD features, it does not guarantee any specific uncertainty estimation behavior for OOD data close to ID data. Note that there exist two possible desired behaviors for OOD close to ID data: being robust to small dataset shifts \citep{Ovadia2019, confidence-calibrated-adversarial} or detect near OOD data \citep{contrastive-ood, robustness-uncertainty-dirichlet, attack-detection}. The duality of these two views makes unclear what would be the desired behavior even for i.i.d. data.

\looseness=-1
\textbf{Non-homophilic uncertainty.} Our approach assumes that connected nodes are likely to have similar uncertainty estimates as defined in Ax.~\ref{ax:certainty_network_epistemic} and Ax.~\ref{ax:certainty_network_aleatoric}. Contrary to \cite{heterophily-gnn}, we do not tackle the problem of heterophilic graphs where two neighboring nodes might reasonably have different uncertainty estimates. 

\textbf{Task-specific OOD.} Density estimation is shown to be inappropriate for OOD detection when acting directly on raw images \cite{typicality_OOD_generative, anomaly-detection, deep-generative} or on arbitrarily transformed space \cite{perfect-density-no-ood-guarantee}. One of the reasons is that normalizing flows learn pixel correlations in images. This phenomena  does not happen for tabular data with more semantic features \citep{Kirichenko2020}. First note that, similarly to tabular data, semantic node features are less likely to suffer from the same flaws. Second, following previous works \citep{Charpentier2020, NatPN2021, Kirichenko2020, density-states-ood, contrastive-ood}, \oursacro{} mitigates this issue by using density estimation on a latent space which is low-dimensional and task-specific. Nonetheless, we emphasize that \oursacro{} provides predictive uncertainty estimates which depends on the considered task i.e. OOD data w.r.t. features which are not useful for the specific task are likely not to be encoded in the latent space, and thus not to be detected.

\textbf{Broader Impact.}\label{sec:broader-impact}
The Assessment List for Trustworthy AI (ALTAI) \cite{trustworthy-ai} includes robustness, safety, and accountability. Uncertainty estimation is a key element to make AI systems follow these values. For example. an automated decision maker should know when it does not know. In this regard, \oursacro{} significantly improves the reliability of predictions on interdependent data under perturbations even though a user should not blindly rely on it. Further, ALTAI also mentions privacy and fairness. Therein, we raise awareness on the risk of using interconnected information which can amplify privacy or fairness violation in the presence of personal data.
\section{Experiments} \label{sec:experiments}

In this section, we provide an extensive evaluation set-up for uncertainty quantification for node classification. It compares \textbf{\oursacro{}} to 13 baselines on 8 datasets and consists in two task types. First, we evaluate the detection of OOD nodes with features perturbations and Left-Out classes. Second, we evaluate the robustness of accuracy, calibration and uncertainty metrics w.r.t. feature and edge shifts.

\subsection{Set-up}

\textbf{Ablation.} In the experiments, \oursacro{} uses a MLP as feature encoder, radial normalizing flows \citep{Rezende2015} for the density estimation and a certainty budget $N$ which scales with respect to the latent dimension \citep{NatPN2021}. We provide an ablation study covering aleatoric uncertainty through APPNP, feature-level estimates through PostNet, diffusing resulting pseudo-counts after training, and \oursacro{} with diffusion of \smash{$\log(\beta_\iclass^{\text{ft}, (\nodev)})$} instead of \smash{$\beta_\iclass^{\text{ft}, (\nodev)}$} (see App.~\ref{sec:ablation-study}). The complete \oursacro{} model outperforms the ablated models for uncertainty estimation. Further, we provide a hyper-parameter study covering for example different number of flow layers, latent dimensions, PPR teleport probabilities (see App.~\ref{sec:hyperparameter-study})). 

\textbf{Baselines.} We used 13 baselines covering a wide variety of models for semi-supervised node classification and uncertainty estimation. We show the results of 5 baselines in the main paper and the full results in appendix. It contains two standard GNNs (i.e. Vanilla GCN \textbf{VGCN} \citep{Kipf2016, Shchur2018} and \textbf{APPNP} \citep{Klicpera2018}), one robust GNN (i.e. \textbf{RGCN} \citep{Zhu2019}), one dropout-based method for GNN (i.e. \textbf{DropEdge} \citep{Rong2019}), two Graph Gaussian Processes methods (i.e. \textbf{GGP} \citep{Ng2018} and \textbf{Matern-GGP} \citep{Borovitskiy2020}), the Graph-based Kernel Dirichlet GCN method (i.e. \textbf{GKDE-GCN} \citep{Zhao2020}) and two parameter-less methods (i.e. \textbf{GKDE} \citep{Zhao2020} and Label Propagation \textbf{LP} see App.). Further, we also compared to direct adaptation of dropout (i.e. \textbf{VGCN-Dropout}\citep{Gal2016}), ensemble (i.e. \textbf{VGCN-Ensemble} \citep{Lakshminarayanan2017}), BNN (i.e. \textbf{VGCN-BNN} \citep{Blundell2015}) and energy-based models (i.e. \textbf{VGCN-Energy} \citep{Liu2020a}) to vanilla GCNs. All models are trained using the same number of layers and similar number of hidden dimensions. We used early stopping and report the used hyperparameters in appendix. The results are averaged over 10 initialization seeds per split. Further model details are given in appendix.

\textbf{Datasets.} We used 8 datasets with different properties summarized in appendix. We show the results of 3 datasets in the main paper and the full results in appendix. It contains common citation network datasets (i.e. \textbf{CoraML} \citep{Mccallum2000, Giles1998, Getoor2005, Sen2008a}, \textbf{CiteSeer} \citep{Giles1998, Getoor2005, Sen2008a}, \textbf{PubMed} \citep{Namata2012}, \textbf{CoauthorPhysics} \citep{Shchur2018} \textbf{CoauthorCS} \citep{Shchur2018})
 and co-purchase datasets (i.e. \textbf{AmazonPhotos} \citep{Mcauley2015, Shchur2018}, \textbf{AmazonComputers} \citep{Mcauley2015, Shchur2018}). The results are averaged over 10 initialization splits with a train/val/test split of $5\%/15\%/80\%$ using stratified sampling. Further, we evaluate on the large \textbf{OGBN Arxiv} dataset with $169,343$ nodes and $2,315,598$ edges \citep{ogb-dataset, microsoft-academic-graph}. Further dataset details are given in the appendix.
 
\subsection{Results}
 
\looseness=-1
\textbf{OOD Detection.} In this section, we evaluate uncertainty estimation for OOD detection. To this end, we use the Area Under Receiving Operator Characteristics Curve (AUC-ROC) with aleatoric scores \smash{$u_\text{alea}\nodeidxv$} \textbf{(Alea)} and epistemic scores \smash{$u_\text{epist}\nodeidxv$} \textbf{(Epist)} similarly to \citep{Charpentier2020, Zhao2020, Malinin2018, Malinin2019a, Malinin2019b, Liu2020a}. For \oursacro{}, we differentiate between epistemic uncertainty scores without network effects \textbf{(w/o Net.)} and with network effects \textbf{(w/ Net.)}. Further, we report results with the Area Under the Precision-Recall Curve (AUC-PR) in appendix. The definition of OOD for nodes in the presence of feature and network information is more complex than for i.i.d. input features. Hence, we propose two types of OOD nodes: nodes with OOD feature perturbations and nodes from Left-Out classes. For feature perturbations, we compute the accuracy on the perturbed nodes \textbf{(OOD-Acc)} to evaluate if the model can correct anomalous features. For Left-Out classes, we compute the accuracy on the observed classes \textbf{(ID-Acc)}. We report the short results in Tab.~\ref{tab:ood_short}. We set a threshold of 64 GiB and 12 hours per training run. We also exclude methods which do not use attributes for detection of OOD feature perturbations.

\textit{\underline{Feature perturbations:}} These perturbations aim at isolating the contribution of the node feature information on the model predictions. To this end, we randomly select a subset of the nodes. For each single node $\nodev$, we perturb individually its features using a Bernoulli or a Normal distribution (i.e. $\x\nodeidxv \sim \DBer(0.5)$ and $\x\nodeidxv \sim \DNormal(\veczero, \vecone)$) keeping all other node features fixed. We then compare the uncertainty prediction on the perturbed and unperturbed node. On one hand, Bernoulli noise corresponds to small perturbations in the domain of discrete bag-of-words features. On the other hand, Normal noise corresponds to extreme perturbations out of the domain of discrete bag-of-words features. In practice, we expect out-of-domain perturbations to be easily detected \citep{Charpentier2020}. First, we remark that uncertainty estimates of \oursacro{} based on features achieves an absolute improvement of at least $+15\%$ and $+29\%$ for Bernoulli and Normal perturbations over all baselines using network effects. This shows that \oursacro{} disentangles well the uncertainty without and with network effects. Second, we remark that all uncertainty estimates with network effects achieve poor results. This is expected if models can recover the correct prediction after aggregation steps. Specifically, we observe that \oursacro{} achieves an accuracy improvement between $+16\%$ and $+64\%$ for Normal perturbations on perturbed nodes compared to baselines. It stresses that \oursacro{} performs a consistent evidence aggregation from neighborhood to recover from anomalous features. Further, note that \oursacro{} is still capable to detect those perturbed nodes almost perfectly using feature uncertainty. These remarks aligns with Ax.~\ref{ax:certainty_features}.

\textit{\underline{Left-Out classes:}} Detection of Left-Out classes involves both feature and neighborhood information. In this case, we remove the Left-Out classes from the training set but keep them in the graph similarly to \citep{Zhao2020}. We observe that the uncertainty estimates with network effects of \oursacro{} achieves an absolute improvement between $+12\%$ and $+16\%$ compared to its uncertainty estimates without network effects. It highlights the benefit of incorporating network information for uncertainty predictions when OOD samples (i.e. samples from the Left-Out classes) are likely to be connected to each other. This remark aligns with Ax.~\ref{ax:certainty_network_epistemic}. Further, \oursacro{} outperforms other baselines by $+2\%$ to $+22\%$ for LOC detection while maintaining a competitive accuracy on other classes.

\textit{\underline{Misclassified samples:}} In addition to the OOD scores, we also report the results for the detection of misclassified samples with aleatoric and epistemic uncertainty on several datasets and models in App.~\ref{sec:add-exp-misclassification} for the sake of completeness. GPN performs competitively with the baselines. Moreover, we observe that epistemic uncertainty is better for OOD detection and aleatoric uncertainty is better for misclassification detection as already observed  e.g. in \citep{Zhao2020}.


\begin{table*}[!h]
    \centering
    \resizebox{\textwidth}{!}{
    \begin{tabular}{ll|cl|cl|cl}
        \toprule
        & \textbf{Model} & \textbf{ID-ACC} & \textbf{OOD-AUC-ROC} & \textbf{OOD-ACC} & \textbf{OOD-AUC-ROC} & \textbf{OOD-ACC} & \textbf{OOD-AUC-ROC} \\
        & & \multicolumn{2}{c|}{Leave-Out Classes} & \multicolumn{2}{c|}{{$\x\nodeidxv \sim \DBer(0.5)$}} & \multicolumn{2}{c}{$\x\nodeidxv \sim \DNormal(0,1)$} \\
        \midrule
        

        \multirow{6}{*}{CoraML} 
        & Matern-GGP & ${87.03}$ & ${83.13}$  /  ${82.98}$  /  $n.a.$ & $n.a.$ & $\hphantom{00.00}$ $n.a.$ $\hphantom{00.00}$ & $n.a.$ & $\hphantom{00.00}$ $n.a.$ $\hphantom{00.00}$ \\
        & VGCN-Dropout & ${89.08}$ & ${81.27}$ / ${71.65}$ / $n.a.$ & ${77.76}$ & ${62.06}$ / ${50.38}$ / $n.a.$ & ${18.28}$ & ${40.53}$ / ${{71.06}}$ / $n.a.$\\
        & VGCN-Energy & ${89.66}$ & ${81.70}$ / ${83.15}$ / $n.a.$ & ${78.90}$ & ${{63.68}}$ / ${{66.26}}$ / $n.a.$ & ${18.37}$ & ${\hphantom{0}9.34}$ / ${\hphantom{0}0.32}$ / $n.a.$\\
        & VGCN-Ensemble & ${\mathbf{89.87}}$ & ${81.85}$ / ${74.24}$ / $n.a.$ & ${78.00}$ & ${63.58}$ / ${56.81}$ / $n.a.$ & ${21.00}$ & ${33.72}$ / ${64.92}$ / $n.a.$\\
        & GKDE-GCN & ${89.33}$ & ${82.23}$ / ${82.09}$ / $n.a.$ & ${76.40}$ & ${61.74}$ / ${63.15}$ / $n.a.$ & ${16.86}$ & ${40.03}$ / ${\hphantom{0}1.42}$ / $n.a.$\\
        & GPN & ${88.51}$ & ${{83.25}}$ / ${\mathbf{86.28}}$ / ${{80.95}}$ & ${\mathbf{80.98}}$ & ${57.99}$ / ${55.23}$ / ${\mathbf{89.47}}$ & ${\mathbf{81.53}}$ & ${55.96}$ / ${56.51}$ / ${\mathbf{100.00}}$\\

        \midrule

        \multirow{6}{*}{\shortstack{Amazon \\ Photos}}
        & Matern-GGP & ${88.65}$ & ${{87.26}}$ / ${86.75}$ / $n.a.$ & $n.a.$ & $\hphantom{00.00}$ $n.a.$ $\hphantom{00.00}$ & $n.a.$ & $\hphantom{00.00}$ $n.a.$ $\hphantom{00.00}$\\
        & VGCN-Dropout & ${94.04}$ & ${80.90}$ / ${70.11}$ / $n.a.$ & ${83.86}$ & ${56.85}$ / ${55.04}$ / $n.a.$ & ${22.29}$ & ${49.11}$ / ${66.74}$ / $n.a.$\\
        & VGCN-Energy & ${94.24}$ & ${82.44}$ / ${79.64}$ / $n.a.$ & ${83.91}$ & ${57.91}$ / ${59.07}$ / $n.a.$ & ${21.40}$ & ${31.07}$ / ${\hphantom{0}6.42}$ / $n.a.$\\
        & VGCN-Ensemble & ${\mathbf{94.28}}$ & ${82.72}$ / ${88.53}$ / $n.a.$ & ${84.40}$ & ${57.86}$ / ${56.01}$ / $n.a.$ & ${20.30}$ & ${44.14}$ / ${69.01}$ / $n.a.$\\
        & GKDE-GCN & ${89.84}$ & ${73.65}$ / ${69.09}$ / $n.a.$ & ${73.17}$ & ${57.01}$ / ${58.00}$ / $n.a.$ & ${24.04}$ & ${24.45}$ / ${\hphantom{0}9.82}$ / $n.a.$\\
        & GPN & ${94.01}$ & ${82.72}$ / ${\mathbf{91.98}}$ / ${76.57}$ & ${\mathbf{87.47}}$ & ${56.25}$ / ${60.52}$ / $\mathbf{75.24}$ & ${\mathbf{88.29}}$ & ${51.89}$ / ${61.89}$ / $\mathbf{100.00}$\\
        
        \midrule
        
        \multirow{6}{*}{\shortstack{OGBN \\ Arxiv}}
        & Matern-GGP & $n.f.$ & $\hphantom{00.00}$ $n.f.$ $\hphantom{00.00}$ & $n.f.$ &$ \hphantom{00.00}$ $n.f.$ $\hphantom{00.00}$ & $n.f.$ & $\hphantom{00.00}$ $n.f.$ $\hphantom{00.00}$\\
        & VGCN-Dropout & ${75.47}$ & ${65.35}$ / ${64.24}$ / $n.a.$ & ${65.30}$ & ${48.11}$ / ${50.64}$ / $n.a.$ & ${49.90}$ & ${{60.10}}$ / ${62.87}$ / $n.a.$\\
        & VGCN-Energy & ${75.61}$ & ${64.91}$ / ${64.50}$ / $n.a.$ & ${65.70}$ & ${46.16}$ / ${48.54}$ / $n.a.$ & ${{51.30}}$ & ${53.83}$ / ${48.53}$ / $n.a.$\\
        & VGCN-Ensemble & ${\mathbf{76.12}}$ & ${65.93}$ / ${70.77}$ / $n.a.$ & ${\mathbf{67.00}}$ & ${45.99}$ / ${47.41}$ / $n.a.$ & ${49.00}$ & ${59.94}$ / ${{66.44}}$ / $n.a.$\\
        & GKDE-GCN & ${73.89}$ & ${{68.84}}$ / ${72.44}$ / $n.a.$ & ${65.20}$ & ${{50.98}}$ / ${{51.31}}$ / $n.a.$ & ${45.40}$ & ${53.94}$ / ${55.28}$ / $n.a.$\\
        & GPN & ${73.84}$ & ${66.33}$ / ${\mathbf{74.82}}$ / ${{62.17}}$ & ${65.50}$ & ${{51.49}}$ / ${{55.82}}$ / ${\mathbf{93.05}}$ & ${\mathbf{65.50}}$ & ${51.43}$ / ${55.85}$ / ${\mathbf{95.54}}$\\
        \bottomrule
    \end{tabular}
    }
    \caption{LOC and Feature Perturbations: Accuracy is reported on ID nodes for LOC experiments and on OOD nodes for feature perturbation experiments. OOD-AUC-ROC scores are given as \emph{[Alea w/ Net] / [Epist w/ Net] / [Epist w/o Net]}. $n.a.$ means either model or metric not applicable and $n.f.$ means not finished within our constraints.}
    \label{tab:ood_short}
    \vspace{-3mm}
\end{table*}

\looseness=-1
\textbf{Attributed Graph Shifts.} In this section, we focus on evaluating the robustness of the accuracy, calibration and the evolution of the uncertainty estimation under node feature shifts and edges shifts. This aligns with \citep{Ovadia2019} which aims at evaluating the reliability of uncertainty estimates under dataset shifts for i.i.d. inputs. Specifically, we evaluates the evolution of the accuracy, the ECE \citep{Naeini2015} calibration score, the epistemic and the aleatoric uncertainty measures.

\textit{\underline{Feature shifts:}} We perturbed the features of a fraction of the nodes using unit Gaussian perturbations. We report the short results in Fig.~\ref{fig:shifts-normal-cora} and the full results in appendix. On one hand, we observe that \oursacro{} is significantly more robust to feature perturbations than all baselines. Indeed, the accuracy of \oursacro{} decreases by less than $5\%$ even when $80\%$ of the nodes are perturbed while the accuracy of other baselines decreases by more than $50\%$ when only $20\%$ of the nodes are perturbed. Similarly, we observed that \oursacro{} remains calibrated even when a high fraction of nodes are perturbed contrary to baselines. Hence, \oursacro{} intuitively discards uncertain features from perturbed nodes and only accounts for certain features from other nodes for more accurate predictions. On the other hand, we observe that, as desired, the average epistemic uncertainty of \oursacro{} consistently decreases when more nodes are perturbed. This remark aligns with Ax.~\ref{ax:certainty_network_epistemic}. In contrast, baselines dangerously become more certain while achieving a poorer accuracy similarly to ReLU networks \citep{overconfident-relu}. Hence \oursacro{} predictions are significantly more reliable than baselines under feature shifts.

\textit{\underline{Edge shifts:}} For edge shifts, we perturbed a fraction of edges at random. We report the results in appendix. As desired, we observe that the aleatoric uncertainty increases for all models including \oursacro{}. This aligns with Ax.~\ref{ax:certainty_network_aleatoric} and the expectations that conflicting neighborhood should lead to more aleatorically uncertain predictions. Furthermore, the average epistemic uncertainty of \oursacro{} remains constant which is reasonable since the average evidence of a node's neighborhood remains constant.

\begin{figure}[!h]
    \centering
    \includegraphics[width=\textwidth]{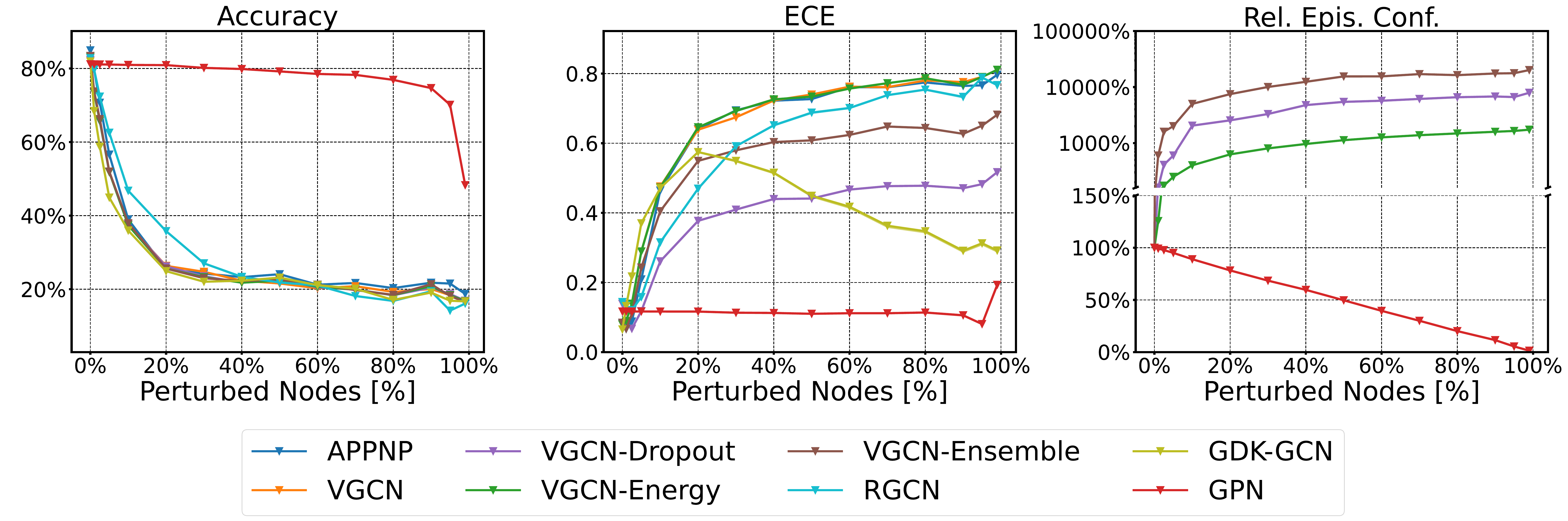}
    \caption{Accuracy, ECE, and average epistemic confidence under feature shifts for CoraML. We perturb features of different percentage of nodes using a Unit Gaussian noise.}
    \label{fig:shifts-normal-cora}
\end{figure}

\textbf{Qualitative Evaluation.} We show the abstracts of the CoraML papers achieving the highest and the lowest epistemic uncertainty without network effects in Tab.~\ref{tab:odd_abstracts} and in the appendix. Interestingly, we observed that most uncertain papers corresponds to short and unconventional abstracts which can be seen as anomalous features. Furthermore, we also ranked the nodes w.r.t. to their epistemic uncertainty with network effects. In this case, we observed that $78/100$ nodes with the highest uncertainty do not belong to the largest connected component of the CoraML dataset. We propose additional uncertainty visualizations for \oursacro{} in App.~\ref{sec:add-exp-qualitative}. 

\textbf{Inference \& training time.} We provide a comparison of inference and training times for most of the datasets and models under consideration in in App.~\ref{sec:add-exp-time}. GPN needs a single pass for uncertainty estimation but requires the additional evaluation of one normalizing flow per class compared to APPNP. Hence, GPN brings a small computational overhead for uncertainty estimation at inference time. Furthermore, GPN is usually converging relatively fast during training and does not require pre-computing kernel values. In contrast, GKDE-GCN \citep{Zhao2020} requires the computation of the underlying Graph Kernel with a complexity of $\mathcal{O}\left(\nnodes^2\right)$ where $\nnodes$ is the number of nodes in the graph. Finally, GPN is significantly more efficient than dropout or ensemble approaches as it does not require training or evaluating multiple models.

\begin{table}[!h]
    \centering
    \tiny
    \begin{tabular}{p{6.5cm} p{6.5cm}}
        \toprule
        {\tt IlliGAL Report No. 95006 July 1995} & 
        {\tt Report of the 1996 Workshop on Reinforcement} \\
        \midrule
        {\tt Reihe FABEL-Report Status: extern Dokumentbezeichner: Org/Reports/nr-35 Erstellt am: 21.06.94 Korrigiert am: 28.05.95 ISSN 0942-413X} & 
        {\tt We tend to think of what we really know as what we can talk about, and disparage knowledge that we can't verbalize. [Dowling 1989, p. 252]} \\
        \midrule
        {\tt Keith Mathias and Darrell Whitley Technical Report CS-94-101 January 7, 1994} & 
        {\tt Multigrid Q-Learning Charles W. Anderson and Stewart G. Crawford-Hines Technical Report CS-94-121 October 11, 1994} \\
        \midrule
        {\tt Internal Report 97-01} & {\tt A Learning Result for Abstract} \\
        \bottomrule
        \vspace{0.1cm}
    \end{tabular}
    \caption{A selection of abstracts from CoraML which are assigned low feature evidences by GPN.}
    \label{tab:odd_abstracts}
    \vspace{-3mm}
\end{table}

\section{Conclusion} \label{sec:conclusion}

We introduce a well-grounded framework for uncertainty estimation on interdependent nodes. First, we propose explicit and motivated axioms describing desired properties for aleatoric and epistemic uncertainty in the absence or in the presence of network effects. Second, we propose \oursacro{}, a GNN for uncertainty estimation which provably follows our axioms. \oursacro{} performs a Bayesian update over the class predictions based on density estimation and diffusion. Third, we conduct extensive experiments to evaluate the uncertainty performances of a broad range of baselines for OOD detection and robustness against node feature or edge shifts. \oursacro{} outperforms all baselines in these experiments.


\begin{ack}
This research was supported by the BMW AG, by the Helmholtz Association under the joint research school “Munich School for Data Science - MUDS“, and by a grant from Software Campus through the German Federal Ministry of Education and Research. 
\end{ack}

\newpage

\bibliography{neurips_2021}

\begin{thebibliography}{104}
\providecommand{\natexlab}[1]{#1}
\providecommand{\url}[1]{\texttt{#1}}
\expandafter\ifx\csname urlstyle\endcsname\relax
  \providecommand{\doi}[1]{doi: #1}\else
  \providecommand{\doi}{doi: \begingroup \urlstyle{rm}\Url}\fi

\bibitem[tru(2020)]{trustworthy-ai}
The assessment list for trustworthy artificial intelligence (altai) for self
  assessment.
\newblock \emph{European Commission}, 2020.

\bibitem[Abadi et~al.(2015)Abadi, Agarwal, Barham, Brevdo, Chen, Citro,
  Corrado, Davis, Dean, Devin, Ghemawat, Goodfellow, Harp, Irving, Isard, Jia,
  Jozefowicz, Kaiser, Kudlur, Levenberg, Man\'{e}, Monga, Moore, Murray, Olah,
  Schuster, Shlens, Steiner, Sutskever, Talwar, Tucker, Vanhoucke, Vasudevan,
  Vi\'{e}gas, Vinyals, Warden, Wattenberg, Wicke, Yu, and Zheng]{tensorflow}
M.~Abadi, A.~Agarwal, P.~Barham, E.~Brevdo, Z.~Chen, C.~Citro, G.~S. Corrado,
  A.~Davis, J.~Dean, M.~Devin, S.~Ghemawat, I.~Goodfellow, A.~Harp, G.~Irving,
  M.~Isard, Y.~Jia, R.~Jozefowicz, L.~Kaiser, M.~Kudlur, J.~Levenberg,
  D.~Man\'{e}, R.~Monga, S.~Moore, D.~Murray, C.~Olah, M.~Schuster, J.~Shlens,
  B.~Steiner, I.~Sutskever, K.~Talwar, P.~Tucker, V.~Vanhoucke, V.~Vasudevan,
  F.~Vi\'{e}gas, O.~Vinyals, P.~Warden, M.~Wattenberg, M.~Wicke, Y.~Yu, and
  X.~Zheng.
\newblock {TensorFlow}: Large-scale machine learning on heterogeneous systems,
  2015.

\bibitem[Abdar et~al.(2020)Abdar, Pourpanah, Hussain, Rezazadegan, Liu,
  Ghavamzadeh, Fieguth, Khosravi, Acharya, Makarenkov, et~al.]{Abdar2020}
M.~Abdar, F.~Pourpanah, S.~Hussain, D.~Rezazadegan, L.~Liu, M.~Ghavamzadeh,
  P.~Fieguth, A.~Khosravi, U.~R. Acharya, V.~Makarenkov, et~al.
\newblock A review of uncertainty quantification in deep learning: Techniques,
  applications and challenges.
\newblock \emph{arXiv preprint arXiv:2011.06225}, 2020.

\bibitem[Ahn et~al.(2019)Ahn, Cha, Lee, and
  Moon]{uncertainty-continual-learning}
H.~Ahn, S.~Cha, D.~Lee, and T.~Moon.
\newblock Uncertainty-based continual learning with adaptive regularization.
\newblock In H.~Wallach, H.~Larochelle, A.~Beygelzimer, F.~d'Alch\'{e} Buc,
  E.~Fox, and R.~Garnett, editors, \emph{Advances in Neural Information
  Processing Systems}, 2019.

\bibitem[Akita et~al.(2018)Akita, Nakago, Komatsu, Sugawara, Maeda, Baba, and
  Kashima]{Akita2018}
H.~Akita, K.~Nakago, T.~Komatsu, Y.~Sugawara, S.-i. Maeda, Y.~Baba, and
  H.~Kashima.
\newblock Bayesgrad: Explaining predictions of graph convolutional networks.
\newblock In \emph{Advances in Neural Information Processing Systems}, 2018.

\bibitem[Amini et~al.(2020)Amini, Schwarting, Soleimany, and
  Rus]{evidential-regression}
A.~Amini, W.~Schwarting, A.~Soleimany, and D.~Rus.
\newblock Deep evidential regression.
\newblock In \emph{Advances in Neural Information Processing Systems}, 2020.

\bibitem[Arora et~al.(2018)Arora, Basu, Mianjy, and
  Mukherjee]{understanding-nn-relu}
R.~Arora, A.~Basu, P.~Mianjy, and A.~Mukherjee.
\newblock Understanding deep neural networks with rectified linear units.
\newblock In \emph{International Conference on Learning Representations}, 2018.

\bibitem[Bilo{\v{s}} et~al.(2019)Bilo{\v{s}}, Charpentier, and
  G{\"u}nnemann]{Bilovs2019}
M.~Bilo{\v{s}}, B.~Charpentier, and S.~G{\"u}nnemann.
\newblock Uncertainty on asynchronous time event prediction.
\newblock \emph{Advances in Neural Information Processing Systems}, 2019.

\bibitem[Blundell et~al.(2015)Blundell, Cornebise, Kavukcuoglu, and
  Wierstra]{Blundell2015}
C.~Blundell, J.~Cornebise, K.~Kavukcuoglu, and D.~Wierstra.
\newblock Weight uncertainty in neural network.
\newblock In \emph{International Conference on Machine Learning}, 2015.

\bibitem[Bojchevski and G{\"u}nnemann(2017)]{Bojchevski2017}
A.~Bojchevski and S.~G{\"u}nnemann.
\newblock Deep gaussian embedding of graphs: Unsupervised inductive learning
  via ranking.
\newblock \emph{arXiv preprint arXiv:1707.03815}, 2017.

\bibitem[Bojchevski and G{\"u}nnemann(2018)]{Bojchevski2018a}
A.~Bojchevski and S.~G{\"u}nnemann.
\newblock Bayesian robust attributed graph clustering: Joint learning of
  partial anomalies and group structure.
\newblock In \emph{AAAI Conference on Artificial Intelligence}, 2018.

\bibitem[Borovitskiy et~al.(2021)Borovitskiy, Azangulov, Terenin, Mostowsky,
  Deisenroth, and Durrande]{Borovitskiy2020}
V.~Borovitskiy, I.~Azangulov, A.~Terenin, P.~Mostowsky, M.~Deisenroth, and
  N.~Durrande.
\newblock {Mat\'ern Gaussian Processes on Graphs}.
\newblock In \emph{International Conference on Artificial Intelligence and
  Statistics}, 2021.

\bibitem[Carlini and Wagner(2017)]{attack-detection}
N.~Carlini and D.~A. Wagner.
\newblock Adversarial examples are not easily detected: Bypassing ten detection
  methods.
\newblock \emph{Computing Research Repository}, 2017.

\bibitem[Charpentier et~al.(2020)Charpentier, Z{\"{u}}gner, and
  G{\"{u}}nnemann]{Charpentier2020}
B.~Charpentier, D.~Z{\"{u}}gner, and S.~G{\"{u}}nnemann.
\newblock {Posterior network: Uncertainty estimation without ood samples via
  density-based pseudo-counts}.
\newblock \emph{Advances in Neural Information Processing Systems}, 2020.

\bibitem[Charpentier et~al.(2021)Charpentier, Borchert, Zügner, Geisler, and
  Günnemann]{NatPN2021}
B.~Charpentier, O.~Borchert, D.~Zügner, S.~Geisler, and S.~Günnemann.
\newblock Natural posterior network: Deep bayesian predictive uncertainty for
  exponential family distributions, 2021.

\bibitem[Chen et~al.(2018)Chen, Ma, and Xiao]{Chen2018}
J.~Chen, T.~Ma, and C.~Xiao.
\newblock Fastgcn: fast learning with graph convolutional networks via
  importance sampling.
\newblock \emph{arXiv preprint arXiv:1801.10247}, 2018.

\bibitem[Choi et~al.(2019)Choi, Jang, and Alemi]{anomaly-detection}
H.~Choi, E.~Jang, and A.~A. Alemi.
\newblock Generative ensembles for robust anomaly detection.
\newblock In \emph{International Conference on Learning Representations}, 2019.

\bibitem[Clements et~al.(2019)Clements, Delft, Robaglia, Slaoui, and
  Toth]{uncertainty-rl}
W.~R. Clements, B.~V. Delft, B.-M. Robaglia, R.~B. Slaoui, and S.~Toth.
\newblock Estimating risk and uncertainty in deep reinforcement learning, 2019.

\bibitem[Dallachiesa et~al.(2014)Dallachiesa, Aggarwal, and
  Palpanas]{Dallachiesa2014}
M.~Dallachiesa, C.~Aggarwal, and T.~Palpanas.
\newblock Node classification in uncertain graphs.
\newblock In \emph{Proceedings of the 26th international conference on
  scientific and statistical database management}, pages 1--4, 2014.

\bibitem[Depeweg et~al.(2018)Depeweg, Hernandez-Lobato, Doshi-Velez, and
  Udluft]{Depeweg2018}
S.~Depeweg, J.-M. Hernandez-Lobato, F.~Doshi-Velez, and S.~Udluft.
\newblock Decomposition of uncertainty in bayesian deep learning for efficient
  and risk-sensitive learning.
\newblock In \emph{International Conference on Machine Learning}, 2018.

\bibitem[Dusenberry et~al.(2020)Dusenberry, Jerfel, Wen, Ma, Snoek, Heller,
  Lakshminarayanan, and Tran]{rank-1-bnn}
M.~W. Dusenberry, G.~Jerfel, Y.~Wen, Y.-A. Ma, J.~Snoek, K.~Heller,
  B.~Lakshminarayanan, and D.~Tran.
\newblock Efficient and scalable bayesian neural nets with rank-1 factors.
\newblock In \emph{International Conference on Machine Learning}, 2020.

\bibitem[Elinas et~al.(2020)Elinas, Bonilla, and Tiao]{Elinas2019}
P.~Elinas, E.~V. Bonilla, and L.~Tiao.
\newblock Variational inference for graph convolutional networks in the absence
  of graph data and adversarial settings.
\newblock \emph{Advances in Neural Information Processing Systems}, 2020.

\bibitem[Eswaran et~al.(2017)Eswaran, G{\"u}nnemann, and
  Faloutsos]{Eswaran2017}
D.~Eswaran, S.~G{\"u}nnemann, and C.~Faloutsos.
\newblock The power of certainty: A dirichlet-multinomial model for belief
  propagation.
\newblock In \emph{Proceedings of the 2017 SIAM International Conference on
  Data Mining}, 2017.

\bibitem[Farquhar et~al.(2020)Farquhar, Smith, and Gal]{liberty-depth-bnn}
S.~Farquhar, L.~Smith, and Y.~Gal.
\newblock Liberty or depth: Deep bayesian neural nets do not need complex
  weight posterior approximations.
\newblock In \emph{Advances in Neural Information Processing Systems}, 2020.

\bibitem[Feng et~al.(2020)Feng, Wang, Wang, and Ding]{Feng2020}
B.~Feng, Y.~Wang, Z.~Wang, and Y.~Ding.
\newblock Uncertainty-aware attention graph neural network for defending
  adversarial attacks.
\newblock \emph{arXiv preprint arXiv:2009.10235}, 2020.

\bibitem[Fey and Lenssen(2019)]{pytorch-geometric}
M.~Fey and J.~E. Lenssen.
\newblock Fast graph representation learning with {PyTorch Geometric}.
\newblock In \emph{ICLR Workshop on Representation Learning on Graphs and
  Manifolds}, 2019.

\bibitem[Foong et~al.(2019)Foong, Burt, Li, and Turner]{Foong2019}
A.~Y. Foong, D.~R. Burt, Y.~Li, and R.~E. Turner.
\newblock On the expressiveness of approximate inference in bayesian neural
  networks.
\newblock \emph{arXiv preprint arXiv:1909.00719}, 2019.

\bibitem[Gal(2016)]{Gal2016a}
Y.~Gal.
\newblock {Uncertainty in Deep Learning}.
\newblock 2016.

\bibitem[Gal and Ghahramani(2016)]{Gal2016}
Y.~Gal and Z.~Ghahramani.
\newblock Dropout as a bayesian approximation: Representing model uncertainty
  in deep learning.
\newblock In \emph{International Conference on Machine Learning}, 2016.

\bibitem[Gal and Smith(2018)]{sufficient-conditions-no-adversarial}
Y.~Gal and L.~Smith.
\newblock Sufficient conditions for idealised models to have no adversarial
  examples: a theoretical and empirical study with bayesian neural networks,
  2018.

\bibitem[Getoor(2005)]{Getoor2005}
L.~Getoor.
\newblock Link-based classification.
\newblock In \emph{Advanced methods for knowledge discovery from complex data}.
  Springer, 2005.

\bibitem[Giles et~al.(1998)Giles, Bollacker, and Lawrence]{Giles1998}
C.~L. Giles, K.~D. Bollacker, and S.~Lawrence.
\newblock Citeseer: An automatic citation indexing system.
\newblock In \emph{Proceedings of the third ACM conference on Digital
  libraries}, pages 89--98, 1998.

\bibitem[Grathwohl et~al.(2019)Grathwohl, Wang, Jacobsen, Duvenaud, Norouzi,
  and Swersky]{Grathwohl2019}
W.~Grathwohl, K.-C. Wang, J.-H. Jacobsen, D.~Duvenaud, M.~Norouzi, and
  K.~Swersky.
\newblock Your classifier is secretly an energy based model and you should
  treat it like one.
\newblock \emph{arXiv preprint arXiv:1912.03263}, 2019.

\bibitem[Graves(2011)]{Graves2011}
A.~Graves.
\newblock Practical variational inference for neural networks.
\newblock In \emph{Advances in Neural Information Processing Systems}.
  Citeseer, 2011.

\bibitem[Guo et~al.(2017)Guo, Pleiss, Sun, and Weinberger]{Guo2017}
C.~Guo, G.~Pleiss, Y.~Sun, and K.~Q. Weinberger.
\newblock On calibration of modern neural networks.
\newblock In \emph{International Conference on Machine Learning}, 2017.

\bibitem[Hasanzadeh et~al.(2020)Hasanzadeh, Hajiramezanali, Boluki, Zhou,
  Duffield, Narayanan, and Qian]{Hasanzadeh2020}
A.~Hasanzadeh, E.~Hajiramezanali, S.~Boluki, M.~Zhou, N.~Duffield,
  K.~Narayanan, and X.~Qian.
\newblock Bayesian graph neural networks with adaptive connection sampling.
\newblock In \emph{International Conference on Machine Learning}, 2020.

\bibitem[Havasi et~al.(2021)Havasi, Jenatton, Fort, Liu, Snoek,
  Lakshminarayanan, Dai, and Tran]{mimo-independent-subnetworks}
M.~Havasi, R.~Jenatton, S.~Fort, J.~Z. Liu, J.~Snoek, B.~Lakshminarayanan,
  A.~M. Dai, and D.~Tran.
\newblock Training independent subnetworks for robust prediction.
\newblock In \emph{International Conference on Learning Representations}, 2021.

\bibitem[Hein et~al.(2019)Hein, Andriushchenko, and
  Bitterwolf]{overconfident-relu}
M.~Hein, M.~Andriushchenko, and J.~Bitterwolf.
\newblock Why relu networks yield high-confidence predictions far away from the
  training data and how to mitigate the problem.
\newblock \emph{Computer Vision and Pattern Recognition}, 2019.

\bibitem[Hirschfeld et~al.(2020)Hirschfeld, Swanson, Yang, Barzilay, and
  Coley]{uncertainty-nn-molecules}
L.~Hirschfeld, K.~Swanson, K.~Yang, R.~Barzilay, and C.~W. Coley.
\newblock Uncertainty quantification using neural networks for molecular
  property prediction, 2020.

\bibitem[Hron et~al.(2018)Hron, Matthews, and Ghahramani]{Hron2018}
J.~Hron, A.~Matthews, and Z.~Ghahramani.
\newblock Variational bayesian dropout: pitfalls and fixes.
\newblock In \emph{International Conference on Machine Learning}, 2018.

\bibitem[Hu et~al.(2017)Hu, Cheng, Huang, Fang, and Luo]{Hu2017}
J.~Hu, R.~Cheng, Z.~Huang, Y.~Fang, and S.~Luo.
\newblock On embedding uncertain graphs.
\newblock In \emph{Information and Knowledge Management}, 2017.

\bibitem[Hu et~al.(2020)Hu, Fey, Zitnik, Dong, Ren, Liu, Catasta, and
  Leskovec]{ogb-dataset}
W.~Hu, M.~Fey, M.~Zitnik, Y.~Dong, H.~Ren, B.~Liu, M.~Catasta, and J.~Leskovec.
\newblock Open graph benchmark: Datasets for machine learning on graphs.
\newblock \emph{arXiv preprint arXiv:2005.00687}, 2020.

\bibitem[Huang et~al.(2020)Huang, He, Singh, Lim, and Benson]{Huang2020}
Q.~Huang, H.~He, A.~Singh, S.-N. Lim, and A.~R. Benson.
\newblock Combining label propagation and simple models out-performs graph
  neural networks.
\newblock \emph{arXiv preprint arXiv:2010.13993}, 2020.

\bibitem[Kingma and Ba(2014)]{Kingma2014}
D.~P. Kingma and J.~Ba.
\newblock Adam: A method for stochastic optimization.
\newblock \emph{arXiv preprint arXiv:1412.6980}, 2014.

\bibitem[Kipf and Welling(2016)]{Kipf2016}
T.~N. Kipf and M.~Welling.
\newblock {Semi-Supervised Classification with Graph Convolutional Networks}.
\newblock \emph{International Conference on Learning Representations}, 2016.

\bibitem[Kirichenko et~al.(2020)Kirichenko, Izmailov, and
  Wilson]{Kirichenko2020}
P.~Kirichenko, P.~Izmailov, and A.~G. Wilson.
\newblock Why normalizing flows fail to detect out-of-distribution data.
\newblock \emph{arXiv preprint arXiv:2006.08545}, 2020.

\bibitem[Klicpera et~al.(2018)Klicpera, Bojchevski, and
  G{\"u}nnemann]{Klicpera2018}
J.~Klicpera, A.~Bojchevski, and S.~G{\"u}nnemann.
\newblock Predict then propagate: Graph neural networks meet personalized
  pagerank.
\newblock \emph{arXiv preprint arXiv:1810.05997}, 2018.

\bibitem[Klicpera et~al.(2019)Klicpera, Wei{\ss}enberger, and
  G{\"u}nnemann]{Klicpera2019}
J.~Klicpera, S.~Wei{\ss}enberger, and S.~G{\"u}nnemann.
\newblock Diffusion improves graph learning.
\newblock \emph{arXiv preprint arXiv:1911.05485}, 2019.

\bibitem[Kopetzki et~al.(2020)Kopetzki, Charpentier, Z{\"{u}}gner, Giri, and
  G{\"{u}}nnemann]{robustness-uncertainty-dirichlet}
A.~Kopetzki, B.~Charpentier, D.~Z{\"{u}}gner, S.~Giri, and S.~G{\"{u}}nnemann.
\newblock Evaluating robustness of predictive uncertainty estimation: Are
  dirichlet-based models reliable?
\newblock \emph{Computing Research Repository}, 2020.

\bibitem[Kristiadi et~al.(2020)Kristiadi, Hein, and Hennig]{bayesian-a-bit}
A.~Kristiadi, M.~Hein, and P.~Hennig.
\newblock Being bayesian, even just a bit, fixes overconfidence in relu
  networks, 2020.

\bibitem[Lakshminarayanan et~al.(2017)Lakshminarayanan, Pritzel, and
  Blundell]{Lakshminarayanan2017}
B.~Lakshminarayanan, A.~Pritzel, and C.~Blundell.
\newblock Simple and scalable predictive uncertainty estimation using deep
  ensembles.
\newblock In \emph{Proceedings of the 31st International Conference on Neural
  Information Processing Systems}, pages 6405--6416, 2017.

\bibitem[Lakshminarayanan et~al.(2020)Lakshminarayanan, Tran, Liu, Padhy,
  Bedrax-Weiss, and Lin]{uncertainty-distance-awareness}
B.~Lakshminarayanan, D.~Tran, J.~Liu, S.~Padhy, T.~Bedrax-Weiss, and Z.~Lin.
\newblock Simple and principled uncertainty estimation with deterministic deep
  learning via distance awareness.
\newblock In \emph{Advances in Neural Information Processing Systems}, 2020.

\bibitem[Lan and Dinh(2020)]{perfect-density-no-ood-guarantee}
C.~L. Lan and L.~Dinh.
\newblock Perfect density models cannot guarantee anomaly detection.
\newblock \emph{arXiv preprint arXiv:2012.03808}, 2020.

\bibitem[Lee et~al.(2020)Lee, Lee, Na, Kim, Park, Yang, and
  Hwang]{bayesian-meta-learning}
H.~B. Lee, H.~Lee, D.~Na, S.~Kim, M.~Park, E.~Yang, and S.~J. Hwang.
\newblock Learning to balance: Bayesian meta-learning for imbalanced and
  out-of-distribution tasks, 2020.

\bibitem[Liang et~al.(2017)Liang, Li, and Srikant]{Liang2017}
S.~Liang, Y.~Li, and R.~Srikant.
\newblock {Enhancing the reliability of out-of-distribution image detection in
  neural networks}.
\newblock \emph{arXiv preprint arXiv:1706.02690}, 2017.

\bibitem[Liu et~al.(2020{\natexlab{a}})Liu, Wang, Owens, and Li]{Liu2020a}
W.~Liu, X.~Wang, J.~Owens, and Y.~Li.
\newblock Energy-based out-of-distribution detection.
\newblock \emph{Advances in Neural Information Processing Systems},
  2020{\natexlab{a}}.

\bibitem[Liu et~al.(2020{\natexlab{b}})Liu, Li, Chen, Hu, and Huang]{Liu2020c}
Z.-Y. Liu, S.-Y. Li, S.~Chen, Y.~Hu, and S.-J. Huang.
\newblock Uncertainty aware graph gaussian process for semi-supervised
  learning.
\newblock In \emph{AAAI Conference on Artificial Intelligence}, volume~34,
  pages 4957--4964, 2020{\natexlab{b}}.

\bibitem[Maddox et~al.(2019)Maddox, Izmailov, Garipov, Vetrov, and
  Wilson]{simple-baseline-uncertainty}
W.~J. Maddox, P.~Izmailov, T.~Garipov, D.~P. Vetrov, and A.~G. Wilson.
\newblock A simple baseline for bayesian uncertainty in deep learning.
\newblock In \emph{Advances in Neural Information Processing Systems}, 2019.

\bibitem[Malinin and Gales(2018)]{Malinin2018}
A.~Malinin and M.~Gales.
\newblock Predictive uncertainty estimation via prior networks.
\newblock \emph{arXiv preprint arXiv:1802.10501}, 2018.

\bibitem[Malinin and Gales(2019)]{Malinin2019b}
A.~Malinin and M.~Gales.
\newblock {Reverse kl-divergence training of prior networks: Improved
  uncertainty and adversarial robustness}.
\newblock \emph{arXiv preprint arXiv:1905.13472}, 2019.

\bibitem[Malinin et~al.(2017)Malinin, Ragni, Knill, and Gales]{Malinin2017}
A.~Malinin, A.~Ragni, K.~Knill, and M.~Gales.
\newblock {Incorporating uncertainty into deep learning for spoken language
  assessment}.
\newblock In \emph{Annual Meeting of the Association for Computational
  Linguistics}, 2017.

\bibitem[Malinin et~al.(2019)Malinin, Mlodozeniec, and Gales]{Malinin2019a}
A.~Malinin, B.~Mlodozeniec, and M.~Gales.
\newblock Ensemble distribution distillation.
\newblock \emph{arXiv preprint arXiv:1905.00076}, 2019.

\bibitem[McAuley et~al.(2015)McAuley, Targett, Shi, and Van
  Den~Hengel]{Mcauley2015}
J.~McAuley, C.~Targett, Q.~Shi, and A.~Van Den~Hengel.
\newblock Image-based recommendations on styles and substitutes.
\newblock In \emph{Research and Development in Information Retrieval}, 2015.

\bibitem[McCallum et~al.(2000)McCallum, Nigam, Rennie, and
  Seymore]{Mccallum2000}
A.~K. McCallum, K.~Nigam, J.~Rennie, and K.~Seymore.
\newblock {Automating the construction of internet portals with machine
  learning}.
\newblock \emph{Information Retrieval}, 2000.

\bibitem[Meinke and Hein(2020)]{provable-uncertainty}
A.~Meinke and M.~Hein.
\newblock Towards neural networks that provably know when they don't know.
\newblock In \emph{International Conference on Learning Representations}, 2020.

\bibitem[Molnar(2020)]{interpretable-ml}
C.~Molnar.
\newblock Interpretable machine learning, 2020.
\newblock URL \url{https://christophm.github.io/interpretable-ml-book/}.

\bibitem[Morales-Alvarez et~al.(2021)Morales-Alvarez, Hern{\'a}ndez-Lobato,
  Molina, and Hern{\'a}ndez-Lobato]{gp-uncertainty-activation}
P.~Morales-Alvarez, D.~Hern{\'a}ndez-Lobato, R.~Molina, and J.~M.
  Hern{\'a}ndez-Lobato.
\newblock Activation-level uncertainty in deep neural networks.
\newblock In \emph{International Conference on Learning Representations}, 2021.

\bibitem[Morningstar et~al.(2020)Morningstar, Ham, Gallagher, Lakshminarayanan,
  Alemi, and Dillon]{density-states-ood}
W.~R. Morningstar, C.~Ham, A.~G. Gallagher, B.~Lakshminarayanan, A.~A. Alemi,
  and J.~V. Dillon.
\newblock Density of states estimation for out-of-distribution detection.
\newblock \emph{arXiv preprint arXiv:2006.09273}, 2020.

\bibitem[Naeini et~al.(2015)Naeini, Cooper, and Hauskrecht]{Naeini2015}
M.~P. Naeini, G.~Cooper, and M.~Hauskrecht.
\newblock Obtaining well calibrated probabilities using bayesian binning.
\newblock In \emph{AAAI Conference on Artificial Intelligence}, volume~29,
  2015.

\bibitem[Nalisnick et~al.(2019)Nalisnick, Matsukawa, Teh, Gorur, and
  Lakshminarayanan]{deep-generative}
E.~Nalisnick, A.~Matsukawa, Y.~W. Teh, D.~Gorur, and B.~Lakshminarayanan.
\newblock Do deep generative models know what they don't know?
\newblock \emph{International Conference on Learning Representations}, 2019.

\bibitem[Nalisnick et~al.(2020)Nalisnick, Matsukawa, Teh, and
  Lakshminarayanan]{typicality_OOD_generative}
E.~Nalisnick, A.~Matsukawa, Y.~W. Teh, and B.~Lakshminarayanan.
\newblock Detecting out-of-distribution inputs to deep generative models using
  typicality.
\newblock \emph{arXiv preprint arXiv:1906.02994}, 2020.

\bibitem[Namata et~al.(2012)Namata, London, Getoor, Huang, and EDU]{Namata2012}
G.~Namata, B.~London, L.~Getoor, B.~Huang, and U.~M.~D. EDU.
\newblock {Query-driven active surveying for collective classification}.
\newblock In \emph{10th International Workshop on Mining and Learning with
  Graphs}, 2012.

\bibitem[Ng et~al.(2018)Ng, Colombo, and Silva]{Ng2018}
Y.~C. Ng, N.~Colombo, and R.~Silva.
\newblock Bayesian semi-supervised learning with graph gaussian processes.
\newblock \emph{arXiv preprint arXiv:1809.04379}, 2018.

\bibitem[Nguyen et~al.(2019)Nguyen, Do, and
  Carneiro]{uncertainty-meta-learning}
C.~Nguyen, T.-T. Do, and G.~Carneiro.
\newblock Uncertainty in model-agnostic meta-learning using variational
  inference, 2019.

\bibitem[Orbach and Crammer(2012)]{graph-transduction-confidence}
M.~Orbach and K.~Crammer.
\newblock Graph-based transduction with confidence.
\newblock In P.~A. Flach, T.~De~Bie, and N.~Cristianini, editors, \emph{Machine
  Learning and Knowledge Discovery in Databases}, 2012.

\bibitem[Osband(2016)]{Osband2016}
I.~Osband.
\newblock {Risk versus uncertainty in deep learning: Bayes, bootstrap and the
  dangers of dropout}.
\newblock In \emph{NeurIPS Workshop on Bayesian Deep Learning}, 2016.

\bibitem[Ovadia et~al.(2019)Ovadia, Fertig, Ren, Nado, Sculley, Nowozin,
  Dillon, Lakshminarayanan, and Snoek]{Ovadia2019}
Y.~Ovadia, E.~Fertig, J.~Ren, Z.~Nado, D.~Sculley, S.~Nowozin, J.~V. Dillon,
  B.~Lakshminarayanan, and J.~Snoek.
\newblock Can you trust your model's uncertainty? evaluating predictive
  uncertainty under dataset shift.
\newblock \emph{arXiv preprint arXiv:1906.02530}, 2019.

\bibitem[Pal et~al.(2019{\natexlab{a}})Pal, Regol, and Coates]{Pal2019a}
S.~Pal, F.~Regol, and M.~Coates.
\newblock Bayesian graph convolutional neural networks using node copying.
\newblock \emph{arXiv preprint arXiv:1911.04965}, 2019{\natexlab{a}}.

\bibitem[Pal et~al.(2019{\natexlab{b}})Pal, Regol, and Coates]{Pal2019b}
S.~Pal, F.~Regol, and M.~Coates.
\newblock Bayesian graph convolutional neural networks using non-parametric
  graph learning.
\newblock \emph{arXiv preprint arXiv:1910.12132}, 2019{\natexlab{b}}.

\bibitem[Paszke et~al.(2019)Paszke, Gross, Massa, Lerer, Bradbury, Chanan,
  Killeen, Lin, Gimelshein, Antiga, Desmaison, Kopf, Yang, DeVito, Raison,
  Tejani, Chilamkurthy, Steiner, Fang, Bai, and Chintala]{pytorch}
A.~Paszke, S.~Gross, F.~Massa, A.~Lerer, J.~Bradbury, G.~Chanan, T.~Killeen,
  Z.~Lin, N.~Gimelshein, L.~Antiga, A.~Desmaison, A.~Kopf, E.~Yang, Z.~DeVito,
  M.~Raison, A.~Tejani, S.~Chilamkurthy, B.~Steiner, L.~Fang, J.~Bai, and
  S.~Chintala.
\newblock Pytorch: An imperative style, high-performance deep learning library.
\newblock In H.~Wallach, H.~Larochelle, A.~Beygelzimer, F.~d'~Alch\'{e}-Buc,
  E.~Fox, and R.~Garnett, editors, \emph{Advances in Neural Information
  Processing Systems}. 2019.

\bibitem[Rezende and Mohamed(2015)]{Rezende2015}
D.~Rezende and S.~Mohamed.
\newblock Variational inference with normalizing flows.
\newblock In \emph{International Conference on Machine Learning}, 2015.

\bibitem[Rong et~al.(2019)Rong, Huang, Xu, and Huang]{Rong2019}
Y.~Rong, W.~Huang, T.~Xu, and J.~Huang.
\newblock Dropedge: Towards deep graph convolutional networks on node
  classification.
\newblock \emph{arXiv preprint arXiv:1907.10903}, 2019.

\bibitem[Ryu et~al.(2019)Ryu, Kwon, and Kim]{Ryu2019}
S.~Ryu, Y.~Kwon, and W.~Y. Kim.
\newblock Uncertainty quantification of molecular property prediction with
  bayesian neural networks.
\newblock \emph{arXiv preprint arXiv:1903.08375}, 2019.

\bibitem[Sen et~al.(2008)Sen, Namata, Bilgic, Getoor, Galligher, and
  Eliassi-Rad]{Sen2008a}
P.~Sen, G.~Namata, M.~Bilgic, L.~Getoor, B.~Galligher, and T.~Eliassi-Rad.
\newblock Collective classification in network data.
\newblock \emph{AI magazine}, 29\penalty0 (3):\penalty0 93--93, 2008.

\bibitem[Sensoy et~al.(2018)Sensoy, Kaplan, and Kandemir]{Sensoy2018}
M.~Sensoy, L.~Kaplan, and M.~Kandemir.
\newblock Evidential deep learning to quantify classification uncertainty.
\newblock In \emph{Advances in Neural Information Processing Systems}, 2018.

\bibitem[Shchur et~al.(2018)Shchur, Mumme, Bojchevski, and
  G{\"u}nnemann]{Shchur2018}
O.~Shchur, M.~Mumme, A.~Bojchevski, and S.~G{\"u}nnemann.
\newblock Pitfalls of graph neural network evaluation.
\newblock \emph{arXiv preprint arXiv:1811.05868}, 2018.

\bibitem[Srivastava et~al.(2014)Srivastava, Hinton, Krizhevsky, Sutskever, and
  Salakhutdinov]{Srivastava2014}
N.~Srivastava, G.~Hinton, A.~Krizhevsky, I.~Sutskever, and R.~Salakhutdinov.
\newblock Dropout: a simple way to prevent neural networks from overfitting.
\newblock \emph{The journal of machine learning research}, 2014.

\bibitem[Stutz et~al.(2020)Stutz, Hein, and
  Schiele]{confidence-calibrated-adversarial}
D.~Stutz, M.~Hein, and B.~Schiele.
\newblock Confidence-calibrated adversarial training: Generalizing to unseen
  attacks.
\newblock In H.~D. III and A.~Singh, editors, \emph{International Conference on
  Machine Learning}, Proceedings of Machine Learning Research, 2020.

\bibitem[Tran et~al.(2020)Tran, Neiswanger, Yoon, Zhang, Xing, and
  Ulissi]{uncertainty-material-prediction}
K.~Tran, W.~Neiswanger, J.~Yoon, Q.~Zhang, E.~Xing, and Z.~W. Ulissi.
\newblock Methods for comparing uncertainty quantifications for material
  property predictions, 2020.

\bibitem[van Amersfoort et~al.(2020)van Amersfoort, Smith, Teh, and Gal]{duq}
J.~van Amersfoort, L.~Smith, Y.~W. Teh, and Y.~Gal.
\newblock Uncertainty estimation using a single deep deterministic neural
  network, 2020.

\bibitem[Veli{\v{c}}kovi{\'c} et~al.(2017)Veli{\v{c}}kovi{\'c}, Cucurull,
  Casanova, Romero, Lio, and Bengio]{Velickovic2017}
P.~Veli{\v{c}}kovi{\'c}, G.~Cucurull, A.~Casanova, A.~Romero, P.~Lio, and
  Y.~Bengio.
\newblock Graph attention networks.
\newblock \emph{arXiv preprint arXiv:1710.10903}, 2017.

\bibitem[Wang and Leskovec(2020)]{Wang2020}
H.~Wang and J.~Leskovec.
\newblock Unifying graph convolutional neural networks and label propagation.
\newblock \emph{arXiv preprint arXiv:2002.06755}, 2020.

\bibitem[Wang et~al.(2020)Wang, Shen, Huang, Wu, Dong, and
  Kanakia]{microsoft-academic-graph}
K.~Wang, Z.~Shen, C.~Huang, C.-H. Wu, Y.~Dong, and A.~Kanakia.
\newblock Microsoft academic graph: When experts are not enough.
\newblock \emph{Quantitative Science Studies}, 2020.

\bibitem[Waniek et~al.(2018)Waniek, Michalak, Wooldridge, and
  Rahwan]{Waniek2018}
M.~Waniek, T.~P. Michalak, M.~J. Wooldridge, and T.~Rahwan.
\newblock Hiding individuals and communities in a social network.
\newblock \emph{Nature Human Behaviour}, 2018.

\bibitem[Wen et~al.(2020)Wen, Tran, and Ba]{batch-ensembles}
Y.~Wen, D.~Tran, and J.~Ba.
\newblock Batchensemble: an alternative approach to efficient ensemble and
  lifelong learning.
\newblock In \emph{International Conference on Learning Representations}, 2020.

\bibitem[Wenzel et~al.(2020)Wenzel, Snoek, Tran, and Jenatton]{hyper-ensembles}
F.~Wenzel, J.~Snoek, D.~Tran, and R.~Jenatton.
\newblock Hyperparameter ensembles for robustness and uncertainty
  quantification.
\newblock \emph{Advances in Neural Information Processing Systems}, 2020.

\bibitem[Winkens et~al.(2020)Winkens, Bunel, Guha~Roy, Stanforth, Natarajan,
  Ledsam, MacWilliams, Kohli, Karthikesalingam, Kohl, Cemgil, Eslami, and
  Ronneberger]{contrastive-ood}
J.~Winkens, R.~Bunel, A.~Guha~Roy, R.~Stanforth, V.~Natarajan, J.~R. Ledsam,
  P.~MacWilliams, P.~Kohli, A.~Karthikesalingam, S.~Kohl, T.~Cemgil, S.~M.~A.
  Eslami, and O.~Ronneberger.
\newblock Contrastive training for improved out-of-distribution detection.
\newblock \emph{arXiv preprint arXiv:2007.05566}, 2020.

\bibitem[Zhan and Pei(2020)]{Zhan2020}
H.~Zhan and X.~Pei.
\newblock I-gcn: Robust graph convolutional network via influence mechanism.
\newblock \emph{arXiv preprint arXiv:2012.06110}, 2020.

\bibitem[Zhang and Others(2019)]{Zhang2019}
Y.~Zhang and Others.
\newblock {Bayesian semi-supervised learning for uncertainty-calibrated
  prediction of molecular properties and active learning}.
\newblock \emph{Chemical science}, 2019.

\bibitem[Zhang et~al.(2019)Zhang, Pal, Coates, and Ustebay]{Zhang2019b}
Y.~Zhang, S.~Pal, M.~Coates, and D.~Ustebay.
\newblock Bayesian graph convolutional neural networks for semi-supervised
  classification.
\newblock In \emph{AAAI Conference on Artificial Intelligence}, 2019.

\bibitem[Zhao et~al.(2020)Zhao, Chen, Hu, and Cho]{Zhao2020}
X.~Zhao, F.~Chen, S.~Hu, and J.-H. Cho.
\newblock Uncertainty aware semi-supervised learning on graph data.
\newblock \emph{Advances in Neural Information Processing Systems}, 2020.

\bibitem[Zhi et~al.(2020)Zhi, Ng, and Dong]{Zhi2020}
Y.-C. Zhi, Y.~C. Ng, and X.~Dong.
\newblock Gaussian processes on graphs via spectral kernel learning.
\newblock \emph{arXiv preprint arXiv:2006.07361}, 2020.

\bibitem[Zhu et~al.(2019)Zhu, Zhang, Cui, and Zhu]{Zhu2019}
D.~Zhu, Z.~Zhang, P.~Cui, and W.~Zhu.
\newblock Robust graph convolutional networks against adversarial attacks.
\newblock In \emph{SIGKDD International Conference on Knowledge Discovery \&
  Data Mining}, 2019.

\bibitem[Zhu et~al.(2020)Zhu, Yan, Zhao, Heimann, Akoglu, and
  Koutra]{heterophily-gnn}
J.~Zhu, Y.~Yan, L.~Zhao, M.~Heimann, L.~Akoglu, and D.~Koutra.
\newblock Beyond homophily in graph neural networks: Current limitations and
  effective designs.
\newblock In H.~Larochelle, M.~Ranzato, R.~Hadsell, M.~Balcan, and H.~Lin,
  editors, \emph{Advances in Neural Information Processing Systems}, 2020.

\end{thebibliography}


\newpage
\appendix
\section{Proofs} \label{sec:proofs}

\begin{lemma}
\label{lem:relu-encoder-density}
\cite{NatPN2021} Let a model be parameterized with an encoder $f_{\phi}$ with piecewise ReLU activations, a decoder $g_{\psi}$ and the density estimator $\prob(\z \condition \bm{\omega})$. Let $f_{\phi}(\x)= V^{(l)}\x + a^{(l)}$ be the piecewise affine representation of the ReLU network $f_{\phi}$ on the finite number of affine regions $Q^{(l)}$ \cite{understanding-nn-relu}. Suppose that $V^{(l)}$ have independent rows and the density function $\prob(\z \condition \bm{\omega})$ has bounded derivatives, then for almost any $\x$ we have \smash{$\prob(f_{\phi}(\delta \cdot \x) \condition \bm{\omega}) \underset{\delta \rightarrow \infty}{\rightarrow} 0$}. i.e the evidence becomes small far from training data.
\end{lemma}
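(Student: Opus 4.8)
The plan is to decompose the claim into three essentially independent pieces: (i) along almost every ray from the origin, $f_{\phi}$ is eventually a \emph{single} affine map; (ii) that affine map sends $\delta \cdot \x$ to infinity in norm; and (iii) any Lipschitz probability density on the latent space vanishes at infinity. The conclusion then follows by chaining these together.

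For (i), I would use that the affine regions $Q^{(l)}$ form a finite polyhedral partition of $\real^{\inputdim}$, so the ray $\{\delta \cdot \x : \delta \ge 0\}$ meets only finitely many region boundaries. Beyond the last such crossing the ray stays inside one fixed region, say $Q^{(l)}$, on which $f_{\phi}(\delta \cdot \x) = \delta V^{(l)}\x + a^{(l)}$. A direction $\x$ for which no such ``last region'' exists must lie on a boundary hyperplane passing through the origin, and there are only finitely many of these, so the set of such exceptional directions is Lebesgue-null. For (ii), since by hypothesis the rows of $V^{(l)}$ are independent, $\ker V^{(l)}$ is a proper linear subspace of $\real^{\inputdim}$ and hence Lebesgue-null; thus for almost every $\x$ we have $V^{(l)}\x \neq \veczero$, and therefore $\| f_{\phi}(\delta \cdot \x)\| = \|\delta V^{(l)}\x + a^{(l)}\| \to \infty$ as $\delta \to \infty$.

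For (iii), I would show that a density $\prob(\cdot \condition \bm{\omega})$ on $\real^{\latentdim}$ with bounded gradient, say $L$-Lipschitz, satisfies $\prob(\z \condition \bm{\omega}) \to 0$ as $\|\z\| \to \infty$. If not, there is $\eps > 0$ and a sequence $\z_n$ with $\|\z_n\| \to \infty$ and $\prob(\z_n \condition \bm{\omega}) \ge \eps$; Lipschitzness forces $\prob(\cdot \condition \bm{\omega}) \ge \eps/2$ on the ball of radius $\eps/(2L)$ around each $\z_n$, and passing to a subsequence along which these balls are pairwise disjoint contradicts $\int \prob(\z \condition \bm{\omega})\, d\z = 1$. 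Intersecting the two null exceptional sets from (i) and (ii), for almost every $\x$ the argument $f_{\phi}(\delta \cdot \x)$ escapes to infinity in norm, so $\prob(f_{\phi}(\delta \cdot \x) \condition \bm{\omega}) \to 0$.

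I expect the main obstacle to be the bookkeeping in step (i): one must argue carefully that the ray eventually lies in the interior of a single unbounded region, rule out the measure-zero set of directions tangent to the faces of the polyhedral complex, and observe that the independent-rows hypothesis is precisely what guarantees the linear part of \emph{that} region is non-degenerate (otherwise $f_\phi(\delta\cdot\x)$ could stay bounded). Steps (ii) and (iii) are then routine.
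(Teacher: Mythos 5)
Your proof is correct. Note that the paper does not actually prove this lemma --- it imports it verbatim by citation from \cite{NatPN2021}, whose argument (building on \cite{understanding-nn-relu}) is exactly your three-step decomposition: the ray $\{\delta\cdot\x\}$ eventually lies in a single polyhedral affine region for almost every direction, the independent-rows assumption forces $\|V^{(l)}\x\|\neq 0$ off a finite union of proper subspaces so that $\|f_{\phi}(\delta\cdot\x)\|\to\infty$, and a Lipschitz integrable density must vanish at infinity. So you have correctly reconstructed the standard proof rather than found a different one.
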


\begin{theorem}
\label{thm:axiom-feature-proof}
Lets consider a \oursacro{} model. Let \smash{$f_{\phi}(\x\nodeidxv)= V^{(l)}\x\nodeidxv + a^{(l)}$} be the piecewise affine representation of the ReLU network \smash{$f_{\phi}$} on the finite number of affine regions \smash{$Q^{(l)}$} \cite{understanding-nn-relu}. Suppose that \smash{$V^{(l)}$} have independent rows, then for any node $\nodev$ and almost any $\x\nodeidxv$ we have \smash{$\prob(f_{\phi}(\delta \cdot \x\nodeidxv) \condition \iclass; \vphi) \underset{\delta \rightarrow \infty}{\rightarrow} 0$}. Without network effects, it implies that \smash{$\beta_\iclass^{\text{ft}, (\nodev)} = \beta_\iclass^{\text{agg}, (\nodev)} \underset{\delta \rightarrow \infty}{\rightarrow} 0$}.
\end{theorem}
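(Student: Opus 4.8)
The plan is to reduce the statement to Lemma~\ref{lem:relu-encoder-density} (the analogous guarantee for PostNet/NatPN far from training data), which already carries the analytic content, and then to add two short GPN-specific observations: that the feature pseudo-counts are proportional to the class-conditional densities, and that ``without network effects'' collapses the PPR diffusion to the identity on the isolated node.

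\emph{Step 1 (density decay via Lemma~\ref{lem:relu-encoder-density}).} The GPN feature encoder $f_\phi$ is exactly the piecewise-ReLU network of the lemma, with piecewise affine representation $f_\phi(\x) = V^{(l)}\x + a^{(l)}$ on each of the finitely many regions $Q^{(l)}$. By the standing assumptions, the rows of $V^{(l)}$ are independent and the density estimator has bounded derivatives; instantiating the lemma's generic density $\prob(\z \condition \bm\omega)$ with the parameters of the $c$-th normalizing flow, we obtain that for almost every $\x\nodeidxv$, $\prob(f_\phi(\delta \cdot \x\nodeidxv) \condition \iclass; \vphi) \to 0$ as $\delta \to \infty$. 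Since there are only $\nclass$ classes, the union of the corresponding exceptional measure-zero sets is still measure zero, so the conclusion holds simultaneously for all $\iclass$ at almost every $\x\nodeidxv$. This gives the first assertion of the theorem, and it holds for any node $\nodev$ because it is a statement purely about $f_\phi$ applied to the rescaled features, independent of the graph.

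\emph{Step 2 (evidence decay).} By construction of GPN (step (2) in Sec.~\ref{subsec:ours}), $\beta_\iclass^{\text{ft}, (\nodev)} \propto \prob(\z\nodeidxv \condition \iclass; \vphi)$ with $\z\nodeidxv = f_\phi(\x\nodeidxv)$, and the proportionality factor (the certainty budget $N$ times $\prob(\iclass)$) does not depend on $\delta$. Hence $\beta_\iclass^{\text{ft}, (\nodev)} \to 0$, and therefore $\alpha_0^{\text{ft}, (\nodev)} = \sum_\iclass \beta_\iclass^{\text{ft}, (\nodev)} \to 0$, i.e.\ the posterior collapses to the prior, $\valpha^{\text{post}, (\nodev)} \to \valpha^{\text{prior}}$.

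\emph{Step 3 (removing network effects) and the main obstacle.} ``Without network effects'' means node $\nodev$ is isolated, so the personalized PageRank random walk started at $\nodev$ never leaves $\nodev$: $\pprelem_{v, v} = 1$ and $\pprelem_{v, u} = 0$ for $u \neq v$. Plugging into Eq.~\ref{eq:agg-evidence} gives $\beta_\iclass^{\text{agg}, (\nodev)} = \sum_{\nodeu \in \vertices} \pprelem_{v, u}\, \beta_\iclass^{\text{ft}, (\nodeu)} = \beta_\iclass^{\text{ft}, (\nodev)}$, and combining with Step 2 yields $\beta_\iclass^{\text{ft}, (\nodev)} = \beta_\iclass^{\text{agg}, (\nodev)} \to 0$. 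Once Lemma~\ref{lem:relu-encoder-density} is granted there is essentially no obstacle; the only points requiring care are (i) checking that each class-conditional flow satisfies the lemma's bounded-derivative hypothesis (part of the assumptions on the density estimator), and (ii) being precise that for an isolated node the PPR operator reduces to the identity so that the diffusion step is inert. If one instead wanted to prove the statement from scratch rather than invoke the lemma, the genuine work would be re-deriving it: showing that independence of the rows of $V^{(l)}$ forces $\|f_\phi(\delta \cdot \x)\| \to \infty$ for almost every $\x$ (this is where ``almost any'' is needed, excluding region boundaries and the degenerate directions), and then that a density with bounded derivatives over a light-tailed base must vanish at infinity.
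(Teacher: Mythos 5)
Your proposal is correct and follows essentially the same route as the paper's proof: invoke Lemma~\ref{lem:relu-encoder-density} for each class-conditional flow to get density decay, conclude that the feature pseudo-counts vanish, and observe that without network effects the PPR diffusion acts as the identity so $\beta_\iclass^{\text{ft}, (\nodev)} = \beta_\iclass^{\text{agg}, (\nodev)}$. Your added care about taking the union of the per-class measure-zero exceptional sets and about the $\delta$-independence of the proportionality constant is a welcome refinement but does not change the argument.
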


\begin{proof}
First, remark that each normalizing flow density in \oursacro{} fulfills the conditions of Lem.~\ref{lem:relu-encoder-density}. This means that \smash{$\prob(f_{\phi}(\delta \cdot \x\nodeidxv) \condition \iclass; \vphi) \underset{\delta \rightarrow \infty}{\rightarrow} 0$} which implies $\beta_\iclass^{\text{ft}, (\nodev)} \underset{\delta \rightarrow \infty}{\rightarrow} 0$. Further note that in the absence of network effects, the PPR diffusion has no effect on the pseudo-counts i.e. $\beta_\iclass^{\text{ft}, (\nodev)} = \beta_\iclass^{\text{agg}, (\nodev)}$.
\end{proof}

\begin{theorem}
\label{thm:axiom-network-epistemic-proof}
Lets consider a \oursacro{} model. Then, given a node $\nodev$, the aggregated feature evidence $\alpha_0^{\text{agg}, (\nodev)}$ is increasing if the feature evidence $\alpha_0^{\text{ft}, (\nodeu)}$ of one of its neighbor $\nodeu \in \neighbors(\nodev)$ is increasing.
\end{theorem}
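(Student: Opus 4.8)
The plan is to reduce the claim to a one-line computation on Eq.~\ref{eq:agg-evidence} followed by a positivity argument for the personalized PageRank (PPR) scores. First I would sum the per-class diffusion identity $\beta_\iclass^{\text{agg}, (\nodev)} = \sum_{\nodeu \in \vertices} \pprelem_{v,u}\, \beta_\iclass^{\text{ft}, (\nodeu)}$ over the classes $\iclass \in \{1,\dots,\nclass\}$ and swap the two finite sums to obtain
\begin{equation*}
    \alpha_0^{\text{agg}, (\nodev)} \;=\; \sum_{\iclass} \beta_\iclass^{\text{agg}, (\nodev)} \;=\; \sum_{\nodeu \in \vertices} \pprelem_{v,u} \sum_{\iclass} \beta_\iclass^{\text{ft}, (\nodeu)} \;=\; \sum_{\nodeu \in \vertices} \pprelem_{v,u}\, \alpha_0^{\text{ft}, (\nodeu)} .
\end{equation*}
Hence $\alpha_0^{\text{agg}, (\nodev)}$ is a nonnegative linear combination of the feature evidences $(\alpha_0^{\text{ft}, (\nodeu)})_{\nodeu \in \vertices}$ with coefficients $\pprelem_{v,u}$, so $\partial \alpha_0^{\text{agg}, (\nodev)} / \partial \alpha_0^{\text{ft}, (\nodeu)} = \pprelem_{v,u}$ and $\alpha_0^{\text{agg}, (\nodev)}$ is monotonically nondecreasing in each coordinate.

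Second, I would show that $\pprelem_{v,u} > 0$ whenever $\nodeu \in \neighbors(\nodev)$, which upgrades ``nondecreasing'' to ``strictly increasing'' for a neighbor. Using the series expansion $\ppr = \gamma \sum_{k \geq 0} (1-\gamma)^k \tilde{\adj}^{k}$ with teleport probability $\gamma \in (0,1)$, where $\tilde{\adj}$ is the (normalized) propagation matrix whose $(v,u)$ entry is strictly positive whenever $\nodev$ and $\nodeu$ are adjacent, every term of the series is entrywise nonnegative, so $\pprelem_{v,u} \geq 0$ in general; and for a neighbor $\nodeu$ the $k=1$ term alone contributes $\gamma(1-\gamma)\,\tilde{\adj}_{v,u} > 0$. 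Combining the two steps gives the statement, and in terms of uncertainty it says that the epistemic uncertainty $u_\text{epist}\nodeidxv = -\alpha_0^{\text{agg}, (\nodev)}$ of $\nodev$ with network effects decreases when a neighbor $\nodeu$ becomes epistemically more certain without network effects.

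I expect the only nontrivial ingredient to be the strict positivity $\pprelem_{v,u} > 0$ for adjacent nodes; the rest is bookkeeping with finite sums and linearity. One remark I would add for completeness: since in practice the dense PPR scores are approximated by a finite number of power-iteration steps, it is worth noting that any truncation retaining at least the first propagation step still has nonnegative entries and strictly positive entries on edges, so the monotonicity guarantee transfers verbatim to the approximate scores actually used by \oursacro{}.
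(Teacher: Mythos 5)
Your proof is correct and follows essentially the same route as the paper: sum the per-class diffusion identity over classes, swap the finite sums to get $\alpha_0^{\text{agg}, (\nodev)} = \sum_{\nodeu \in \vertices} \pprelem_{v,u}\, \alpha_0^{\text{ft}, (\nodeu)}$, and conclude by monotonicity of a nonnegative linear combination. The one thing you add beyond the paper's argument is the explicit Neumann-series justification that $\pprelem_{v,u} > 0$ for adjacent nodes (and that this survives power-iteration truncation), a step the paper asserts implicitly when it calls the dependence "strictly increasing"; making it explicit is a genuine improvement in rigor, not a different approach.
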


\begin{proof}
We recall first the definition of the aggregated and feature total evidence pseudo-count and the PPR diffusion step of \oursacro{}:
\begin{align*}
    \alpha_0^{\text{ft}, (\nodev)} &= \sum_\iclass \beta_\iclass^{\text{ft}, (\nodev)}\\
    \alpha_0^{\text{agg}, (\nodev)} &= \sum_\iclass \beta_\iclass^{\text{agg}, (\nodev)}\\
    \beta_\iclass^{\text{agg}, (\nodev)} &= \sum_{\nodeu \in \vertices} \pprelem_{v, u} \beta_\iclass^{\text{ft}, (\nodeu)}
\end{align*}
We combine these three equations to show a closed-form relation between the aggregated and the feature evidence pseudo-count:
\begin{align*}
    \alpha_0^{\text{agg}, (\nodev)} &= \sum_\iclass \beta_\iclass^{\text{agg}, (\nodev)}\\
    &= \sum_\iclass \sum_{\nodeu \in \vertices} \pprelem_{v, u} \beta_\iclass^{\text{ft}, (\nodeu)}\\
    &= \sum_{\nodeu \in \vertices} \pprelem_{v, u} \sum_\iclass \beta_\iclass^{\text{ft}, (\nodeu)}\\
    &= \sum_{\nodeu \in \vertices} \pprelem_{v, u} \alpha_0^{\text{ft}, (\nodeu)}
\end{align*}
This shows that the aggregated evidence is the diffused feature evidence with PPR. Hence, the aggregated evidence $\alpha_0^{\text{agg}, (\nodev)}$ is a strictly increasing function w.r.t. to the feature evidence of each individual neighbor $\alpha_0^{\text{ft}, (\nodeu)}$ when $\nodeu \in \neighbors(\nodev)$.
\end{proof}

\begin{theorem}
\label{thm:axiom-network-aleatoric-proof}
Lets consider a \oursacro{} model. Lets denote $\bar{\p}^\text{agg, (\nodev)} = \nicefrac{\vbeta^{\text{agg}, (\nodev)}}{\alpha_0^{\text{agg}, (\nodev)}}$ the diffused categorical prediction for node $\nodev$ where $\iclass^*$ is its winning class. Further, lets denote \smash{$\bar{\p}^\text{ft, (\nodeu)} = \nicefrac{\vbeta^{\text{ft}, (\nodev)}}{\alpha_0^{\text{ft}, (\nodev)}}$} the non-diffused categorical prediction for a node $\nodeu \in \vertices$. First, there exists normalized weights $\Pi_{v, u}^{'}$ such that \smash{$\sum_{\nodeu \in \vertices} \Pi_{v, u}^{'} \Entropy{\DCat(\bar{\p}^\text{ft, (\nodeu)})} \leq \Entropy{\DCat(\bar{\p}^\text{agg, (\nodev)})}$}. Second, if for any node \smash{$\nodeu \in \vertices$} the probability of $\bar{\p}_{\iclass^*}^\text{ft, (\nodeu)}$ decreases, then \smash{$\Entropy{\DCat(\bar{\p}^\text{agg, (\nodev)})}$} increases.
\end{theorem}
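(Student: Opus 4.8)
The plan is to reduce both claims to concavity of the Shannon entropy, after rewriting the diffused prediction as a convex combination of the non-diffused ones. First I would recall the identity established in the proof of Theorem~\ref{thm:axiom-network-epistemic-proof}, namely $\alpha_0^{\text{agg}, (\nodev)} = \sum_{\nodeu \in \vertices} \pprelem_{v, u}\, \alpha_0^{\text{ft}, (\nodeu)}$, together with $\beta_\iclass^{\text{agg}, (\nodev)} = \sum_{\nodeu \in \vertices} \pprelem_{v, u}\, \beta_\iclass^{\text{ft}, (\nodeu)}$. Writing $\beta_\iclass^{\text{ft}, (\nodeu)} = \alpha_0^{\text{ft}, (\nodeu)}\, \bar{\p}_\iclass^{\text{ft}, (\nodeu)}$ and dividing, I would set
\begin{equation*}
    \Pi_{v, u}^{'} \;=\; \frac{\pprelem_{v, u}\, \alpha_0^{\text{ft}, (\nodeu)}}{\sum_{\nodew \in \vertices} \pprelem_{v, w}\, \alpha_0^{\text{ft}, (\nodew)}} \;=\; \frac{\pprelem_{v, u}\, \alpha_0^{\text{ft}, (\nodeu)}}{\alpha_0^{\text{agg}, (\nodev)}} \;\geq\; 0, \qquad \sum_{\nodeu \in \vertices} \Pi_{v, u}^{'} = 1,
\end{equation*}
which yields the key identity $\bar{\p}^{\text{agg}, (\nodev)} = \sum_{\nodeu \in \vertices} \Pi_{v, u}^{'}\, \bar{\p}^{\text{ft}, (\nodeu)}$: the diffused categorical prediction is an evidence-reweighted convex combination of the non-diffused ones.

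The first claim is then immediate: since $\p \mapsto \Entropy{\DCat(\p)}$ is concave on the probability simplex, Jensen's inequality applied to this convex combination gives $\sum_{\nodeu \in \vertices} \Pi_{v, u}^{'}\, \Entropy{\DCat(\bar{\p}^{\text{ft}, (\nodeu)})} \leq \Entropy{\DCat(\bar{\p}^{\text{agg}, (\nodev)})}$, i.e.\ the stated lower bound with normalized weights $\Pi_{v, u}^{'}$.

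For the second claim I would argue by monotonicity of the aggregated entropy along the perturbation. Fix the perturbed node $\nodeu$; decreasing $\bar{\p}_{\iclass^*}^{\text{ft}, (\nodeu)}$ while keeping $\alpha_0^{\text{ft}, (\nodeu)}$ fixed reallocates the removed mass onto the classes $\iclass \neq \iclass^*$, so $d\bar{\p}_{\iclass^*}^{\text{ft}, (\nodeu)} < 0$, $d\bar{\p}_\iclass^{\text{ft}, (\nodeu)} \geq 0$ for $\iclass \neq \iclass^*$, and $\sum_\iclass d\bar{\p}_\iclass^{\text{ft}, (\nodeu)} = 0$; the weights $\Pi_{v, u}^{'}$ are unchanged since they depend only on the $\alpha_0^{\text{ft}}$'s. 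Differentiating $\bar{\p}^{\text{agg}, (\nodev)} = \sum_{\nodew} \Pi_{v, w}^{'} \bar{\p}^{\text{ft}, (\nodew)}$ and using the elementary fact that the partial derivative of $-\sum_{\iclass'} p_{\iclass'}\log p_{\iclass'}$ with respect to $p_\iclass$ is $-\log p_\iclass - 1$, the $-1$ contributions cancel (because $\sum_\iclass d\bar{\p}_\iclass^{\text{ft}, (\nodeu)} = 0$), and after also subtracting the constant $\log \bar{\p}_{\iclass^*}^{\text{agg}, (\nodev)}\sum_\iclass d\bar{\p}_\iclass^{\text{ft}, (\nodeu)} = 0$ I obtain
\begin{equation*}
    d\,\Entropy{\DCat(\bar{\p}^{\text{agg}, (\nodev)})} \;=\; -\,\Pi_{v, u}^{'} \sum_\iclass \Big(\log \bar{\p}_\iclass^{\text{agg}, (\nodev)} - \log \bar{\p}_{\iclass^*}^{\text{agg}, (\nodev)}\Big)\, d\bar{\p}_\iclass^{\text{ft}, (\nodeu)}.
\end{equation*}
Here the hypothesis that $\iclass^*$ is the winning class of $\bar{\p}^{\text{agg}, (\nodev)}$ enters: every bracket is $\leq 0$, the $\iclass = \iclass^*$ term vanishes, and the remaining terms pair a nonpositive bracket with a nonnegative $d\bar{\p}_\iclass^{\text{ft}, (\nodeu)}$, so the sum is $\leq 0$ and $d\,\Entropy{\DCat(\bar{\p}^{\text{agg}, (\nodev)})} \geq 0$, strictly positive once some mass moves onto a class with strictly smaller aggregated probability. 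Integrating along the perturbation path then gives the claimed monotone increase of $u_\text{alea}\nodeidxv = \Entropy{\DCat(\bar{\p}^{\text{agg}, (\nodev)})}$.

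I expect the second claim to be the only delicate point, and the difficulties there are bookkeeping rather than conceptual: (i) one must pin down the meaning of ``$\bar{\p}_{\iclass^*}^{\text{ft}, (\nodeu)}$ decreases'' — I read it, consistently with the informal statement ``disagrees more with $\iclass^*$'', as the lost mass being redistributed over the other classes; (ii) the natural argument is infinitesimal, so a finite decrease is handled by integrating the directional derivative, with the sign computation above remaining valid as long as $\iclass^*$ stays the argmax of $\bar{\p}^{\text{agg}, (\nodev)}$ along the path; and (iii) in the degenerate case where $\bar{\p}^{\text{agg}, (\nodev)}$ is already uniform the entropy is maximal and ``increases'' should be read weakly. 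The first claim, by contrast, is a one-line application of Jensen once the convex-combination identity is recorded.
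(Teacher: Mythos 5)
Your proposal is correct and follows essentially the same route as the paper: the identical evidence-reweighted weights $\Pi_{v,u}^{'}$ and Jensen's inequality for the first claim, and the sign of the entropy's directional derivative (nonnegative because $\iclass^*$ is the aggregated argmax) for the second. Your treatment of the second part is marginally cleaner — you handle an arbitrary redistribution of the lost mass in one differential computation and consistently use the $\Pi'$ weights, whereas the paper perturbs a single receiving class with the raw PPR weights and only remarks at the end that the multi-class case follows by composition — but the underlying argument and its caveat (that $\iclass^*$ must remain winning along the path) are the same.
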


\begin{proof}
We first show the first part of the theorem. To this end, we use the relation between the aggregated and the feature evidence to derive a closed form relation between $\bar{\p}^\text{agg, (\nodev)}$ and $\bar{\p}^\text{ft, (\nodeu)}$.
\begin{align*}
    \bar{\p}^\text{agg, (\nodev)} & = \frac{\vbeta^{\text{agg}, (\nodev)}}{\alpha_0^{\text{agg}, (\nodev)}} \\
    & = \frac{\sum_{\nodeu \in \vertices} \pprelem_{v, u} \vbeta^{\text{ft}, (\nodeu)}}{\sum_{\nodeu' \in \vertices} \pprelem_{v, u'} \alpha_0^{\text{ft}, (\nodeu')}}\\ 
    & = \frac{\sum_{\nodeu \in \vertices} \pprelem_{v, u} \alpha_0^{\text{ft}, (\nodeu)} \bar{\p}^\text{ft, (\nodeu)}}{\sum_{\nodeu' \in \vertices} \pprelem_{v, u'} \alpha_0^{\text{ft}, (\nodeu')}}\\
    & = \sum_{\nodeu \in \vertices} \Pi_{v, u}^{'} \bar{\p}^\text{ft, (\nodeu)}
\end{align*}
where $\Pi_{v, u}^{'} =  \frac{\pprelem_{v, u} \alpha_0^{\text{ft}, (\nodeu)} }{\sum_{\nodeu' \in \vertices} \pprelem_{v, u'} \alpha_0^{\text{ft}, (\nodeu')}}$. Hence, the probability vector $\bar{\p}^\text{agg, (\nodev)}$ is a convex combination of the probability vectors $\bar{\p}^\text{agg, (\nodeu)}$ of other nodes. Further, using the concavity of the entropy function, we obtain the results:
\begin{align*}
    \sum_{\nodeu \in \vertices} \Pi_{v, u}^{'} \Entropy{\DCat(\bar{\p}^\text{ft, (\nodeu)})} \leq \Entropy{\DCat(\bar{\p}^\text{agg, (\nodev)})}
\end{align*}
Second, we show the second part of the theorem. To this end, we suppose that, for a neighboring node $\nodeu \in \neighbors(\nodev)$, the probability of the winning class $\iclass^*$ decreases and the probability of another class $\iclass^{'}$ increases i.e. $\bar{p}_{\iclass^*}^\text{ft, (\nodeu)} - \eps$ and $\bar{p}_{\iclass^{'}}^\text{ft, (\nodeu)} + \eps$. We define the following univariate function:
\begin{align*}
    f(\eps) = \Entropy{\DCat(\bar{\p}_{\eps}^\text{agg, (\nodev)})}
\end{align*}
where $\bar{\p}_{\eps}^\text{agg, (\nodev)}$ is the new aggregated probability vector of the node $\nodev$ after the epsilon change of the probability vector for node $\nodeu$. Note that $\bar{p}_{\eps, \iclass^*}^\text{agg, (\nodev)} = \sum_{\nodew \in \vertices} \pprelem_{v, w} \bar{p}_{\iclass^*}^\text{ft, (\nodew)} - \pprelem_{v, u}\eps$ and $\bar{p}_{\eps, \iclass^{'}}^\text{agg, (\nodev)} = \sum_{\nodew \in \vertices} \pprelem_{v, w} \bar{p}_{\iclass^{'}}^\text{ft, (\nodew)} + \pprelem_{v, u}\eps$. We compute the derivative of $f(\eps)$:
\begin{align*}
    \frac{\partial f(\eps)}{\partial \eps} &= \frac{\partial (\bar{p}_{\eps, \iclass^*}^\text{agg, (\nodev)} \log \bar{p}_{\eps, \iclass^*}^\text{agg, (\nodev)} + \bar{p}_{\eps, \iclass^{'}}^\text{agg, (\nodev)} \log \bar{p}_{\eps, \iclass^{'}}^\text{agg, (\nodev)})}{\partial \eps}\\
    & = \log\frac{\sum_{\nodew \in \vertices} \pprelem_{v, w} \bar{p}_{\iclass^*}^\text{ft, (\nodew)} - \pprelem_{v, u}\eps}{\sum_{\nodew \in \vertices} \pprelem_{v, w} \bar{p}_{\iclass^{'}}^\text{ft, (\nodew)} + \pprelem_{v, u}\eps}
\end{align*}
Hence, we note that as long as the class $\iclass^*$ is winning (i.e $\bar{p}_{\eps, \iclass^*}^\text{agg, (\nodev)} = \sum_{\nodew \in \vertices} \pprelem_{v, w} \bar{p}_{\iclass^*}^\text{ft, (\nodew)} - \pprelem_{v, u}\eps \geq \sum_{\nodew \in \vertices} \pprelem_{v, w} \bar{p}_{\iclass^{'}}^\text{ft, (\nodew)} + \pprelem_{v, u}\eps = \bar{p}_{\eps, \iclass^{'}}^\text{agg, (\nodev)}$), the function $f(\eps)$ is increasing. It means that the entropy \smash{$\Entropy{\DCat(\bar{\p}^\text{agg, (\nodev)})}$} increases when a neighboring node disagree more with the winning class $\iclass^*$. In particular, note that the conclusion holds if the epsilon decrease of the winning class is compensated by the probability increase of $K$ different classes. It would correspond to composing $K$ decreasing functions $f_k(\eps_k)$ where $k \in \{0,...,K\}$ and $\sum_k \eps_k = \eps$.
\end{proof}

\clearpage

\section{Dataset Details} \label{sec:app_dataset_details}
We consider the citation network datasets \emph{CoraML} \citep{Mccallum2000, Giles1998, Getoor2005, Sen2008a, Bojchevski2017}, \emph{CiteSeer} \citep{Giles1998, Getoor2005, Sen2008a}, \emph{PubMed} \citep{Namata2012}, \emph{CoauthorPhysics} and \emph{CoauthorCS} (based on the \emph{Microsoft Academic Graph} from the \emph{KDD Cup 2016} challenge) \citep{Shchur2018}
as well as two co-purchase datasets, \emph{AmazonPhotos} and \emph{AmazonComputers} \citep{Mcauley2015, Shchur2018}. Details are presented in the Table \ref{tab:dataset-summary}. For all those datasets, we consider a train/val/test split of $5/15/80$ using stratified sampling. For \emph{CoraML}, this corresponds to the default split of $20$ training samples per class with the difference of representing larger classes with more and smaller ones with less. We also note that this is significantly closer to the default split than approaches like \citep{Wang2020, Huang2020} introducing a $60/20/20$ split. For all those datasets, we average the results over individual predictions from 10 random splits together with 10 random model initializations per split, i.e. 100 runs for each dataset and model. Those dataset are part of \emph{PyTorch-Geometric} and under a MIT license. Besides those datasets, we also report results for the large dataset \emph{OGBN Arxiv} dataset \citep{ogb-dataset} which is based on the \emph{Microsoft Academic Graph} \citep{microsoft-academic-graph} with the public split based on publication years. Since this makes random splits unnecessary, we report results as averages over 10 runs. This dataset is also available under a MIT license.

\begin{table*}[!h]
    \centering
    \resizebox{\textwidth}{!}{
    \begin{tabular}{lccccccccc}
        \toprule
        & \textbf{CoraML} & \textbf{CiteSeer} & \textbf{PubMed} & \textbf{Amazon Computers} & \textbf{Amazon Photos} & \textbf{Coauthor CS} & \textbf{Coauthor Physics} & \textbf{OGBN - Arxiv} \\
        \midrule
        vertices &              2,995       & 3,327     & 19,717    & 13,752    & 7,650     & 18,333 &  34,493 & 169,343 \\
        edges                   & 16,316    & 9,104     & 88,648    & 491,722   & 238,162   &163,788 & 495,924 & 2,315,598\\
        homophily               & 78.86\%   & 73.55\%   & 80.24\%   & 77.22\%   & 82.72\% & 80.81\% & 93.15\% & 65.42\% \\ 
        feature dimension       & 2,879     & 3,703     & 500       & 767       & 745       & 6,805  & 8,415 & 128\\
        max words               & 176       & 54        & 122       & 767       & 745       & 666    & 335 & $N/A$\\
        mean words              & 50.47     & 31.61     & 50.11     & 267.24    & 258.81    & 59.57  & 32.97 & $N/A$\\
        median words            & 49        & 32        & 50        & 204       & 193       & 45     & 27 & $N/A$ \\
        classes                 & 7         & 6         & 3         & 10        & 8         & 15     & 5 & 40 \\
        left-out classes        & 3         & 2         & 1         & 5         & 3         & 4      & 2 & 15 \\
        fraction left-out       & 44.91\%     & 33.18\%     & 39.94\%     & 29.52\%     & 40.46\%     & 40.72\%  & 18.18\% & 39.11\% \\
        \bottomrule
    \end{tabular}}
    \caption{Dataset details summarizing the graph size, the homophily ratio (fraction of intra-class edges), the number of classes, and statistics on the Bag-Of-Word features when available. In particular, \emph{OGBN-Arxiv} uses averaged skip-gram embeddings for the nodes' features and thus does not require bag-of-word-features. Further, we provide the number of left-out classes and the fraction of left-out nodes for the LOC experiments.}
    \label{tab:dataset-summary}
\end{table*}

\section{Metrics} \label{sec:app_metrics}
\textbf{ACC - } Accuracy is simply the fraction of predictions $\hat{y}^{(u)}$ that correspond to the ground-truth targets $y^{(u)}$ out of a set of all predictions of size $N$, i.e. $acc = \frac{1}{N}\sum_{u\in \vertices} 1_{y^{(u)} = \hat{y}^{(u)}}$.

\textbf{Brier - } The Brier Score is computed as $\nicefrac{1}{C}\sum_\nodev^N ||\p\nodeidxv - \vy\nodeidxv||_2$ where $\vy\nodeidxv$ represent the one-hot encoded ground-truth label for a node $\nodev$ while $\p\nodeidxv$ is the predicted probability score. 

\textbf{ECE -}  The expected calibration error on the other hand requires binning the predicted probability scores into $M$ equally spaced bins with $conf(B_m)$ being the average probability score in the bin $m$. With $acc(B_m)$ being the average accuracy of predictions in bin $m$, we obtain the final metric as 
\begin{equation}
    ECE = \sum_m^M \frac{|B_m|}{n} |acc(B_m) - conf(B_m)|
\end{equation}

\textbf{OOD -} Furthermore, we use \textbf{AUC-ROC} and \textbf{AUC-PR} scores for OOD-detection experiments. This problem is considered as a binary classification problem with the positive targets being the OOD-nodes and the negative targets being ID-data points. We use the same aleatoric and epistemic uncertainty measures used in \citep{Charpentier2020}. For aleatoric uncertainty measures, we use $u_\text{alea}\nodeidxv = - \max_c \bar{\mathbf{p}}_c^{(u)}$. For epistemic uncertainty, we use $u_\text{epist}\nodeidxv = - \alpha_0\nodeidxv$ for Dirichlet-based methods and $u_\text{epist}\nodeidxv = \frac{1}{\sum_\iclass \variance \bar{\mathbf{p}}_{c}^{(u)}}$ for other methods. For Dirichlet-based methods and GPs, the corresponding quantities are predicted directly. For ensemble and dropout baselines, these quantities are computed based on the empirical mean and empirical variance.

Note that the \emph{vacuity} uncertainty measure proposed in \citep{Zhao2020} and motivated from work on \emph{subjective logic} is just the inverse transformation of $\alpha_0$ given by $u_\text{vacuity} = \frac{C}{\alpha_0}$. Hence, the AUC-ROC and AUC-PR scores which evaluate the ranking of the examples lead to the \emph{exact} same final scores using $u_\text{vacuity}$ or $u_\text{epist}$.

\section{Model Details} \label{sec:app_model_details}

We follow \citep{Shchur2018, Klicpera2019, Zhao2020} and baselines from the OGBN leaderboard\footnote{\url{https://ogb.stanford.edu/docs/leader_nodeprop/\#ogbn-arxiv}} for the choice of the architectures. By default, we use a hidden dimension of $h = 64$ and $l=2$ layers for parametric models on all datasets except for \emph{OGBN Arxiv}. In this case, we use early stopping with a patience of $50$ and a maximum of $100,000$ epochs. For \emph{OGBN Arxiv}, we use a hidden dimension of $h = 256$ with $l=3$ layers and use batch-norm. In this case, we use early stopping with a patience of $200$, a maximum number of $1,000$ epochs and no weight-decay for all models. For Gaussian Processes, we implement our experiment pipelines and models in \emph{PyTorch} \citep{pytorch} and rely on \emph{PyTorch-Geometric} \citep{pytorch-geometric}. For all models, we use the Adam optimizer \citep{Kingma2014} with its default parameters and a learning rate of $0.01$. For further details, we provide the code in the supplementary material.

\textbf{GPN - } We use a similar backbone architecture as for APPNP \citep{Klicpera2018}.\footnote{Code available at \url{https://www.daml.in.tum.de/graph-postnet}} We report all the used hyper-parameters in Tab.~\ref{tab:gpn-hyper-parameters}. Similarly to \citep{NatPN2021}, we use a certainty budget $N$ which scales exponentially w.r.t. the latent dimension (i.e. $N_H=\sqrt{4 \pi}^H$) and $5$ warm-up epochs maximizing the log likelihood of the normalizing flows. Furthermore, we use $K=10$ power-iterations steps to approximate PPR scores. We do not use weight decay for the Normalizing Flows. Those parameters have been obtained after conducting an ablation and a hyper-parameter study on the \emph{CoraML} and \emph{OGBN Arxiv} datasets (see Sec.\ref{sec:ablation-study} and Sec.\ref{sec:hyperparameter-study}). Finally, we recall for completeness the closed-form of the Bayesian loss introduced in \citep{Charpentier2020} when $\prior\namednodeidxv{post}=\DDir(\bm{\alpha}\nodeidxv)$ and $\prob(\y\nodeidxv \condition \p\nodeidxv) = \DCat(\y\nodeidxv \condition \p\nodeidxv)$:
\begin{equation}
    \loss\nodeidxv = -\ExpecationArgs{\p\nodeidxv\sim \prior\namednodeidxv{post}}{\log \prob(\y\nodeidxv \condition \p\nodeidxv)} - \lambda \Entropy{\prior\namednodeidxv{post}}
\end{equation}
The expected likelihood term is equal to:
\begin{align}\label{eq:expected-likelihood-cat-dir}
    \expectation_{\bm{p} \sim \DDir(\bm{\alpha})}[\log \DCat(\y \condition \bm{p})] = \psi(\alpha_{\y}) - \psi(\alpha_0)
\end{align}
The entropy term is equal to:
\begin{align}\label{eq:density-entropy-dirichlet}
        \entropy[\DDir(\bm{\alpha})] = \log B(\bm{\alpha}) + (\alpha_0 - \nclass) \psi(\alpha_0) - \sum_\iclass (\alpha_\iclass - 1) \psi(\alpha_\iclass)
\end{align}

\begin{table}[!h]
    \centering
    \resizebox{\textwidth}{!}{
        \begin{tabular}{l|cccccccccc}
            \toprule
             & $H$  & $L$   &   $n_{layers}$    & $n_{radial}$  &   $ACT$   &   $p_{drop}$ &    $\tau$ & budget & $\lambda$   &   weight decay \\
             \midrule
             \textbf{(1)} & $64$ & $10$ & $2$ & $10$ & ReLU & $0.5$ & $0.2$ & $N_H \cdot C$ & $1.0e-05$ & $0.0005$ \\
             \textbf{(2)} & $64$ & $16$ & $2$ & $10$ & ReLU & $0.5$ & $0.1$ & $N_H$ & $1.0e-3$ & $0.001$ \\
             \textbf{(3)} & $256$ & $16$ & $3$ & $10$ & ReLU + BN & $0.25$ & $0.2$ & $N_H$ & $1.0e-3$ & $0.0$\\
             \bottomrule
        \end{tabular}
    }
        \vspace{2mm}
        \caption{\oursacro{} hyperparameters used in our experiments. (1) is used for \emph{Amazon Photos} and \emph{Amazon Computers} datasets, (3) is used for \emph{OGBN-Arxiv} dataset and (2) is used for all other datasets. For all those settings, we furthermore use $K=10$ power-iterations steps, 5 warmup epochs for the Normalizing Flows, and no weight decay for the normalizing flows.}
        \label{tab:gpn-hyper-parameters}
\end{table}

\textbf{GKDE - } We adopt the Graph Kernel Dirichlet Estimate from \citep{Zhao2020} as a standalone and parameterless baseline. With $d_{v,u}$ being the shortest path between nodes $v$ and $u$ and the Gaussian 
transformation $g(d_{v,u}) = \nicefrac{1}{\sigma \sqrt{2\pi}} \exp \left(\nicefrac{-d_{v,u}^2}{2\sigma^2}\right)$, a Dirichlet estimate is obtained in the following way
\begin{equation}
    \valpha^{(v)} = 1 + \vect{e}^{(v)} \quad \text{ with } \quad \vect{e}^{(v)} = \sum_{u \in \nodeslabeled} \vect{h}(y^{(u)}, d_{v,u}) \quad \text{ and } \quad h_c(y^{(u)}, d_{v,u}) = \begin{cases}0 & y^{(u)} \neq c\\ g(d_{v,u}) &  y^{(u)} = c \end{cases}
\end{equation}
Similar to \citep{Zhan2020}, we use $\sigma=1$. We would also like to point out that the computation of this kernel requires extracting the shorted distance of each node to each labeled node $u \in \nodeslabeled$. Larger datasets like \emph{OGBN-Arxiv} come with larger sets of labeled data with the size $|\nodeslabeled|$ having a same magnitude as the number of nodes in the graph, i.e. $|\vertices$|. This approach therefore scales quadratically with the number of nodes in the graph and therefore does not generalize well to larger datasets. 

\textbf{LP - } Following the idea of the GKDE baseline, we propose similar Dirichlet estimates by relying on Label Propagation which achieve strong results in Left-Out classes experiments. GKDE extracts Dirichlet evidence scores by relying on node distances. We propose taking the density of labeled nodes in neighborhoods instead. To this end, we define one initial conditional density per class $\rho_0(u \condition c)$ and diffuse them with Personalized Page Rank i.e.
\begin{equation}
\rho_0(u \condition c) = 
\begin{cases}
0 & u \in \sU \\
\frac{1}{|\sL_c|} \cdot \delta_{y^{(u)}, c} & u \in \sL
\end{cases} \rightarrow \rho(v \condition c) = \sum_u \pprelem_{v, u}\cdot \rho_0(u \condition c)
\end{equation}
where $\sL_c$ is the set of labeled nodes for class $c$. The diffused density $\rho(v \condition c)$ is still a valid density i.e. $\sum_c \sum_u \rho(u\mid c) = \sum_c \sum_u \rho_0(u\mid c) = 1$. Finally, we use this diffused conditional densities to obtain Dirichlet evidence scores in a similar fashion to the GKDE kernel \citep{Zhao2020}, i.e. $\alpha_c^{(v)} = 1.0 + \rho(v\mid c)$. The diffusion is performed with power-iteration similar to APPNP \citep{Klicpera2018}. We use a teleport probability of $\tau=0.1$ and $K=10$ power iteration steps.

\textbf{Gaussian Processes - } We use the official implementations for \textbf{MaternGGP} \citep{Borovitskiy2020} and the re-implementation\footnote{https://github.com/FelixOpolka/GGP-TF2} for \textbf{GGP} \citep{Ng2018}. The re-implementation transfers the official implementation to Tensorflow 2.0 \citep{tensorflow} which we wrapped in our Pytorch pipeline. Since those approaches do not scale well to large real-world datasets, we restrict to a single random initialization. GGP only finished the experiments on CoraML and CiteSeer. Similarly, MaternGGP did not finished the experiments on OGBN Arxiv. For recall, we set the memory an time limits to 64 GiB and 12 hours per run. For comparison, note that all GNN-based models require significantly less memory and finished \emph{all} runs in a couple of hours.

\textbf{GKDE-SGCN - } We use the hyper-parameters suggested in the original paper \citep{Zhao2020}. We set the regularization factor $\lambda$ to $0.001$. This factor determines the weight of the Graph-Kernel-Dirichlet-Estimate which is key for OOD detection in graphs. Note that we did not use different factors for OOD experiments and classification experiments contrary to \citep{Zhao2020} since it leads to the leakage of task information. Indeed, the clean accuracy is significantly higher for $\lambda=0.001$ compared to $\lambda=0.1$. For \emph{OGBN Arxiv}, we did not use teacher training as it harmed the performance. In this case, we used a dropout probability of $p=0.5$ and $\lambda = 0.0001$ after a small grid-search with the overall architecture following the initial remarks above.

\textbf{APPNP - } We follow \citep{Klicpera2019} and use an architecture that is comparable to other GNN approaches. We use ReLU activations, dropout with $p=0.5$, no dropout on the adjacency matrix, a teleport probability of $\tau = 0.1$ and $K=10$ power iteration steps. We also use a weight decay of $\lambda = 0.0001$.

\textbf{VGCN - } We use ReLU activations, dropout with $p=0.8$, and a weight decay of $\lambda = 0.0001$. For the larger dataset \emph{OGBN Arxiv}, we use a dropout of $p=0.5$. \textbf{DropEdge} is similar to the Vanilla GCN model with an additional dropout on the edges with a dropout probability of $p=0.5$ on both features and edges. For evaluation of dropout models, i.e. \textbf{DropEdge} and \textbf{VGCN-Dropout}, we use $S = 10$ Monte-Carlo samples having shown a reasonable estimate with more samples not leading to a visible improvement. For ensembles, i.e. \textbf{VGCN-Ensemble}, we use an ensemble of models of $10$ different random initializations. For \textbf{VGCN-Energy}, we follow \citep{Liu2020a} and use a temperature of $T = 1.0$.

\textbf{VGCN-BNN - } We follow \emph{Bayes by Backprop} \citep{Blundell2015} and adopt a Bayesian GNN with uncertain weights. We use a hidden dimension of $h=32$. This is equivalent to $h=64$ for other models as each weight is represented by one mean parameter $\mu$ and one variance parameter $\Sigma$. We use $10$ bayesian samples in our experiments. We follow the grid search suggested in the original paper \citep{Blundell2015}. We finally adopt $\pi = 0.75$, $\sigma_1 = 1.0$, and $\sigma_2 = 1.0e-6$. Since this models assumes uncertain weights, we do not apply any weight decay during training. Note that we do not report results for the larger dataset \emph{OGBN Arxiv} for this baseline. 

\textbf{RGCN - } We follow \citep{Zhu2019} for the hyper-parameter selection. We use a hidden size of $h=32$. Again this is equivalent to $h=64$ since RGCN models a mean parameter $\mu$ and a variance parameter $\Sigma$ per layer. We further use dropout $p=0.5$ on the features, $\gamma=1$, $\beta_{KL}=5.04-4$ and $\beta_{reg} = 5.0e-4$. As the latter is already a weight regularization in the loss, we do not apply weight decay.

\section{Additional Experiments}
\subsection{Additional Experiments - Ablation Study} \label{sec:ablation-study}

In this section, we evaluate the contribution of each component of \oursacro{}. To this end, we use \textbf{PostNet} which first trains the feature encoder and the normalizing flows without diffusion and \textbf{PostNet-Diff} which diffuses the ablation-counts only at test time. Further, we also compare to \textbf{APPNP} \citep{Klicpera2018} which does not model the epistemic uncertainty with density estimation and \textbf{GPN-LOG} which diffuse the parameters $\log(\beta_\iclass^{\text{ft}, (\nodev)})$ instead of $\beta_\iclass^{\text{ft}, (\nodev)}$. We observed that training with diffusion is beneficial for all metrics. Further, we noted that diffusing $\log(\beta_\iclass^{\text{ft}, (\nodev)})$ improves accuracy and calibration to the cost of a lower Left-Out classes detection scores. Similarly, APPNP also showed better accuracy when diffusing logits instead of softmax outputs in the original paper \citep{Klicpera2018}. Finally, APPNP achieves significantly worse results for all OOD detection tasks showing the benefit of modelling the epistemic uncertainty.
\begin{table}[!ht]
    \centering
    \resizebox{\textwidth}{!}{
        \begin{tabular}{lcccccc}
            \toprule
            \textbf{Model} &  \textbf{ACC} &  \textbf{ECE} &  \textbf{$\DBer$-FT-AUC-ROC} &  \textbf{$\DNormal$-FT-AUC-ROC} &  \textbf{LOC-ID-ACC} & \textbf{LOC-EPIS-AUC-ROC} \\
            \midrule
            APPNP & $82.18\pm 0.08$ & $\hphantom{0}8.46\pm 0.09$ & $60.51\pm0.06$ & $\hphantom{0}16.32\pm0.31$ & $88.71\pm0.03$ & $84.75\pm0.06$ \\
            PostNet & $52.24 \pm 0.90$  &  $\hphantom{0}8.22 \pm 1.39$  &  $83.09 \pm 4.41$  &  $100.00 \pm 0.00$  &  $69.37 \pm 0.50$  &  $70.14 \pm 0.93$ \\ 
            PostNet-Diff & $77.10 \pm 0.58$  &  $30.35 \pm 0.86$  &  $83.09 \pm 4.41$  &  $100.00 \pm 0.00$  &  $86.18 \pm 0.65$  &  $78.45 \pm 0.80$ \\ 
            GPN & $81.57 \pm 0.18$  &  $12.77 \pm 1.11$  &  $84.69 \pm 4.60$  &  $100.00 \pm 0.00$  &  $89.17 \pm 0.39$  &  $87.34 \pm 0.76$ \\ 
            GPN-LOG-$\beta$ & $82.90 \pm 0.26$  &  $\hphantom{0}8.77 \pm 0.33$  &  $91.24 \pm 2.67$  &  $100.00 \pm 0.00$  &  $91.17 \pm 0.59$  &  $70.88 \pm 1.74$ \\ 
            \bottomrule
        \end{tabular}
    }
    \vspace{1em}
    \caption{Accuracy, Calibration, and OOD-detection results for the ablation study on CoraML for the validation split. All models use epistemic uncertainty measures except APPNP which uses an aleatoric measure.}
    \label{tab:ablation}
\end{table}

\subsection{Additional Experiments - Hyper-parameter study} \label{sec:hyperparameter-study}
Besides the previously mentioned ablation study, we also performed a study on the influences of hyperparameters. We show findings for the \emph{CoraML} dataset averaging runs with $3$ different random dataset splits and $2$ different random initializations. We analyzed the influence of the latent dimension, the number of radial layers, the teleport probability, the certainty budget, weight decay, and entropy regularization. 
\begin{table}
    \centering
    \resizebox{\textwidth}{!}{
        \begin{tabular}{lcccccc}
            \toprule
            \textbf{Latent Dim} &  \textbf{ACC} &  \textbf{ECE} &  \textbf{$\DBer$-FT-AUC-ROC} &  \textbf{$\DNormal$-FT-AUC-ROC} &  \textbf{LOC-ID-ACC} & \textbf{LOC-Epis-AUC-ROC} \\
            \midrule
            6 & $79.18 \pm 0.16$  &  $10.50 \pm 0.56$  &  $81.75 \pm 4.74$  &  $100.00 \pm 0.00$  &  $90.17 \pm 0.41$  &  $88.65 \pm 0.58$ \\ 
            10 & $79.14 \pm 0.44$  &  $10.96 \pm 0.23$  &  $77.92 \pm 4.44$  &  $100.00 \pm 0.00$  &  $89.39 \pm 0.58$  &  $87.39 \pm 0.43$ \\ 
            16 & $81.57 \pm 0.18$  &  $12.77 \pm 1.11$  &  $84.69 \pm 4.60$  &  $100.00 \pm 0.00$  &  $89.17 \pm 0.39$  &  $87.34 \pm 0.76$ \\ 
        \bottomrule
        \end{tabular}
    }
    \vspace{5mm}
    \caption{Accuracy, Calibration, and OOD-detection results of \oursacro{} on the CoraML dataset with latent dimensions in $[6, 10, 16]$.}
    \label{tab:gs-dim-latent}
\end{table}
\begin{table}
    \centering
    \resizebox{\textwidth}{!}{
        \begin{tabular}{lcccccc}
            \toprule
            \textbf{Radial Layers} &  \textbf{ACC} &  \textbf{ECE} &  \textbf{$\DBer$-FT-AUC-ROC} &  \textbf{$\DNormal$-FT-AUC-ROC} &  \textbf{LOC-ID-ACC} & \textbf{LOC-Epis-AUC-ROC} \\
            \midrule 
            6 & $80.04 \pm 0.85$  &  $11.26 \pm 0.48$  &  $84.26 \pm 2.38$  &  $100.00 \pm 0.00$  &  $90.74 \pm 0.32$  &  $86.62 \pm 0.25$ \\ 
            10 & $81.57 \pm 0.18$  &  $12.77 \pm 1.11$  &  $84.69 \pm 4.60$  &  $100.00 \pm 0.00$  &  $89.17 \pm 0.39$  &  $87.34 \pm 0.76$ \\ 
            16 & $75.88 \pm 0.95$  &  $\hphantom{0}8.77 \pm 0.32$  &  $94.06 \pm 1.37$  &  $100.00 \pm 0.00$  &  $89.39 \pm 0.20$  &  $84.14 \pm 0.12$ \\ 
            \bottomrule
        \end{tabular}
    }
    \vspace{5mm}
    \caption{Accuracy, Calibration, and OOD-detection results of \oursacro{} on the CoraML dataset with the number of radial layers in $[6, 10, 16]$.}
    \label{tab:gs-n-radial}
\end{table}
\begin{table}
    \centering
    \resizebox{\textwidth}{!}{
        \begin{tabular}{lcccccc}
            \toprule
            $\mathbf{\tau_\text{teleport}}$ & \textbf{ACC} &  \textbf{ECE} &  \textbf{$\DBer$-FT-AUC-ROC} &  \textbf{$\DNormal$-FT-AUC-ROC} &  \textbf{LOC-ID-ACC} & \textbf{LOC-Epis-AUC-ROC} \\
            \midrule
            0.05 & $80.39 \pm 0.33$  &  $11.98 \pm 0.53$  &  $85.31 \pm 2.79$  &  $100.00 \pm 0.00$  &  $89.32 \pm 0.41$  &  $86.63 \pm 0.65$ \\ 
            0.1 & $81.57 \pm 0.18$  &  $12.77 \pm 1.11$  &  $84.69 \pm 4.60$  &  $100.00 \pm 0.00$  &  $89.17 \pm 0.39$  &  $87.34 \pm 0.76$ \\ 
            0.2 & $80.00 \pm 0.30$  &  $\hphantom{0}8.27 \pm 0.59$  &  $86.65 \pm 1.46$  &  $100.00 \pm 0.00$  &  $88.89 \pm 0.31$  &  $87.30 \pm 0.30$ \\ 
            \bottomrule
        \end{tabular}
    }
    \vspace{5mm}
    \caption{Accuracy, Calibration, and OOD-detection results of \oursacro{} on the CoraML dataset  with a teleport probability $\tau$ in $[0.05, 0.1, 0.2]$.}
    \label{tab:gs-alpha-teleport}
\end{table}
\begin{table}
    \centering
    \resizebox{\textwidth}{!}{
        \begin{tabular}{lcccccc}
            \toprule
            \textbf{Budget} &  \textbf{ACC} &  \textbf{ECE} &  \textbf{$\DBer$-FT-AUC-ROC} &  \textbf{$\DNormal$-FT-AUC-ROC} &  \textbf{LOC-ID-ACC} & \textbf{LOC-Epis-AUC-ROC} \\
            \midrule
            $N_H$ & $81.57 \pm 0.18$  &  $12.77 \pm 1.11$  &  $84.69 \pm 4.60$  &  $100.00 \pm 0.00$  &  $89.17 \pm 0.39$  &  $87.34 \pm 0.76$ \\ 
            $N_H \cdot C$ & $79.76 \pm 0.93$  &  $11.24 \pm 0.89$  &  $82.82 \pm 5.68$  &  $100.00 \pm 0.00$  &  $88.68 \pm 0.75$  &  $86.39 \pm 1.27$ \\ 
            \bottomrule
        \end{tabular}
    }
    \vspace{5mm}
    \caption{Accuracy, Calibration, and OOD-detection results of \oursacro{} on the CoraML dataset with a budget that scales exponentially w.r.t. the latent dimension and a budget that scales exponentially w.r.t. the latent dimension and the number of classes.}
    \label{tab:gs-budget}
\end{table}
\begin{figure}[h]
    \centering
    \includegraphics[width=\textwidth]{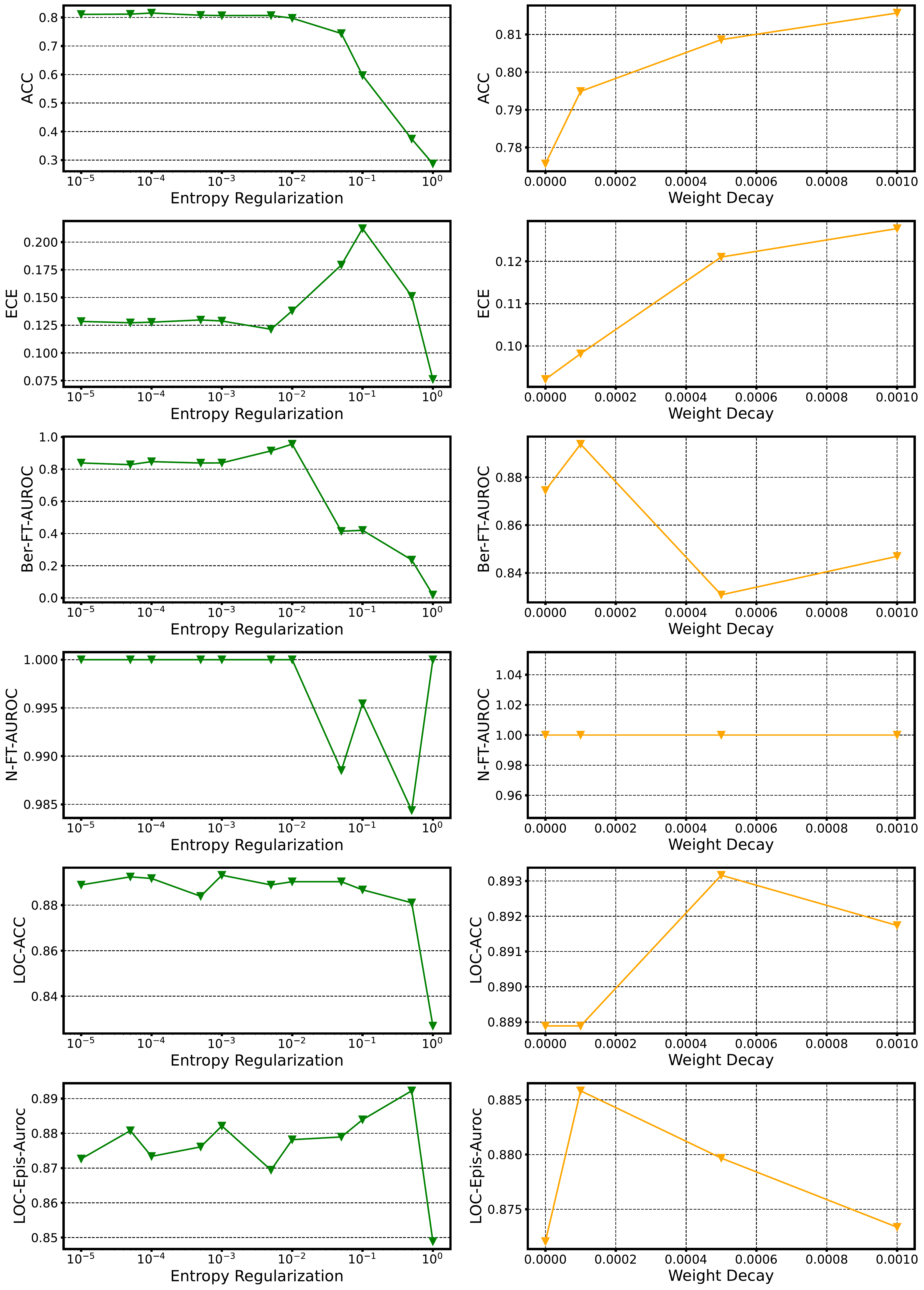}
    \caption{Accuracy, Calibration, and OOD-detection results of \oursacro{} on CoraML for the entropy regularization factor and the weight decay.}
    \label{fig:gs-regularization}
\end{figure}

\subsection{Additional Experiments - Misclassification}
\label{sec:add-exp-misclassification}

Work like \citep{Zhao2020} found that measures of aleatoric uncertainty are well suited for detecting inputs which are not classified correctly. Epistemic measures of uncertainty are found to be better suited for detecting OOD samples. Orthogonal work like \citep{Guo2017, Ovadia2019} uses calibration to determine how reliable predictions are. We adopt this point of view while reporting results for the task of OOD detection using epistemic measures. We also present aleatoric measures as a reference because simple baselines without any intrinsic uncertainty estimates solely can quantify uncertainty through simple aleatoric uncertainty measures. To facilitate an easy comparison to work like \citep{Zhao2020}, we also present results for misclassification experiments in Tab.~\ref{tab:misclassification-one} and in Tab.~\ref{tab:misclassification-two}. As in \citep{Zhan2020}, we can observe that aleatoric uncertainty mostly is better for detecting misclassified samples for all models than epistemic uncertainty. \oursacro{} achieves a similar performance in this task while also showing that for some datasets measures not accounting for network effects can detect misclassified samples better than measures which account for network effects. 

\begin{table}[ht]
    \centering
    
    \resizebox{\textwidth}{!}{
    \begin{tabular}{ll|lll|lll}
    \toprule
             \textbf{Dataset} &          \textbf{Model} & \textbf{Alea w/ Net} & \textbf{Epis w/ Net} & \textbf{Epis w/o Net} & \textbf{Alea w/ Net} & \textbf{Epis w/ Net} & \textbf{Epis w/o Net} \\
            {} & {} & \multicolumn{3}{c|}{misclassification AUC-ROC} & \multicolumn{3}{c}{misclassification AUC-PR} \\
            \midrule
                
                \multirow{8}{*}{CoraML} 
               &          APPNP &    $\mathbf{83.64 \pm 0.08}$ &               $n.a.$ &                $n.a.$ &   $48.39 \pm 0.19$ &              $n.a.$ &               $n.a.$ \\
               &           VGCN &    $81.02 \pm 0.07$ &               $n.a.$ &                $n.a.$ &   $48.30 \pm 0.23$ &              $n.a.$ &               $n.a.$ \\
               &   VGCN-Dropout &    $81.42 \pm 0.06$ &    $65.52 \pm 0.28$ &                $n.a.$ &   $49.34 \pm 0.17$ &   $26.11 \pm 0.21$ &               $n.a.$ \\
               &    VGCN-Energy &    $81.02 \pm 0.07$ &               $n.a.$ &                $n.a.$ &   $48.30 \pm 0.23$ &              $n.a.$ &               $n.a.$ \\
               &  VGCN-Ensemble &    $81.12 \pm 0.17$ &    $72.62 \pm 0.19$ &                $n.a.$ &   $49.16 \pm 0.59$ &   $31.88 \pm 0.29$ &               $n.a.$ \\
               &       VGCN-BNN &    $80.64 \pm 0.10$ &    $65.40 \pm 0.48$ &                $n.a.$ &   $47.49 \pm 0.26$ &   $26.70 \pm 0.33$ &               $n.a.$ \\
               &        GKDE-GCN &    $80.80 \pm 0.14$ &    $76.83 \pm 0.17$ &                $n.a.$ &   $\mathbf{49.61 \pm 0.47}$ &   $45.87 \pm 0.48$ &               $n.a.$ \\
               &            GPN &    $81.19 \pm 0.13$ &    $78.10 \pm 0.26$ &     $77.46 \pm 0.24$ &   $49.51 \pm 0.26$ &   $44.42 \pm 0.32$ &    $43.31 \pm 0.41$ \\
            
            \midrule
            
            \multirow{8}{*}{CiteSeer} 
             &          APPNP &    $73.55 \pm 0.08$ &               $n.a.$ &                $n.a.$ &   $51.70 \pm 0.15$ &              $n.a.$ &               $n.a.$ \\
             &           VGCN &    $74.64 \pm 0.09$ &               $n.a.$ &                $n.a.$ &   $54.32 \pm 0.18$ &              $n.a.$ &               $n.a.$ \\
             &   VGCN-Dropout &    $74.81 \pm 0.11$ &    $64.09 \pm 0.16$ &                $n.a.$ &   $55.12 \pm 0.23$ &   $39.41 \pm 0.19$ &               $n.a.$ \\
             &    VGCN-Energy &    $74.64 \pm 0.09$ &               $n.a.$ &                $n.a.$ &   $54.32 \pm 0.18$ &              $n.a.$ &               $n.a.$ \\
             &  VGCN-Ensemble &    $74.42 \pm 0.26$ &    $68.15 \pm 0.23$ &                $n.a.$ &   $53.28 \pm 0.33$ &   $43.30 \pm 0.30$ &               $n.a.$ \\
             &       VGCN-BNN &    $73.28 \pm 0.11$ &    $61.68 \pm 0.29$ &                $n.a.$ &   $54.62 \pm 0.21$ &   $37.99 \pm 0.24$ &               $n.a.$ \\
             &        GKDE-GCN &    $75.45 \pm 0.11$ &    $73.83 \pm 0.12$ &                $n.a.$ &   $54.78 \pm 0.19$ &   $53.57 \pm 0.20$ &               $n.a.$ \\
             &            GPN &    $\mathbf{75.89 \pm 0.15}$ &    $74.16 \pm 0.17$ &     $72.50 \pm 0.12$ &   $\mathbf{60.78 \pm 0.32}$ &   $59.32 \pm 0.40$ &    $52.10 \pm 0.18$ \\
              
              \midrule
              
              \multirow{8}{*}{PubMed} 
               &          APPNP &    $80.98 \pm 0.02$ &               $n.a.$ &                $n.a.$ &   $37.79 \pm 0.08$ &              $n.a.$ &               $n.a.$ \\
               &           VGCN &    $81.16 \pm 0.02$ &               $n.a.$ &                $n.a.$ &   $38.24 \pm 0.08$ &              $n.a.$ &               $n.a.$ \\
               &   VGCN-Dropout &    $80.46 \pm 0.04$ &    $72.69 \pm 0.09$ &                $n.a.$ &   $38.63 \pm 0.11$ &   $25.90 \pm 0.09$ &               $n.a.$ \\
               &    VGCN-Energy &    $81.16 \pm 0.02$ &               $n.a.$ &                $n.a.$ &   $38.24 \pm 0.08$ &              $n.a.$ &               $n.a.$ \\
               &  VGCN-Ensemble &    $\mathbf{81.31 \pm 0.06}$ &    $79.30 \pm 0.08$ &                $n.a.$ &   $38.06 \pm 0.33$ &   $31.73 \pm 0.19$ &               $n.a.$ \\
               &       VGCN-BNN &    $79.96 \pm 0.07$ &    $72.63 \pm 0.45$ &                $n.a.$ &   $39.31 \pm 0.08$ &   $27.59 \pm 0.39$ &               $n.a.$ \\
               &        GKDE-GCN &    $80.95 \pm 0.09$ &    $73.99 \pm 0.27$ &                $n.a.$ &   $39.64 \pm 0.10$ &   $33.19 \pm 0.14$ &               $n.a.$ \\
               &            GPN &    $80.46 \pm 0.13$ &    $75.38 \pm 0.25$ &     $80.48 \pm 0.13$ &   $40.74 \pm 0.19$ &   $35.11 \pm 0.11$ &    $\mathbf{51.12 \pm 0.52}$ \\
     
    \bottomrule
    \end{tabular}}
    \vspace{1em}
    \caption{Misclassification Scores on the clean graphs given as AUC-ROC and AUC-PR scores. AUC-ROC and AUC-APR scores are given as \emph{[Alea w/ Net] / [Epist w/ Net] / [Epist w/o Net]}. $n.a.$ means either model or metric not applicable. Bold numbers indicate the best model across all the uncertainty metrics for each dataset.}
    \label{tab:misclassification-one}
\end{table}

\begin{table}[ht]
    \centering
    
    \resizebox{\textwidth}{!}{
    \begin{tabular}{ll|lll|lll}
    \toprule
             \textbf{Dataset} &          \textbf{Model} & \textbf{Alea w/ Net} & \textbf{Epis w/ Net} & \textbf{Epis w/o Net} & \textbf{Alea w/ Net} & \textbf{Epis w/ Net} & \textbf{Epis w/o Net} \\
             {} & {} & \multicolumn{3}{c|}{misclassification AUC-ROC} & \multicolumn{3}{c}{misclassification AUC-PR} \\
    
            \midrule

        \multirow{8}{*}{\shortstack[l]{Amazon\\Computers}}
      &          APPNP &    $79.75 \pm 0.03$ &               $n.a.$ &                $n.a.$ &   $45.10 \pm 0.11$ &              $n.a.$ &               $n.a.$ \\
      &           VGCN &    $82.08 \pm 0.03$ &               $n.a.$ &                $n.a.$ &   $45.52 \pm 0.12$ &              $n.a.$ &               $n.a.$ \\
      &   VGCN-Dropout &    $\mathbf{82.70 \pm 0.04}$ &    $72.02 \pm 0.11$ &                $n.a.$ &   $47.19 \pm 0.17$ &   $31.45 \pm 0.14$ &               $n.a.$ \\
      &    VGCN-Energy &    $82.08 \pm 0.03$ &               $n.a.$ &                $n.a.$ &   $45.52 \pm 0.12$ &              $n.a.$ &               $n.a.$ \\
      &  VGCN-Ensemble &    $82.05 \pm 0.06$ &    $67.62 \pm 0.44$ &                $n.a.$ &   $45.40 \pm 0.24$ &   $26.33 \pm 0.37$ &               $n.a.$ \\
      &       VGCN-BNN &    $82.15 \pm 0.17$ &    $48.65 \pm 0.82$ &                $n.a.$ &   $\mathbf{69.61 \pm 0.42}$ &   $30.87 \pm 0.43$ &               $n.a.$ \\
      &        GKDE-GCN &    $79.66 \pm 0.19$ &    $73.66 \pm 0.15$ &                $n.a.$ &   $63.26 \pm 0.57$ &   $56.93 \pm 0.54$ &               $n.a.$ \\
      &            GPN &    $82.20 \pm 0.10$ &    $77.58 \pm 0.16$ &     $70.06 \pm 0.19$ &   $47.93 \pm 0.42$ &   $41.80 \pm 0.44$ &    $28.70 \pm 0.51$ \\
        
        \midrule
        
        \multirow{8}{*}{\shortstack[l]{Amazon\\Photos}}
         &          APPNP &    $85.74 \pm 0.06$ &               $n.a.$ &                $n.a.$ &   $37.00 \pm 0.20$ &              $n.a.$ &               $n.a.$ \\
         &           VGCN &    $87.94 \pm 0.05$ &               $n.a.$ &                $n.a.$ &   $48.35 \pm 0.19$ &              $n.a.$ &               $n.a.$ \\
         &   VGCN-Dropout &    $\mathbf{89.52 \pm 0.05}$ &    $78.46 \pm 0.10$ &                $n.a.$ &   $50.27 \pm 0.19$ &   $23.08 \pm 0.12$ &               $n.a.$ \\
         &    VGCN-Energy &    $87.94 \pm 0.05$ &               $n.a.$ &                $n.a.$ &   $48.35 \pm 0.19$ &              $n.a.$ &               $n.a.$ \\
         &  VGCN-Ensemble &    $88.08 \pm 0.16$ &    $76.05 \pm 0.54$ &                $n.a.$ &   $49.40 \pm 0.67$ &   $24.13 \pm 0.67$ &               $n.a.$ \\
         &       VGCN-BNN &    $84.17 \pm 0.19$ &    $51.84 \pm 0.66$ &                $n.a.$ &   $51.05 \pm 0.64$ &   $18.69 \pm 0.55$ &               $n.a.$ \\
         &        GKDE-GCN &    $84.11 \pm 0.29$ &    $75.07 \pm 0.50$ &                $n.a.$ &   $\mathbf{54.35 \pm 0.58}$ &   $45.43 \pm 0.68$ &               $n.a.$ \\
         &            GPN &    $87.21 \pm 0.10$ &    $83.38 \pm 0.30$ &     $79.93 \pm 0.23$ &   $46.32 \pm 0.39$ &   $37.07 \pm 0.60$ &    $29.90 \pm 0.71$ \\
          
          \midrule
          
          \multirow{8}{*}{\shortstack[l]{Coauthor\\CS}}
           & APPNP &    ${89.92 \pm 0.03}$ &               $n.a.$ & $n.a.$ &   $37.98 \pm 0.12$ &              $n.a.$ &               $n.a.$ \\
           & VGCN &    $89.46 \pm 0.03$ &               $n.a.$ &                $n.a.$ &   $38.86 \pm 0.10$ &              $n.a.$ &               $n.a.$ \\
           & VGCN-Dropout &    $88.46 \pm 0.04$ &    $79.03 \pm 0.08$ &                $n.a.$ &   $38.06 \pm 0.12$ &   $17.98 \pm 0.09$ &               $n.a.$ \\
           & VGCN-Energy &    $89.46 \pm 0.03$ &               $n.a.$ &                $n.a.$ &   $38.86 \pm 0.10$ &              $n.a.$ &               $n.a.$ \\
           & VGCN-Ensemble &    $89.51 \pm 0.08$ &    $86.61 \pm 0.09$ &                $n.a.$ &   $38.74 \pm 0.23$ &   $30.60 \pm 0.38$ &               $n.a.$ \\
           & VGCN-BNN &    $89.01 \pm 0.06$ &    $78.40 \pm 0.23$ &                $n.a.$ &   $38.17 \pm 0.17$ &   $19.06 \pm 0.20$ &               $n.a.$ \\
           & GKDE-GCN &    $89.24 \pm 0.05$ &    $80.98 \pm 0.13$ &                $n.a.$ &   $39.30 \pm 0.27$ &   $30.52 \pm 0.25$ &               $n.a.$ \\
           & GPN &    $85.72 \pm 0.15$ &    $81.56 \pm 0.29$ &     $\mathbf{94.41 \pm 0.11}$ &   $46.12 \pm 0.32$ &   $38.98 \pm 0.28$ &    $\mathbf{77.26 \pm 0.45}$ \\
           
           \midrule
     
        \multirow{8}{*}{\shortstack[l]{Coauthor\\Physics}}
      &          APPNP &    $93.27 \pm 0.02$ &               $n.a.$ &                $n.a.$ &   $38.14 \pm 0.09$ &              $n.a.$ &               $n.a.$ \\
      &           VGCN &    $92.86 \pm 0.02$ &               $n.a.$ &                $n.a.$ &   $37.19 \pm 0.10$ &              $n.a.$ &               $n.a.$ \\
      &   VGCN-Dropout &    $92.28 \pm 0.03$ &    $89.85 \pm 0.04$ &                $n.a.$ &   $35.47 \pm 0.11$ &   $23.70 \pm 0.08$ &               $n.a.$ \\
      &    VGCN-Energy &    $92.86 \pm 0.02$ &               $n.a.$ &                $n.a.$ &   $37.19 \pm 0.10$ &              $n.a.$ &               $n.a.$ \\
      &  VGCN-Ensemble &    $92.95 \pm 0.07$ &    $91.92 \pm 0.07$ &                $n.a.$ &   $37.96 \pm 0.28$ &   $28.44 \pm 0.21$ &               $n.a.$ \\
      &       VGCN-BNN &    $92.44 \pm 0.09$ &    $89.03 \pm 0.27$ &                $n.a.$ &   $36.79 \pm 0.21$ &   $25.11 \pm 0.49$ &               $n.a.$ \\
      &        GKDE-GCN &    $92.77 \pm 0.02$ &    $86.12 \pm 0.06$ &                $n.a.$ &   $37.08 \pm 0.11$ &   $25.13 \pm 0.10$ &               $n.a.$ \\
      &            GPN &    $91.14 \pm 0.04$ &    $89.63 \pm 0.07$ &     $\mathbf{93.89 \pm 0.05}$ &   $41.43 \pm 0.13$ &   $35.64 \pm 0.16$ &    $\mathbf{59.03 \pm 0.28}$ \\
          
          \midrule
          
          \multirow{7}{*}{\shortstack[l]{OGBN\\Arxiv}}
           &          APPNP &    $77.55 \pm 0.05$ &               $n.a.$ &                $n.a.$ &   $54.57 \pm 0.14$ &              $n.a.$ &               $n.a.$ \\
           &           VGCN &    $77.89 \pm 0.05$ &               $n.a.$ &                $n.a.$ &   $54.87 \pm 0.12$ &              $n.a.$ &               $n.a.$ \\
           &   VGCN-Dropout &    $78.11 \pm 0.05$ &    $71.74 \pm 0.14$ &                $n.a.$ &   $55.40 \pm 0.13$ &   $43.43 \pm 0.18$ &               $n.a.$ \\
           &    VGCN-Energy &    $77.89 \pm 0.05$ &               $n.a.$ &                $n.a.$ &   $54.87 \pm 0.12$ &              $n.a.$ &               $n.a.$ \\
           &  VGCN-Ensemble &             $\mathbf{78.14}$ &             $71.48$ &                $n.a.$ &            $53.95$ &            $42.87$ &               $n.a.$ \\
           &        GKDE-GCN &    $77.47 \pm 0.33$ &    $77.55 \pm 0.33$ &                $n.a.$ &   $\mathbf{61.62 \pm 1.00}$ &   $62.33 \pm 1.00$ &               $n.a.$ \\
           &            GPN &    $75.44 \pm 0.19$ &    $72.71 \pm 0.28$ &     $61.45 \pm 0.49$ &   $55.64 \pm 0.37$ &   $52.99 \pm 0.49$ &    $39.37 \pm 0.42$ \\
    \bottomrule
    \end{tabular}}
    \vspace{1em}
    \caption{Misclassification Scores on the clean graphs given as AUC-ROC and AUC-PR scores. AUC-ROC and AUC-APR scores are given as \emph{[Alea w/ Net] / [Epist w/ Net] / [Epist w/o Net]}. $n.a.$ means either model or metric not applicable. Bold numbers indicate the best model across all the uncertainty metrics for each dataset.}
    \label{tab:misclassification-two}
\end{table}

\subsection{Additional Experiments - OOD Detection}

In this section, we provide additional results for the OOD detection experiments for the 8 datasets and the 13 baselines. We show the results for feature perturbations experiments using AUC-ROC and AUC-APR scores in Tab.~\ref{tab:isolated_auroc} and Tab.~\ref{tab:isolated_apr}. We show results for clean accuracy and calibration on the unperturbed graphs, and Left-Out classes using AUC-ROC and AUC-APR scores in Tab.~\ref{tab:clean_loc_auroc_one} and Tab.~\ref{tab:clean_loc_auroc_two}. Similarly to Tab.~\ref{tab:ood_short}, we observe that \oursacro{} achieves the best results for the detection of feature perturbations with uncertainty without network effects by a significant margin while still maintaining a high accuracy. Further, \oursacro{} also performs favourably on Left-Out classes experiments using the uncertainty measures with network effects. All these observations show that \oursacro{} disentangles well between uncertainty without and with network effects while being robust against feature perturbations.

\begin{table*}[!h]
    \centering
    \resizebox{\textwidth}{!}{
    \begin{tabular}{ll|cccc|cccc}
        \toprule
        & & \multicolumn{4}{c|}{{$\x\nodeidxv \sim \DBer(0.5)$}} & \multicolumn{4}{c}{$\x\nodeidxv \sim \DNormal(0,1)$} \\ \midrule
        & \textbf{Model} & \textbf{OOD-ACC} & \multicolumn{3}{c|}{\textbf{OOD-AUC-ROC}} & \textbf{OOD-ACC} & \multicolumn{3}{c}{\textbf{OOD-AUC-ROC}} \\
        & & & \emph{Alea w/ Net} & \emph{Epist w/ Net} & \emph{Epist w/o Net} & & \emph{Alea w/ Net} & \emph{Epist w/ Net} & \emph{Epist w/o Net} \\
        \midrule
        
        \multirow{10}{*}{CoraML}
        & APPNP & ${80.85\tiny \pm 0.09}$ & ${64.41\tiny \pm 0.02}$ & $n.a.$ & $n.a.$ & ${17.99\tiny \pm 0.36}$ & ${7.98\tiny \pm 0.13}$ & $n.a.$ & $n.a.$\\
        & VGCN & ${78.90\tiny \pm 0.09}$ & ${63.68\tiny \pm 0.03}$ & $n.a.$ & $n.a.$ & ${18.37\tiny \pm 0.31}$ & ${9.34\tiny \pm 0.13}$ & $n.a.$ & $n.a.$\\
        & RGCN & ${79.78\tiny \pm 0.16}$ & ${{70.30}\tiny \pm 0.05}$ & $n.a.$ & $n.a.$ & ${33.37\tiny \pm 0.35}$ & ${32.13\tiny \pm 0.28}$ & $n.a.$ & $n.a.$\\
        & VGCN-Dropout & ${77.76\tiny \pm 0.15}$ & ${62.06\tiny \pm 0.06}$ & ${50.38\tiny \pm 0.12}$ & $n.a.$ & ${18.28\tiny \pm 0.35}$ & ${40.53\tiny \pm 0.25}$ & ${{71.06}\tiny \pm 0.29}$ & $n.a.$\\
        & DropEdge & ${77.40\tiny \pm 0.14}$ & ${63.10\tiny \pm 0.04}$ & ${52.84\tiny \pm 0.10}$ & $n.a.$ & ${16.60\tiny \pm 0.26}$ & ${23.10\tiny \pm 0.29}$ & ${46.82\tiny \pm 0.41}$ & $n.a.$\\
        & VGCN-Energy & ${78.90\tiny \pm 0.09}$ & ${63.68\tiny \pm 0.03}$ & ${{66.26}\tiny \pm 0.04}$ & $n.a.$ & ${18.37\tiny \pm 0.31}$ & ${9.34\tiny \pm 0.13}$ & ${0.32\tiny \pm 0.03}$ & $n.a.$\\
        & VGCN-Ensemble & ${78.00\tiny \pm 0.00}$ & ${63.58\tiny \pm 0.00}$ & ${56.81\tiny \pm 0.03}$ & $n.a.$ & ${21.00\tiny \pm 0.00}$ & ${33.72\tiny \pm 0.02}$ & ${64.92\tiny \pm 0.08}$ & $n.a.$\\
        & VGCN-BNN & ${77.01\tiny \pm 0.16}$ & ${64.74\tiny \pm 0.07}$ & ${62.45\tiny \pm 0.37}$ & $n.a.$ & ${18.79\tiny \pm 0.31}$ & ${34.85\tiny \pm 0.50}$ & ${67.43\tiny \pm 0.71}$ & $n.a.$\\
        & GKDE-GCN & ${76.40\tiny \pm 0.33}$ & ${61.74\tiny \pm 0.05}$ & ${63.15\tiny \pm 0.10}$ & $n.a.$ & ${16.86\tiny \pm 0.35}$ & ${40.03\tiny \pm 0.46}$ & ${1.42\tiny \pm 0.15}$ & $n.a.$\\
        & GPN & ${\mathbf{80.98}\tiny \pm 0.22}$ & ${57.99\tiny \pm 0.12}$ & ${55.23\tiny \pm 0.16}$ & ${\mathbf{89.47}\tiny \pm 0.86}$ & ${\mathbf{81.53}\tiny \pm 0.23}$ & ${55.96\tiny \pm 0.08}$ & ${56.51\tiny \pm 0.21}$ & ${\mathbf{100.00}\tiny \pm 0.00}$\\

        \midrule
        
        \multirow{10}{*}{CiteSeer}
        & APPNP & ${\mathbf{73.14}\tiny \pm 0.12}$ & ${65.43\tiny \pm 0.02}$ & $n.a.$ & $n.a.$ & ${20.13\tiny \pm 0.22}$ & ${4.78\tiny \pm 0.11}$ & $n.a.$ & $n.a.$\\
        & VGCN & ${71.30\tiny \pm 0.13}$ & ${65.27\tiny \pm 0.03}$ & $n.a.$ & $n.a.$ & ${17.55\tiny \pm 0.36}$ & ${5.48\tiny \pm 0.11}$ & $n.a.$ & $n.a.$\\
        & RGCN & ${72.29\tiny \pm 0.09}$ & ${{71.99}\tiny \pm 0.04}$ & $n.a.$ & $n.a.$ & ${28.15\tiny \pm 0.40}$ & ${23.28\tiny \pm 0.41}$ & $n.a.$ & $n.a.$\\
        & VGCN-Dropout & ${69.80\tiny \pm 0.19}$ & ${63.47\tiny \pm 0.09}$ & ${51.82\tiny \pm 0.12}$ & $n.a.$ & ${19.60\tiny \pm 0.28}$ & ${31.79\tiny \pm 0.27}$ & ${{72.62}\tiny \pm 0.34}$ & $n.a.$\\
        & DropEdge & ${72.00\tiny \pm 0.23}$ & ${65.00\tiny \pm 0.05}$ & ${54.71\tiny \pm 0.16}$ & $n.a.$ & ${18.00\tiny \pm 0.47}$ & ${17.80\tiny \pm 0.25}$ & ${44.78\tiny \pm 0.52}$ & $n.a.$\\
        & VGCN-Energy & ${71.30\tiny \pm 0.13}$ & ${65.27\tiny \pm 0.03}$ & ${{68.16}\tiny \pm 0.06}$ & $n.a.$ & ${17.55\tiny \pm 0.36}$ & ${5.48\tiny \pm 0.11}$ & ${0.03\tiny \pm 0.01}$ & $n.a.$\\
        & VGCN-Ensemble & ${72.00\tiny \pm 0.00}$ & ${65.20\tiny \pm 0.00}$ & ${51.81\tiny \pm 0.01}$ & $n.a.$ & ${18.00\tiny \pm 0.00}$ & ${21.22\tiny \pm 0.01}$ & ${52.80\tiny \pm 0.02}$ & $n.a.$\\
        & VGCN-BNN & ${70.38\tiny \pm 0.15}$ & ${65.52\tiny \pm 0.09}$ & ${49.33\tiny \pm 0.70}$ & $n.a.$ & ${16.27\tiny \pm 0.33}$ & ${23.24\tiny \pm 0.86}$ & ${60.07\tiny \pm 1.69}$ & $n.a.$\\
        & GKDE-GCN & ${72.75\tiny \pm 0.18}$ & ${66.70\tiny \pm 0.09}$ & ${67.29\tiny \pm 0.06}$ & $n.a.$ & ${18.79\tiny \pm 0.33}$ & ${35.46\tiny \pm 0.71}$ & ${0.21\tiny \pm 0.04}$ & $n.a.$\\
        & GPN & ${65.00\tiny \pm 0.43}$ & ${59.47\tiny \pm 0.16}$ & ${55.95\tiny \pm 0.18}$ & ${\mathbf{80.06}\tiny \pm 1.18}$ & ${\mathbf{66.70}\tiny \pm 0.23}$ & ${51.65\tiny \pm 0.16}$ & ${65.58\tiny \pm 0.26}$ & ${\mathbf{100.00}\tiny \pm 0.00}$\\

        \midrule

        \multirow{10}{*}{PubMed}
        & APPNP & ${82.80\tiny \pm 0.10}$ & ${62.22\tiny \pm 0.02}$ & $n.a.$ & $n.a.$ & ${40.38\tiny \pm 0.22}$ & ${5.41\tiny \pm 0.22}$ & $n.a.$ & $n.a.$\\
        & VGCN & ${82.49\tiny \pm 0.10}$ & ${62.16\tiny \pm 0.06}$ & $n.a.$ & $n.a.$ & ${37.80\tiny \pm 0.40}$ & ${6.54\tiny \pm 0.23}$ & $n.a.$ & $n.a.$\\
        & RGCN & ${\mathbf{83.75}\tiny \pm 0.12}$ & ${64.87\tiny \pm 0.05}$ & $n.a.$ & $n.a.$ & ${47.82\tiny \pm 0.36}$ & ${29.60\tiny \pm 0.34}$ & $n.a.$ & $n.a.$\\
        & VGCN-Dropout & ${82.26\tiny \pm 0.06}$ & ${60.39\tiny \pm 0.10}$ & ${51.80\tiny \pm 0.14}$ & $n.a.$ & ${37.79\tiny \pm 0.45}$ & ${23.86\tiny \pm 0.35}$ & ${38.16\tiny \pm 0.53}$ & $n.a.$\\
        & DropEdge & ${82.70\tiny \pm 0.12}$ & ${62.21\tiny \pm 0.10}$ & ${55.48\tiny \pm 0.18}$ & $n.a.$ & ${36.36\tiny \pm 0.47}$ & ${13.32\tiny \pm 0.35}$ & ${21.68\tiny \pm 0.53}$ & $n.a.$\\
        & VGCN-Energy & ${82.49\tiny \pm 0.10}$ & ${62.16\tiny \pm 0.06}$ & ${65.38\tiny \pm 0.07}$ & $n.a.$ & ${37.80\tiny \pm 0.40}$ & ${6.54\tiny \pm 0.23}$ & ${2.97\tiny \pm 0.10}$ & $n.a.$\\
        & VGCN-Ensemble & ${82.00\tiny \pm 0.00}$ & ${62.42\tiny \pm 0.00}$ & ${60.34\tiny \pm 0.10}$ & $n.a.$ & ${39.10\tiny \pm 0.10}$ & ${11.74\tiny \pm 0.03}$ & ${18.79\tiny \pm 0.04}$ & $n.a.$\\
        & VGCN-BNN & ${82.30\tiny \pm 0.14}$ & ${62.36\tiny \pm 0.15}$ & ${59.35\tiny \pm 0.99}$ & $n.a.$ & ${37.56\tiny \pm 0.54}$ & ${12.74\tiny \pm 0.34}$ & ${27.56\tiny \pm 0.64}$ & $n.a.$\\
        & GKDE-GCN & ${82.54\tiny \pm 0.11}$ & ${60.62\tiny \pm 0.11}$ & ${63.00\tiny \pm 0.17}$ & $n.a.$ & ${37.77\tiny \pm 0.48}$ & ${24.07\tiny \pm 0.43}$ & ${3.43\tiny \pm 0.13}$ & $n.a.$\\
        & GPN & ${81.54\tiny \pm 0.39}$ & ${57.05\tiny \pm 0.08}$ & ${58.87\tiny \pm 0.14}$ & ${\mathbf{84.07}\tiny \pm 0.55}$ & ${\mathbf{81.73}\tiny \pm 0.34}$ & ${53.43\tiny \pm 0.04}$ & ${60.94\tiny \pm 0.13}$ & ${\mathbf{100.00}\tiny \pm 0.00}$\\
                
        \midrule
        
        \multirow{10}{*}{\shortstack[l]{Amazon\\Computers}}
        & APPNP & ${75.00\tiny \pm 0.09}$ & ${{67.83}\tiny \pm 0.02}$ & $n.a.$ & $n.a.$ & ${18.25\tiny \pm 0.46}$ & ${5.94\tiny \pm 0.11}$ & $n.a.$ & $n.a.$\\
        & VGCN & ${81.54\tiny \pm 0.08}$ & ${58.03\tiny \pm 0.02}$ & $n.a.$ & $n.a.$ & ${20.38\tiny \pm 0.29}$ & ${24.56\tiny \pm 0.33}$ & $n.a.$ & $n.a.$\\
        & RGCN & ${61.39\tiny \pm 0.39}$ & ${57.92\tiny \pm 0.04}$ & $n.a.$ & $n.a.$ & ${39.60\tiny \pm 0.45}$ & ${33.60\tiny \pm 0.35}$ & $n.a.$ & $n.a.$\\
        & VGCN-Dropout & ${\mathbf{81.79}\tiny \pm 0.10}$ & ${57.15\tiny \pm 0.04}$ & ${55.52\tiny \pm 0.08}$ & $n.a.$ & ${21.52\tiny \pm 0.36}$ & ${40.32\tiny \pm 0.29}$ & ${66.21\tiny \pm 0.33}$ & $n.a.$\\
        & DropEdge & ${81.20\tiny \pm 0.08}$ & ${57.88\tiny \pm 0.02}$ & ${55.51\tiny \pm 0.07}$ & $n.a.$ & ${21.75\tiny \pm 0.36}$ & ${33.10\tiny \pm 0.26}$ & ${57.47\tiny \pm 0.37}$ & $n.a.$\\
        & VGCN-Energy & ${81.54\tiny \pm 0.08}$ & ${58.03\tiny \pm 0.02}$ & ${58.66\tiny \pm 0.03}$ & $n.a.$ & ${20.38\tiny \pm 0.29}$ & ${24.56\tiny \pm 0.33}$ & ${4.82\tiny \pm 0.09}$ & $n.a.$\\
        & VGCN-Ensemble & ${81.00\tiny \pm 0.00}$ & ${58.22\tiny \pm 0.01}$ & ${53.24\tiny \pm 0.12}$ & $n.a.$ & ${23.60\tiny \pm 0.16}$ & ${28.97\tiny \pm 0.05}$ & ${66.42\tiny \pm 0.19}$ & $n.a.$\\
        & VGCN-BNN & ${61.25\tiny \pm 0.17}$ & ${56.01\tiny \pm 0.11}$ & ${51.16\tiny \pm 0.83}$ & $n.a.$ & ${28.54\tiny \pm 0.47}$ & ${21.72\tiny \pm 0.19}$ & ${56.41\tiny \pm 0.60}$ & $n.a.$\\
        & GKDE-GCN & ${59.83\tiny \pm 0.73}$ & ${56.38\tiny \pm 0.12}$ & ${55.91\tiny \pm 0.05}$ & $n.a.$ & ${28.46\tiny \pm 0.36}$ & ${26.48\tiny \pm 0.45}$ & ${16.10\tiny \pm 0.53}$ & $n.a.$\\
        & GPN & ${79.70\tiny \pm 0.46}$ & ${61.21\tiny \pm 0.11}$ & ${{61.07}\tiny \pm 0.11}$ & ${\mathbf{86.15}\tiny \pm 0.28}$ & ${\mathbf{79.87}\tiny \pm 0.46}$ & ${{60.42}\tiny \pm 0.12}$ & ${61.56\tiny \pm 0.12}$ & ${\mathbf{100.00}\tiny \pm 0.00}$\\

        \midrule
        
        \multirow{10}{*}{\shortstack[l]{Amazon\\Photos}}
        & APPNP & ${\mathbf{88.12}\tiny \pm 0.10}$ & ${{65.02}\tiny \pm 0.03}$ & $n.a.$ & $n.a.$ & ${19.37\tiny \pm 0.45}$ & ${8.42\tiny \pm 0.29}$ & $n.a.$ & $n.a.$\\
        & VGCN & ${83.91\tiny \pm 0.08}$ & ${57.91\tiny \pm 0.02}$ & $n.a.$ & $n.a.$ & ${21.40\tiny \pm 0.49}$ & ${31.07\tiny \pm 0.34}$ & $n.a.$ & $n.a.$\\
        & RGCN & ${79.50\tiny \pm 0.72}$ & ${57.22\tiny \pm 0.04}$ & $n.a.$ & $n.a.$ & ${42.38\tiny \pm 0.40}$ & ${32.02\tiny \pm 0.31}$ & $n.a.$ & $n.a.$\\
        & VGCN-Dropout & ${83.86\tiny \pm 0.18}$ & ${56.85\tiny \pm 0.04}$ & ${55.04\tiny \pm 0.08}$ & $n.a.$ & ${22.29\tiny \pm 0.55}$ & ${49.11\tiny \pm 0.31}$ & ${66.74\tiny \pm 0.35}$ & $n.a.$\\
        & DropEdge & ${85.69\tiny \pm 0.15}$ & ${57.32\tiny \pm 0.04}$ & ${55.31\tiny \pm 0.07}$ & $n.a.$ & ${22.90\tiny \pm 0.43}$ & ${39.14\tiny \pm 0.20}$ & ${56.18\tiny \pm 0.21}$ & $n.a.$\\
        & VGCN-Energy & ${83.91\tiny \pm 0.08}$ & ${57.91\tiny \pm 0.02}$ & ${59.07\tiny \pm 0.02}$ & $n.a.$ & ${21.40\tiny \pm 0.49}$ & ${31.07\tiny \pm 0.34}$ & ${6.42\tiny \pm 0.07}$ & $n.a.$\\
        & VGCN-Ensemble & ${84.40\tiny \pm 0.16}$ & ${57.86\tiny \pm 0.01}$ & ${56.01\tiny \pm 0.19}$ & $n.a.$ & ${20.30\tiny \pm 0.21}$ & ${44.14\tiny \pm 0.05}$ & ${69.01\tiny \pm 0.14}$ & $n.a.$\\
        & VGCN-BNN & ${82.00\tiny \pm 0.19}$ & ${56.78\tiny \pm 0.09}$ & ${49.21\tiny \pm 0.58}$ & $n.a.$ & ${25.84\tiny \pm 0.46}$ & ${23.16\tiny \pm 0.37}$ & ${59.31\tiny \pm 0.73}$ & $n.a.$\\
        & GKDE-GCN & ${73.17\tiny \pm 0.94}$ & ${57.01\tiny \pm 0.10}$ & ${58.00\tiny \pm 0.05}$ & $n.a.$ & ${24.04\tiny \pm 0.42}$ & ${24.45\tiny \pm 0.62}$ & ${9.82\tiny \pm 0.36}$ & $n.a.$\\
        & GPN & ${87.47\tiny \pm 0.20}$ & ${56.25\tiny \pm 0.16}$ & ${60.52\tiny \pm 0.18}$ & ${\mathbf{75.24}\tiny \pm 0.63}$ & ${\mathbf{88.29}\tiny \pm 0.20}$ & ${51.89\tiny \pm 0.09}$ & ${61.89\tiny \pm 0.18}$ & ${\mathbf{100.00}\tiny \pm 0.00}$\\

        \midrule
        
        \multirow{10}{*}{\shortstack[l]{Coauthor\\CS}}
        & APPNP & ${89.28\tiny \pm 0.07}$ & ${{72.01}\tiny \pm 0.02}$ & $n.a.$ & $n.a.$ & ${12.58\tiny \pm 0.33}$ & ${23.09\tiny \pm 0.31}$ & $n.a.$ & $n.a.$\\
        & VGCN & ${89.33\tiny \pm 0.05}$ & ${67.65\tiny \pm 0.02}$ & $n.a.$ & $n.a.$ & ${13.29\tiny \pm 0.22}$ & ${30.13\tiny \pm 0.32}$ & $n.a.$ & $n.a.$\\
        & RGCN & ${\mathbf{90.50}\tiny \pm 0.06}$ & ${71.13\tiny \pm 0.02}$ & $n.a.$ & $n.a.$ & ${41.67\tiny \pm 0.35}$ & ${52.81\tiny \pm 0.21}$ & $n.a.$ & $n.a.$\\
        & VGCN-Dropout & ${88.96\tiny \pm 0.09}$ & ${65.91\tiny \pm 0.05}$ & ${60.56\tiny \pm 0.07}$ & $n.a.$ & ${13.31\tiny \pm 0.32}$ & ${{67.56}\tiny \pm 0.26}$ & ${85.81\tiny \pm 0.22}$ & $n.a.$\\
        & DropEdge & ${89.44\tiny \pm 0.07}$ & ${67.94\tiny \pm 0.01}$ & ${63.68\tiny \pm 0.06}$ & $n.a.$ & ${11.65\tiny \pm 0.28}$ & ${49.77\tiny \pm 0.33}$ & ${70.31\tiny \pm 0.49}$ & $n.a.$\\
        & VGCN-Energy & ${89.33\tiny \pm 0.05}$ & ${67.65\tiny \pm 0.02}$ & ${{70.14}\tiny \pm 0.02}$ & $n.a.$ & ${13.29\tiny \pm 0.22}$ & ${30.13\tiny \pm 0.32}$ & ${0.89\tiny \pm 0.06}$ & $n.a.$\\
        & VGCN-Ensemble & ${89.00\tiny \pm 0.00}$ & ${67.64\tiny \pm 0.01}$ & ${64.41\tiny \pm 0.08}$ & $n.a.$ & ${11.00\tiny \pm 0.00}$ & ${60.89\tiny \pm 0.11}$ & ${85.09\tiny \pm 0.32}$ & $n.a.$\\
        & VGCN-BNN & ${88.16\tiny \pm 0.10}$ & ${67.09\tiny \pm 0.12}$ & ${59.69\tiny \pm 0.35}$ & $n.a.$ & ${12.48\tiny \pm 0.25}$ & ${67.00\tiny \pm 0.54}$ & ${{87.27}\tiny \pm 0.47}$ & $n.a.$\\
        & GKDE-GCN & ${88.14\tiny \pm 0.14}$ & ${67.69\tiny \pm 0.07}$ & ${70.08\tiny \pm 0.14}$ & $n.a.$ & ${9.71\tiny \pm 0.29}$ & ${32.67\tiny \pm 0.51}$ & ${0.23\tiny \pm 0.03}$ & $n.a.$\\
        & GPN & ${{83.99}\tiny \pm 0.31}$ & ${57.66\tiny \pm 0.10}$ & ${62.08\tiny \pm 0.18}$ & ${\mathbf{97.84}\tiny \pm 0.23}$ & ${\mathbf{83.96}\tiny \pm 0.31}$ & ${57.04\tiny \pm 0.09}$ & ${62.39\tiny \pm 0.18}$ & ${\mathbf{100.00}\tiny \pm 0.00}$\\
                
        \midrule

        \multirow{12}{*}{\shortstack[l]{Coauthor\\Physics}}
        & APPNP & ${96.16\tiny \pm 0.08}$ & ${67.63\tiny \pm 0.02}$ & $n.a.$ & $n.a.$ & ${28.71\tiny \pm 0.40}$ & ${24.97\tiny \pm 0.20}$ & $n.a.$ & $n.a.$\\
        & VGCN & ${96.00\tiny \pm 0.00}$ & ${60.30\tiny \pm 0.02}$ & $n.a.$ & $n.a.$ & ${33.26\tiny \pm 0.67}$ & ${40.19\tiny \pm 0.42}$ & $n.a.$ & $n.a.$\\
        & RGCN & ${94.69\tiny \pm 0.07}$ & ${65.84\tiny \pm 0.03}$ & $n.a.$ & $n.a.$ & ${58.56\tiny \pm 0.36}$ & ${52.91\tiny \pm 0.32}$ & $n.a.$ & $n.a.$\\
        & VGCN-Dropout & ${95.90\tiny \pm 0.05}$ & ${58.97\tiny \pm 0.03}$ & ${57.64\tiny \pm 0.05}$ & $n.a.$ & ${32.52\tiny \pm 0.60}$ & ${55.07\tiny \pm 0.48}$ & ${61.85\tiny \pm 0.50}$ & $n.a.$\\
        & DropEdge & ${95.90\tiny \pm 0.03}$ & ${60.40\tiny \pm 0.04}$ & ${59.09\tiny \pm 0.05}$ & $n.a.$ & ${30.53\tiny \pm 0.58}$ & ${43.30\tiny \pm 0.43}$ & ${51.07\tiny \pm 0.44}$ & $n.a.$\\
        & VGCN-Energy & ${96.00\tiny \pm 0.00}$ & ${60.30\tiny \pm 0.02}$ & ${61.59\tiny \pm 0.02}$ & $n.a.$ & ${33.26\tiny \pm 0.67}$ & ${40.19\tiny \pm 0.42}$ & ${11.45\tiny \pm 0.22}$ & $n.a.$\\
        & VGCN-Ensemble & ${96.00\tiny \pm 0.00}$ & ${60.29\tiny \pm 0.00}$ & ${59.05\tiny \pm 0.01}$ & $n.a.$ & ${31.70\tiny \pm 0.21}$ & ${52.08\tiny \pm 0.10}$ & ${68.48\tiny \pm 0.10}$ & $n.a.$\\
        & VGCN-BNN & ${95.65\tiny \pm 0.07}$ & ${60.99\tiny \pm 0.19}$ & ${56.95\tiny \pm 0.35}$ & $n.a.$ & ${32.95\tiny \pm 0.60}$ & ${62.53\tiny \pm 0.75}$ & ${{71.96}\tiny \pm 0.73}$ & $n.a.$\\
        & GKDE-GCN & ${\mathbf{96.61}\tiny \pm 0.05}$ & ${60.46\tiny \pm 0.01}$ & ${60.99\tiny \pm 0.07}$ & $n.a.$ & ${28.84\tiny \pm 0.31}$ & ${29.12\tiny \pm 0.33}$ & ${2.46\tiny \pm 0.10}$ & $n.a.$\\
        & GPN & ${92.70\tiny \pm 0.11}$ & ${59.92\tiny \pm 0.08}$ & ${58.62\tiny \pm 0.18}$ & ${\mathbf{99.15}\tiny \pm 0.05}$ & ${\mathbf{92.70}\tiny \pm 0.11}$ & ${58.66\tiny \pm 0.08}$ & ${59.00\tiny \pm 0.18}$ & ${\mathbf{100.00}\tiny \pm 0.00}$\\

        \midrule
        
        \multirow{7}{*}{\shortstack[l]{OGBN\\Arxiv}}
        & APPNP & ${63.50\tiny \pm 0.95}$ & ${{62.51}\tiny \pm 0.51}$ & $n.a.$ & $n.a.$ & ${51.10\tiny \pm 1.12}$ & ${59.92\tiny \pm 0.64}$ & $n.a.$ & $n.a.$\\
        & VGCN & ${65.70\tiny \pm 0.47}$ & ${46.16\tiny \pm 0.13}$ & $n.a.$ & $n.a.$ & ${51.30\tiny \pm 0.75}$ & ${53.83\tiny \pm 0.59}$ & $n.a.$ & $n.a.$\\
        & VGCN-Dropout & ${65.30\tiny \pm 0.70}$ & ${48.11\tiny \pm 0.23}$ & ${50.64\tiny \pm 0.20}$ & $n.a.$ & ${49.90\tiny \pm 0.77}$ & ${{60.10}\tiny \pm 0.68}$ & ${62.87\tiny \pm 0.29}$ & $n.a.$\\
        & VGCN-Energy & ${65.70\tiny \pm 0.47}$ & ${46.16\tiny \pm 0.13}$ & ${48.54\tiny \pm 0.20}$ & $n.a.$ & ${51.30\tiny \pm 0.75}$ & ${53.83\tiny \pm 0.59}$ & ${48.53\tiny \pm 0.48}$ & $n.a.$\\
        & VGCN-Ensemble & ${\mathbf{67.00}}$ & ${45.99}$ & ${47.41}$ & $n.a.$ & ${49.00}$ & ${59.94}$ & ${{66.44}}$ & $n.a.$\\
        & GKDE-GCN & ${65.20\tiny \pm 0.49}$ & ${50.98\tiny \pm 0.23}$ & ${51.31\tiny \pm 0.22}$ & $n.a.$ & ${45.40\tiny \pm 0.62}$ & ${53.94\tiny \pm 1.41}$ & ${55.28\tiny \pm 1.69}$ & $n.a.$\\
        & GPN & ${65.50\tiny \pm 0.70}$ & ${51.49\tiny \pm 0.37}$ & ${{55.82}\tiny \pm 0.30}$ & ${\mathbf{93.05}\tiny \pm 3.44}$ & ${\mathbf{65.50}\tiny \pm 0.70}$ & ${51.43\tiny \pm 0.32}$ & ${55.85\tiny \pm 0.30}$ & ${\mathbf{95.54}\tiny \pm 0.89}$\\

        \bottomrule
    \end{tabular}}
    \caption{Accuracy and OOD detection scores on Bernoulli and unit Gaussian feature perturbations using AUC-ROC. OOD-AUC-ROC scores are given as \emph{[Alea w/ Net] / [Epist w/ Net] / [Epist w/o Net]}. $n.a.$ means either model or metric not applicable. Bold numbers indicate best results for Accuracy and  OOD detection.}
    \label{tab:isolated_auroc}
\end{table*}


\begin{table*}[!h]
    \centering
    \resizebox{\textwidth}{!}{
    \begin{tabular}{ll|cccc|cccc}
        \toprule
        & & \multicolumn{4}{c|}{{$\x\nodeidxv \sim \DBer(0.5)$}} & \multicolumn{4}{c}{$\x\nodeidxv \sim \DNormal(0,1)$} \\ \midrule
        & \textbf{Model} & \textbf{OOD-ACC} & \multicolumn{3}{c|}{\textbf{OOD-AUC-PR}} & \textbf{OOD-ACC} & \multicolumn{3}{c}{\textbf{OOD-AUC-PR}} \\
        & & & \emph{Alea w/ Net} & \emph{Epist w/ Net} & \emph{Epist w/o Net} & & \emph{Alea w/ Net} & \emph{Epist w/ Net} & \emph{Epist w/o Net} \\
        \midrule
        
        \multirow{10}{*}{CoraML}
        & APPNP & ${80.85\tiny \pm 0.09}$ & ${11.86\tiny \pm 0.06}$ & $n.a.$ & $n.a.$ & ${17.99\tiny \pm 0.36}$ & ${2.55\tiny \pm 0.00}$ & $n.a.$ & $n.a.$\\
        & VGCN & ${78.90\tiny \pm 0.09}$ & ${11.92\tiny \pm 0.07}$ & $n.a.$ & $n.a.$ & ${18.37\tiny \pm 0.31}$ & ${2.57\tiny \pm 0.00}$ & $n.a.$ & $n.a.$\\
        & VGCN-Dropout & ${77.76\tiny \pm 0.15}$ & ${12.51\tiny \pm 0.21}$ & ${4.58\tiny \pm 0.01}$ & $n.a.$ & ${18.28\tiny \pm 0.35}$ & ${3.91\tiny \pm 0.02}$ & ${62.19\tiny \pm 0.39}$ & $n.a.$\\
        & DropEdge & ${77.40\tiny \pm 0.14}$ & ${10.67\tiny \pm 0.09}$ & ${4.90\tiny \pm 0.01}$ & $n.a.$ & ${16.60\tiny \pm 0.26}$ & ${2.93\tiny \pm 0.01}$ & ${42.96\tiny \pm 0.39}$ & $n.a.$\\
        & VGCN-Energy & ${78.90\tiny \pm 0.09}$ & ${11.92\tiny \pm 0.07}$ & ${{10.43}\tiny \pm 0.08}$ & $n.a.$ & ${18.37\tiny \pm 0.31}$ & ${2.57\tiny \pm 0.00}$ & ${2.50\tiny \pm 0.00}$ & $n.a.$\\
        & VGCN-Ensemble & ${78.00\tiny \pm 0.00}$ & ${11.78\tiny \pm 0.00}$ & ${5.38\tiny \pm 0.00}$ & $n.a.$ & ${21.00\tiny \pm 0.00}$ & ${3.43\tiny \pm 0.00}$ & ${60.47\tiny \pm 0.09}$ & $n.a.$\\
        & VGCN-BNN & ${77.01\tiny \pm 0.16}$ & ${10.94\tiny \pm 0.09}$ & ${7.75\tiny \pm 0.15}$ & $n.a.$ & ${18.79\tiny \pm 0.31}$ & ${3.49\tiny \pm 0.03}$ & ${{62.75}\tiny \pm 0.65}$ & $n.a.$\\
        & RGCN & ${79.78\tiny \pm 0.16}$ & ${{16.43}\tiny \pm 0.14}$ & $n.a.$ & $n.a.$ & ${33.37\tiny \pm 0.35}$ & ${4.82\tiny \pm 0.14}$ & $n.a.$ & $n.a.$\\
        & GKDE-GCN & ${76.40\tiny \pm 0.33}$ & ${9.55\tiny \pm 0.10}$ & ${9.79\tiny \pm 0.09}$ & $n.a.$ & ${16.86\tiny \pm 0.35}$ & ${{34.20}\tiny \pm 0.53}$ & ${2.64\tiny \pm 0.06}$ & $n.a.$\\
        & GPN & ${\mathbf{80.98}\tiny \pm 0.22}$ & ${6.96\tiny \pm 0.07}$ & ${5.63\tiny \pm 0.03}$ & ${\mathbf{25.80}\tiny \pm 1.43}$ & ${\mathbf{81.53}\tiny \pm 0.23}$ & ${6.63\tiny \pm 0.07}$ & ${6.21\tiny \pm 0.08}$ & ${\mathbf{100.00}\tiny \pm 0.00}$\\
                        
        \midrule
        
        \multirow{10}{*}{CiteSeer}
        & APPNP & ${\mathbf{73.14}\tiny \pm 0.12}$ & ${7.49\tiny \pm 0.01}$ & $n.a.$ & $n.a.$ & ${20.13\tiny \pm 0.22}$ & ${2.27\tiny \pm 0.00}$ & $n.a.$ & $n.a.$\\
        & VGCN & ${71.30\tiny \pm 0.13}$ & ${8.92\tiny \pm 0.03}$ & $n.a.$ & $n.a.$ & ${17.55\tiny \pm 0.36}$ & ${2.28\tiny \pm 0.00}$ & $n.a.$ & $n.a.$\\
        & RGCN & ${72.29\tiny \pm 0.09}$ & ${{14.49}\tiny \pm 0.14}$ & $n.a.$ & $n.a.$ & ${28.15\tiny \pm 0.40}$ & ${3.26\tiny \pm 0.07}$ & $n.a.$ & $n.a.$\\
        & VGCN-Dropout & ${69.80\tiny \pm 0.19}$ & ${8.07\tiny \pm 0.04}$ & ${4.26\tiny \pm 0.01}$ & $n.a.$ & ${19.60\tiny \pm 0.28}$ & ${2.97\tiny \pm 0.01}$ & ${{64.29}\tiny \pm 0.37}$ & $n.a.$\\
        & DropEdge & ${72.00\tiny \pm 0.23}$ & ${9.34\tiny \pm 0.10}$ & ${4.60\tiny \pm 0.02}$ & $n.a.$ & ${18.00\tiny \pm 0.47}$ & ${2.48\tiny \pm 0.01}$ & ${42.23\tiny \pm 0.50}$ & $n.a.$\\
        & VGCN-Energy & ${71.30\tiny \pm 0.13}$ & ${8.92\tiny \pm 0.03}$ & ${9.09\tiny \pm 0.05}$ & $n.a.$ & ${17.55\tiny \pm 0.36}$ & ${2.28\tiny \pm 0.00}$ & ${2.26\tiny \pm 0.00}$ & $n.a.$\\
        & VGCN-Ensemble & ${72.00\tiny \pm 0.00}$ & ${8.76\tiny \pm 0.00}$ & ${4.34\tiny \pm 0.00}$ & $n.a.$ & ${18.00\tiny \pm 0.00}$ & ${2.58\tiny \pm 0.00}$ & ${49.06\tiny \pm 0.01}$ & $n.a.$\\
        & VGCN-BNN & ${70.38\tiny \pm 0.15}$ & ${8.86\tiny \pm 0.07}$ & ${5.14\tiny \pm 0.16}$ & $n.a.$ & ${16.27\tiny \pm 0.33}$ & ${2.68\tiny \pm 0.02}$ & ${57.70\tiny \pm 1.51}$ & $n.a.$\\
        & GKDE-GCN & ${72.75\tiny \pm 0.18}$ & ${8.57\tiny \pm 0.06}$ & ${{9.82}\tiny \pm 0.11}$ & $n.a.$ & ${18.79\tiny \pm 0.33}$ & ${{33.75}\tiny \pm 0.72}$ & ${2.27\tiny \pm 0.01}$ & $n.a.$\\
        & GPN & ${65.00\tiny \pm 0.43}$ & ${6.27\tiny \pm 0.09}$ & ${5.37\tiny \pm 0.06}$ & ${\mathbf{14.25}\tiny \pm 1.03}$ & ${\mathbf{66.70}\tiny \pm 0.23}$ & ${4.86\tiny \pm 0.03}$ & ${29.98\tiny \pm 0.62}$ & ${\mathbf{100.00}\tiny \pm 0.00}$\\

        \midrule

        \multirow{10}{*}{PubMed}
        & APPNP & ${82.80\tiny \pm 0.10}$ & ${1.08\tiny \pm 0.00}$ & $n.a.$ & $n.a.$ & ${40.38\tiny \pm 0.22}$ & ${0.40\tiny \pm 0.00}$ & $n.a.$ & $n.a.$\\
        & VGCN & ${82.49\tiny \pm 0.10}$ & ${1.29\tiny \pm 0.01}$ & $n.a.$ & $n.a.$ & ${37.80\tiny \pm 0.40}$ & ${0.40\tiny \pm 0.00}$ & $n.a.$ & $n.a.$\\
        & RGCN & ${\mathbf{83.75}\tiny \pm 0.12}$ & ${1.42\tiny \pm 0.03}$ & $n.a.$ & $n.a.$ & ${47.82\tiny \pm 0.36}$ & ${0.64\tiny \pm 0.02}$ & $n.a.$ & $n.a.$\\
        & VGCN-Dropout & ${82.26\tiny \pm 0.06}$ & ${1.41\tiny \pm 0.03}$ & ${0.77\tiny \pm 0.00}$ & $n.a.$ & ${37.79\tiny \pm 0.45}$ & ${0.48\tiny \pm 0.00}$ & ${25.20\tiny \pm 0.50}$ & $n.a.$\\
        & DropEdge & ${82.70\tiny \pm 0.12}$ & ${1.25\tiny \pm 0.01}$ & ${0.93\tiny \pm 0.01}$ & $n.a.$ & ${36.36\tiny \pm 0.47}$ & ${0.42\tiny \pm 0.00}$ & ${15.48\tiny \pm 0.51}$ & $n.a.$\\
        & VGCN-Energy & ${82.49\tiny \pm 0.10}$ & ${1.29\tiny \pm 0.01}$ & ${1.50\tiny \pm 0.01}$ & $n.a.$ & ${37.80\tiny \pm 0.40}$ & ${0.40\tiny \pm 0.00}$ & ${0.58\tiny \pm 0.04}$ & $n.a.$\\
        & VGCN-Ensemble & ${82.00\tiny \pm 0.00}$ & ${1.46\tiny \pm 0.01}$ & ${1.05\tiny \pm 0.00}$ & $n.a.$ & ${39.10\tiny \pm 0.10}$ & ${0.42\tiny \pm 0.00}$ & ${13.71\tiny \pm 0.03}$ & $n.a.$\\
        & VGCN-BNN & ${82.30\tiny \pm 0.14}$ & ${1.28\tiny \pm 0.01}$ & ${1.77\tiny \pm 0.16}$ & $n.a.$ & ${37.56\tiny \pm 0.54}$ & ${0.42\tiny \pm 0.00}$ & ${15.60\tiny \pm 0.60}$ & $n.a.$\\
        & GKDE-GCN & ${82.54\tiny \pm 0.11}$ & ${1.31\tiny \pm 0.03}$ & ${1.31\tiny \pm 0.01}$ & $n.a.$ & ${37.77\tiny \pm 0.48}$ & ${{16.95}\tiny \pm 0.49}$ & ${0.92\tiny \pm 0.08}$ & $n.a.$\\
        & GPN & ${81.54\tiny \pm 0.39}$ & ${0.85\tiny \pm 0.00}$ & ${1.01\tiny \pm 0.01}$ & ${\mathbf{3.31}\tiny \pm 0.24}$ & ${\mathbf{81.73}\tiny \pm 0.34}$ & ${0.78\tiny \pm 0.00}$ & ${1.27\tiny \pm 0.01}$ & ${\mathbf{99.98}\tiny \pm 0.00}$\\

        \midrule
        
        \multirow{10}{*}{\shortstack[l]{Amazon\\Computers}}
        & APPNP & ${75.00\tiny \pm 0.09}$ & ${{2.18}\tiny \pm 0.01}$ & $n.a.$ & $n.a.$ & ${18.25\tiny \pm 0.46}$ & ${0.56\tiny \pm 0.00}$ & $n.a.$ & $n.a.$\\
        & VGCN & ${81.54\tiny \pm 0.08}$ & ${1.43\tiny \pm 0.00}$ & $n.a.$ & $n.a.$ & ${20.38\tiny \pm 0.29}$ & ${0.67\tiny \pm 0.00}$ & $n.a.$ & $n.a.$\\
        & RGCN & ${61.39\tiny \pm 0.39}$ & ${1.40\tiny \pm 0.00}$ & $n.a.$ & $n.a.$ & ${39.60\tiny \pm 0.45}$ & ${0.86\tiny \pm 0.02}$ & $n.a.$ & $n.a.$\\
        & VGCN-Dropout & ${\mathbf{81.79}\tiny \pm 0.10}$ & ${1.38\tiny \pm 0.00}$ & ${1.24\tiny \pm 0.00}$ & $n.a.$ & ${21.52\tiny \pm 0.36}$ & ${0.88\tiny \pm 0.01}$ & ${20.49\tiny \pm 0.27}$ & $n.a.$\\
        & DropEdge & ${81.20\tiny \pm 0.08}$ & ${1.46\tiny \pm 0.00}$ & ${1.23\tiny \pm 0.00}$ & $n.a.$ & ${21.75\tiny \pm 0.36}$ & ${0.77\tiny \pm 0.00}$ & ${16.09\tiny \pm 0.22}$ & $n.a.$\\
        & VGCN-Energy & ${81.54\tiny \pm 0.08}$ & ${1.43\tiny \pm 0.00}$ & ${1.44\tiny \pm 0.00}$ & $n.a.$ & ${20.38\tiny \pm 0.29}$ & ${0.67\tiny \pm 0.00}$ & ${0.56\tiny \pm 0.00}$ & $n.a.$\\
        & VGCN-Ensemble & ${81.00\tiny \pm 0.00}$ & ${1.44\tiny \pm 0.00}$ & ${1.13\tiny \pm 0.00}$ & $n.a.$ & ${23.60\tiny \pm 0.16}$ & ${0.71\tiny \pm 0.00}$ & ${{51.48}\tiny \pm 0.16}$ & $n.a.$\\
        & VGCN-BNN & ${61.25\tiny \pm 0.17}$ & ${1.35\tiny \pm 0.03}$ & ${1.29\tiny \pm 0.05}$ & $n.a.$ & ${28.54\tiny \pm 0.47}$ & ${0.64\tiny \pm 0.00}$ & ${21.82\tiny \pm 0.68}$ & $n.a.$\\
        & GKDE-GCN & ${59.83\tiny \pm 0.73}$ & ${1.35\tiny \pm 0.01}$ & ${1.32\tiny \pm 0.00}$ & $n.a.$ & ${28.46\tiny \pm 0.36}$ & ${2.96\tiny \pm 0.14}$ & ${1.86\tiny \pm 0.16}$ & $n.a.$\\
        & GPN & ${79.70\tiny \pm 0.46}$ & ${1.55\tiny \pm 0.01}$ & ${{1.55}\tiny \pm 0.01}$ & ${\mathbf{4.26}\tiny \pm 0.08}$ & ${\mathbf{79.87}\tiny \pm 0.46}$ & ${2.56\tiny \pm 0.01}$ & ${2.77\tiny \pm 0.03}$ & ${\mathbf{100.00}\tiny \pm 0.00}$\\

        \midrule
        
        \multirow{10}{*}{\shortstack[l]{Amazon\\Photos}}
        & APPNP & ${\mathbf{88.12}\tiny \pm 0.10}$ & ${\mathbf{5.44}\tiny \pm 0.05}$ & $n.a.$ & $n.a.$ & ${19.37\tiny \pm 0.45}$ & ${1.02\tiny \pm 0.00}$ & $n.a.$ & $n.a.$\\
        & VGCN & ${83.91\tiny \pm 0.08}$ & ${3.61\tiny \pm 0.07}$ & $n.a.$ & $n.a.$ & ${21.40\tiny \pm 0.49}$ & ${1.39\tiny \pm 0.01}$ & $n.a.$ & $n.a.$\\
        & RGCN & ${79.50\tiny \pm 0.72}$ & ${3.33\tiny \pm 0.04}$ & $n.a.$ & $n.a.$ & ${42.38\tiny \pm 0.40}$ & ${1.85\tiny \pm 0.05}$ & $n.a.$ & $n.a.$\\
        & VGCN-Dropout & ${83.86\tiny \pm 0.18}$ & ${3.12\tiny \pm 0.04}$ & ${2.19\tiny \pm 0.01}$ & $n.a.$ & ${22.29\tiny \pm 0.55}$ & ${2.09\tiny \pm 0.02}$ & ${21.31\tiny \pm 0.24}$ & $n.a.$\\
        & DropEdge & ${85.69\tiny \pm 0.15}$ & ${3.56\tiny \pm 0.05}$ & ${2.25\tiny \pm 0.01}$ & $n.a.$ & ${22.90\tiny \pm 0.43}$ & ${1.63\tiny \pm 0.01}$ & ${16.54\tiny \pm 0.20}$ & $n.a.$\\
        & VGCN-Energy & ${83.91\tiny \pm 0.08}$ & ${3.61\tiny \pm 0.07}$ & ${{5.32}\tiny \pm 0.07}$ & $n.a.$ & ${21.40\tiny \pm 0.49}$ & ${1.39\tiny \pm 0.01}$ & ${1.01\tiny \pm 0.00}$ & $n.a.$\\
        & VGCN-Ensemble & ${84.40\tiny \pm 0.16}$ & ${3.27\tiny \pm 0.02}$ & ${2.27\tiny \pm 0.02}$ & $n.a.$ & ${20.30\tiny \pm 0.21}$ & ${1.87\tiny \pm 0.00}$ & ${{59.99}\tiny \pm 0.19}$ & $n.a.$\\
        & VGCN-BNN & ${82.00\tiny \pm 0.19}$ & ${3.70\tiny \pm 0.09}$ & ${2.07\tiny \pm 0.04}$ & $n.a.$ & ${25.84\tiny \pm 0.46}$ & ${1.18\tiny \pm 0.01}$ & ${28.71\tiny \pm 0.92}$ & $n.a.$\\
        & GKDE-GCN & ${73.17\tiny \pm 0.94}$ & ${2.72\tiny \pm 0.04}$ & ${3.13\tiny \pm 0.04}$ & $n.a.$ & ${24.04\tiny \pm 0.42}$ & ${4.88\tiny \pm 0.19}$ & ${1.17\tiny \pm 0.03}$ & $n.a.$\\
        & GPN & ${87.47\tiny \pm 0.20}$ & ${2.38\tiny \pm 0.01}$ & ${2.81\tiny \pm 0.02}$ & ${4.66\tiny \pm 0.18}$ & ${\mathbf{88.29}\tiny \pm 0.20}$ & ${2.10\tiny \pm 0.01}$ & ${3.32\tiny \pm 0.04}$ & ${\mathbf{100.00}\tiny \pm 0.00}$\\

        \midrule
        
        \multirow{10}{*}{\shortstack[l]{Coauthor\\CS}}
        & APPNP & ${89.28\tiny \pm 0.07}$ & ${{2.32}\tiny \pm 0.01}$ & $n.a.$ & $n.a.$ & ${12.58\tiny \pm 0.33}$ & ${0.50\tiny \pm 0.00}$ & $n.a.$ & $n.a.$\\
        & VGCN & ${89.33\tiny \pm 0.05}$ & ${1.75\tiny \pm 0.01}$ & $n.a.$ & $n.a.$ & ${13.29\tiny \pm 0.22}$ & ${0.57\tiny \pm 0.00}$ & $n.a.$ & $n.a.$\\
        & RGCN & ${\mathbf{90.50}\tiny \pm 0.06}$ & ${1.89\tiny \pm 0.00}$ & $n.a.$ & $n.a.$ & ${41.67\tiny \pm 0.35}$ & ${1.16\tiny \pm 0.02}$ & $n.a.$ & $n.a.$\\
        & VGCN-Dropout & ${88.96\tiny \pm 0.09}$ & ${1.62\tiny \pm 0.02}$ & ${1.05\tiny \pm 0.00}$ & $n.a.$ & ${13.31\tiny \pm 0.32}$ & ${1.68\tiny \pm 0.02}$ & ${71.46\tiny \pm 0.30}$ & $n.a.$\\
        & DropEdge & ${89.44\tiny \pm 0.07}$ & ${1.97\tiny \pm 0.03}$ & ${1.18\tiny \pm 0.00}$ & $n.a.$ & ${11.65\tiny \pm 0.28}$ & ${0.90\tiny \pm 0.01}$ & ${55.73\tiny \pm 0.45}$ & $n.a.$\\
        & VGCN-Energy & ${89.33\tiny \pm 0.05}$ & ${1.75\tiny \pm 0.01}$ & ${2.38\tiny \pm 0.01}$ & $n.a.$ & ${13.29\tiny \pm 0.22}$ & ${0.57\tiny \pm 0.00}$ & ${0.42\tiny \pm 0.00}$ & $n.a.$\\
        & VGCN-Ensemble & ${89.00\tiny \pm 0.00}$ & ${1.76\tiny \pm 0.00}$ & ${1.19\tiny \pm 0.00}$ & $n.a.$ & ${11.00\tiny \pm 0.00}$ & ${1.39\tiny \pm 0.01}$ & ${{73.57}\tiny \pm 0.28}$ & $n.a.$\\
        & VGCN-BNN & ${88.16\tiny \pm 0.10}$ & ${1.91\tiny \pm 0.02}$ & ${1.01\tiny \pm 0.01}$ & $n.a.$ & ${12.48\tiny \pm 0.25}$ & ${1.53\tiny \pm 0.03}$ & ${73.43\tiny \pm 0.70}$ & $n.a.$\\
        & GKDE-GCN & ${88.14\tiny \pm 0.14}$ & ${2.00\tiny \pm 0.02}$ & ${{2.70}\tiny \pm 0.02}$ & $n.a.$ & ${9.71\tiny \pm 0.29}$ & ${{15.14}\tiny \pm 0.55}$ & ${0.42\tiny \pm 0.00}$ & $n.a.$\\
        & GPN & ${83.99\tiny \pm 0.31}$ & ${1.04\tiny \pm 0.01}$ & ${1.22\tiny \pm 0.01}$ & ${\mathbf{29.25}\tiny \pm 1.92}$ & ${\mathbf{83.96}\tiny \pm 0.31}$ & ${1.02\tiny \pm 0.01}$ & ${1.26\tiny \pm 0.01}$ & ${\mathbf{100.00}\tiny \pm 0.00}$\\

        \midrule

        \multirow{10}{*}{\shortstack[l]{Coauthor\\Physics}}
        & APPNP & ${96.16\tiny \pm 0.08}$ & ${0.83\tiny \pm 0.00}$ & $n.a.$ & $n.a.$ & ${28.71\tiny \pm 0.40}$ & ${0.30\tiny \pm 0.00}$ & $n.a.$ & $n.a.$\\
        & VGCN & ${96.00\tiny \pm 0.00}$ & ${0.62\tiny \pm 0.00}$ & $n.a.$ & $n.a.$ & ${33.26\tiny \pm 0.67}$ & ${0.44\tiny \pm 0.01}$ & $n.a.$ & $n.a.$\\
        & RGCN & ${94.69\tiny \pm 0.07}$ & ${0.80\tiny \pm 0.00}$ & $n.a.$ & $n.a.$ & ${58.56\tiny \pm 0.36}$ & ${0.75\tiny \pm 0.02}$ & $n.a.$ & $n.a.$\\
        & VGCN-Dropout & ${95.90\tiny \pm 0.05}$ & ${0.61\tiny \pm 0.00}$ & ${0.57\tiny \pm 0.00}$ & $n.a.$ & ${32.52\tiny \pm 0.60}$ & ${0.75\tiny \pm 0.01}$ & ${19.76\tiny \pm 0.41}$ & $n.a.$\\
        & DropEdge & ${95.90\tiny \pm 0.03}$ & ${0.62\tiny \pm 0.00}$ & ${0.63\tiny \pm 0.01}$ & $n.a.$ & ${30.53\tiny \pm 0.58}$ & ${0.53\tiny \pm 0.01}$ & ${12.71\tiny \pm 0.29}$ & $n.a.$\\
        & VGCN-Energy & ${96.00\tiny \pm 0.00}$ & ${0.62\tiny \pm 0.00}$ & ${0.72\tiny \pm 0.00}$ & $n.a.$ & ${33.26\tiny \pm 0.67}$ & ${0.44\tiny \pm 0.01}$ & ${0.23\tiny \pm 0.00}$ & $n.a.$\\
        & VGCN-Ensemble & ${96.00\tiny \pm 0.00}$ & ${0.62\tiny \pm 0.00}$ & ${0.57\tiny \pm 0.00}$ & $n.a.$ & ${31.70\tiny \pm 0.21}$ & ${0.71\tiny \pm 0.01}$ & ${41.30\tiny \pm 0.30}$ & $n.a.$\\
        & VGCN-BNN & ${95.65\tiny \pm 0.07}$ & ${0.69\tiny \pm 0.01}$ & ${0.57\tiny \pm 0.01}$ & $n.a.$ & ${32.95\tiny \pm 0.60}$ & ${1.17\tiny \pm 0.04}$ & ${{44.88}\tiny \pm 0.80}$ & $n.a.$\\
        & GKDE-GCN & ${\mathbf{96.61}\tiny \pm 0.05}$ & ${0.62\tiny \pm 0.00}$ & ${0.74\tiny \pm 0.00}$ & $n.a.$ & ${28.84\tiny \pm 0.31}$ & ${{6.53}\tiny \pm 0.18}$ & ${0.22\tiny \pm 0.00}$ & $n.a.$\\
        & GPN & ${92.70\tiny \pm 0.11}$ & ${0.63\tiny \pm 0.00}$ & ${0.60\tiny \pm 0.00}$ & ${\mathbf{23.91}\tiny \pm 1.06}$ & ${\mathbf{92.70}\tiny \pm 0.11}$ & ${0.62\tiny \pm 0.00}$ & ${0.62\tiny \pm 0.00}$ & ${\mathbf{100.00}\tiny \pm 0.00}$\\

        \midrule
        
        \multirow{7}{*}{\shortstack[l]{OGBN\\Arxiv}}
        & APPNP & ${63.50\tiny \pm 0.95}$ & ${{0.79}\tiny \pm 0.16}$ & $n.a.$ & $n.a.$ & ${51.10\tiny \pm 1.12}$ & ${0.42\tiny \pm 0.10}$ & $n.a.$ & $n.a.$\\
        & VGCN & ${65.70\tiny \pm 0.47}$ & ${0.20\tiny \pm 0.00}$ & $n.a.$ & $n.a.$ & ${51.30\tiny \pm 0.75}$ & ${0.30\tiny \pm 0.02}$ & $n.a.$ & $n.a.$\\
        & VGCN-Dropout & ${65.30\tiny \pm 0.70}$ & ${0.21\tiny \pm 0.00}$ & ${0.37\tiny \pm 0.03}$ & $n.a.$ & ${49.90\tiny \pm 0.77}$ & ${0.47\tiny \pm 0.05}$ & ${17.68\tiny \pm 0.48}$ & $n.a.$\\
        & VGCN-Energy & ${65.70\tiny \pm 0.47}$ & ${0.20\tiny \pm 0.00}$ & ${{2.32}\tiny \pm 0.38}$ & $n.a.$ & ${51.30\tiny \pm 0.75}$ & ${0.30\tiny \pm 0.02}$ & ${0.24\tiny \pm 0.01}$ & $n.a.$\\
        & VGCN-Ensemble & ${\mathbf{67.00}}$ & ${0.20}$ & ${0.19}$ & $n.a.$ & ${49.00}$ & ${0.46}$ & ${{18.22}}$ & $n.a.$\\
        & GKDE-GCN & ${65.20\tiny \pm 0.49}$ & ${0.76\tiny \pm 0.05}$ & ${0.76\tiny \pm 0.05}$ & $n.a.$ & ${45.40\tiny \pm 0.62}$ & ${{4.99}\tiny \pm 0.97}$ & ${5.04\tiny \pm 0.97}$ & $n.a.$\\
        & GPN & ${65.50\tiny \pm 0.70}$ & ${0.23\tiny \pm 0.01}$ & ${0.26\tiny \pm 0.01}$ & ${\mathbf{46.84}\tiny \pm 5.20}$ & ${\mathbf{65.50}\tiny \pm 0.70}$ & ${0.23\tiny \pm 0.01}$ & ${0.26\tiny \pm 0.01}$ & ${\mathbf{48.97}\tiny \pm 1.51}$\\

        \bottomrule
    \end{tabular}}
    \caption{Accuracy and OOD detection scores on Bernoulli and unit Gaussian feature perturbations using AUC-APR. OOD-AUC-APR scores are given as \emph{[Alea w/ Net] / [Epist w/ Net] / [Epist w/o Net]}. $n.a.$ means either model or metric not applicable. Bold numbers indicate best results for Accuracy and  OOD detection.}
    \label{tab:isolated_apr}
\end{table*}
\begin{table*}[!h]
    \centering
    \resizebox{\textwidth}{!}{
    \begin{tabular}{ll|cc|cccc|ccc}
        \toprule
        {} & {} & \multicolumn{2}{c|}{Clean Graph} & \multicolumn{7}{c}{Leave-Out Classes} \\ \midrule
        {} & {\textbf{Model}} & \textbf{ID-ACC} & \textbf{ID-ECE} & \textbf{ID-ACC} & \multicolumn{3}{c}{\textbf{OOD-AUC-ROC}}  & \multicolumn{3}{c}{\textbf{OOD-AUC-PR}} \\
        {} & {} & {} & {} & {} & \emph{Alea w/ Net} & \emph{Epist w/ Net} & \emph{Epist w/o Net} & \emph{Alea w/ Net} & \emph{Epist w/ Net} & \emph{Epist w/o Net}  \\

        \midrule
        
        \multirow{14}{*}{CoraML} 
        & LP & ${78.41\tiny \pm 0.00}$ & ${64.12\tiny \pm 0.00}$ & ${86.40\tiny \pm 0.00}$ & ${{83.78}\tiny \pm 0.00}$ & ${80.86\tiny \pm 0.00}$ & $n.a.$ & ${74.80\tiny \pm 0.00}$ & ${71.15\tiny \pm 0.00}$ & $n.a.$\\
        & GKDE & ${72.88\tiny \pm 0.00}$ & ${56.46\tiny \pm 0.00}$ & ${83.02\tiny \pm 0.00}$ & ${74.46\tiny \pm 0.00}$ & ${71.86\tiny \pm 0.00}$ & $n.a.$ & ${66.19\tiny \pm 0.00}$ & ${64.05\tiny \pm 0.00}$ & $n.a.$\\
        & Matern-GGP & ${79.70\tiny \pm 0.02}$ & ${9.88\tiny \pm 0.02}$ & ${87.03\tiny \pm 0.01}$ & ${83.13\tiny \pm 0.00}$ & ${82.98\tiny \pm 0.00}$ & $n.a.$ & ${71.42\tiny \pm 0.00}$ & ${71.04\tiny \pm 0.14}$ & $n.a.$\\
        & GGP & ${79.04\tiny \pm 0.01}$ & ${21.67\tiny \pm 0.01}$ & ${88.65\tiny \pm 0.00}$ & ${81.49\tiny \pm 0.00}$ & ${82.03\tiny \pm 0.00}$ & $n.a.$ & ${74.13\tiny \pm 0.00}$ & ${74.77\tiny \pm 0.00}$ & $n.a.$\\
        & APPNP & ${\mathbf{84.94}\tiny \pm 0.02}$ & ${8.27\tiny \pm 0.10}$ & ${\mathbf{90.20}\tiny \pm 0.02}$ & ${83.71\tiny \pm 0.06}$ & $n.a.$ & $n.a.$ & ${{78.77}\tiny \pm 0.07}$ & $n.a.$ & $n.a.$\\
        & VGCN & ${83.14\tiny \pm 0.03}$ & ${7.96\tiny \pm 0.16}$ & ${89.66\tiny \pm 0.05}$ & ${81.70\tiny \pm 0.07}$ & $n.a.$ & $n.a.$ & ${75.67\tiny \pm 0.10}$ & $n.a.$ & $n.a.$\\
        & RGCN & ${82.79\tiny \pm 0.06}$ & ${14.39\tiny \pm 0.17}$ & ${88.66\tiny \pm 0.03}$ & ${80.37\tiny \pm 0.11}$ & $n.a.$ & $n.a.$ & ${76.97\tiny \pm 0.11}$ & $n.a.$ & $n.a.$\\
        & VGCN-Dropout & ${82.30\tiny \pm 0.06}$ & ${13.88\tiny \pm 0.13}$ & ${89.08\tiny \pm 0.04}$ & ${81.27\tiny \pm 0.07}$ & ${71.65\tiny \pm 0.10}$ & $n.a.$ & ${75.55\tiny \pm 0.12}$ & ${60.65\tiny \pm 0.11}$ & $n.a.$\\
        & DropEdge & ${83.07\tiny \pm 0.04}$ & ${13.93\tiny \pm 0.11}$ & ${89.03\tiny \pm 0.03}$ & ${83.55\tiny \pm 0.05}$ & ${75.48\tiny \pm 0.12}$ & $n.a.$ & ${78.48\tiny \pm 0.12}$ & ${65.22\tiny \pm 0.15}$ & $n.a.$\\
        & VGCN-Energy & ${83.14\tiny \pm 0.03}$ & ${7.96\tiny \pm 0.16}$ & ${89.66\tiny \pm 0.05}$ & ${81.70\tiny \pm 0.07}$ & ${83.15\tiny \pm 0.07}$ & $n.a.$ & ${75.67\tiny \pm 0.10}$ & ${78.44\tiny \pm 0.10}$ & $n.a.$\\
        & VGCN-Ensemble & ${83.41\tiny \pm 0.01}$ & ${8.45\tiny \pm 0.01}$ & ${89.87\tiny \pm 0.00}$ & ${81.85\tiny \pm 0.00}$ & ${74.24\tiny \pm 0.00}$ & $n.a.$ & ${75.80\tiny \pm 0.00}$ & ${64.02\tiny \pm 0.00}$ & $n.a.$\\
        & VGCN-BNN & ${82.83\tiny \pm 0.06}$ & ${15.66\tiny \pm 0.13}$ & ${88.49\tiny \pm 0.04}$ & ${82.18\tiny \pm 0.23}$ & ${73.18\tiny \pm 1.10}$ & $n.a.$ & ${76.17\tiny \pm 0.36}$ & ${63.51\tiny \pm 1.40}$ & $n.a.$\\
        & GKDE-GCN & ${81.91\tiny \pm 0.19}$ & ${\mathbf{6.53}\tiny \pm 0.14}$ & ${89.33\tiny \pm 0.04}$ & ${82.23\tiny \pm 0.08}$ & ${82.09\tiny \pm 0.18}$ & $n.a.$ & ${75.88\tiny \pm 0.12}$ & ${77.03\tiny \pm 0.39}$ & $n.a.$\\
        & GPN & ${81.16\tiny \pm 0.12}$ & ${11.68\tiny \pm 0.18}$ & ${88.51\tiny \pm 0.04}$ & ${83.25\tiny \pm 0.12}$ & ${\mathbf{86.28}\tiny \pm 0.17}$ & ${{80.95}\tiny \pm 0.24}$ & ${75.79\tiny \pm 0.28}$ & ${\mathbf{79.97}\tiny \pm 0.20}$ & ${{72.81}\tiny \pm 0.46}$\\

        \midrule
        
        \multirow{14}{*}{CiteSeer} 
        & LP & ${54.05\tiny \pm 0.00}$ & ${37.38\tiny \pm 0.00}$ & ${57.34\tiny \pm 0.00}$ & ${65.99\tiny \pm 0.00}$ & ${67.54\tiny \pm 0.00}$ & $n.a.$ & ${48.12\tiny \pm 0.00}$ & ${48.59\tiny \pm 0.00}$ & $n.a.$\\
        & GKDE & ${53.67\tiny \pm 0.00}$ & ${36.29\tiny \pm 0.00}$ & ${49.62\tiny \pm 0.00}$ & ${63.75\tiny \pm 0.00}$ & ${63.91\tiny \pm 0.00}$ & $n.a.$ & ${56.74\tiny \pm 0.00}$ & ${56.79\tiny \pm 0.00}$ & $n.a.$\\
        & Matern-GGP & ${53.25\tiny \pm 0.03}$ & ${12.76\tiny \pm 0.03}$ & ${53.83\tiny \pm 0.06}$ & ${59.57\tiny \pm 0.00}$ & ${59.56\tiny \pm 0.05}$ & $n.a.$ & ${36.05\tiny \pm 0.04}$ & ${36.24\tiny \pm 0.19}$ & $n.a.$\\
        & GGP & ${68.85\tiny \pm 0.01}$ & ${20.84\tiny \pm 0.01}$ & ${69.74\tiny \pm 0.01}$ & ${\mathbf{74.56}\tiny \pm 0.06}$ & ${74.10\tiny \pm 0.07}$ & $n.a.$ & ${50.39\tiny \pm 0.08}$ & ${48.74\tiny \pm 0.08}$ & $n.a.$\\
        & APPNP & ${70.24\tiny \pm 0.02}$ & ${4.88\tiny \pm 0.03}$ & ${{72.83}\tiny \pm 0.03}$ & ${72.91\tiny \pm 0.09}$ & $n.a.$ & $n.a.$ & ${56.31\tiny \pm 0.14}$ & $n.a.$ & $n.a.$\\
        & VGCN & ${68.98\tiny \pm 0.02}$ & ${4.33\tiny \pm 0.05}$ & ${70.79\tiny \pm 0.02}$ & ${72.16\tiny \pm 0.08}$ & $n.a.$ & $n.a.$ & ${53.71\tiny \pm 0.08}$ & $n.a.$ & $n.a.$\\
        & RGCN & ${\mathbf{70.57}\tiny \pm 0.03}$ & ${15.09\tiny \pm 0.09}$ & ${72.15\tiny \pm 0.03}$ & ${74.56\tiny \pm 0.07}$ & $n.a.$ & $n.a.$ & ${\mathbf{58.63}\tiny \pm 0.07}$ & $n.a.$ & $n.a.$\\
        & VGCN-Dropout & ${68.41\tiny \pm 0.03}$ & ${7.31\tiny \pm 0.07}$ & ${70.44\tiny \pm 0.06}$ & ${71.31\tiny \pm 0.08}$ & ${60.05\tiny \pm 0.12}$ & $n.a.$ & ${52.05\tiny \pm 0.14}$ & ${36.95\tiny \pm 0.10}$ & $n.a.$\\
        & DropEdge & ${69.33\tiny \pm 0.03}$ & ${5.64\tiny \pm 0.06}$ & ${71.02\tiny \pm 0.05}$ & ${73.42\tiny \pm 0.05}$ & ${63.23\tiny \pm 0.17}$ & $n.a.$ & ${55.70\tiny \pm 0.10}$ & ${39.38\tiny \pm 0.14}$ & $n.a.$\\
        & VGCN-Energy & ${68.98\tiny \pm 0.02}$ & ${4.33\tiny \pm 0.05}$ & ${70.79\tiny \pm 0.02}$ & ${72.16\tiny \pm 0.08}$ & ${76.08\tiny \pm 0.11}$ & $n.a.$ & ${53.71\tiny \pm 0.08}$ & ${58.35\tiny \pm 0.17}$ & $n.a.$\\
        & VGCN-Ensemble & ${69.26\tiny \pm 0.00}$ & ${4.14\tiny \pm 0.02}$ & ${70.63\tiny \pm 0.00}$ & ${72.23\tiny \pm 0.00}$ & ${58.61\tiny \pm 0.01}$ & $n.a.$ & ${54.04\tiny \pm 0.00}$ & ${38.93\tiny \pm 0.01}$ & $n.a.$\\
        & VGCN-BNN & ${68.06\tiny \pm 0.07}$ & ${8.42\tiny \pm 0.20}$ & ${69.84\tiny \pm 0.04}$ & ${71.64\tiny \pm 0.31}$ & ${64.16\tiny \pm 1.75}$ & $n.a.$ & ${52.60\tiny \pm 0.47}$ & ${46.72\tiny \pm 1.76}$ & $n.a.$\\
        & GKDE-GCN & ${69.55\tiny \pm 0.03}$ & ${\mathbf{3.88}\tiny \pm 0.06}$ & ${70.76\tiny \pm 0.04}$ & ${73.34\tiny \pm 0.15}$ & ${\mathbf{76.19}\tiny \pm 0.31}$ & $n.a.$ & ${54.25\tiny \pm 0.16}$ & ${59.07\tiny \pm 0.42}$ & $n.a.$\\
        & GPN & ${66.13\tiny \pm 0.17}$ & ${7.42\tiny \pm 0.22}$ & ${69.79\tiny \pm 0.10}$ & ${72.46\tiny \pm 0.27}$ & ${70.74\tiny \pm 0.26}$ & ${{66.65}\tiny \pm 0.29}$ & ${55.14\tiny \pm 0.46}$ & ${50.52\tiny \pm 0.34}$ & ${{44.93}\tiny \pm 0.31}$\\

        \midrule
        
        \multirow{14}{*}{PubMed} 
        & LP & ${78.40\tiny \pm 0.00}$ & ${45.07\tiny \pm 0.00}$ & ${89.18\tiny \pm 0.00}$ & ${\mathbf{80.32}\tiny \pm 0.00}$ & ${{79.64}\tiny \pm 0.00}$ & $n.a.$ & ${{71.01}\tiny \pm 0.00}$ & ${\mathbf{72.98}\tiny \pm 0.00}$ & $n.a.$\\
        & GKDE & ${77.10\tiny \pm 0.01}$ & ${40.02\tiny \pm 0.01}$ & ${88.16\tiny \pm 0.00}$ & ${69.66\tiny \pm 0.00}$ & ${68.47\tiny \pm 0.00}$ & $n.a.$ & ${55.81\tiny \pm 0.00}$ & ${54.33\tiny \pm 0.00}$ & $n.a.$\\
        & Matern-GGP & ${78.77\tiny \pm 0.00}$ & ${12.37\tiny \pm 0.00}$ & ${90.33\tiny \pm 0.01}$ & ${46.69\tiny \pm 0.00}$ & ${45.75\tiny \pm 0.00}$ & $n.a.$ & ${39.85\tiny \pm 0.00}$ & ${39.63\tiny \pm 0.00}$ & $n.a.$\\
        & GGP & $n.f.$ & $n.f.$ & $n.f.$ & $n.f.$ & $n.f.$ & $n.f.$ & $n.f.$ & $n.f.$ & $n.f.$\\
        & APPNP & ${\mathbf{86.88}\tiny \pm 0.01}$ & ${2.57\tiny \pm 0.05}$ & ${94.83\tiny \pm 0.01}$ & ${74.76\tiny \pm 0.06}$ & $n.a.$ & $n.a.$ & ${61.84\tiny \pm 0.07}$ & $n.a.$ & $n.a.$\\
        & VGCN & ${86.70\tiny \pm 0.01}$ & ${2.30\tiny \pm 0.05}$ & ${94.77\tiny \pm 0.01}$ & ${72.58\tiny \pm 0.04}$ & $n.a.$ & $n.a.$ & ${60.54\tiny \pm 0.04}$ & $n.a.$ & $n.a.$\\
        & RGCN & ${85.87\tiny \pm 0.01}$ & ${4.86\tiny \pm 0.05}$ & ${94.73\tiny \pm 0.01}$ & ${71.49\tiny \pm 0.14}$ & $n.a.$ & $n.a.$ & ${60.54\tiny \pm 0.13}$ & $n.a.$ & $n.a.$\\
        & VGCN-Dropout & ${86.49\tiny \pm 0.01}$ & ${5.53\tiny \pm 0.06}$ & ${94.72\tiny \pm 0.01}$ & ${71.10\tiny \pm 0.04}$ & ${67.27\tiny \pm 0.06}$ & $n.a.$ & ${59.47\tiny \pm 0.04}$ & ${54.24\tiny \pm 0.08}$ & $n.a.$\\
        & DropEdge & ${86.57\tiny \pm 0.01}$ & ${5.11\tiny \pm 0.04}$ & ${94.72\tiny \pm 0.01}$ & ${72.09\tiny \pm 0.02}$ & ${68.57\tiny \pm 0.04}$ & $n.a.$ & ${59.84\tiny \pm 0.03}$ & ${54.95\tiny \pm 0.06}$ & $n.a.$\\
        & VGCN-Energy & ${86.70\tiny \pm 0.01}$ & ${2.30\tiny \pm 0.05}$ & ${94.77\tiny \pm 0.01}$ & ${72.58\tiny \pm 0.04}$ & ${72.63\tiny \pm 0.06}$ & $n.a.$ & ${60.54\tiny \pm 0.04}$ & ${60.63\tiny \pm 0.10}$ & $n.a.$\\
        & VGCN-Ensemble & ${86.64\tiny \pm 0.00}$ & ${2.29\tiny \pm 0.01}$ & ${\mathbf{94.88}\tiny \pm 0.00}$ & ${72.71\tiny \pm 0.00}$ & ${70.99\tiny \pm 0.00}$ & $n.a.$ & ${60.47\tiny \pm 0.00}$ & ${59.31\tiny \pm 0.00}$ & $n.a.$\\
        & VGCN-BNN & ${85.56\tiny \pm 0.05}$ & ${14.00\tiny \pm 0.11}$ & ${94.34\tiny \pm 0.03}$ & ${65.41\tiny \pm 0.58}$ & ${63.77\tiny \pm 1.50}$ & $n.a.$ & ${53.23\tiny \pm 0.40}$ & ${54.36\tiny \pm 1.54}$ & $n.a.$\\
        & GKDE-GCN & ${86.14\tiny \pm 0.07}$ & ${\mathbf{1.36}\tiny \pm 0.09}$ & ${94.66\tiny \pm 0.00}$ & ${73.53\tiny \pm 0.06}$ & ${74.47\tiny \pm 0.11}$ & $n.a.$ & ${61.36\tiny \pm 0.04}$ & ${61.96\tiny \pm 0.27}$ & $n.a.$\\
        & GPN & ${84.10\tiny \pm 0.26}$ & ${4.31\tiny \pm 0.09}$ & ${94.08\tiny \pm 0.02}$ & ${71.84\tiny \pm 0.08}$ & ${73.91\tiny \pm 0.20}$ & ${{71.20}\tiny \pm 0.15}$ & ${57.92\tiny \pm 0.10}$ & ${67.19\tiny \pm 0.25}$ & ${{59.72}\tiny \pm 0.18}$\\

        \bottomrule
    \end{tabular}}
    \caption{Accuracy and ECE scores on the clean graphs. Accuracy and OOD detection scores on Left-Out classes using AUC-ROC and AUC-PR scores. OOD-AUC-ROC and OOD-AUC-APR scores are given as \emph{[Alea w/ Net] / [Epist w/ Net] / [Epist w/o Net]}. $n.a.$ means model or metric not applicable and $n.f.$ means not finished within our constraints. Bold numbers indicate best results for Accuracy, ECE and OOD detection.}
    \label{tab:clean_loc_auroc_one}
\end{table*}

\begin{table*}[!h]
    \centering
    \resizebox{\textwidth}{!}{
    \begin{tabular}{ll|cc|cccc|ccc}
        \toprule
        {} & {} & \multicolumn{2}{c|}{Clean Graph} & \multicolumn{7}{c}{Leave-Out Classes} \\ \midrule
        {} & {\textbf{Model}} & \textbf{ID-ACC} & \textbf{ID-ECE} & \textbf{ID-ACC} & \multicolumn{3}{c}{\textbf{OOD-AUC-ROC}}  & \multicolumn{3}{c}{\textbf{OOD-AUC-PR}} \\
        {} & {} & {} & {} & {} & \emph{Alea w/ Net} & \emph{Epist w/ Net} & \emph{Epist w/o Net} & \emph{Alea w/ Net} & \emph{Epist w/ Net} & \emph{Epist w/o Net}  \\

        \midrule
        
        \multirow{14}{*}{\shortstack[l]{Amazon\\Computers}}
        & LP & ${79.49\tiny \pm 0.00}$ & ${69.49\tiny \pm 0.00}$ & ${83.28\tiny \pm 0.00}$ & ${{86.74}\tiny \pm 0.00}$ & ${83.88\tiny \pm 0.00}$ & $n.a.$ & ${{67.10}\tiny \pm 0.00}$ & ${63.08\tiny \pm 0.00}$ & $n.a.$\\
        & GKDE & ${63.49\tiny \pm 0.00}$ & ${38.93\tiny \pm 0.00}$ & ${71.41\tiny \pm 0.00}$ & ${75.14\tiny \pm 0.00}$ & ${73.58\tiny \pm 0.00}$ & $n.a.$ & ${49.21\tiny \pm 0.00}$ & ${47.68\tiny \pm 0.00}$ & $n.a.$\\
        & Matern-GGP & ${80.23\tiny \pm 0.00}$ & ${12.13\tiny \pm 0.01}$ & ${86.94\tiny \pm 0.01}$ & ${79.00\tiny \pm 0.00}$ & ${79.10\tiny \pm 0.00}$ & $n.a.$ & ${49.57\tiny \pm 0.00}$ & ${49.91\tiny \pm 0.04}$ & $n.a.$\\
        & GGP & $n.f.$ & $n.f.$ & $n.f.$ & $n.f.$ & $n.f.$ & $n.f.$ & $n.f.$ & $n.f.$ & $n.f.$\\
        & APPNP & ${80.12\tiny \pm 0.04}$ & ${11.83\tiny \pm 0.04}$ & ${87.72\tiny \pm 0.02}$ & ${81.30\tiny \pm 0.02}$ & $n.a.$ & $n.a.$ & ${53.02\tiny \pm 0.04}$ & $n.a.$ & $n.a.$\\
        & VGCN & ${81.66\tiny \pm 0.04}$ & ${9.90\tiny \pm 0.03}$ & ${88.95\tiny \pm 0.02}$ & ${82.76\tiny \pm 0.03}$ & $n.a.$ & $n.a.$ & ${57.49\tiny \pm 0.06}$ & $n.a.$ & $n.a.$\\
        & RGCN & ${68.43\tiny \pm 0.24}$ & ${23.62\tiny \pm 0.16}$ & ${78.52\tiny \pm 0.13}$ & ${76.40\tiny \pm 0.16}$ & $n.a.$ & $n.a.$ & ${53.16\tiny \pm 0.20}$ & $n.a.$ & $n.a.$\\
        & VGCN-Dropout & ${81.29\tiny \pm 0.05}$ & ${11.52\tiny \pm 0.03}$ & ${88.54\tiny \pm 0.02}$ & ${81.99\tiny \pm 0.04}$ & ${72.90\tiny \pm 0.04}$ & $n.a.$ & ${55.66\tiny \pm 0.08}$ & ${41.38\tiny \pm 0.04}$ & $n.a.$\\
        & DropEdge & ${82.09\tiny \pm 0.03}$ & ${11.40\tiny \pm 0.03}$ & ${88.62\tiny \pm 0.02}$ & ${82.79\tiny \pm 0.02}$ & ${75.35\tiny \pm 0.06}$ & $n.a.$ & ${56.77\tiny \pm 0.02}$ & ${43.94\tiny \pm 0.07}$ & $n.a.$\\
        & VGCN-Energy & ${81.66\tiny \pm 0.04}$ & ${9.90\tiny \pm 0.03}$ & ${88.95\tiny \pm 0.02}$ & ${82.76\tiny \pm 0.03}$ & ${83.43\tiny \pm 0.04}$ & $n.a.$ & ${57.49\tiny \pm 0.06}$ & ${60.64\tiny \pm 0.10}$ & $n.a.$\\
        & VGCN-Ensemble & ${81.66\tiny \pm 0.01}$ & ${9.96\tiny \pm 0.02}$ & ${\mathbf{89.00}\tiny \pm 0.02}$ & ${82.80\tiny \pm 0.02}$ & ${83.77\tiny \pm 0.55}$ & $n.a.$ & ${57.46\tiny \pm 0.04}$ & ${57.62\tiny \pm 0.84}$ & $n.a.$\\
        & VGCN-BNN & ${68.20\tiny \pm 0.15}$ & ${20.27\tiny \pm 0.45}$ & ${79.65\tiny \pm 0.06}$ & ${82.16\tiny \pm 0.24}$ & ${69.72\tiny \pm 2.37}$ & $n.a.$ & ${58.10\tiny \pm 0.53}$ & ${52.08\tiny \pm 2.48}$ & $n.a.$\\
        & GKDE-GCN & ${65.90\tiny \pm 0.53}$ & ${\mathbf{9.04}\tiny \pm 0.28}$ & ${82.73\tiny \pm 0.37}$ & ${77.03\tiny \pm 0.34}$ & ${70.32\tiny \pm 0.66}$ & $n.a.$ & ${49.81\tiny \pm 0.44}$ & ${45.92\tiny \pm 0.90}$ & $n.a.$\\
        & GPN & ${\mathbf{82.10}\tiny \pm 0.29}$ & ${9.22\tiny \pm 0.19}$ & ${88.48\tiny \pm 0.07}$ & ${82.49\tiny \pm 0.17}$ & ${\mathbf{87.63}\tiny \pm 0.18}$ & ${{74.55}\tiny \pm 0.24}$ & ${56.78\tiny \pm 0.38}$ & ${\mathbf{67.94}\tiny \pm 0.28}$ & ${{48.03}\tiny \pm 0.34}$\\

        \midrule
        
        \multirow{14}{*}{\shortstack[l]{Amazon\\Photos}} 
        & LP & ${85.88\tiny \pm 0.00}$ & ${73.38\tiny \pm 0.00}$ & ${89.27\tiny \pm 0.00}$ & ${\mathbf{94.24}\tiny \pm 0.00}$ & ${90.26\tiny \pm 0.00}$ & $n.a.$ & ${\mathbf{90.24}\tiny \pm 0.00}$ & ${85.55\tiny \pm 0.00}$ & $n.a.$\\
        & GKDE & ${75.38\tiny \pm 0.00}$ & ${52.21\tiny \pm 0.00}$ & ${85.94\tiny \pm 0.00}$ & ${76.51\tiny \pm 0.00}$ & ${60.83\tiny \pm 0.00}$ & $n.a.$ & ${66.72\tiny \pm 0.00}$ & ${59.09\tiny \pm 0.00}$ & $n.a.$\\
        & Matern-GGP & ${86.10\tiny \pm 0.01}$ & ${\mathbf{8.54}\tiny \pm 0.02}$ & ${88.65\tiny \pm 0.00}$ & ${87.26\tiny \pm 0.00}$ & ${86.75\tiny \pm 0.00}$ & $n.a.$ & ${75.22\tiny \pm 0.00}$ & ${74.39\tiny \pm 0.03}$ & $n.a.$\\
        & GGP & $n.f.$ & $n.f.$ & $n.f.$ & $n.f.$ & $n.f.$ & $n.f.$ & $n.f.$ & $n.f.$ & $n.f.$\\
        & APPNP & ${\mathbf{92.12}\tiny \pm 0.01}$ & ${13.68\tiny \pm 0.02}$ & ${\mathbf{95.42}\tiny \pm 0.01}$ & ${77.45\tiny \pm 0.10}$ & $n.a.$ & $n.a.$ & ${67.50\tiny \pm 0.12}$ & $n.a.$ & $n.a.$\\
        & VGCN & ${90.95\tiny \pm 0.01}$ & ${10.37\tiny \pm 0.03}$ & ${94.24\tiny \pm 0.01}$ & ${82.44\tiny \pm 0.07}$ & $n.a.$ & $n.a.$ & ${72.60\tiny \pm 0.11}$ & $n.a.$ & $n.a.$\\
        & RGCN & ${81.00\tiny \pm 0.42}$ & ${40.20\tiny \pm 0.25}$ & ${87.59\tiny \pm 0.69}$ & ${75.25\tiny \pm 0.23}$ & $n.a.$ & $n.a.$ & ${67.53\tiny \pm 0.29}$ & $n.a.$ & $n.a.$\\
        & VGCN-Dropout & ${90.42\tiny \pm 0.01}$ & ${12.76\tiny \pm 0.05}$ & ${94.04\tiny \pm 0.01}$ & ${80.90\tiny \pm 0.08}$ & ${70.11\tiny \pm 0.14}$ & $n.a.$ & ${70.55\tiny \pm 0.12}$ & ${53.16\tiny \pm 0.13}$ & $n.a.$\\
        & DropEdge & ${91.03\tiny \pm 0.01}$ & ${12.25\tiny \pm 0.02}$ & ${94.22\tiny \pm 0.01}$ & ${82.48\tiny \pm 0.09}$ & ${71.67\tiny \pm 0.11}$ & $n.a.$ & ${72.75\tiny \pm 0.15}$ & ${54.98\tiny \pm 0.10}$ & $n.a.$\\
        & VGCN-Energy & ${90.95\tiny \pm 0.01}$ & ${10.37\tiny \pm 0.03}$ & ${94.24\tiny \pm 0.01}$ & ${82.44\tiny \pm 0.07}$ & ${79.64\tiny \pm 0.07}$ & $n.a.$ & ${72.60\tiny \pm 0.11}$ & ${71.71\tiny \pm 0.14}$ & $n.a.$\\
        & VGCN-Ensemble & ${90.94\tiny \pm 0.01}$ & ${10.38\tiny \pm 0.02}$ & ${94.28\tiny \pm 0.01}$ & ${82.72\tiny \pm 0.03}$ & ${88.53\tiny \pm 0.39}$ & $n.a.$ & ${72.98\tiny \pm 0.08}$ & ${83.28\tiny \pm 0.46}$ & $n.a.$\\
        & VGCN-BNN & ${84.51\tiny \pm 0.23}$ & ${29.42\tiny \pm 0.32}$ & ${91.88\tiny \pm 0.19}$ & ${72.03\tiny \pm 0.38}$ & ${64.10\tiny \pm 1.98}$ & $n.a.$ & ${62.85\tiny \pm 0.49}$ & ${54.09\tiny \pm 1.76}$ & $n.a.$\\
        & GKDE-GCN & ${79.74\tiny \pm 0.99}$ & ${12.84\tiny \pm 0.31}$ & ${89.84\tiny \pm 0.73}$ & ${73.65\tiny \pm 1.13}$ & ${69.09\tiny \pm 0.81}$ & $n.a.$ & ${62.45\tiny \pm 1.20}$ & ${59.68\tiny \pm 0.75}$ & $n.a.$\\
        & GPN & ${90.44\tiny \pm 0.07}$ & ${11.95\tiny \pm 0.12}$ & ${94.01\tiny \pm 0.07}$ & ${82.72\tiny \pm 0.24}$ & ${{91.98}\tiny \pm 0.22}$ & ${76.57\tiny \pm 0.49}$ & ${74.55\tiny \pm 0.39}$ & ${{86.29}\tiny \pm 0.35}$ & ${64.00\tiny \pm 0.68}$\\

        \midrule
        
        \multirow{14}{*}{\shortstack[l]{Coauthor\\CS}}
         & LP & ${83.34\tiny \pm 0.00}$ & ${76.67\tiny \pm 0.00}$ & ${82.89\tiny \pm 0.00}$ & ${{86.64}\tiny \pm 0.00}$ & ${86.40\tiny \pm 0.00}$ & $n.a.$ & ${79.05\tiny \pm 0.00}$ & ${79.56\tiny \pm 0.00}$ & $n.a.$\\
        & GKDE & ${79.27\tiny \pm 0.00}$ & ${70.74\tiny \pm 0.00}$ & ${78.84\tiny \pm 0.00}$ & ${79.32\tiny \pm 0.00}$ & ${77.59\tiny \pm 0.00}$ & $n.a.$ & ${66.30\tiny \pm 0.00}$ & ${64.69\tiny \pm 0.00}$ & $n.a.$\\
        & Matern-GGP & ${83.56\tiny \pm 0.01}$ & ${\mathbf{6.18}\tiny \pm 0.00}$ & ${83.21\tiny \pm 0.00}$ & ${73.57\tiny \pm 0.00}$ & ${73.75\tiny \pm 0.00}$ & $n.a.$ & ${62.05\tiny \pm 0.00}$ & ${61.74\tiny \pm 0.00}$ & $n.a.$\\
        & GGP & $n.f.$ & $n.f.$ & $n.f.$ & $n.f.$ & $n.f.$ & $n.f.$ & $n.f.$ & $n.f.$ & $n.f.$\\
        & APPNP & ${\mathbf{92.96}\tiny \pm 0.01}$ & ${8.70\tiny \pm 0.02}$ & ${\mathbf{93.51}\tiny \pm 0.01}$ & ${81.88\tiny \pm 0.03}$ & $n.a.$ & $n.a.$ & ${75.85\tiny \pm 0.06}$ & $n.a.$ & $n.a.$\\
        & VGCN & ${92.61\tiny \pm 0.01}$ & ${7.00\tiny \pm 0.03}$ & ${93.07\tiny \pm 0.01}$ & ${85.35\tiny \pm 0.03}$ & $n.a.$ & $n.a.$ & ${80.87\tiny \pm 0.06}$ & $n.a.$ & $n.a.$\\
        & RGCN & ${92.02\tiny \pm 0.02}$ & ${11.52\tiny \pm 0.06}$ & ${92.49\tiny \pm 0.01}$ & ${77.00\tiny \pm 0.08}$ & $n.a.$ & $n.a.$ & ${71.87\tiny \pm 0.11}$ & $n.a.$ & $n.a.$\\
        & VGCN-Dropout & ${92.30\tiny \pm 0.01}$ & ${12.41\tiny \pm 0.05}$ & ${92.83\tiny \pm 0.01}$ & ${84.04\tiny \pm 0.03}$ & ${73.09\tiny \pm 0.10}$ & $n.a.$ & ${78.99\tiny \pm 0.07}$ & ${55.79\tiny \pm 0.14}$ & $n.a.$\\
        & DropEdge & ${92.63\tiny \pm 0.01}$ & ${11.40\tiny \pm 0.03}$ & ${92.92\tiny \pm 0.01}$ & ${84.62\tiny \pm 0.06}$ & ${75.68\tiny \pm 0.08}$ & $n.a.$ & ${79.74\tiny \pm 0.08}$ & ${58.85\tiny \pm 0.11}$ & $n.a.$\\
        & VGCN-Energy & ${92.61\tiny \pm 0.01}$ & ${7.00\tiny \pm 0.03}$ & ${93.07\tiny \pm 0.01}$ & ${85.35\tiny \pm 0.03}$ & ${87.33\tiny \pm 0.04}$ & $n.a.$ & ${80.87\tiny \pm 0.06}$ & ${82.79\tiny \pm 0.11}$ & $n.a.$\\
        & VGCN-Ensemble & ${92.66\tiny \pm 0.01}$ & ${7.03\tiny \pm 0.02}$ & ${93.07\tiny \pm 0.00}$ & ${85.43\tiny \pm 0.00}$ & ${83.19\tiny \pm 0.07}$ & $n.a.$ & ${{80.88}\tiny \pm 0.01}$ & ${72.27\tiny \pm 0.14}$ & $n.a.$\\
        & VGCN-BNN & ${92.21\tiny \pm 0.04}$ & ${12.15\tiny \pm 0.10}$ & ${92.54\tiny \pm 0.04}$ & ${80.48\tiny \pm 0.25}$ & ${70.75\tiny \pm 0.65}$ & $n.a.$ & ${73.64\tiny \pm 0.35}$ & ${54.41\tiny \pm 0.76}$ & $n.a.$\\
        & GKDE-GCN & ${92.35\tiny \pm 0.09}$ & ${8.04\tiny \pm 0.11}$ & ${93.13\tiny \pm 0.01}$ & ${85.02\tiny \pm 0.03}$ & ${84.45\tiny \pm 0.06}$ & $n.a.$ & ${80.15\tiny \pm 0.07}$ & ${77.90\tiny \pm 0.12}$ & $n.a.$\\
        & GPN & ${86.88\tiny \pm 0.10}$ & ${18.92\tiny \pm 0.11}$ & ${88.21\tiny \pm 0.10}$ & ${69.49\tiny \pm 0.39}$ & ${\mathbf{92.09}\tiny \pm 0.20}$ & ${{88.84}\tiny \pm 0.31}$ & ${55.41\tiny \pm 0.46}$ & ${\mathbf{90.28}\tiny \pm 0.18}$ & ${{86.54}\tiny \pm 0.42}$\\

        \midrule

        \multirow{14}{*}{\shortstack[l]{Coauthor\\Physics}}
        & LP & ${90.75\tiny \pm 0.00}$ & ${70.75\tiny \pm 0.00}$ & ${95.39\tiny \pm 0.00}$ & ${{91.78}\tiny \pm 0.00}$ & ${90.03\tiny \pm 0.00}$ & $n.a.$ & ${{70.58}\tiny \pm 0.00}$ & ${69.63\tiny \pm 0.00}$ & $n.a.$\\
        & GKDE & ${87.75\tiny \pm 0.00}$ & ${56.04\tiny \pm 0.00}$ & ${93.30\tiny \pm 0.00}$ & ${87.02\tiny \pm 0.00}$ & ${84.64\tiny \pm 0.00}$ & $n.a.$ & ${57.00\tiny \pm 0.00}$ & ${52.49\tiny \pm 0.00}$ & $n.a.$\\
        & Matern-GGP & $n.f.$ & $n.f.$ & $n.f.$ & $n.f.$ & $n.f.$ & $n.f.$ & $n.f.$ & $n.f.$ & $n.f.$\\
        & GGP & $n.f.$ & $n.f.$ & $n.f.$ & $n.f.$ & $n.f.$ & $n.f.$ & $n.f.$ & $n.f.$ & $n.f.$\\
        & APPNP & ${95.56\tiny \pm 0.00}$ & ${1.57\tiny \pm 0.01}$ & ${\mathbf{97.96}\tiny \pm 0.00}$ & ${90.37\tiny \pm 0.02}$ & $n.a.$ & $n.a.$ & ${61.46\tiny \pm 0.05}$ & $n.a.$ & $n.a.$\\
        & VGCN & ${95.59\tiny \pm 0.00}$ & ${\mathbf{1.28}\tiny \pm 0.02}$ & ${\mathbf{97.96}\tiny \pm 0.00}$ & ${90.29\tiny \pm 0.02}$ & $n.a.$ & $n.a.$ & ${63.63\tiny \pm 0.09}$ & $n.a.$ & $n.a.$\\
        & RGCN & ${95.33\tiny \pm 0.00}$ & ${5.11\tiny \pm 0.07}$ & ${97.92\tiny \pm 0.00}$ & ${75.62\tiny \pm 0.13}$ & $n.a.$ & $n.a.$ & ${56.27\tiny \pm 0.13}$ & $n.a.$ & $n.a.$\\
        & VGCN-Dropout & ${95.51\tiny \pm 0.00}$ & ${3.07\tiny \pm 0.02}$ & ${97.92\tiny \pm 0.00}$ & ${89.63\tiny \pm 0.03}$ & ${87.10\tiny \pm 0.04}$ & $n.a.$ & ${62.53\tiny \pm 0.10}$ & ${51.19\tiny \pm 0.12}$ & $n.a.$\\
        & DropEdge & ${95.59\tiny \pm 0.00}$ & ${2.81\tiny \pm 0.01}$ & ${97.92\tiny \pm 0.00}$ & ${90.56\tiny \pm 0.02}$ & ${88.76\tiny \pm 0.04}$ & $n.a.$ & ${64.59\tiny \pm 0.08}$ & ${55.32\tiny \pm 0.14}$ & $n.a.$\\
        & VGCN-Energy & ${95.59\tiny \pm 0.00}$ & ${1.28\tiny \pm 0.02}$ & ${97.96\tiny \pm 0.00}$ & ${90.29\tiny \pm 0.02}$ & ${91.08\tiny \pm 0.05}$ & $n.a.$ & ${63.63\tiny \pm 0.09}$ & ${69.41\tiny \pm 0.11}$ & $n.a.$\\
        & VGCN-Ensemble & ${95.58\tiny \pm 0.00}$ & ${1.29\tiny \pm 0.01}$ & ${97.96\tiny \pm 0.00}$ & ${90.35\tiny \pm 0.00}$ & ${92.39\tiny \pm 0.00}$ & $n.a.$ & ${63.67\tiny \pm 0.00}$ & ${71.30\tiny \pm 0.02}$ & $n.a.$\\
        & VGCN-BNN & ${95.46\tiny \pm 0.02}$ & ${5.64\tiny \pm 0.09}$ & ${97.94\tiny \pm 0.01}$ & ${90.73\tiny \pm 0.21}$ & ${90.09\tiny \pm 0.50}$ & $n.a.$ & ${66.98\tiny \pm 0.32}$ & ${61.27\tiny \pm 1.47}$ & $n.a.$\\
        & GKDE-GCN & ${\mathbf{95.61}\tiny \pm 0.00}$ & ${1.51\tiny \pm 0.02}$ & ${97.95\tiny \pm 0.00}$ & ${87.38\tiny \pm 0.09}$ & ${84.62\tiny \pm 0.19}$ & $n.a.$ & ${57.97\tiny \pm 0.22}$ & ${56.30\tiny \pm 0.51}$ & $n.a.$\\
        & GPN(16) & ${94.32\tiny \pm 0.02}$ & ${10.61\tiny \pm 0.03}$ & ${97.40\tiny \pm 0.01}$ & ${85.20\tiny \pm 0.17}$ & ${\mathbf{94.51}\tiny \pm 0.15}$ & ${{89.63}\tiny \pm 0.24}$ & ${61.89\tiny \pm 0.20}$ & ${\mathbf{83.73}\tiny \pm 0.31}$ & ${{66.44}\tiny \pm 0.65}$\\

        \midrule
        
         \multirow{14}{*}{\shortstack[l]{OGBN\\Arxiv}} 
        & LP & ${64.27}$ & ${61.77}$ & ${66.84}$ & ${\mathbf {80.04}}$ & ${{75.22}}$ & $n.a.$ & ${{65.21}}$ & ${\mathbf{67.69}}$ & $n.a.$\\
        & GKDE & ${48.87}$ & ${22.59}$ & ${51.51}$ & ${68.12}$ & ${65.80}$ & $n.a.$ & ${47.22}$ & ${45.23}$ & $n.a.$\\
        & Matern-GGP & $n.f.$ & $n.f.$ & $n.f.$ & $n.f.$ & $n.f.$ & $n.f.$ & $n.f.$ & $n.f.$ & $n.f.$\\
        & GGP & $n.f.$ & $n.f.$ & $n.f.$ & $n.f.$ & $n.f.$ & $n.f.$ & $n.f.$ & $n.f.$ & $n.f.$\\
        & APPNP & ${71.46\tiny \pm 0.09}$ & ${3.96\tiny \pm 0.06}$ & ${75.47\tiny \pm 0.15}$ & ${65.21\tiny \pm 0.14}$ & $n.a.$ & $n.a.$ & ${43.23\tiny \pm 0.06}$ & $n.a.$ & $n.a.$\\
        & VGCN & ${71.89\tiny \pm 0.08}$ & ${2.18\tiny \pm 0.10}$ & ${75.61\tiny \pm 0.11}$ & ${64.91\tiny \pm 0.28}$ & $n.a.$ & $n.a.$ & ${42.72\tiny \pm 0.32}$ & $n.a.$ & $n.a.$\\
        & VGCN-Dropout & ${71.76\tiny \pm 0.07}$ & ${2.29\tiny \pm 0.08}$ & ${75.47\tiny \pm 0.12}$ & ${65.35\tiny \pm 0.27}$ & ${64.24\tiny \pm 0.26}$ & $n.a.$ & ${43.09\tiny \pm 0.30}$ & ${41.58\tiny \pm 0.23}$ & $n.a.$\\
        & VGCN-Energy & ${71.89\tiny \pm 0.08}$ & ${2.18\tiny \pm 0.10}$ & ${75.61\tiny \pm 0.11}$ & ${64.91\tiny \pm 0.28}$ & ${64.50\tiny \pm 0.38}$ & $n.a.$ & ${42.72\tiny \pm 0.32}$ & ${42.41\tiny \pm 0.39}$ & $n.a.$\\
        & VGCN-Ensemble & ${\mathbf{72.59}}$ & ${\mathbf{2.10}}$ & ${\mathbf{76.12}}$ & ${65.93}$ & ${70.77}$ & $n.a.$ & ${43.84}$ & ${50.63}$ & $n.a.$\\
        & VGCN-BNN & $n.a.$ & $n.a.$ & $n.a.$ & $n.a.$ & $n.a.$ & $n.a.$ & $n.a.$ & $n.a.$ & $n.a.$\\
        & GKDE-GCN & ${68.99\tiny \pm 0.40}$ & ${8.43\tiny \pm 0.43}$ & ${73.89\tiny \pm 0.33}$ & ${68.84\tiny \pm 0.18}$ & ${72.44\tiny \pm 0.50}$ & $n.a.$ & ${49.71\tiny \pm 0.59}$ & ${52.23\tiny \pm 0.66}$ & $n.a.$\\
        & GPN & ${69.08\tiny \pm 0.21}$ & ${6.96\tiny \pm 0.31}$ & ${73.84\tiny \pm 0.21}$ & ${66.33\tiny \pm 0.23}$ & ${74.82\tiny \pm 0.27}$ & ${{62.17}\tiny \pm 0.23}$ & ${46.35\tiny \pm 0.26}$ & ${58.71\tiny \pm 0.34}$ & ${{43.01}\tiny \pm 0.36}$\\
        \bottomrule
    \end{tabular}}
    \caption{Accuracy and ECE scores on the clean graphs. Accuracy and OOD detection scores on Left-Out classes using AUC-ROC and AUC-PR scores. OOD-AUC-ROC and OOD-AUC-APR scores are given as \emph{[Alea w/ Net] / [Epist w/ Net] / [Epist w/o Net]}. $n.a.$ means either model or metric not applicable and $n.f.$ means not finished within our constraints. Bold numbers indicate best results for Accuracy, ECE and OOD detection.}
    \label{tab:clean_loc_auroc_two}
\end{table*}

\subsection{Additional Experiments - Attributed Graph Shifts}

In this section, we provide additional results for the attributed graph shifts experiments. We show the results of the feature shifts and the results of the edge shifts in Figures \ref{fig:shift-cora} to \ref{fig:shift-ogbn-arxiv}. The feature shifts include Bernoulli and Normal shifts (i.e. $\x\nodeidxv \sim \DBer(0.5)$ and $\x\nodeidxv \sim \DNormal(\veczero, \vecone)$) with up to $99\%$ of the nodes being perturbed. The edge shifts include randomly moved edges and DICE attacks \citep{Waniek2018} where we perturb up to $99\%$ of the edges in the graph. The results are consistent with the observations made in Sec.~\ref{sec:experiments}. For feature shifts, we observe that \oursacro{} is more robust to feature perturbations than competitors for accuracy and calibration similarly to results Tab.~\ref{fig:shifts-normal-cora}. \oursacro{} is particularly robust against unit Gaussians perturbations. Further, as desired, \oursacro{} is more epistemically uncertain when features of a larger fraction of nodes are perturbed. For edge shifts, all models including \oursacro{} become more aleatorically uncertain. This aligns with Ax.~\ref{ax:certainty_network_aleatoric}. Furthermore, the average epistemic uncertainty of \oursacro{} remains constant which is reasonable since the average evidence of a node's neighborhood remains constant. We observed that the exponential activation in the last layer of GKDE-GCN leads to numerical instabilities under perturbations.
\begin{figure}
    \centering
    \begin{subfigure}{\textwidth}
        \includegraphics[width=\textwidth]{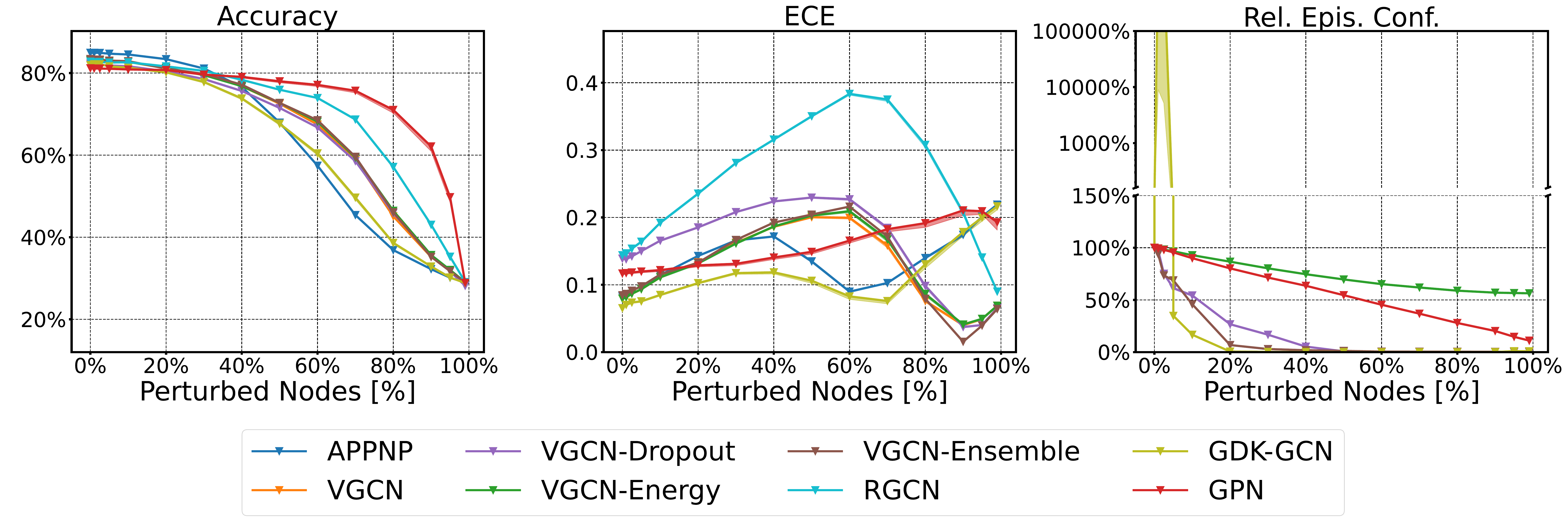}
        \caption{Bernoulli Feature Shift}
    \end{subfigure}
    \begin{subfigure}{\textwidth}
        \includegraphics[width=\textwidth]{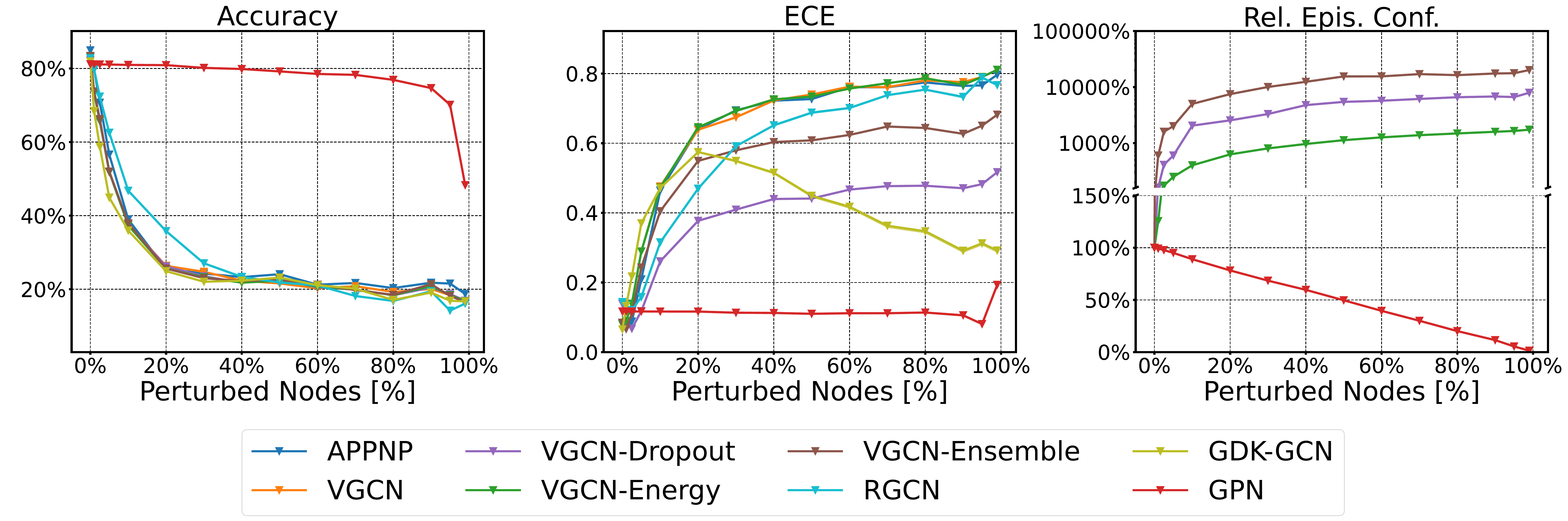}
        \caption{Unit Gaussian Feature Shift}
    \end{subfigure}
    \begin{subfigure}{\textwidth}
        \includegraphics[width=\textwidth]{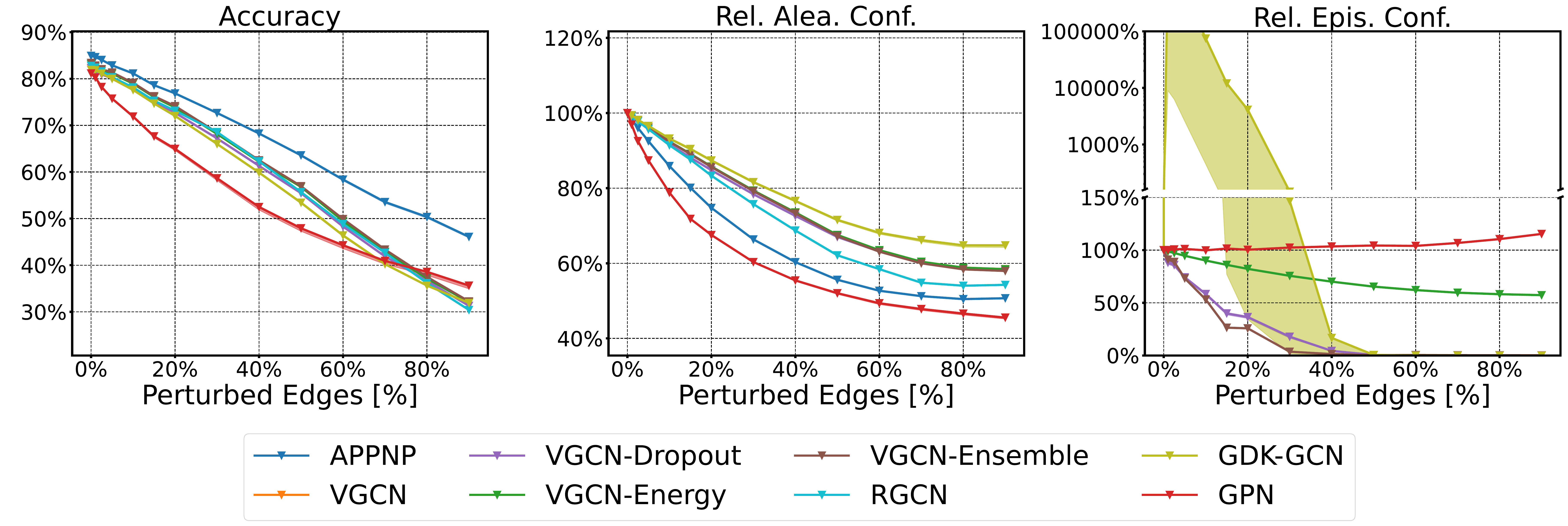}
        \caption{Random Edge Shift}
    \end{subfigure}
    \begin{subfigure}{\textwidth}
        \includegraphics[width=\textwidth]{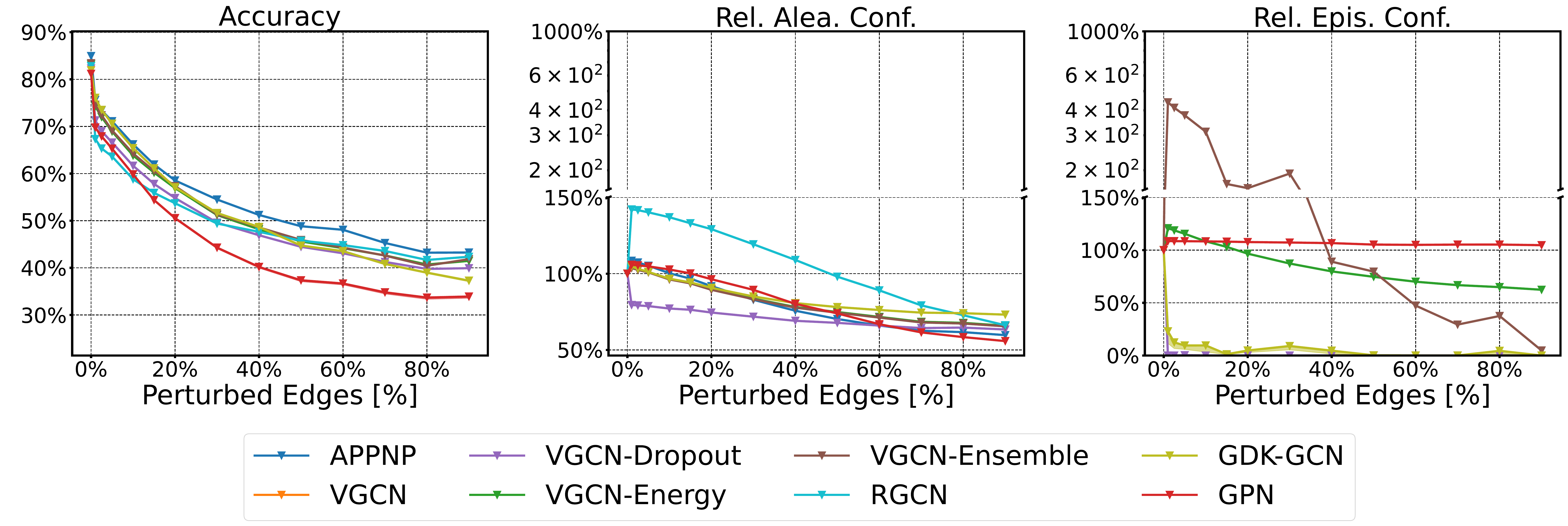}
        \caption{DICE \citep{Waniek2018} Edge Shift}
    \end{subfigure}
    \caption{Relative performance over different degrees of corruption of \emph{CoraML}. For feature shifts, we perturb different fractions of nodes (whose features are replaced with either random vectors from a Bernoulli noise or a Unit Gaussian noise) and show accuracy, ECE, and relative average epistemic confidence. For edge shifts, we perturb different fractions of edges (by replacing them at random or using the global and untargeted DICE \citep{Waniek2018} attack) and show accuracy, relative average aleatoric confidence, and relative average epistemic confidence.}
    \label{fig:shift-cora}
\end{figure}
\begin{figure}
    \centering
    \begin{subfigure}{\textwidth}
        \includegraphics[width=\textwidth]{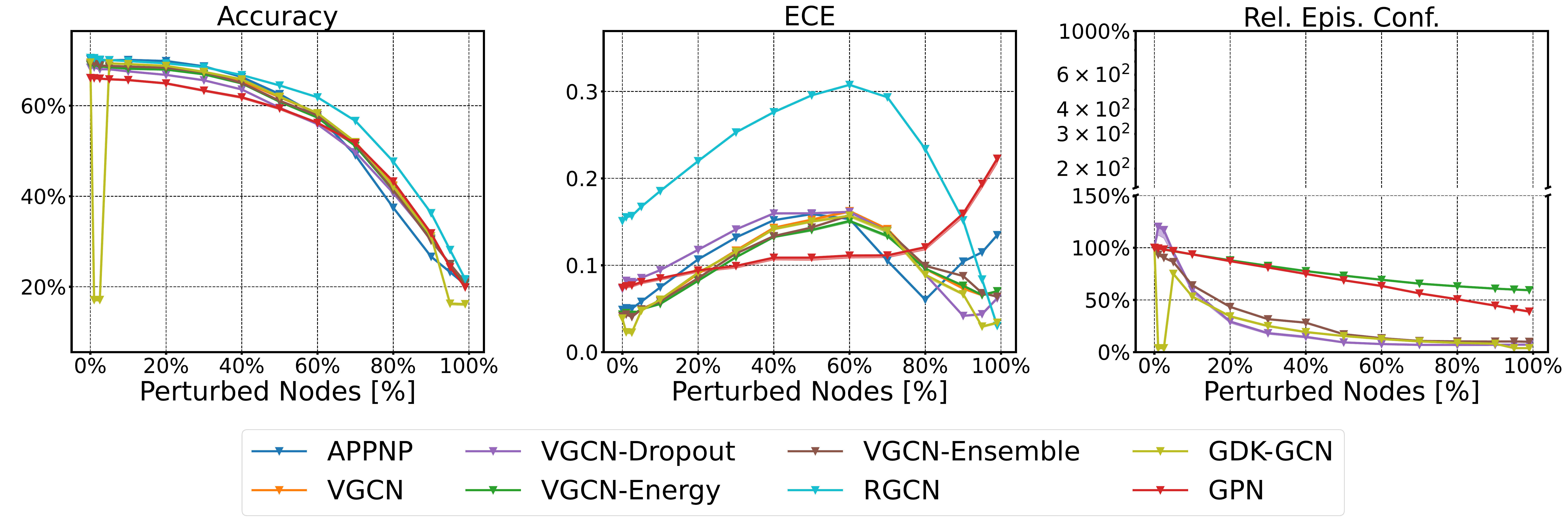}
        \caption{Bernoulli Feature Shift}
    \end{subfigure}
    \begin{subfigure}{\textwidth}
        \includegraphics[width=\textwidth]{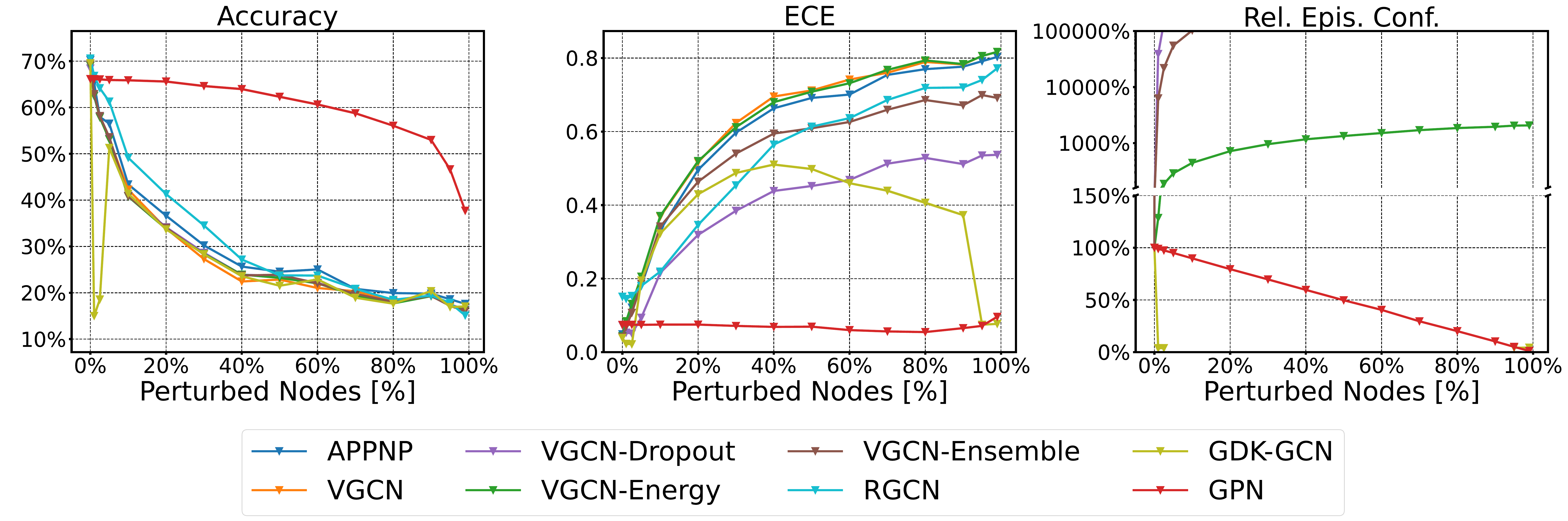}
        \caption{Unit Gaussian Feature Shift}
    \end{subfigure}
    \begin{subfigure}{\textwidth}
        \includegraphics[width=\textwidth]{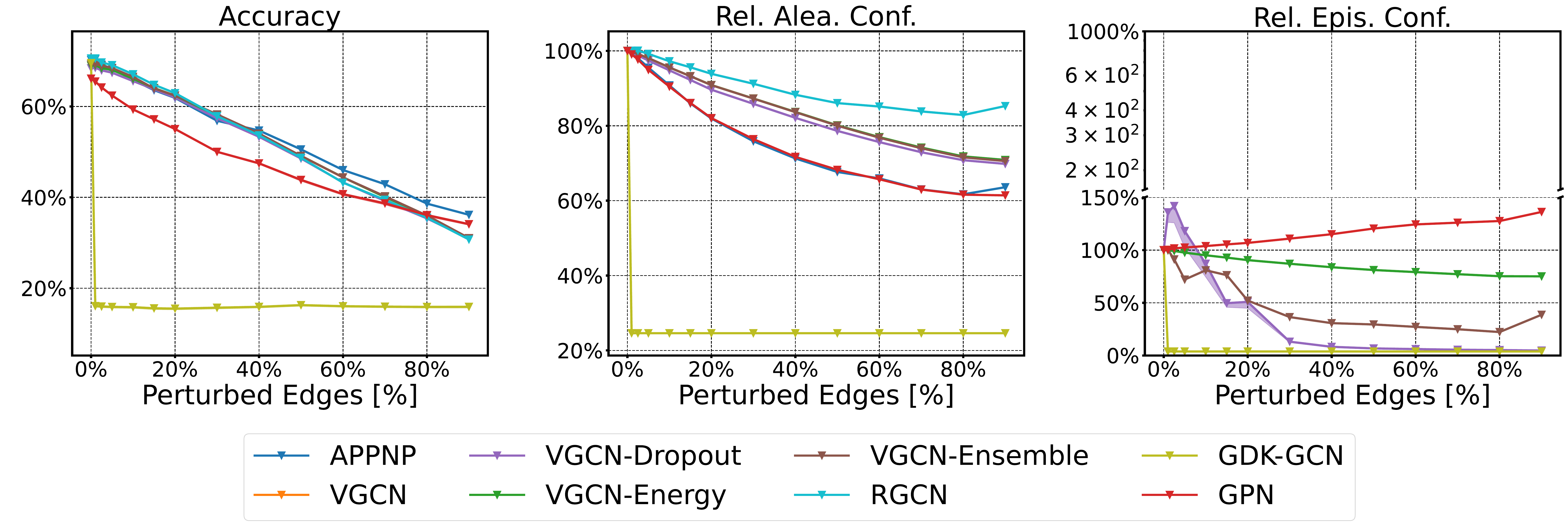}
        \caption{Random Edge Shift}
    \end{subfigure}
    \begin{subfigure}{\textwidth}
        \includegraphics[width=\textwidth]{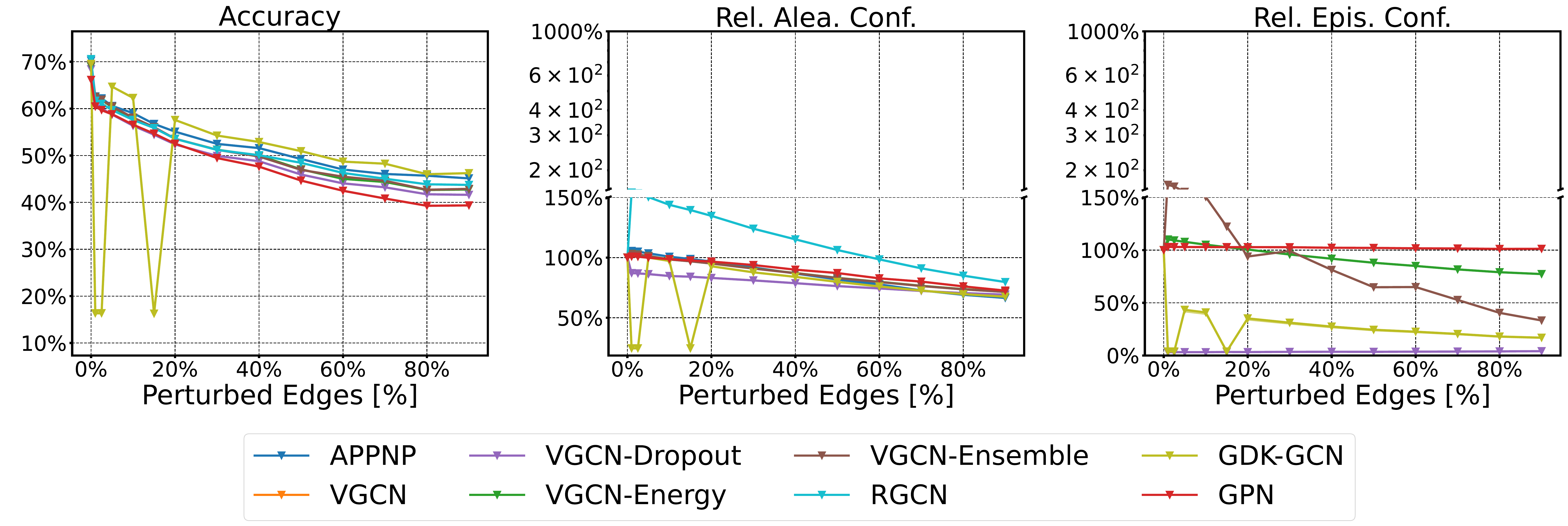}
        \caption{DICE \citep{Waniek2018} Edge Shift}
    \end{subfigure}
    \caption{Relative performance over different degrees of corruption of \emph{CiteSeer}. For feature shifts, we perturb different fractions of nodes (whose features are replaced with either random vectors from a Bernoulli noise or a Unit Gaussian noise) and show accuracy, ECE, and relative average epistemic confidence. For edge shifts, we perturb different fractions of edges (by replacing them at random or using the global and untargeted DICE \citep{Waniek2018} attack) and show accuracy, relative average aleatoric confidence, and relative average epistemic confidence.}
    \label{fig:shift-citeseer}
\end{figure}
\begin{figure}
    \centering
    \begin{subfigure}{\textwidth}
        \includegraphics[width=\textwidth]{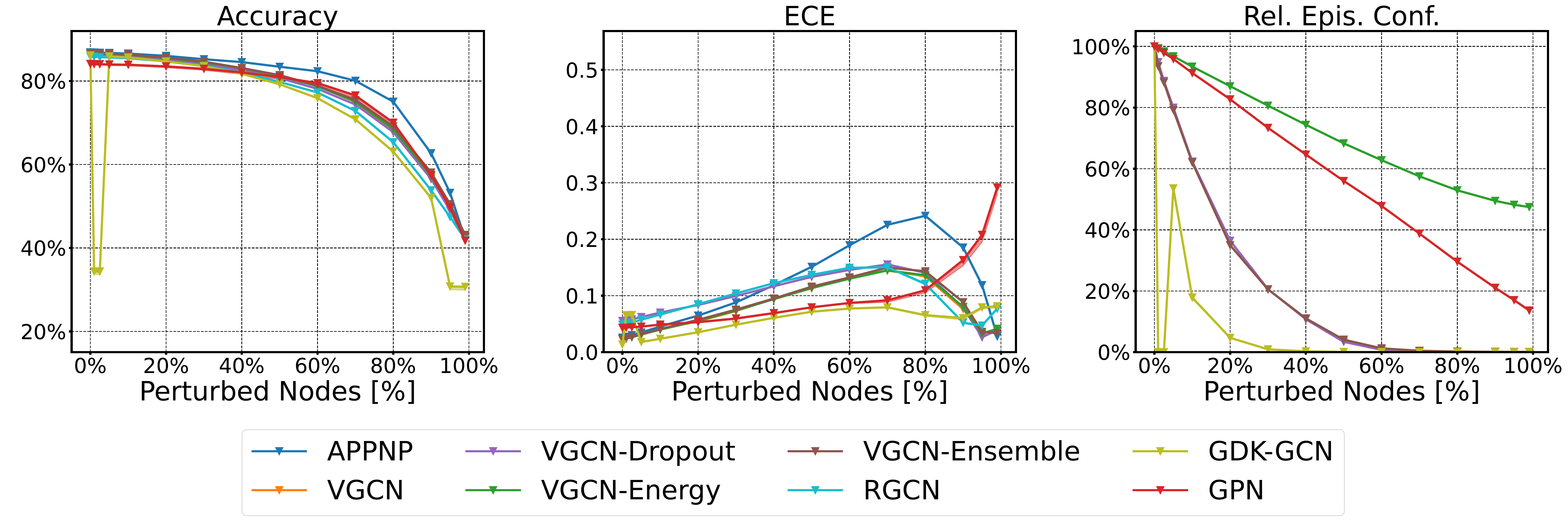}
        \caption{Bernoulli Feature Shift}
    \end{subfigure}
    \begin{subfigure}{\textwidth}
        \includegraphics[width=\textwidth]{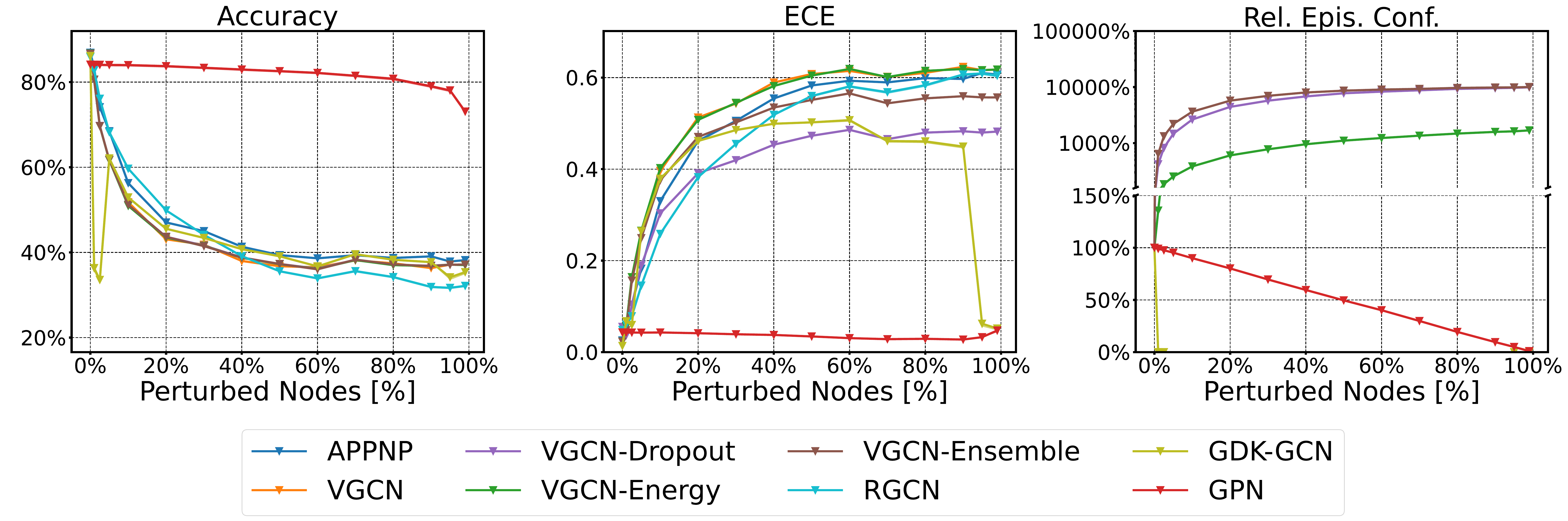}
        \caption{Unit Gaussian Feature Shift}
    \end{subfigure}
    \begin{subfigure}{\textwidth}
        \includegraphics[width=\textwidth]{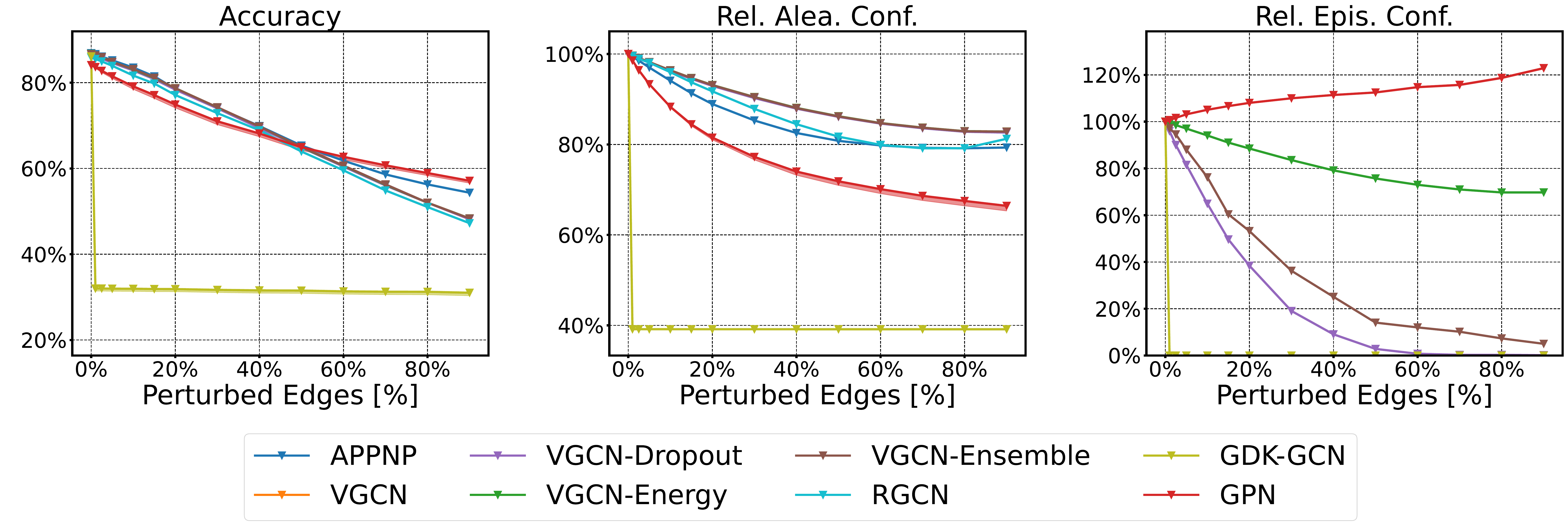}
        \caption{Random Edge Shift}
    \end{subfigure}
    \begin{subfigure}{\textwidth}
        \includegraphics[width=\textwidth]{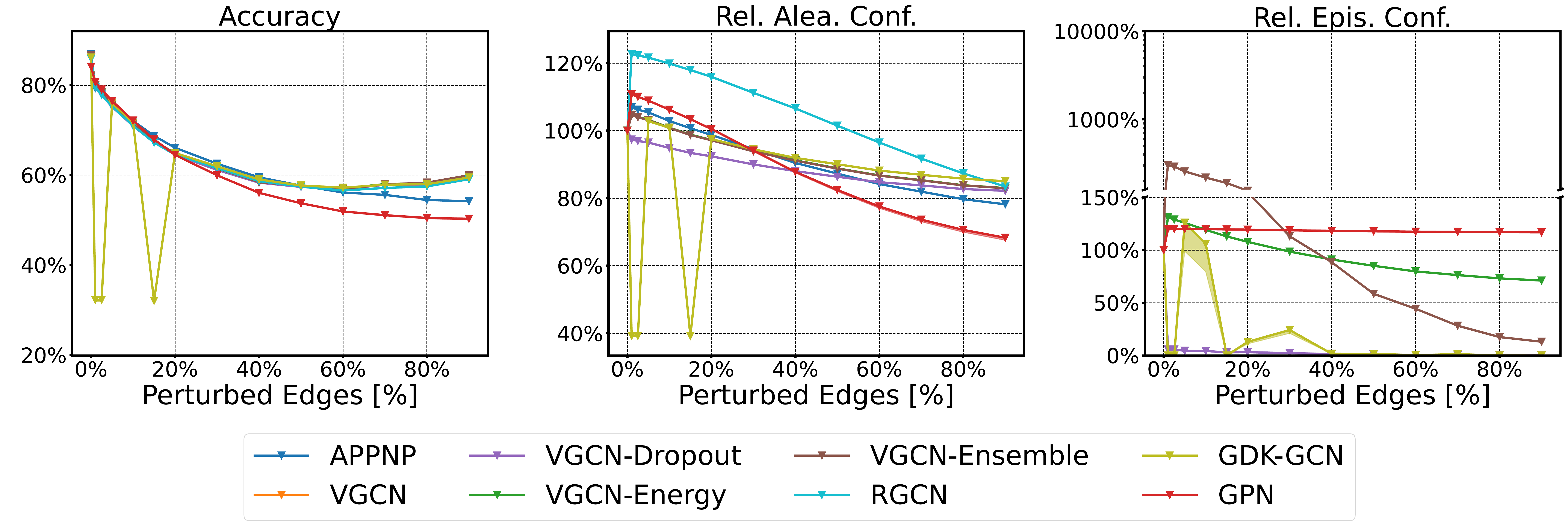}
        \caption{DICE \citep{Waniek2018} Edge Shift}
    \end{subfigure}
    \caption{Relative performance over different degrees of corruption of \emph{PubMed}. For feature shifts, we perturb different fractions of nodes (whose features are replaced with either random vectors from a Bernoulli noise or a Unit Gaussian noise) and show accuracy, ECE, and relative average epistemic confidence. For edge shifts, we perturb different fractions of edges (by replacing them at random or using the global and untargeted DICE \citep{Waniek2018} attack) and show accuracy, relative average aleatoric confidence, and relative average epistemic confidence.}
    \label{fig:shift-pubmed}
\end{figure}
\begin{figure}
    \centering
    \begin{subfigure}{\textwidth}
        \includegraphics[width=\textwidth]{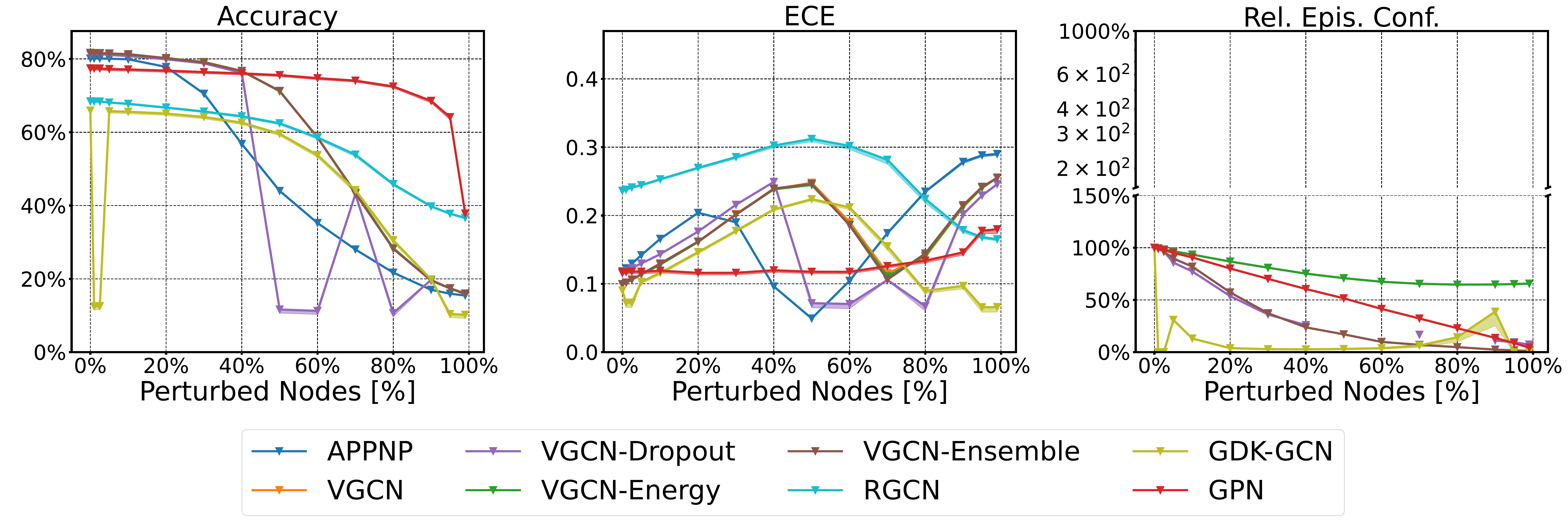}
        \caption{Bernoulli Feature Shift}
    \end{subfigure}
    \begin{subfigure}{\textwidth}
        \includegraphics[width=\textwidth]{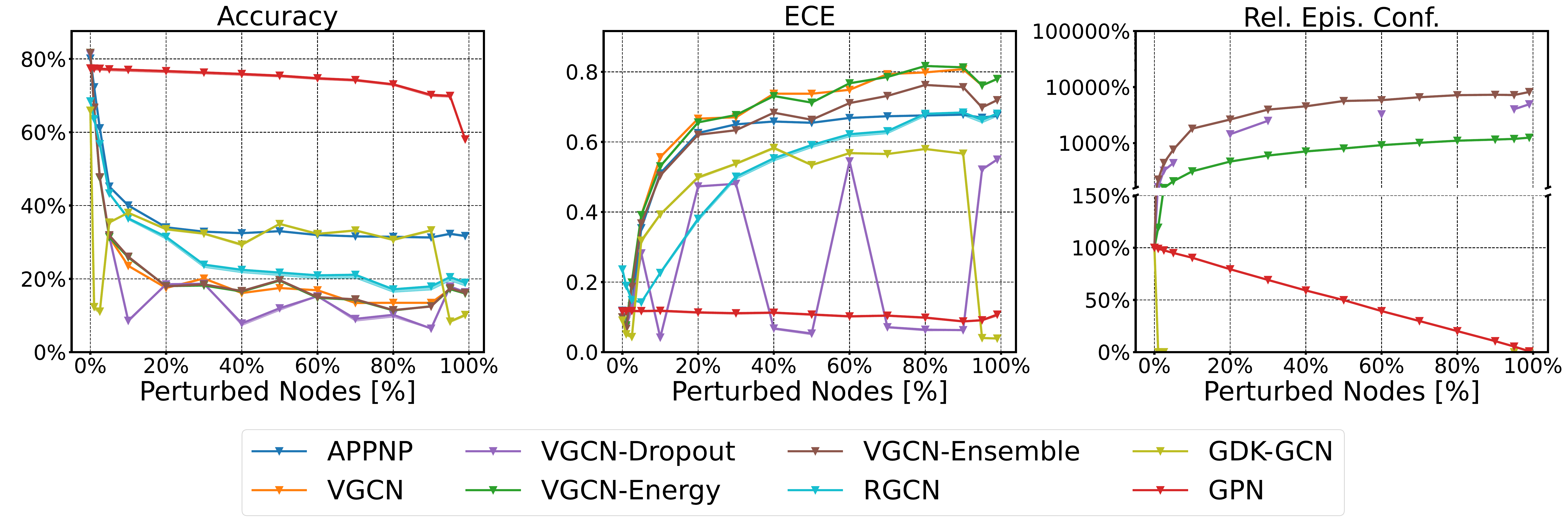}
        \caption{Unit Gaussian Feature Shift}
    \end{subfigure}
    \begin{subfigure}{\textwidth}
        \includegraphics[width=\textwidth]{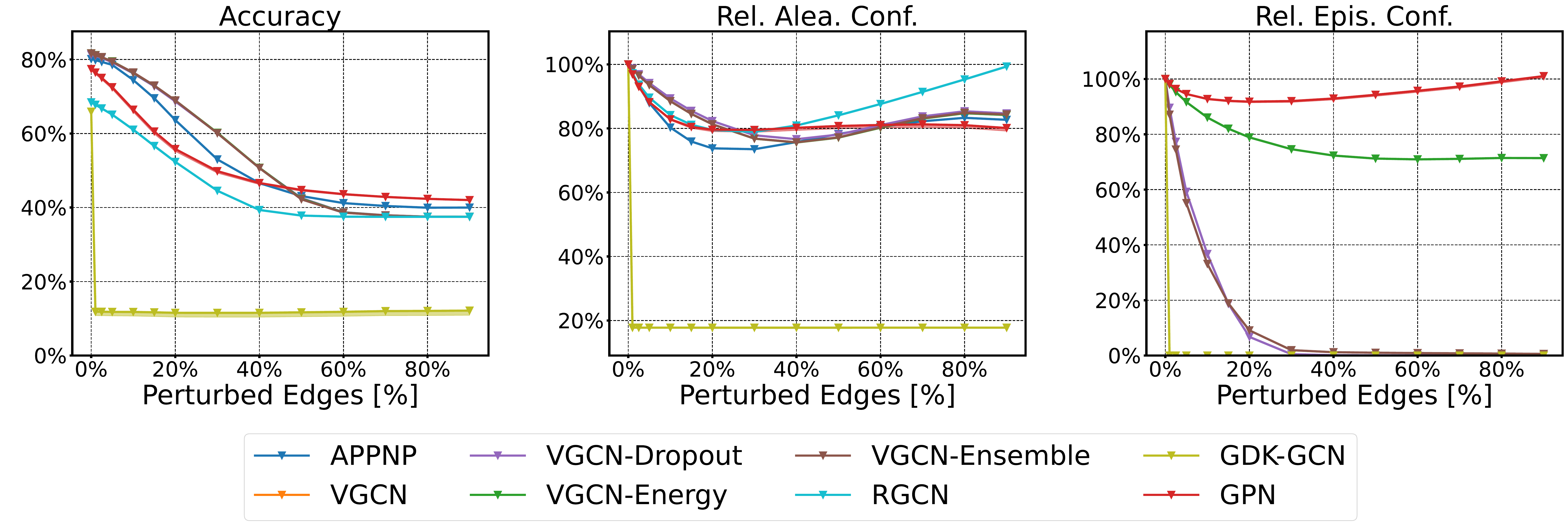}
        \caption{Random Edge Shift}
    \end{subfigure}
    \begin{subfigure}{\textwidth}
        \includegraphics[width=\textwidth]{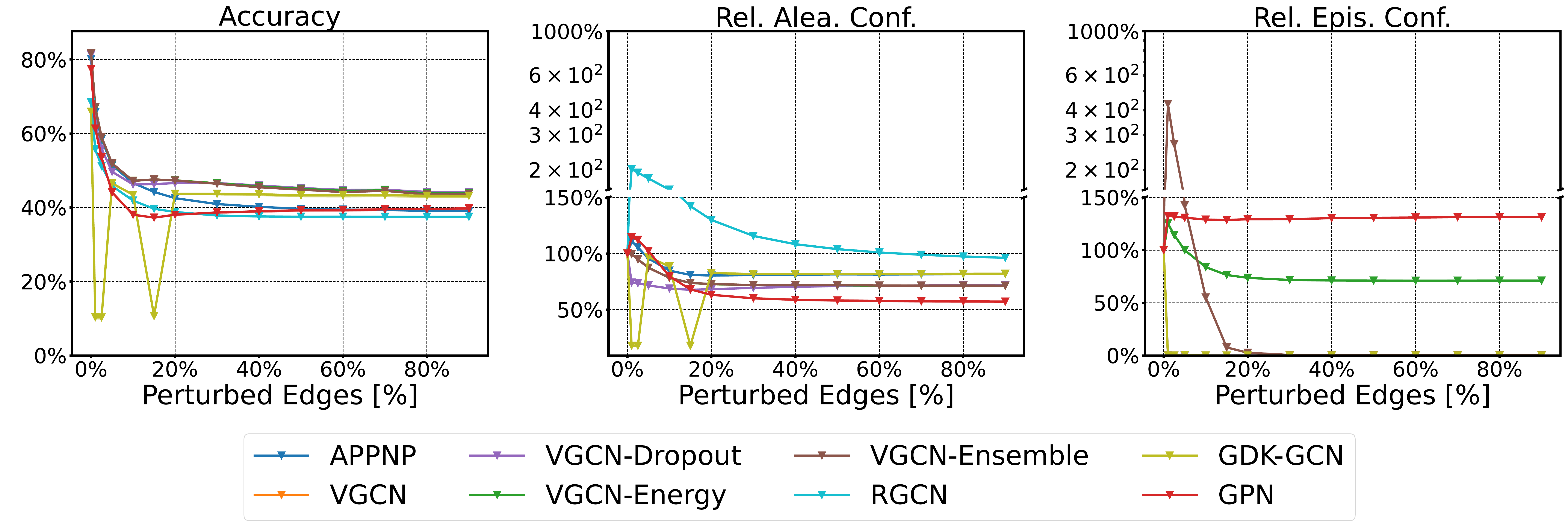}
        \caption{DICE \citep{Waniek2018} Edge Shift}
    \end{subfigure}
    \caption{Relative performance over different degrees of corruption of \emph{Amazon Computers}. For feature shifts, we perturb different fractions of nodes (whose features are replaced with either random vectors from a Bernoulli noise or a Unit Gaussian noise) and show accuracy, ECE, and relative average epistemic confidence. For edge shifts, we perturb different fractions of edges (by replacing them at random or using the global and untargeted DICE \citep{Waniek2018} attack) and show accuracy, relative average aleatoric confidence, and relative average epistemic confidence.}
    \label{fig:shift-amazon-computers}
\end{figure}
\begin{figure}
    \centering
    \begin{subfigure}{\textwidth}
        \includegraphics[width=\textwidth]{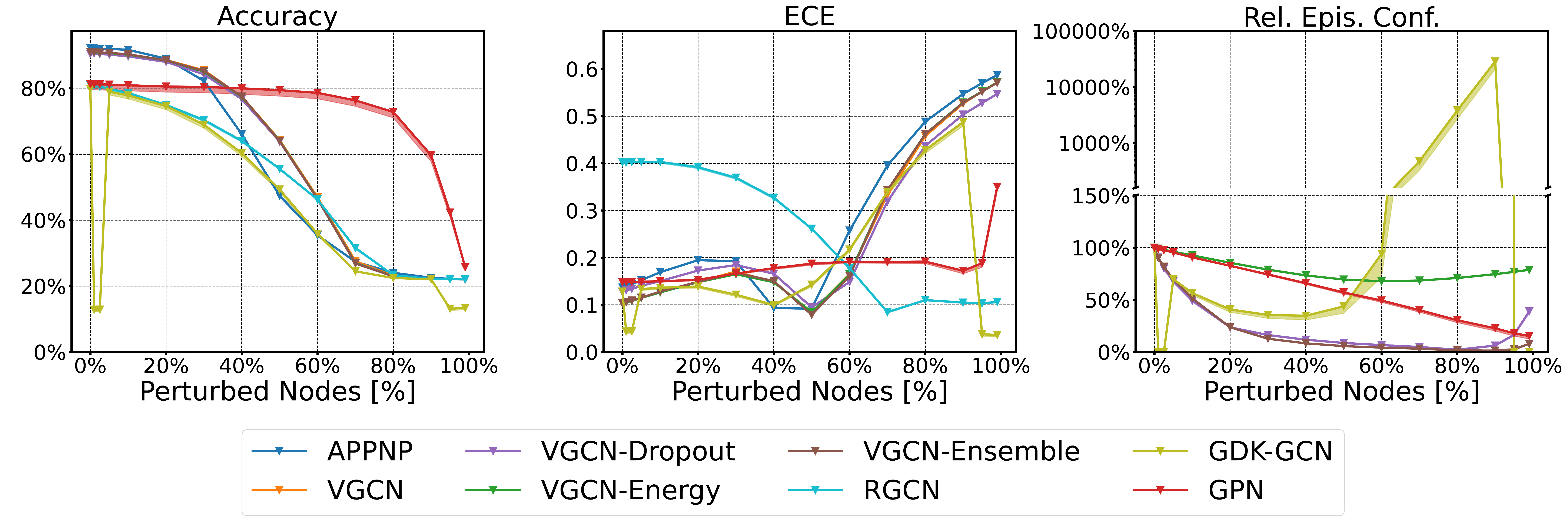}
        \caption{Bernoulli Feature Shift}
    \end{subfigure}
    \begin{subfigure}{\textwidth}
        \includegraphics[width=\textwidth]{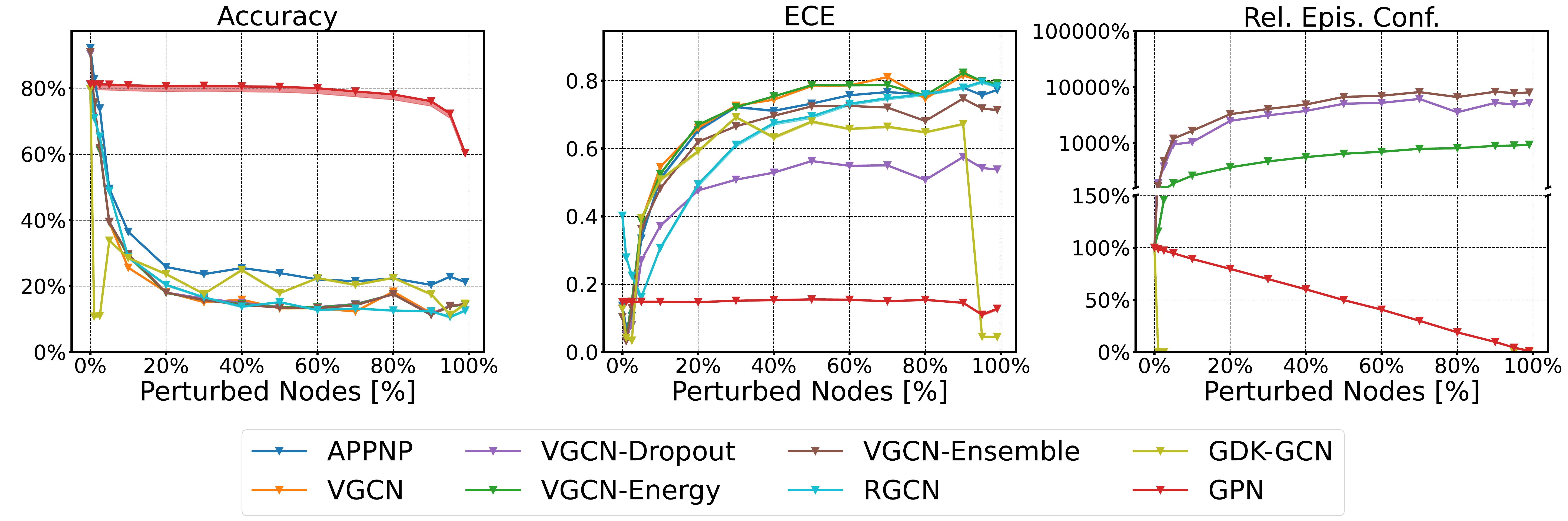}
        \caption{Unit Gaussian Feature Shift}
    \end{subfigure}
    \begin{subfigure}{\textwidth}
        \includegraphics[width=\textwidth]{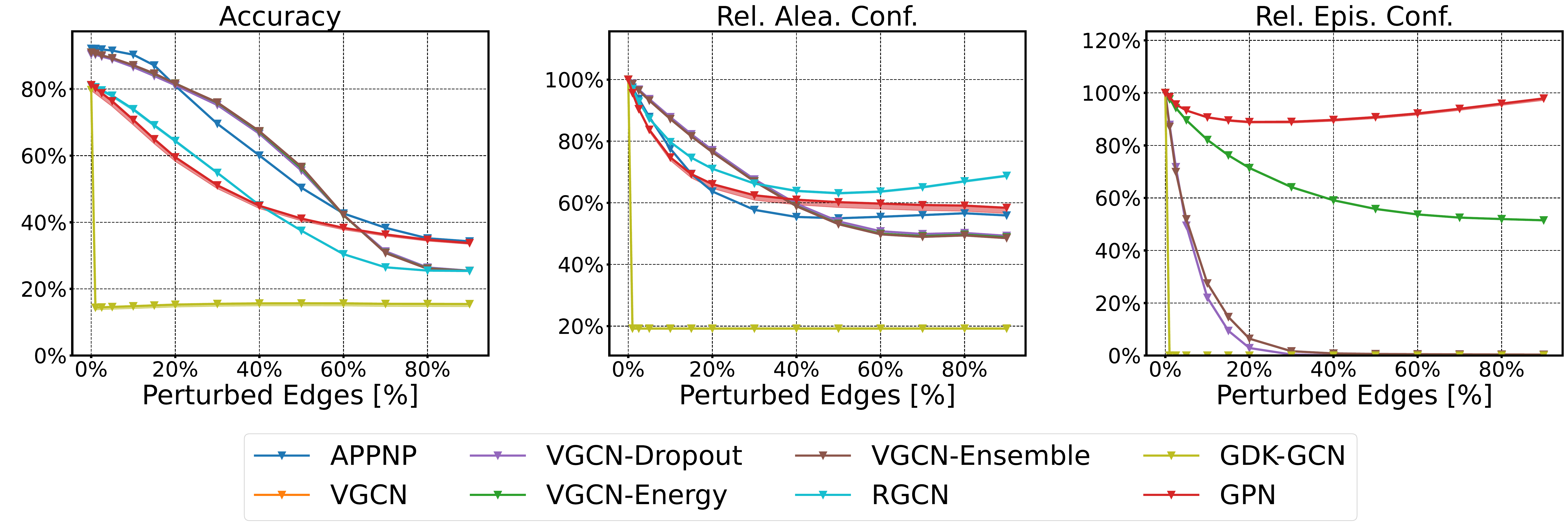}
        \caption{Random Edge Shift}
    \end{subfigure}
    \begin{subfigure}{\textwidth}
        \includegraphics[width=\textwidth]{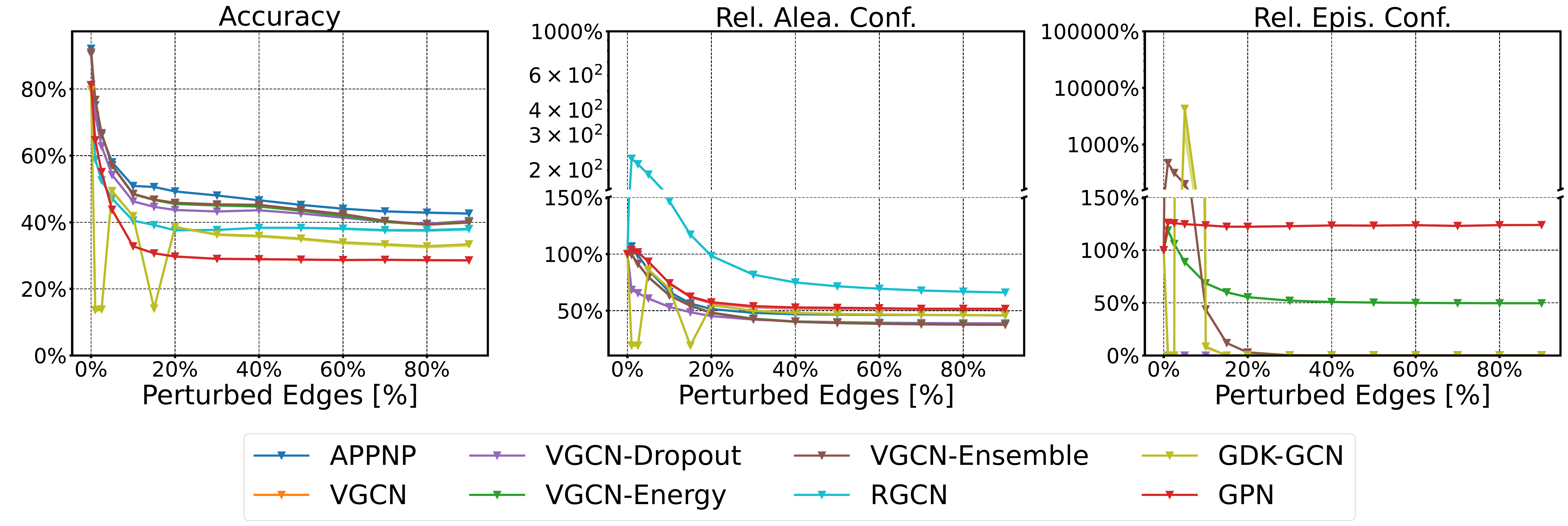}
        \caption{DICE \citep{Waniek2018} Edge Shift}
    \end{subfigure}
    \caption{Relative performance over different degrees of corruption of \emph{Amazon Photos}. For feature shifts, we perturb different fractions of nodes (whose features are replaced with either random vectors from a Bernoulli noise or a Unit Gaussian noise) and show accuracy, ECE, and relative average epistemic confidence. For edge shifts, we perturb different fractions of edges (by replacing them at random or using the global and untargeted DICE \citep{Waniek2018} attack) and show accuracy, relative average aleatoric confidence, and relative average epistemic confidence.}
    \label{fig:shift-amazon-photos}
\end{figure}
\begin{figure}
    \centering
    \begin{subfigure}{\textwidth}
        \includegraphics[width=\textwidth]{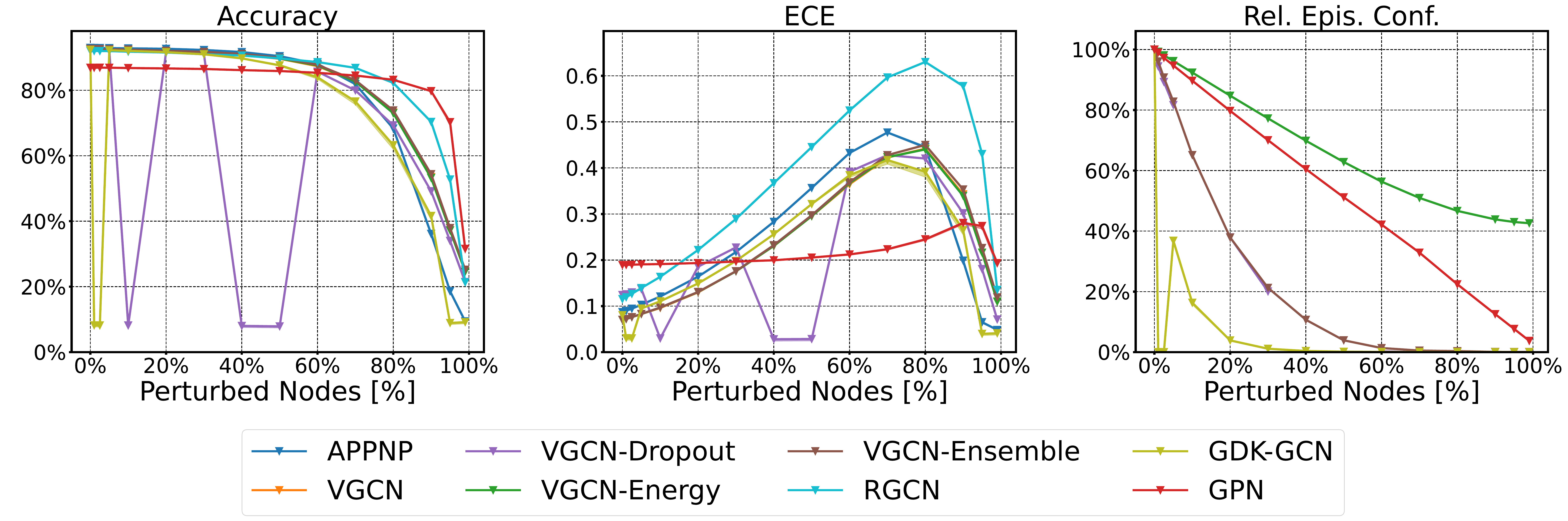}
        \caption{Bernoulli Feature Shift}
    \end{subfigure}
    \begin{subfigure}{\textwidth}
        \includegraphics[width=\textwidth]{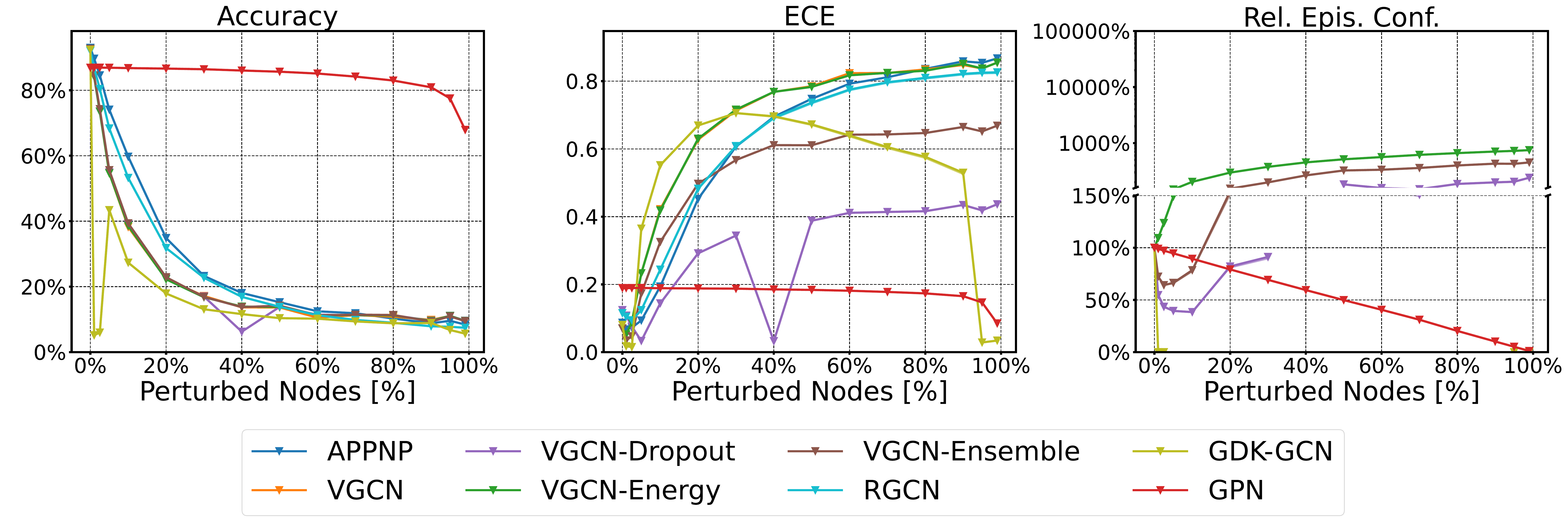}
        \caption{Unit Gaussian Feature Shift}
    \end{subfigure}
    \begin{subfigure}{\textwidth}
        \includegraphics[width=\textwidth]{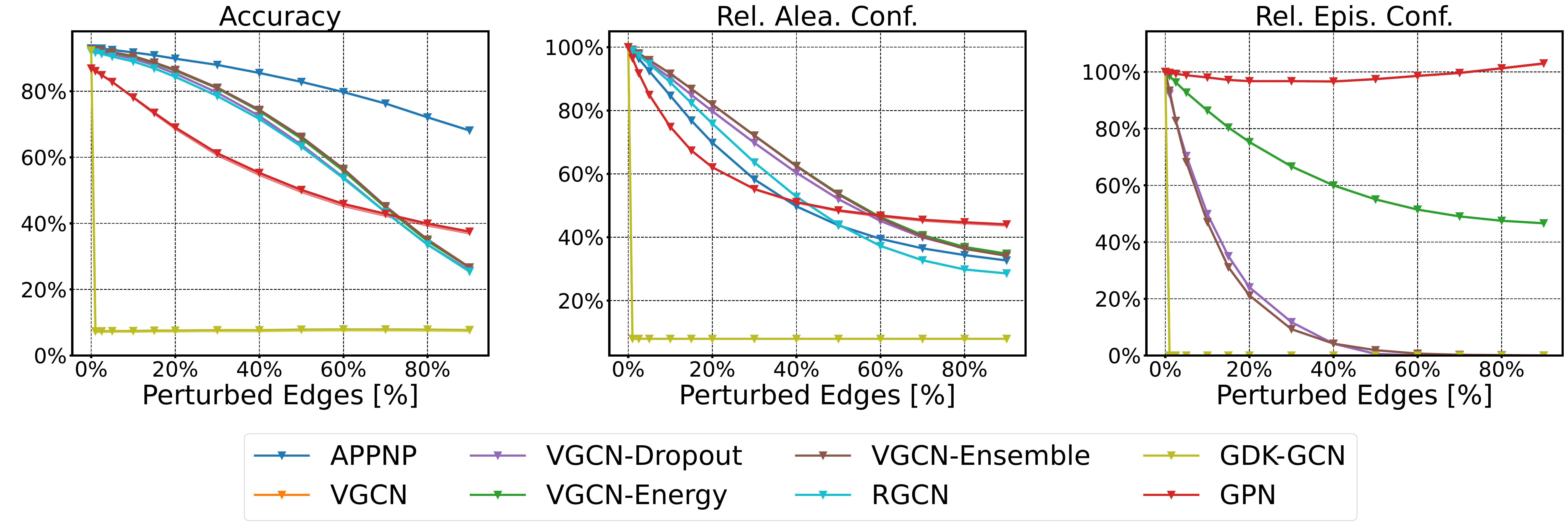}
        \caption{Random Edge Shift}
    \end{subfigure}
    \begin{subfigure}{\textwidth}
        \includegraphics[width=\textwidth]{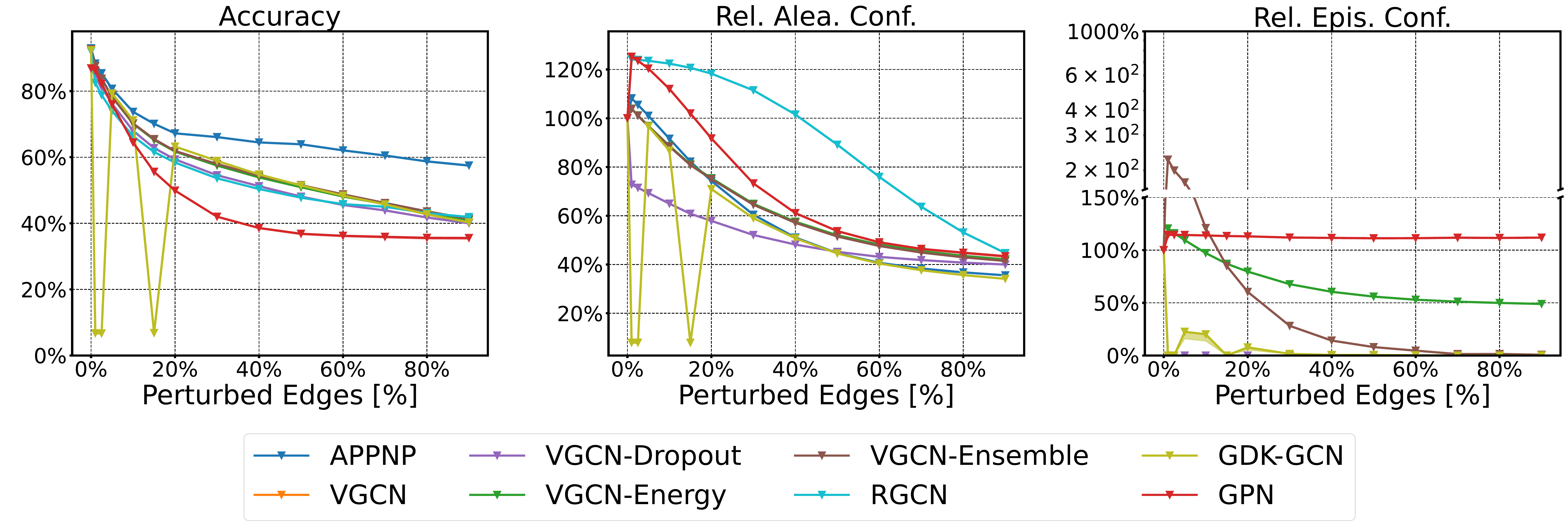}
        \caption{DICE \citep{Waniek2018} Edge Shift}
    \end{subfigure}
    \caption{Relative performance over different degrees of corruption of \emph{Coauthor CS}. For feature shifts, we perturb different fractions of nodes (whose features are replaced with either random vectors from a Bernoulli noise or a Unit Gaussian noise) and show accuracy, ECE, and relative average epistemic confidence. For edge shifts, we perturb different fractions of edges (by replacing them at random or using the global and untargeted DICE \citep{Waniek2018} attack) and show accuracy, relative average aleatoric confidence, and relative average epistemic confidence.}
    \label{fig:shift-coauthor-cs}
\end{figure}
\begin{figure}
    \centering
    \begin{subfigure}{\textwidth}
        \includegraphics[width=\textwidth]{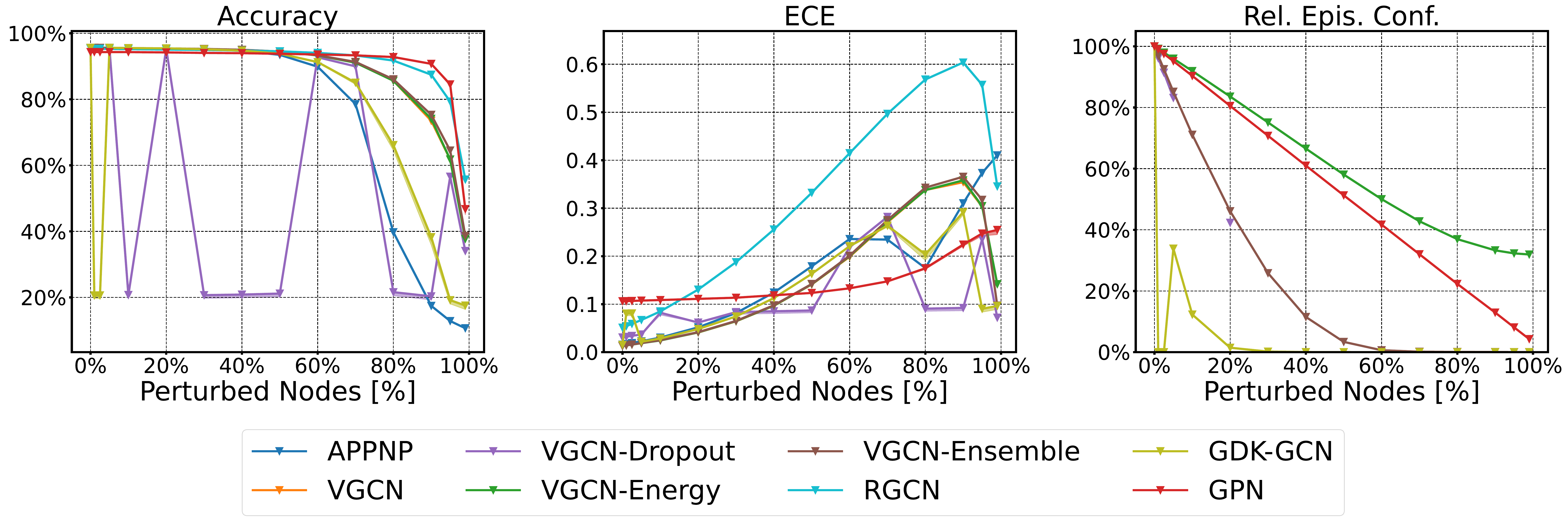}
        \caption{Bernoulli Feature Shift}
    \end{subfigure}
    \begin{subfigure}{\textwidth}
        \includegraphics[width=\textwidth]{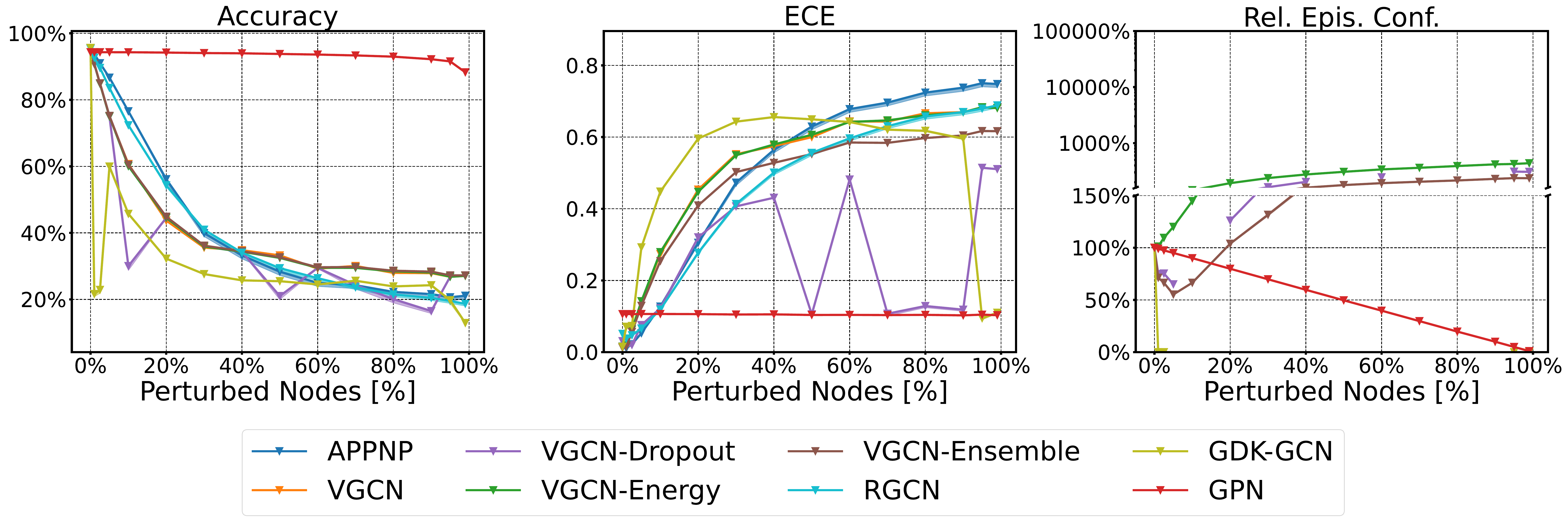}
        \caption{Unit Gaussian Feature Shift}
    \end{subfigure}
    \begin{subfigure}{\textwidth}
        \includegraphics[width=\textwidth]{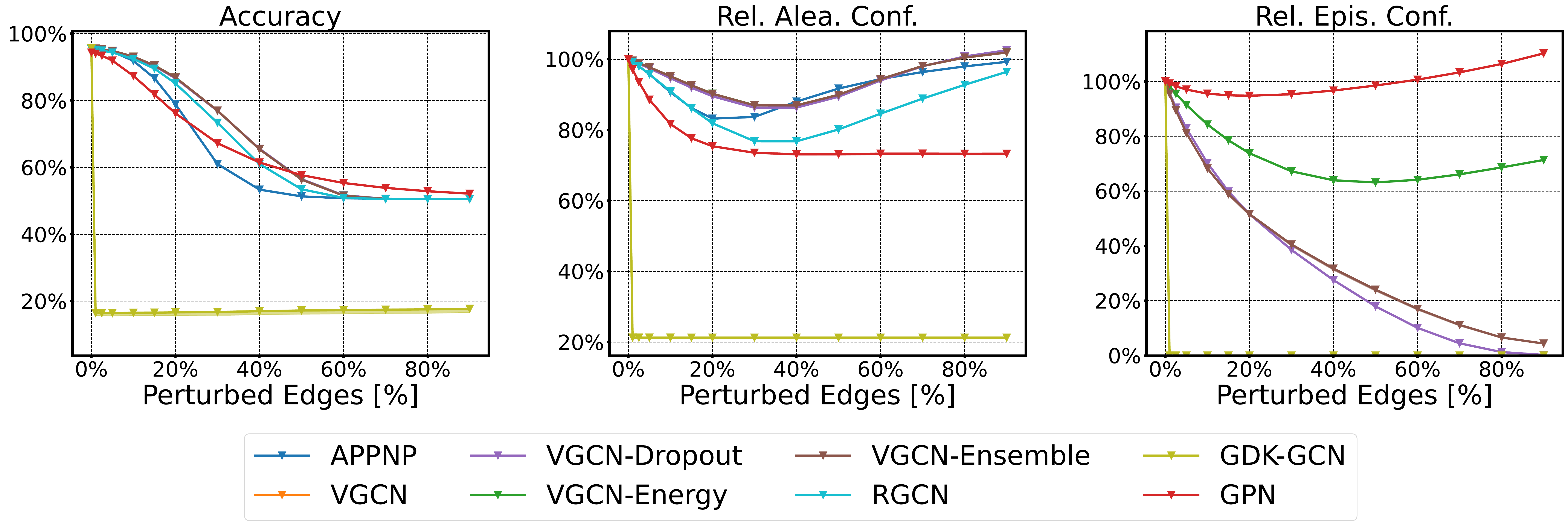}
        \caption{Random Edge Shift}
    \end{subfigure}
    \begin{subfigure}{\textwidth}
        \includegraphics[width=\textwidth]{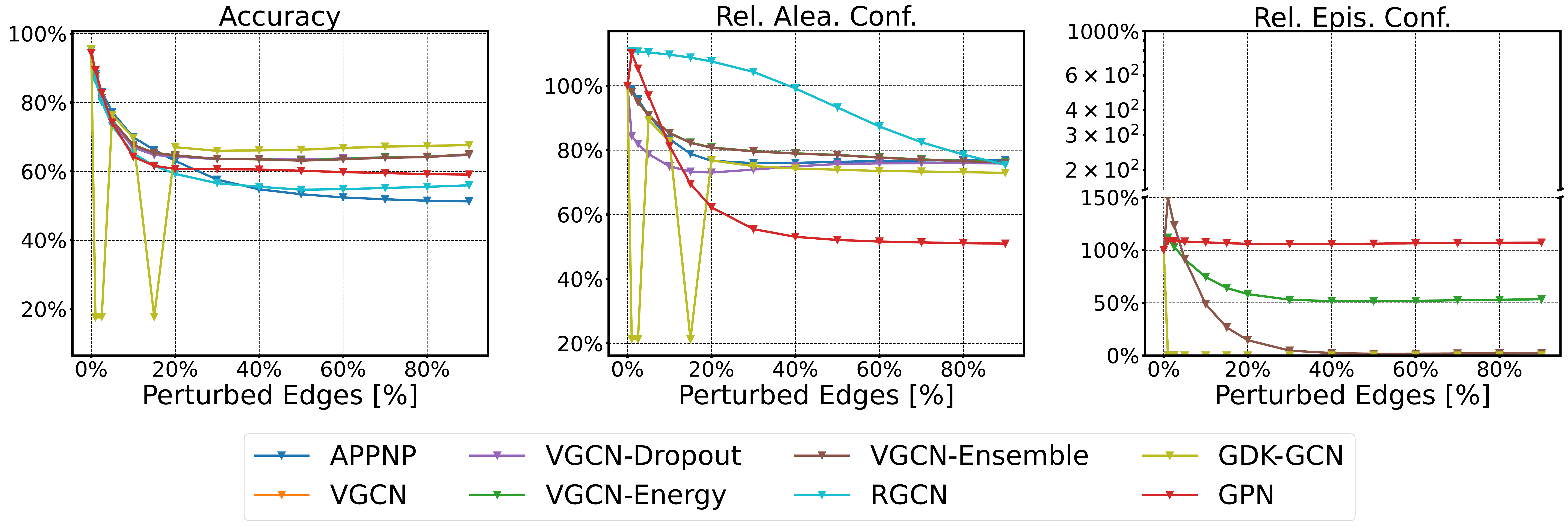}
        \caption{DICE \citep{Waniek2018} Edge Shift}
    \end{subfigure}
    \caption{Relative performance over different degrees of corruption of \emph{Coauthor Physics}. For feature shifts, we perturb different fractions of nodes (whose features are replaced with either random vectors from a Bernoulli noise or a Unit Gaussian noise) and show accuracy, ECE, and relative average epistemic confidence. For edge shifts, we perturb different fractions of edges (by replacing them at random or using the global and untargeted DICE \citep{Waniek2018} attack) and show accuracy, relative average aleatoric confidence, and relative average epistemic confidence.}
    \label{fig:shift-coauthor-physics}
\end{figure}
\begin{figure}
    \centering
    \begin{subfigure}{\textwidth}
        \includegraphics[width=\textwidth]{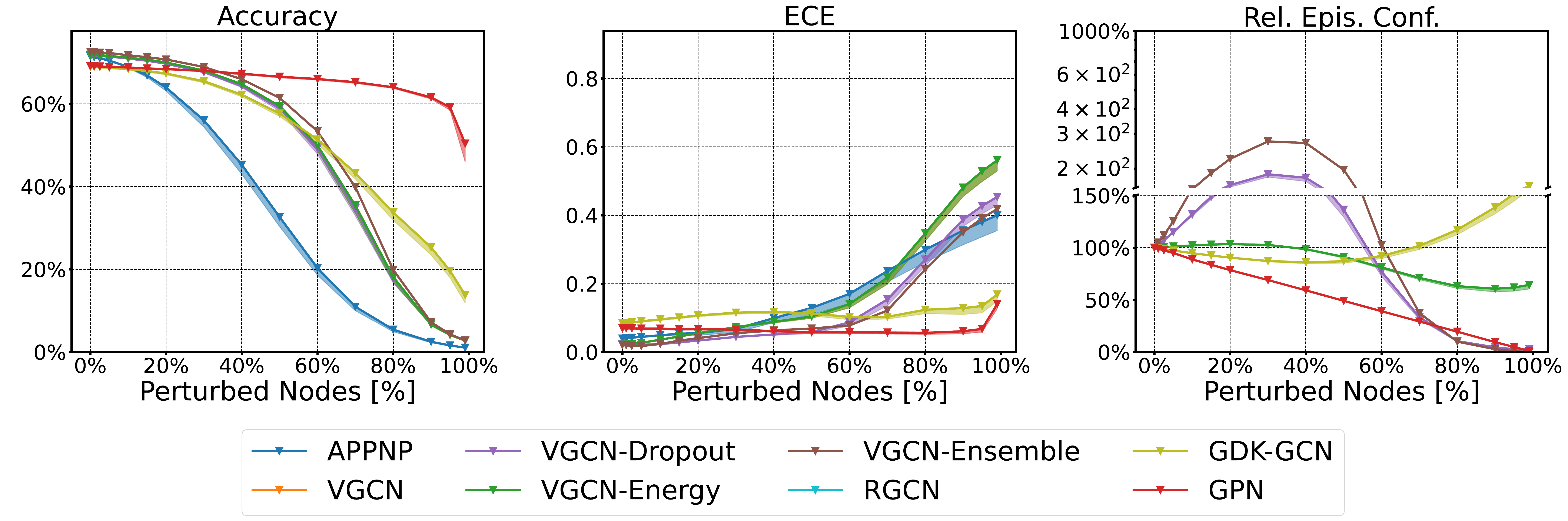}
        \caption{Bernoulli Feature Shift}
    \end{subfigure}
    \begin{subfigure}{\textwidth}
        \includegraphics[width=\textwidth]{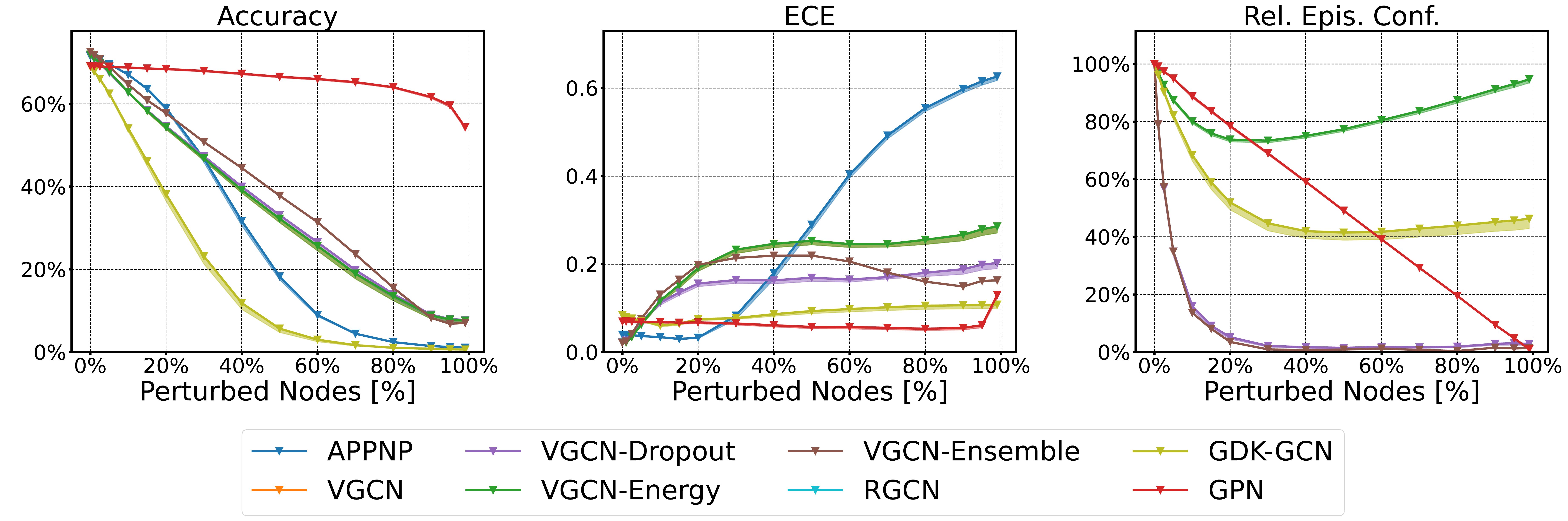}
        \caption{Unit Gaussian Feature Shift}
    \end{subfigure}
    \begin{subfigure}{\textwidth}
        \includegraphics[width=\textwidth]{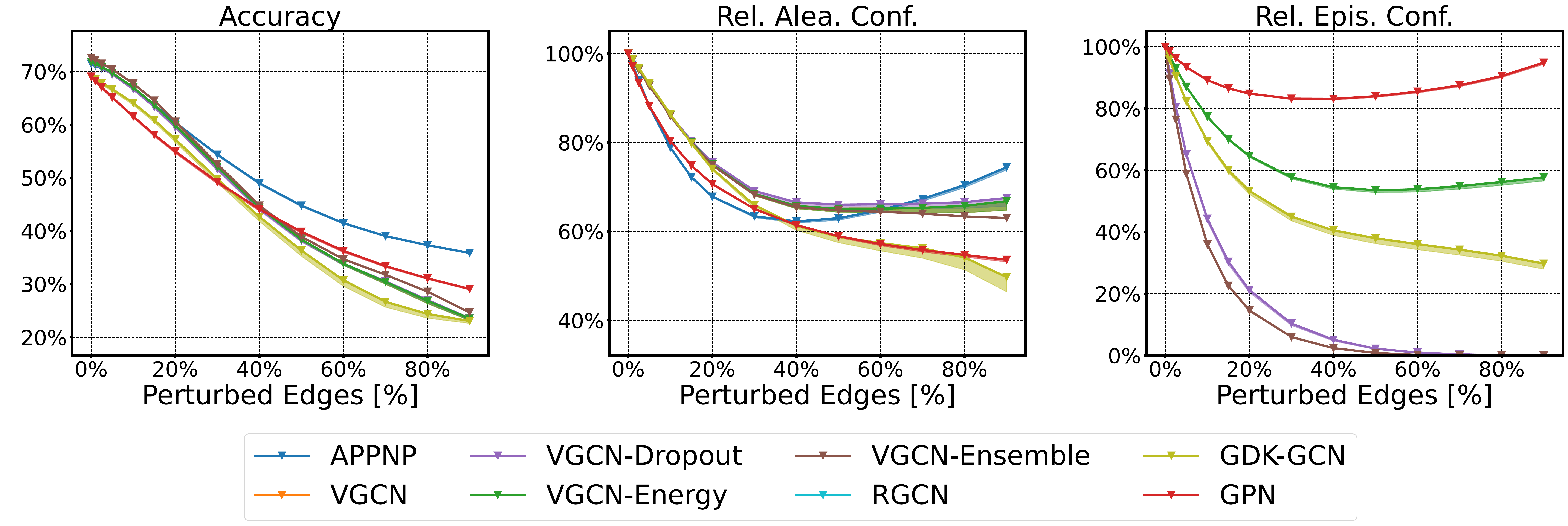}
        \caption{Random Edge Shift}
    \end{subfigure}
    \begin{subfigure}{\textwidth}
        \includegraphics[width=\textwidth]{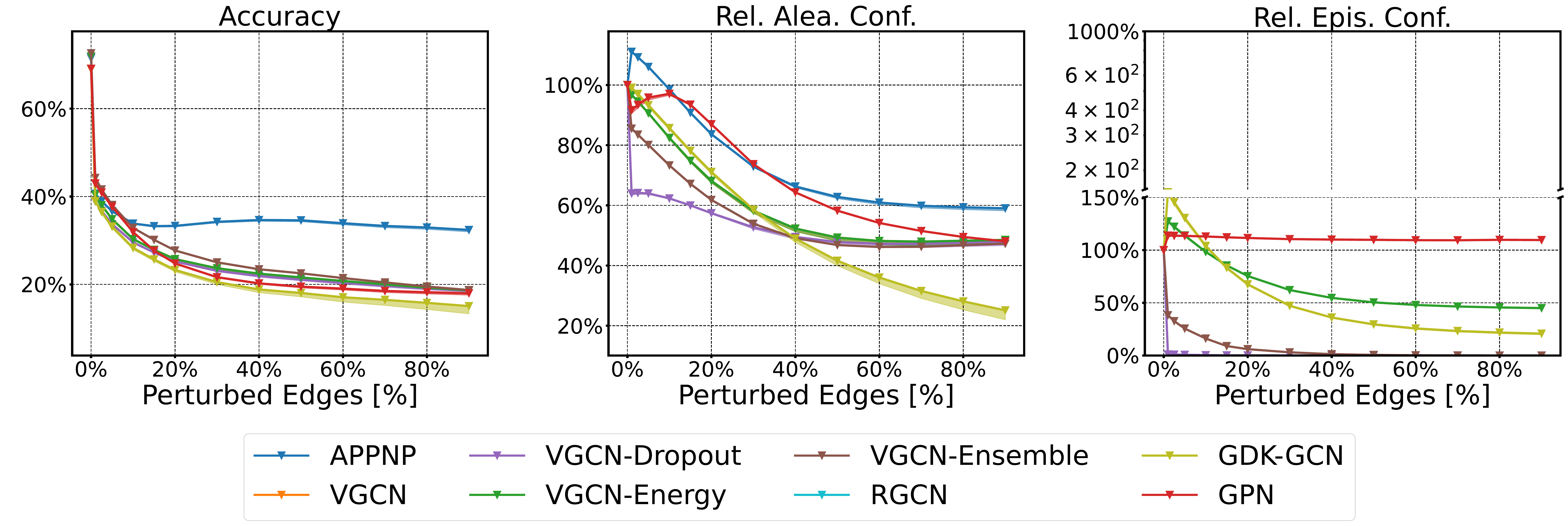}
        \caption{DICE \citep{Waniek2018} Edge Shift}
    \end{subfigure}
    \caption{Relative performance over different degrees of corruption of \emph{OGBN Arxiv}. For feature shifts, we perturb different fractions of nodes (whose features are replaced with either random vectors from a Bernoulli noise or a Unit Gaussian noise) and show accuracy, ECE, and relative average epistemic confidence. For edge shifts, we perturb different fractions of edges (by replacing them at random or using the global and untargeted DICE \citep{Waniek2018} attack) and show accuracy, relative average aleatoric confidence, and relative average epistemic confidence.}
    \label{fig:shift-ogbn-arxiv}
\end{figure}

\subsection{Additional Experiments - Qualitative Evaluation}
\label{sec:add-exp-qualitative}

In this section, we provide additional qualitative evaluations. Therefore, we present the abstracts of the most epistemically uncertain papers and the most epistemically certain papers in CoraML in Tab.~\ref{tab:lowest_evidence_abstracts} and Tab.~\ref{tab:highest_evidence_abstracts}. Most epistemically certain papers shows significantly more reasonable abstracts compared to most epistemically uncertain papers.

Additionally, we provide visualizations of the latent space of \oursacro{} on the clean CoraML graph in Fig.~\ref{fig:latent-space-clean}, and on the CoraML graph where $10\%$ of the nodes are perturbed with  the unit Gaussian perturbation in Fig.~\ref{fig:latent-space-clean}, and on Left-Out classes experiments in Fig.~\ref{fig:loc-space-gaussian}. We used T-SNE projections for 2D visualizations. We observed that the latent representations correlate with the class assignment. Further, \oursacro{} is capable to separate nodes with perturbed features in the latent space. The nodes with perturbed features are assigned high uncertainty without network effects but low uncertainty with network effects. This stresses the capacity of \oursacro{} to recover from feature perturbations. 
\begin{table}[!h]
    \centering
        \scriptsize
        \resizebox{\textwidth}{!}{
            \begin{tabular}{p{1cm} p{12cm}}
             \textbf{Node} & \textbf{Node Representation} \\
             \toprule
             \multirow{2}{*}{220} & \textbf{Abstract:} {\tt IlliGAL Report No. 95006 July 1995} \\
             & \textbf{Bag-of-Word:} {\tt ['1995', 'report', 'july']} \\
             
             \midrule
             
             \multirow{2}{*}{197} & \textbf{Abstract:} {\tt Report of the 1996 Workshop on Reinforcement} \\
             & \textbf{Bag-of-Word:} {\tt ['workshop', 'reinforcement', 'report', '1996']} \\
             
             \midrule
             
             \multirow{2}{*}{193}  & \textbf{Abstract:} {\tt Report of the 1996 Workshop on Reinforcement} \\
             & \textbf{Bag-of-Word:} {\tt ['workshop', 'reinforcement', 'report', '1996']} \\
             
             \midrule
             
             \multirow{2}{*}{258} & \textbf{Abstract:} {\tt TIK-Report Nr. 11, December 1995 Version 2 (2. Edition)} \\
             & \textbf{Bag-of-Word:} {\tt ['version', '1995', 'report', '11']} \\
             
             \midrule
             
             \multirow{2}{*}{2319} & \textbf{Abstract:} {\tt We tend to think of what we really know as what we can talk about, and disparage knowledge that we can't verbalize. [Dowling 1989, p. 252]} \\
             & \textbf{Bag-of-Word:} {\tt ['knowledge', 'know', '1989', 'tend']} \\
             
             \midrule
             
             \multirow{2}{*}{1945} & \textbf{Abstract:} {\tt Reihe FABEL-Report Status: extern Dokumentbezeichner: Org/Reports/nr-35 Erstellt am: 21.06.94 Korrigiert am: 28.05.95 ISSN 0942-413X} \\
             & \textbf{Bag-of-Word:} {\tt ['reports', '94', 'report', '95']} \\
             
             \midrule
             
             \multirow{2}{*}{293} & \textbf{Abstract:} {\tt Keith Mathias and Darrell Whitley Technical Report CS-94-101 January 7, 1994} \\
             & \textbf{Bag-of-Word:} {\tt ['technical', '1994', '94', 'cs', 'report', 'january']} \\
             
             \midrule
             
             \multirow{2}{*}{99} & \textbf{Abstract:} {\tt Multigrid Q-Learning Charles W. Anderson and Stewart G. Crawford-Hines Technical Report CS-94-121 October 11, 1994} \\
             & \textbf{Bag-of-Word:} {\tt ['learning', 'technical', '1994', '94', 'cs', 'report', '11']} \\
             
             \midrule
             
             \multirow{2}{*}{766} & \textbf{Abstract:} {\tt Internal Report 97-01} \\
             & \textbf{Bag-of-Word:} {\tt ['internal', 'report', '97']} \\
             
             \midrule
             
             \multirow{2}{*}{992} &  \textbf{Abstract:} {\tt A Learning Result for Abstract} \\
             & \textbf{Bag-of-Word:} {\tt ['learning', 'result', 'abstract']} \\
             
             \midrule
             
             \multirow{2}{*}{2030} &  \textbf{Abstract:} {\tt DRAFT March 16, 1998 available via anonymous ftp: site ftp.uwasa.fi directory cs/report94-1 file gaGPbib.ps.Z} \\
             & \textbf{Bag-of-Word:} {\tt ['available', '1998', 'cs', '16', 'anonymous', 'ftp', 'march', 'fi', 'site']} \\
             
             \midrule
             
             \multirow{2}{*}{991} &  \textbf{Abstract:} {\tt In IEEE Transactions on Neural Networks, 7(1):97-106, 1996 Also available as GMD report 794} \\
             & \textbf{Bag-of-Word: } {\tt ['available', 'networks', 'neural', 'report', 'ieee', '1996', '97']} \\
             
             \midrule
             
             \multirow{2}{*}{74} & \textbf{Abstract:} {\tt Empirical Comparison of Gradient Descent and Exponentiated Gradient Descent in Supervised and Reinforcement Learning Technical Report 96-70} \\
             & \textbf{Bag-of-Words:} {\tt ['learning', 'technical', 'empirical', 'gradient', 'comparison', 'reinforcement', 'supervised', 'report', 'descent', '96']} \\
             
             \midrule
             
             \multirow{2}{*}{865} & \textbf{Abstract:} {\tt fl Partially supported by the Advanced Research Projects Agency (AFOSR 90-0083). y Partially supported by the Air Force Office of Scientific Research (AFOSR F49620-92-J-0499), the Advanced Research Projects Agency (ONR N00014-92-J-4015), and the Office of Naval Research (ONR N00014-91-J-4100). z Partially funded by the Air Force Office of Scientific Research (AFOSR F49620-92-J-0334) and the Office of Naval Research (ONR N00014-91-J-4100 and ONR N00014-94-1-0597).} \\ 
             & \textbf{Bag-of-Words:} {\tt ['fl', 'research', '90', 'partially', '94', 'advanced', 'projects', 'agency', 'supported', 'n00014', 'office', 'naval', 'force', 'air', 'scientific', 'afosr', '91', '92', 'onr', 'funded']} \\
             
             \midrule
             
             \multirow{2}{*}{858} &  \textbf{Abstract:} {\tt Technical Report UMIACS-TR-97-77 and CS-TR-3843 Abstract} \\
             & \textbf{Bag-of-Words:} {\tt ['technical', 'abstract', 'cs', 'report', '97', 'tr']} \\
             
             \midrule
             
             \multirow{2}{*}{1944} &  \textbf{Abstract:} {\tt Reihe FABEL-Report Status: extern Dokumentbezeichner: Org/Reports/nr-13 Erstellt am: 22.12.93 Korrigiert am: 02.02.94 ISSN 0942-413X} \\
             & \textbf{Bag-of-Words:} {\tt ['reports', '13', '94', 'report', '22', '93', '12']} \\
             
             \midrule
             
             \multirow{2}{*}{1710} & \textbf{Abstract:} {\tt Knowledge Systems Laboratory March 1995 Report No. KSL 95-32} \\
             & \textbf{Bag-of-Words:} {\tt ['systems', 'knowledge', 'laboratory', '1995', 'report', 'march', '95']} \\
             
             \midrule
             
             \multirow{2}{*}{2491} &  \textbf{Abstract:} {\tt Edward S. Orosz and Charles W. Anderson Technical Report CS-94-111 April 27, 1994} \\
             & \textbf{Bag-of-Words:} {\tt ['technical', '1994', '94', 'cs', 'report', 'april']} \\
             
             \midrule
             
             \multirow{2}{*}{221} &  \textbf{Abstract:} {\tt V. Scott Gordon and Darrell Whitley Technical Report CS-93-114 September 16, 1993} \\
             & \textbf{Bag-of-Words:} {\tt ['technical', '1993', 'cs', 'report', '16', '93']} \\
             
             \midrule
             
             \multirow{2}{*}{1874} & \textbf{Abstract:} {\tt COINS Technical Report 94-61 August 1994} \\
             & \textbf{Bag-of-Words:} {\tt ['technical', '1994', '94', 'report', 'august']} \\
             \bottomrule
        \end{tabular}
    }
    \vspace{5mm}
    \caption{20 abstracts and their representation from the CoraML dataset obtained after selecting the abstracts \oursacro{} has assigned the lowest feature evidence. Abstracts sorted in ascending order of feature evidence.}
    \label{tab:lowest_evidence_abstracts}
\end{table}

\begin{table}[!h]
    \centering
        \scriptsize
        \resizebox{\textwidth}{!}{
            \begin{tabular}{p{1cm} p{12cm}}
             \textbf{Node} & \textbf{Node Representation} \\
             \toprule
             \multirow{2}{*}{1637} & \textbf{Abstract:} {\tt A pervasive, yet much ignored, factor in the analysis of processing-failures is the problem of misorganized knowledge. If a systems knowledge is not indexed or organized correctly, it may make an error, not because it does not have either the general capability or specific knowledge to solve a problem, but rather because it does not have the knowledge sufficiently organized so that the appropriate knowledge structures are brought to bear on the problem at the appropriate time. In such cases, the system can be said to have forgotten the knowledge, if only in this context. This is the problem of forgetting or retrieval failure. This research presents an analysis along with a declarative representation of a number of types of forgetting errors. Such representations can extend the capability of introspective failure-driven learning systems, allowing them to reduce the likelihood of repeating such errors. Examples are presented from the Meta-AQUA program, which learns to improve its performance on a story understanding task through an introspective meta-analysis of its knowledge, its organization of its knowledge, and its reasoning processes.} \\
             & \textbf{Bag-of-Word:} {\tt['retrieval', 'number', 'problem', 'learning', 'systems', 'representations', 'knowledge', 'representation', 'understanding', 'program', 'learns', 'time', 'make', 'examples', 'appropriate', 'general', 'error', 'analysis', 'presented', 'cases', 'performance', 'correctly', 'task', 'presents', 'improve', 'reasoning', 'likelihood', 'research', 'driven', 'processes', 'does', 'reduce', 'processing', 'organization', 'extend', 'solve', 'specific', 'meta', 'factor', 'declarative', 'failure', 'capability', 'context', 'allowing', 'structures', 'types', 'ignored', 'failures', 'story', 'errors', 'sufficiently', 'brought', 'organized', 'introspective', 'bear']} \\
             
             \midrule
             
             \multirow{2}{*}{1375} & \textbf{Abstract:} {\tt This paper describes an interactive planning system that was developed inside an Intelligent Decision Support System aimed at supporting an operator when planning the initial attack to forest fires. The planning architecture rests on the integration of case-based reasoning techniques with constraint reasoning techniques exploited, mainly, for performing temporal reasoning on temporal metric information. Temporal reasoning plays a central role in supporting interactive functions that are provided to the user when performing two basic steps of the planning process: plan adaptation and resource scheduling. A first prototype was integrated with a situation assessment and a resource allocation manager subsystem and is currently being tested.} \\
             & \textbf{Bag-of-Word:} {\tt ['intelligent', 'information', 'paper', 'planning', 'based', 'case', 'integrated', 'functions', 'describes', 'decision', 'allocation', 'architecture', 'support', 'reasoning', 'mainly', 'techniques', 'process', 'supporting', 'role', 'integration', 'exploited', 'currently', 'initial', 'user', 'tested', 'interactive', 'basic', 'developed', 'temporal', 'central', 'assessment', 'situation', 'steps', 'adaptation', 'operator', 'scheduling', 'constraint', 'performing', 'plays', 'provided', 'resource', 'prototype', 'metric', 'aimed', 'plan', 'inside']} \\
             
             \midrule
             
             \multirow{2}{*}{2672}  & \textbf{Abstract:} {\tt Inductive learning systems are designed to induce hypotheses, or general descriptions of concepts, from instances of these concepts. Among the wide variety of techniques used in inductive learning systems, algorithms derived from nearest neighbour (NN) pattern classification have been receiving attention lately, mainly due to their incremental nature. Nested Generalized Exemplar (NGE) theory is an inductive learning theory which can be viewed as descent from nearest neighbour classification. In NGE theory, the induced concepts take the form of hyperrectangles in a n-dimensional Euclidean space. The axes of the space are defined by the attributes used for describing the examples. This paper proposes a fuzzified version of the original NGE algorithm, which accepts input examples given as feature/fuzzy value pairs, and generalizes them as fuzzy hyperrectangles. It presents and discusses a metric for evaluating the fuzzy distance between examples, and between example and fuzzy hyperrectangles; criteria for establishing the reliability of fuzzy examples, by strengthening the exemplar which makes the right prediction and weakening the exemplar which makes a wrong one and criteria for producing fuzzy generalizations, based on the union of fuzzy sets. Keywords : exemplar-based learning, nested generalized exemplar, nearest neighbour, fuzzy NGE.} \\
             & \textbf{Bag-of-Word:} {\tt Bag-of-Words ['paper', 'used', 'learning', 'systems', 'feature', 'theory', 'based', 'input', 'algorithm', 'generalizations', 'algorithms', 'given', 'hypotheses', 'form', 'attributes', 'examples', 'general', 'concepts', 'instances', 'evaluating', 'prediction', 'classification', 'space', 'keywords', 'presents', 'pattern', 'version', 'receiving', 'mainly', 'techniques', 'nature', 'sets', 'wide', 'right', 'makes', 'variety', 'attention', 'example', 'original', 'describing', 'distance', 'generalized', 'inductive', 'nearest', 'dimensional', 'descriptions', 'value', 'incremental', 'designed', 'viewed', 'discusses', 'derived', 'fuzzy', 'pairs', 'generalizes', 'producing', 'defined', 'descent', 'induced', 'nn', 'wrong', 'criteria', 'reliability', 'proposes', 'metric', 'induce', 'euclidean']} \\
             \bottomrule
        \end{tabular}
    }
    \vspace{5mm}
    \caption{3 abstracts and their representation from the CoraML dataset obtained after selecting the abstracts \oursacro{} has assigned the highest feature evidence. Abstracts sorted in descending order of feature evidence.}
    \label{tab:highest_evidence_abstracts}
\end{table}

\begin{figure}[!h]
    \centering
	\begin{subfigure}[t]{\textwidth}
	    \centering
		\includegraphics[height=0.25\textheight]{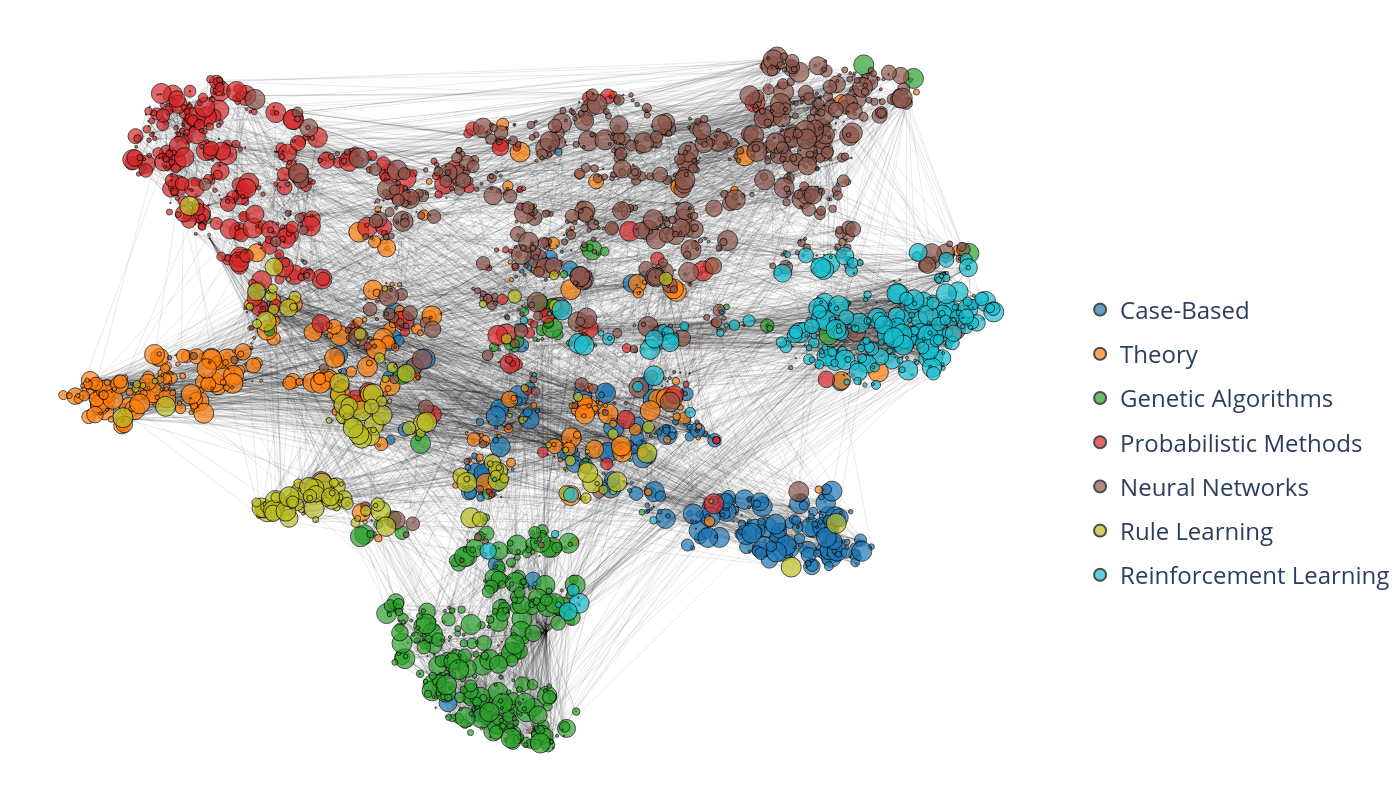}
		\caption{Ground-Truth Classes} 
		\label{subfig:latent-clean-classes}
	\end{subfigure}
	\begin{subfigure}[t]{\textwidth}
	    \centering
		\includegraphics[height=0.25\textheight]{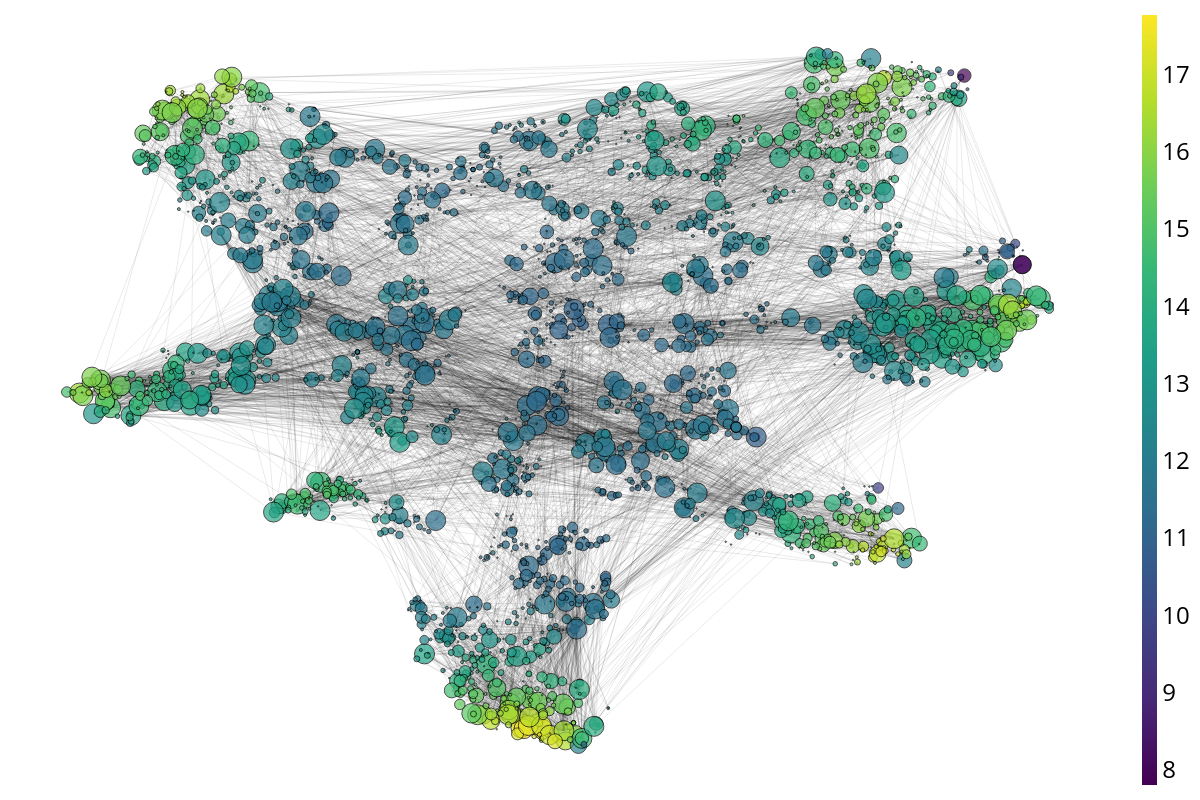}
		\caption{Feature Evidence} 
		\label{subfig:latent-clean-feature-evidence}
	\end{subfigure}
	\begin{subfigure}[t]{\textwidth}
	    \centering
		\includegraphics[height=0.25\textheight]{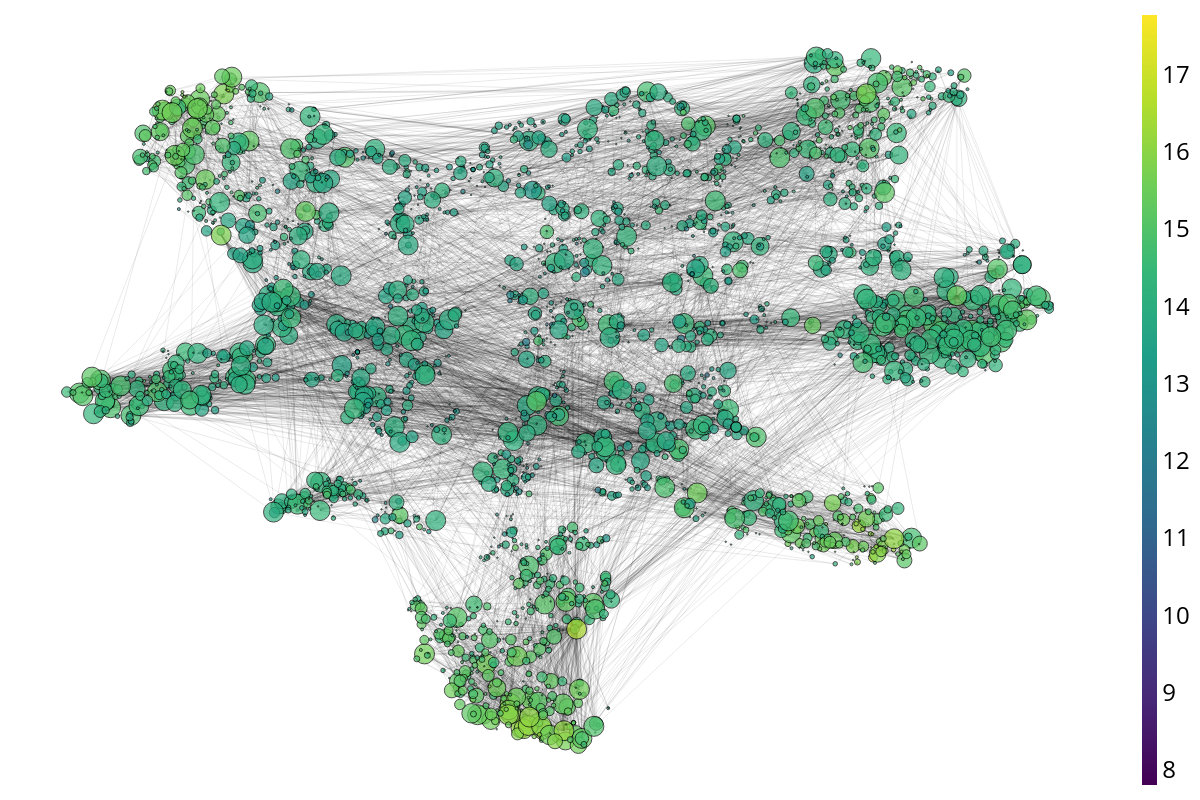}
		\caption{Aggregated Evidence}
		\label{subfig:latent-clean-evidence}
	\end{subfigure}
    \caption{Latent space visualizations on the clean CoraML graph. The ground-truth classes are shown in different colors. The feature and aggregated evidence are represented in log-scale.}
    \label{fig:latent-space-clean}
\end{figure}
\begin{figure}[!h]
    \centering
	\begin{subfigure}[t]{\textwidth}
	    \centering
		\includegraphics[height=0.25\textheight]{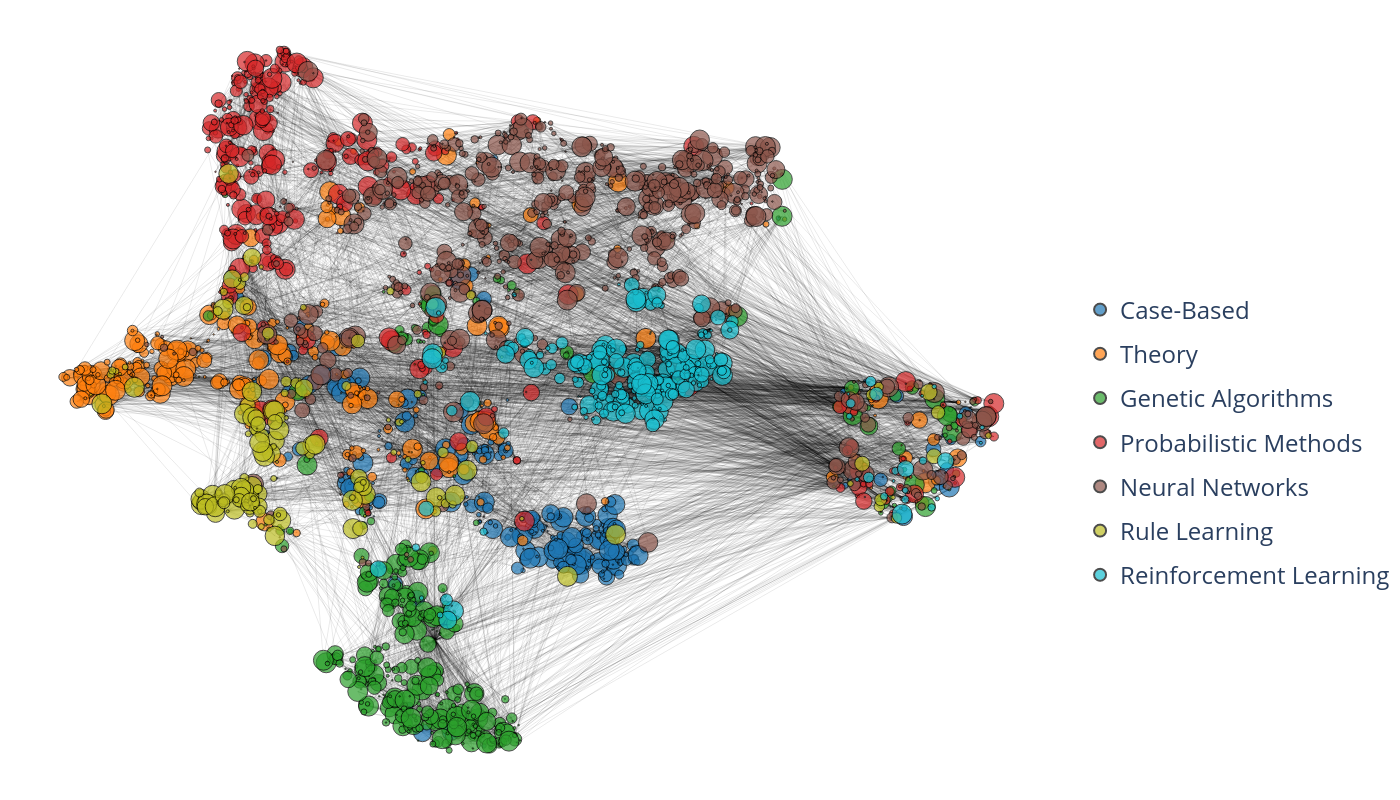}
		\caption{Ground-Truth Classes} 
		\label{subfig:gaussian-clean-classes}
	\end{subfigure}
	\begin{subfigure}[t]{\textwidth}
	    \centering
		\includegraphics[height=0.25\textheight]{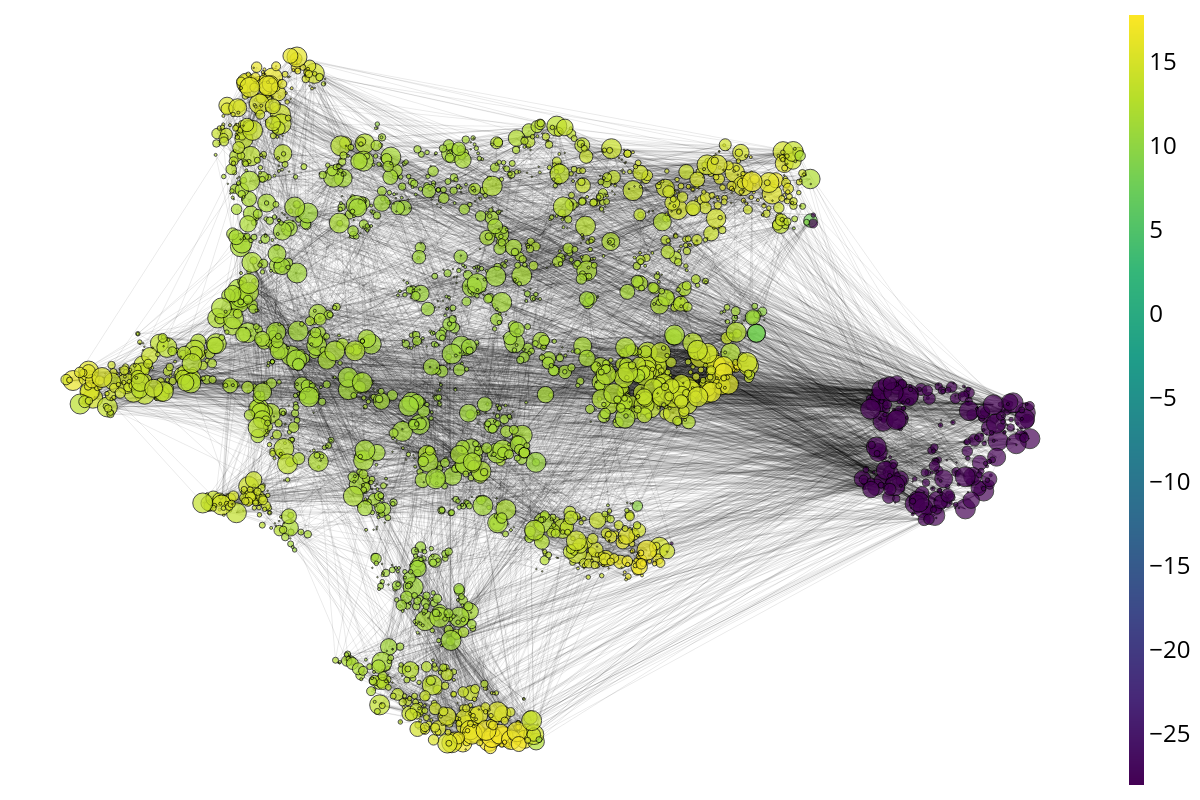}
		\caption{Feature Evidence} 
		\label{subfig:gaussian-clean-feature-evidence}
	\end{subfigure}
	\begin{subfigure}[t]{\textwidth}
	    \centering
		\includegraphics[height=0.25\textheight]{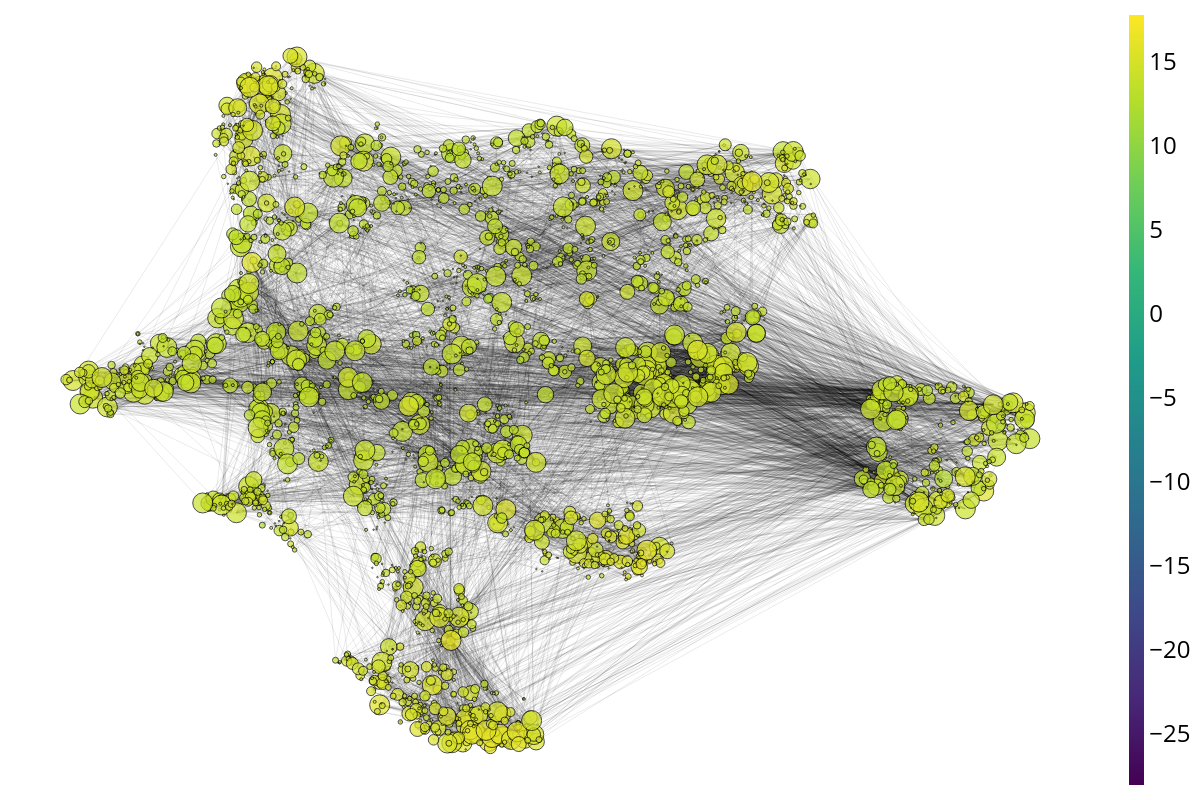}
		\caption{Aggregated Evidence}
		\label{subfig:gaussian-clean-evidence}
	\end{subfigure}
    \caption{Latent space visualizations on the CoraML where $10\%$ of the nodes are perturbed with unit Gaussians. The ground-truth classes are shown in different colors. The feature and aggregated evidence are represented in log-scale.}
    \label{fig:latent-space-gaussian}
\end{figure}
\begin{figure}[!h]
    \centering
	\begin{subfigure}[t]{\textwidth}
	    \centering
		\includegraphics[height=0.25\textheight]{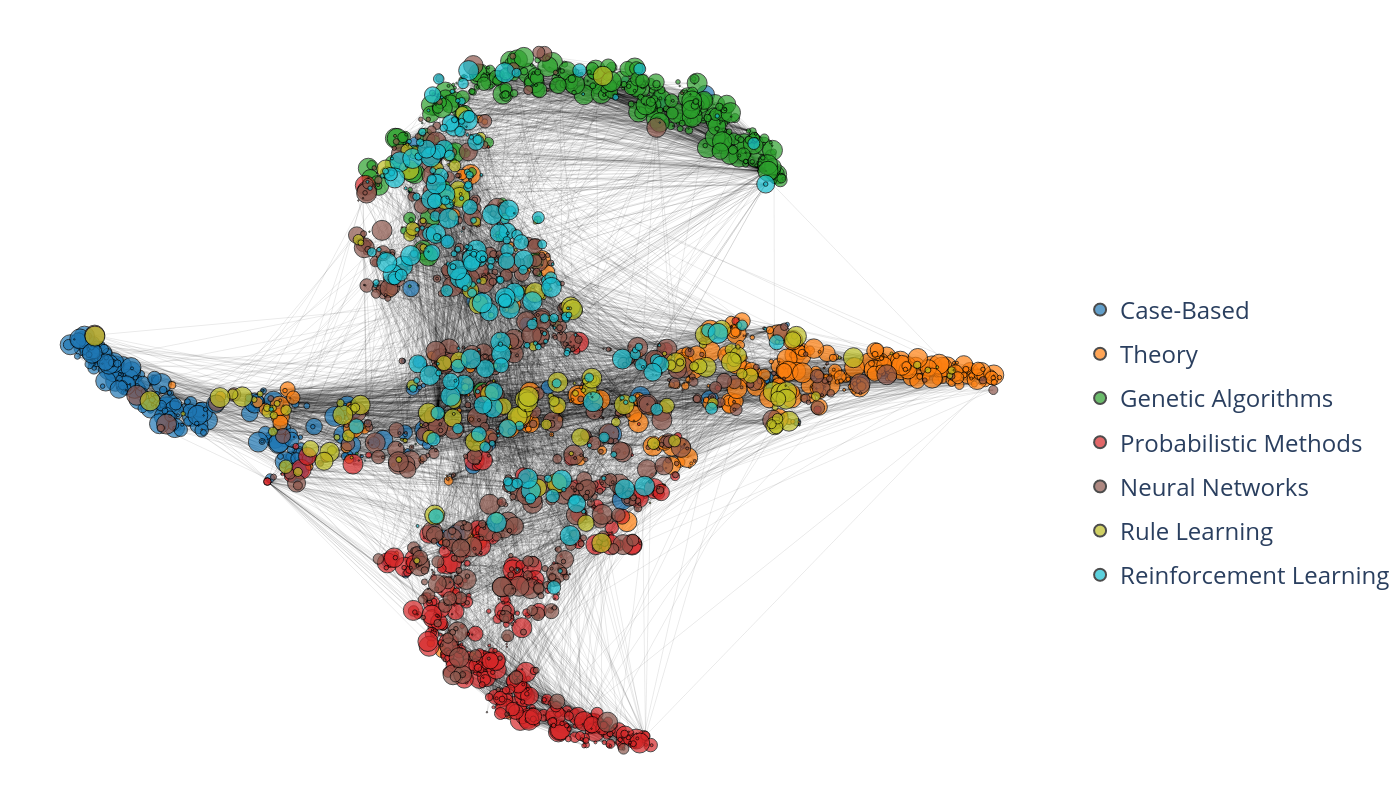}
		\caption{Ground-Truth Classes} 
		\label{subfig:loc-clean-classes}
	\end{subfigure}
	\begin{subfigure}[t]{\textwidth}
	    \centering
		\includegraphics[height=0.25\textheight]{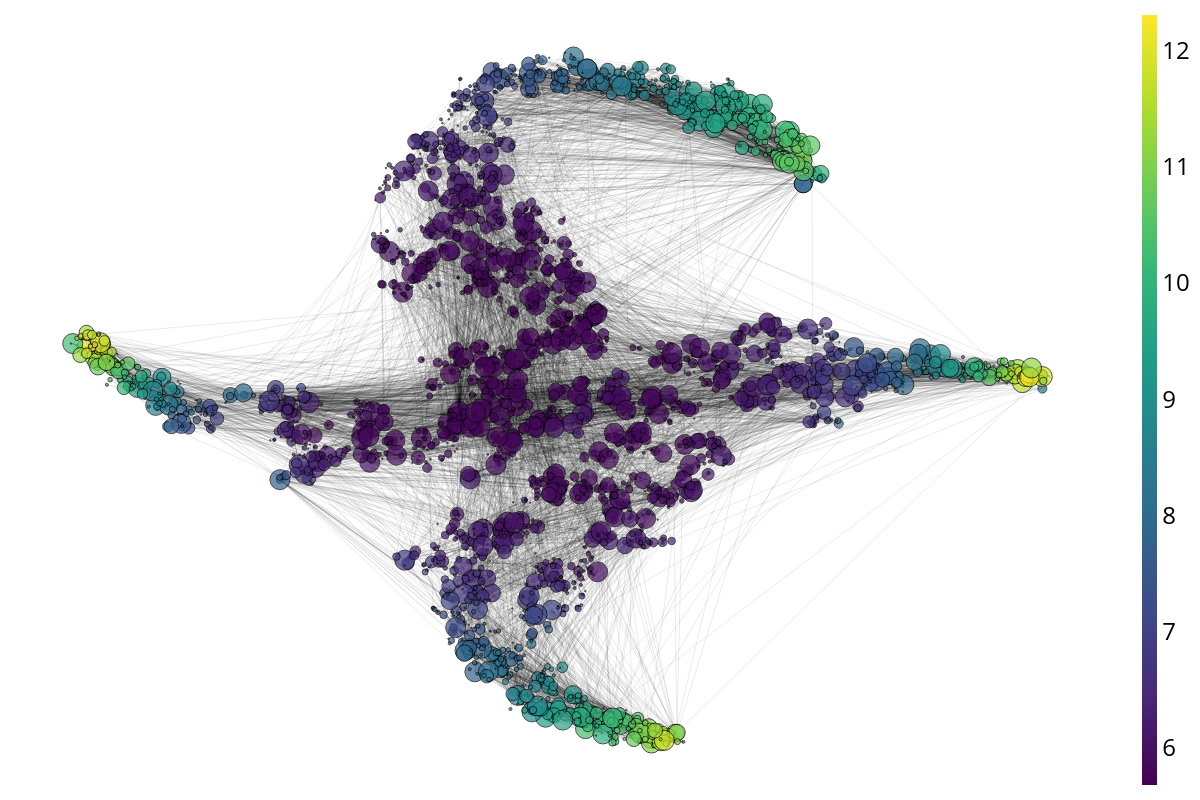}
		\caption{Feature Evidence} 
		\label{subfig:loc-clean-feature-evidence}
	\end{subfigure}
	\begin{subfigure}[t]{\textwidth}
	    \centering
		\includegraphics[height=0.25\textheight]{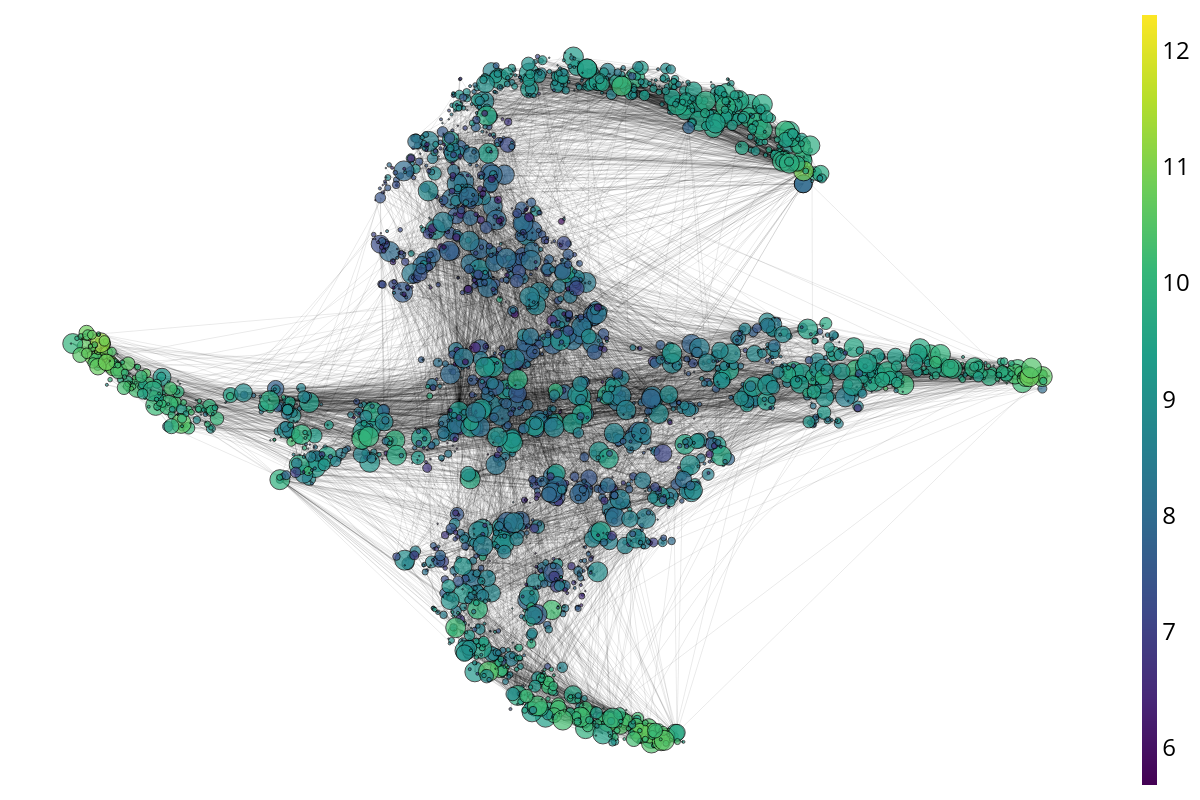}
		\caption{Aggregated Evidence}
		\label{subfig:loc-clean-evidence}
	\end{subfigure}
    \caption{Latent space visualizations on the CoraML with Left-Out classes experiments. The ground-truth classes are shown in different colors. Nodes of the classes \emph{Neural Networks}, \emph{Rule Learning}, and \emph{Reinforcement Learning} have been left out for training but are kept in the graph. The feature and aggregated evidence are represented in log-scale.}
    \label{fig:loc-space-gaussian}
\end{figure}

\subsection{Additional Experiments - Inference \& Training Time}
\label{sec:add-exp-time}

We provide a comparison of inference times for most of the datasets and models under consideration in Tab.~\ref{tab:inference-times} and a comparison of training times in Tab.~\ref{tab:training-times}. \oursacro{} needs a single pass for uncertainty estimation but requires the additional evaluation of one normalizing flow per class compared to APPNP. Hence, \oursacro{} brings a small computational overhead for uncertainty estimation during inference but is significantly faster than ensemble or dropout approaches. Furthermore, \oursacro{} is usually converging relatively fast during training and does not require a pre-computing kernel values. In contrast, GKDE-GCN requires the computation of the underlying Graph Kernel with a complexity of $\mathcal{O}(\nnodes^2)$ where $\nnodes$ is the number of nodes in the graph (see Sec.~\ref{sec:app_model_details}). Finally, \oursacro{} is significantly more efficient than dropout or ensemble approaches as it does not require training or evaluation of multiple models.

\begin{table}[ht]
    \centering
    \resizebox{\textwidth}{!}{
        \begin{tabular}{lrrrrrrrr}
        \toprule
\textbf{Model} &            \textbf{CoraML} &          \textbf{CiteSeer} &            \textbf{PubMed} &   \textbf{Amazon C.} &      \textbf{Amazon Ph.} &        \textbf{Coauthor CS} &    \textbf{Coauthor Ph.} &         \textbf{OGBN Arxiv} \\
\midrule
APPNP &   $2.87 \pm 0.01$ &   $2.97 \pm 0.01$ &   $3.07 \pm 0.02$ &   $6.53 \pm 0.05$ &   $3.50 \pm 0.03$ &   $6.67 \pm 0.01$ &   $10.33 \pm 0.04$ &   $\mathbf{49.50 \pm 0.33}$ \\
VGCN &   $\mathbf{2.55 \pm 0.04}$ &   $2.73 \pm 0.02$ &   $2.70 \pm 0.01$ &   $6.41 \pm 0.09$ &   $3.96 \pm 0.03$ &   $6.42 \pm 0.02$ &   $12.27 \pm 0.04$ &   $60.90 \pm 0.04$ \\
VGCN-Dropout &  $24.40 \pm 0.17$ &  $26.96 \pm 0.25$ &  {\color{red}$30.43 \pm 0.20$} &  {\color{red}$58.24 \pm 0.20$} &  {\color{red}$36.56 \pm 0.13$} &  {\color{red}$62.21 \pm 0.26$} &  {\color{red}$120.68 \pm 0.21$} &  {\color{red}$637.63 \pm 0.43$} \\
VGCN-Energy &   $2.78 \pm 0.03$ &   $2.78 \pm 0.02$ &   $3.24 \pm 0.03$ &   $6.03 \pm 0.03$ &   $3.79 \pm 0.01$ &   $6.59 \pm 0.02$ &   $12.46 \pm 0.03$ &   $62.45 \pm 0.32$ \\
VGCN-Ensemble &  $23.18 \pm 0.63$ &  $23.10 \pm 0.02$ &  $27.22 \pm 0.42$ &  $52.78 \pm 0.65$ &  $32.81 \pm 0.14$ &  $54.17 \pm 0.46$ &  $108.58 \pm 0.51$ &           $548.27$ \\
GKDE-GCN &   $2.61 \pm 0.03$ &   $\mathbf{2.18 \pm 0.01}$ &   $\mathbf{2.57 \pm 0.02}$ &   $\mathbf{3.54 \pm 0.04}$ &   $\mathbf{3.00 \pm 0.10}$ &   $\mathbf{4.92 \pm 0.02}$ &    $\mathbf{8.79 \pm 0.09}$ &   $62.21 \pm 0.34$ \\
\midrule
GPN &   $9.35 \pm 0.06$ &   $9.02 \pm 0.02$ &  $14.13 \pm 0.02$ &  $27.48 \pm 0.09$ &  $14.52 \pm 0.05$ &  $48.07 \pm 0.15$ &   $35.94 \pm 0.13$ &  $275.69 \pm 0.91$ \\
        \bottomrule
        \end{tabular}
    }
    \vspace{1em}
    \caption{Inference time (in ms) across different datasets evaluated on a single NVIDIA GTX 1080 Ti. Bold numbers indicate the fastest algorithm during inference for each dataset while red numbers indicate the slowest one.}
    \label{tab:inference-times}
\end{table}

\begin{table}[ht]
    \centering
    \resizebox{\textwidth}{!}{
        \begin{tabular}{lrrrrrrrr}
\toprule
\textbf{Model} &    \textbf{CoraML} &  \textbf{CiteSeer} &    \textbf{PubMed} & \textbf{Amazon C.} & \textbf{Amazon Ph.} & \textbf{Coauthor CS} & \textbf{Coauthor Ph.} &  \textbf{OGBN Arxiv} \\
\midrule
APPNP &   $\hphantom{0}45.44$ &   $\hphantom{0}27.74$ &   $\mathbf{\hphantom{0}54.92}$ &        $257.55$ &     $230.75$ &   $166.84$ &        $217.73$ &    $425.75$ \\
VGCN &   $47.28$ &   $32.98$ &   $55.77$ &        $211.79$ &     $174.00$ &   $167.77$ &        $194.53$ &    $\mathbf{329.43}$ \\
VGCN-Dropout &   $47.28$ &   $32.98$ &   $55.77$ &        $211.79$ &     $174.00$ &   $167.77$ &        $194.53$ &    $\mathbf{329.43}$ \\
VGCN-Energy &   $47.28$ &   $32.98$ &   $55.77$ &        $211.79$ &     $174.00$ &   $167.77$ &        $194.53$ &    $\mathbf{329.43}$ \\
VGCN-Ensemble &  {\color{red}$472.82$} &  {\color{red}$329.84$} &  \color{red}{$557.65$} &       {\color{red}$2117.86$} &    {\color{red}$1740.00$} &  {\color{red}$1677.72$} &       {\color{red}$1945.34$} &   $3294.34$ \\
GKDE-GCN &   $46.48$ &   $\mathbf{17.77}$ &  $523.41$ &        $354.66$ &     $473.71$ &   $247.01$ &        $475.40$ &  {\color{red}$60185.40$} \\
{\color{gray}GKDE} &    {\color{gray}$5.12$} &    {\color{gray}$5.04$} &   {\color{gray}$54.72$} & {\color{gray}$61.27$} & {\color{gray}$21.24$} &   {\color{gray}$103.34$} &  {\color{gray}$320.98$} &  {\color{gray}$59775.90$} \\
\midrule
GPN &   $\mathbf{10.20}$ &   $39.40$ &   $59.15$ &         $\mathbf{81.59}$ &      $\mathbf{64.72}$ &    $\mathbf{32.80}$ &         $\mathbf{93.38}$ &   $2393.03$ \\
\bottomrule
\end{tabular}
    }
    \vspace{1em}
    \caption{Average training times (in s) for a single model and initializations across different datasets evaluated mostly on a single NVIDIA GTX 1080 Ti. Bold numbers indicate the fastest algorithm during training for each dataset while red numbers indicate the slowest one. The gray line shows the GKDE component of the GKDE-GCN approach as reference.}
    \label{tab:training-times}
\end{table}

\section{Axioms Diagram}

We provide a larger version of Figure \ref{fig:uncertainty_types_small} to visualize the distinction between aleatoric and epistemic uncertainty and the distinction between uncertainty without and with network effects in Fig.~\ref{fig:uncertainty_types_large}. These two distinctions are used in the axioms in Sec.~\ref{sec:axioms}.

\begin{figure}[!h]


\centering
	\begin{subfigure}[t]{0.495\textwidth}
	    \centering
		\includegraphics[width=\textwidth]{resources/no-network-aleatoric.pdf}
		\caption{AU without network effects} 
		\label{subfig:au_without_network_large}
	\end{subfigure}
	\begin{subfigure}[t]{0.495\textwidth}
	    \centering
		\includegraphics[width=\textwidth]{resources/network-aleatoric.pdf}
		\caption{AU with network effects} 
		\label{subfig:au_with_network_large}
	\end{subfigure}
	\begin{subfigure}[t]{0.495\textwidth}
	    \centering
		\includegraphics[width=\textwidth]{resources/no-network-epistemic.pdf}
		\caption{EU without network effects}
		\label{subfig:eu_without_network_large}
	\end{subfigure}
	\begin{subfigure}[t]{0.495\textwidth}
	    \centering
		\includegraphics[width=\textwidth]{resources/network-epistemic.pdf}
		\caption{EU with network effects}
		\label{subfig:eu_with_network_large}
	\end{subfigure}
	\caption{Illustration of aleatoric uncertainty (AU) and epistemic uncertainty (EU) without and with network effects (i.e. i.i.d.\ inputs vs interdependent inputs). Each node of one color has the same features in all four cases. Network effects are visualized through edges between nodes which change the predicted distributions. The aleatoric uncertainty is high if the categorical distribution $\hat{\y} \nodeidxv \sim \DCat(\p\nodeidxv)$ is flat. The epistemic uncertainty is high if the Dirichlet distribution $\p\nodeidxv \sim \DDir(\valpha\nodeidxv)$ is spread out. Note that node with high epistemic certainty in the absence of network effects (e.g. orange) get less certain with neighbors being epistemically uncertain (purple). Epistemically uncertain nodes (purple) get more certain with certain neighbors on the other hand. Similar effects are shown for aleatoric uncertainty. For more details behind that reasoning, see our axiomatic approach in Sec.~\ref{sec:axioms}.}
    \label{fig:uncertainty_types_large}
\end{figure}

\clearpage

\end{document}